\documentclass{article}

% Recommended, but optional, packages for figures and better typesetting:
\usepackage{microtype}
\usepackage{graphicx}
\usepackage{subcaption}
\usepackage{booktabs} % for professional tables

% hyperref makes hyperlinks in the resulting PDF.
% If your build breaks (sometimes temporarily if a hyperlink spans a page)
% please comment out the following usepackage line and replace
% \usepackage{icml2025} with \usepackage[nohyperref]{icml2025} above.
\usepackage{hyperref}

% Attempt to make hyperref and algorithmic work together better:

% Use the following line for the initial blind version submitted for review:
\usepackage[accepted]{icml2025}
% Recommended, but optional, packages for figures and better typesetting:
\usepackage{microtype}
\usepackage{graphicx}
\usepackage{booktabs} % for professional tables
\usepackage{xcolor}
\usepackage{comment}

\usepackage{algorithmic}
\usepackage{color, colortbl}
\usepackage{sidecap}
\usepackage{rotating}

\usepackage{float}
\usepackage{soul}

\usepackage{amsmath}
\usepackage{float}
\usepackage{amssymb}
\usepackage{mathtools}
\usepackage{amsthm}

\usepackage[capitalize,noabbrev]{cleveref}
\usepackage{caption}  % Base package for captions
\usepackage{subcaption} % Required for subfigures
\usepackage[font=footnotesize, labelfont=bf]{caption}
% Configure the appearance of the subfigure captions:
\captionsetup[subfigure]{font=footnotesize,labelformat=simple, labelsep=space}
  
\usepackage{comment}
\usepackage{algorithmic}
\usepackage{color, colortbl}
\usepackage{sidecap}
\usepackage{rotating}
\usepackage{hyperref}

\theoremstyle{plain}
\newtheorem{theorem}{Theorem}[section]
\newtheorem{proposition}[theorem]{Proposition}

\theoremstyle{definition}
\newtheorem{definition}[theorem]{Definition}

\theoremstyle{remark}

\usepackage[textsize=small]{todonotes}

\icmltitlerunning{A Novel Characterization of the Population Area Under the Risk Coverage Curve (AURC) and Rates of Finite Sample Estimators}

\begin{document}

\twocolumn[
\icmltitle{A Novel Characterization of the Population Area Under the Risk Coverage Curve (AURC) and Rates of Finite Sample Estimators}

% \icmlsetsymbol{equal}{*}

\begin{icmlauthorlist}
\icmlauthor{Han Zhou}{yyy}
\icmlauthor{Jordy Van Landeghem}{comp}
\icmlauthor{Teodora Popordanoska}{yyy}
\icmlauthor{Matthew B. Blaschko}{yyy}
\end{icmlauthorlist}

\icmlaffiliation{yyy}{Processing Speech and Images, Department of Electrical Engineering, KU Leuven, Belgium}
\icmlaffiliation{comp}{Instabase, San Francisco, USA}

\icmlcorrespondingauthor{Han Zhou}{han.zhou@esat.kuleuven.be}
\icmlkeywords{Machine Learning, ICML}

\vskip 0.3in
]
\printAffiliationsAndNotice % otherwise use the standard text.

\begin{abstract}

The selective classifier (SC) has been proposed for rank based uncertainty thresholding, which could have applications in safety critical areas such as medical diagnostics, autonomous driving, and the justice system. The Area Under the Risk-Coverage Curve (AURC) has emerged as the foremost evaluation metric for assessing the performance of SC systems. In this work, we present a formal statistical formulation of population AURC, presenting an equivalent expression that can be interpreted as a reweighted risk function. Through Monte Carlo methods, we derive empirical AURC  plug-in estimators for finite sample scenarios. The weight estimators associated with these plug-in estimators are shown to be consistent, with low bias and tightly bounded mean squared error (MSE). The plug-in estimators are proven to converge at a rate of $\mathcal{O}(\sqrt{\ln(n)/n})$ demonstrating statistical consistency. We empirically validate the effectiveness of our estimators through experiments across multiple datasets, model architectures, and confidence score functions (CSFs), demonstrating consistency and effectiveness in fine-tuning AURC performance.
%We show that the empirical AURC can be derived via Monte Carlo method and introduce an alternative plug-in estimator based on this approach for finite sample scenarios. We prove that the weight estimators for these plug-in estimators are consistent, with low bias and tightly bounded mean squared error (MSE). Additionally, we demonstrate that the plug-in estimators converge at a rate of $\mathbb{O}(n^{-\frac{1}{2}})$. Finally, we empirically validate the effectiveness of both plug-in estimators on a comprehensive benchmark across multiple datasets, model architectures, and confidence score functions (CSFs).

\end{abstract}

\section{Introduction}
In safety-critical scenarios such as autonomous driving, medical diagnostics, and the justice system ~\citep{berk2021fairness,leibig2022combining, dvijotham2023enhancing, franc2023optimal, groh2024deep}, selective classifiers (SC) are promising for their ability to withhold predictions under conditions of uncertainty, thereby mitigating associated risks and enhancing the reliability of the models \citep{geifman2017selective,geifman2018bias, ding2020revisiting, galilcan}. Specifically, these classifiers employ cost-based models~\cite{chow1970optimum, cortes2016learning, hendrickx2024machine} with a reject region to balance the risks of wrong predictions against the non-decision costs. The goal of an effective SC system is to minimize the expected misclassification costs—termed \textit{selective risk}—while maximizing \textit{coverage}, ensuring the model provides accurate predictions for as many instances as possible. This dual focus on selective risk and coverage motivates the development and evaluation of SC systems.

Prominent evaluation metrics in SC systems, such as the area under the risk-coverage curve (AURC) and the normalized AURC (E-AURC) \citep{geifman2018bias}, are widely used to assess model performance based on selective risk and coverage. While most studies interpret and improve~\citep{ao2023empirical,traub2024overcoming} these metrics from the perspective of risk and coverage, relatively little attention has been given to directly optimizing SC models by treating AURC as a loss function. In addition, these metrics are typically computed empirically from the given datasets, making them susceptible to biases and variances, particularly in the context of a small finite sample rather than the underlying population. \citet{franc2023optimal} proposed the Selective Classifier Learning (SELE) loss as a lower bound of empirical AURC and is designed to optimize uncertainty scores by minimizing both the regression and SELE losses using batch training strategies. This approach only learns uncertainty scores on top of a pre-trained model within the selective classifier framework, and does not directly optimize the classifier itself based on the loss. \citet{franc2023optimal} motivate the SELE loss by the fact that it is ``a close approximation of the AuRC and, at the same time, amenable to optimization.''  We demonstrate here through analysis of the computational complexity and statistical properties of direct AURC estimation, that approximation by a lower bound is unnecessary and both AURC and SELE are equally amenable to optimization.% as a proxy for population AURC

We establish a formal definition of AURC at the population level based on the underlying data distribution and derive an equivalent expression that explicitly represents it as a reweighted risk function, where the weights are determined solely by population rankings according to the CSFs. This formulation allows us to treat AURC as a loss function in a more theoretically grounded manner. Building upon these findings, we introduce two plug-in estimators with weight estimators derived from Monte Carlo method. We show that both can provide good estimation and come with theoretical guarantees. Specifically, we analyze the statistical properties of the weights estimators, including their MSE and bias, and establish the convergence rate of the plug-in estimators. Finally, we validate their efficacy through evaluations and fine-tuning experiments across various model architectures, CSFs, and datasets, demonstrating their practical advantages in AURC estimation.

\section{Related Work}
\textbf{Evaluation Metrics}: The Area Under the Risk Coverage curve (AURC) and its normalized counterpart Excess-AURC
(E-AURC)~\citep{geifman2018bias} are the most prevalent evaluation metrics for SC systems that compute the risk or the error with accepted predictions at different confidence thresholds. Furthermore, \citet{cattelanfix} have proposed a min-max scaled version of E-AURC, designed to maintain a monotonic relationship with AURC, thereby enhancing its consistency in performance assessment. However, \citet{traub2024overcoming} argues that these metrics related to the selective risk, which only focus on the risk w.r.t. accepted predictions, do not suffice for a holistic assessment. To address this limitation, they developed the Area under the Generalized Risk Coverage curve (AUGRC), which quantifies the average risk of undetected failures across all predictions, thereby providing a comprehensive measure of system reliability. Despite these achievements, most studies directly employ the empirical AURC as a proxy for the population AURC, even in finite sample scenarios, without thoroughly examining the effectiveness of these estimators under such conditions. \citet{franc2023optimal} introduced the SELE score, a %Fisher-consistent estimator that serves as a 
lower bound for AURC. However, their study did not explore important statistical properties of this estimator, such as its bias or MSE, when compared to the population AURC. In contrast to previous studies, our work focuses on formally defining the population AURC in terms of the underlying data distribution and offering a reliable approximation for it in finite sample settings. Our goal is to propose an effective estimator with theoretical guarantees and perform an empirical analysis to compare it with existing AURC estimators.

%\textcolor{red}{More CSFs need to be added here, I missed several important uncertainty measurements that also need to be in the experiments.}\\
\textbf{Uncertainty estimation}: 
There has been a large number of works~\citep{geifman2018bias, abdar2021review, zhu2023revisiting} that highlight the importance of confidence scoring and uncertainty quantification associated with predictions. 
In practice\footnote{While it is preferable for the output domain of CSFs to be in $[0,1]$ for easier determination of selection thresholds, this is not a strict requirement.}, commonly used CSFs fall into two main categories: ensemble approaches and post-hoc methods. Ensemble  methods~\citep{lakshminarayanan2017simple, teye2018bayesian, liu2023spectral, xia2023window, hou2023decomposing} require multiple forward passes to approximate the posterior predictive distribution, exemplified by Monte Carlo Dropout (MCD) techniques~\citep{gal2016dropout}. However, recent works~\citep{cattelanfix,xia2022usefulness} suggest that ensembles may not be crucial for enhancing uncertainty estimation but rather serve to improve the predictions through a set of diverse classifiers. Thus such methods are not considered further in this paper. In contrast, post-hoc estimators leverage the logits produced by the model to evaluate its performance. Popular methods include \textit{Maximum Softmax Probability} (MSP) \citep{hendrycks2016baseline}, \textit{Maximum Logit Score} (MaxLogit) \citep{hendrycks2019scaling}, \textit{Softmax Margin}~\citep{belghazi2021classifiers}, and \textit{Negative Entropy}~\citep{liu2020energy}. Furthermore, \citet{cattelan2023selective} show that the maximum $p$-norm of the logits (MaxLogit-$p$Norm) can work better than MSP in uncertainty estimation when dealing with some models. Specifically, normalizing the logits with $\ell_2$ norm has been shown to yield more distinct confidence scores, as evidenced in~\cite{wei2022mitigating}. \citet{gomessimple} propose the \textit{Negative Gini Score}, which utilizes the squared $\ell_2$ norm of the softmax probability. In this study, we examine the impact of the post-hoc CSFs on the AURC, aiming to offer a thorough evaluation of AURC estimators in finite sample scenarios.

\section{Performance Evaluation of Selective Classifiers}
\subsection{Problem Setting} 
Let \( \mathcal{X} \subseteq \mathbb{R}^d \) be the input space, \( \mathcal{Y} \subseteq \{0,1\}^k\) be the label space, and \( \text{P}(x, y) \) be the unknown joint distribution over \( \mathcal{X} \times \mathcal{Y} \). We consider a classifier $f: \mathcal{X} \rightarrow \Delta^k$, which maps to a $k$-dimentional probability simplex, and a \textit{confidence scoring function} (CSF) $g: \mathcal{X} \rightarrow [0,1]$, the selective classification system \( (f, g) \) at an input \( x \) can then be described by
\begin{align}
 (f, g)(x) :=  \label{eq:selectiveclassifier}
\begin{cases} 
f(x) & \text{if } g(x) \ge \tau, \\
\text{``abstain''} & \text{otherwise}.
\end{cases}   
\end{align}
where ``abstain'' is triggered when $g(x)$ falls below a decision threshold $\tau\in \mathbb{R}$.  
Given a loss function $\ell: \Delta^k \times \mathcal{Y} \rightarrow \mathbb{R}$, the true risk of $f$ w.r.t.\ $\text{P}(x,y)$ is $R(f) = \mathbb{E}_{\text{P}(x, y)}[\ell(f(x), y)]$. Given the finite sample dataset $D_n=\left\{\left(x_i, y_i\right)\right\}_{i=1}^n \subseteq(\mathcal{X} \times \mathcal{Y})$ sampled i.i.d. from $\text{P}(x, y)$, the true risk can be inferred from the \textit{empirical risk} $\hat{R}(f) := \frac{1}{n} \sum_{i=1}^{n} \ell(f(x_i), y_i)$. For practical purposes, we define the selection function $\tilde{g}$ as $\tilde{g}(x)= \mathbb{I} [g(x) \geq \tau ]$. The choice of $\tau$ depends on the specific scenario and the evaluation metric being used. It can either be a pre-defined constant or adapt dynamically based on the predicted uncertainty of the observations.

\subsection{Evaluation Metrics}
One common way to assess the performance of selective classifiers is the risk-coverage curve (RC curve) \citep{el2010foundations}, where \textit{coverage} measures the probability mass of the input space that is not rejected in Eq.~\eqref{eq:selectiveclassifier}, denoted by \( \mathbb{E}_{\text{P}(x)}[\tilde{g}(x)]  \). And the selective risk w.r.t. \( \text{P}(x, y) \) is then defined as
\begin{equation}
    R(f, \tilde{g}) := \frac{\mathbb{E}_{\text{P}(x, y)}[\ell(f(x), y)\tilde{g}(x)]}{\mathbb{E}_{\text{P}(x)}[\tilde{g}(x)]}.
\end{equation}
\( \ell \) is typically the 0/1 error, making \( R(f, \tilde{g}) \) the selective error. As indicated by the equation above, risk and coverage are strongly dependent, where rejecting more examples reduces selective risk but also results in lower coverage. This relationship revealed by the curve motivates the development of more nuanced evaluation metrics for selective classifiers.  Additionally, accuracy alone often fails in cases of class imbalance or pixel-level tasks~\cite{ding2020revisiting}, so evaluation metrics should accommodate different loss functions for a more comprehensive assessment. %\todo[inline][get citation for this sentences]
%This metric treats the rejection threshold $\tau$ as fixed, which works well in areas like medical diagnostics where prediction costs are consistent. 
%However, in domains like credit scoring, where predictions estimate repayment probabilities, a fixed threshold may disregard applicants' unique risk profiles. 

%Nevertheless, this metric treats the rejection threshold $\tau$ as a fixed constant, which may be appropriate in contexts such as medical diagnostics, where the cost of erroneous predictions can be consistently quantified across all instances, irrespective of the broader characteristics of the population. However, such an approach may not be suitable in domains like credit scoring, where predictions serve to estimate the probability of debt repayment. In such contexts, withholding predictions when the uncertainty surpasses a predetermined threshold (e.g., 10\%) may be less appropriate, as it uniformly disregards applicants without accounting for their distinct risk profiles or the comparative risk distribution across the entire applicant pool. Another notable consideration regarding evaluation metrics is that accuracy is often an insufficient measure for classification performance in various real-world contexts, particularly in the presence of class imbalance problem or in tasks requiring pixel-level classification. Consequently, the evaluation metric should be able to accommodate different loss functions, ensuring a more comprehensive assessment of the model performance under diverse conditions. 

\subsection{Equivalent Expressions of AURC}
Driven by the aforementioned considerations, the AURC metric \citep{geifman2018bias} is designed to offer a robust evaluation framework for classifiers by effectively capturing performance across varying rejection thresholds that are determined based on the distribution of samples within the population. The AURC is typically specified as an empirical quantity from a finite sample \citep[Eq.~(27)]{franc2023optimal}, from which we derive the population AURC as
\begin{equation} \footnotesize  \label{eq:AURCDef}
    \text{AURC}_p(f) =  \mathbb{E}_{\tilde{x}\sim\text{P}(x )} \frac{  \mathbb{E}_{(x,y) \sim \text{P}(x, y )} \ell(f(x),y)) \mathbb{I}\left[g(x) \ge g(\tilde{x})\right] }{ \mathbb{E}_{x^\prime \sim \text{P}(x )} \mathbb{I}\left[g(x^\prime) \ge g(\tilde{x})\right]}.
\end{equation}
Noticing that the expectation in the numerator can be swapped with the expectation outside, the equation above can then be written as:
\begin{align} \label{eq:pupulationaurc1}
    \text{AURC}_p(f) =  \mathbb{E}_{(x,y) \sim \text{P}(x, y )} \left[ \alpha(x) \ell\left(f \left(x\right),y\right)         \right]
\end{align}
where 
\begin{align} \label{eq:defalpha}
\alpha(x) =  \mathbb{E}_{\tilde{x}\sim\text{P}(x )} \left(\frac{\mathbb{I}\left[g(x) \ge g(\tilde{x})\right] }{ \mathbb{E}_{x^\prime \sim \text{P}(x )} \mathbb{I}\left[g(x^\prime) \ge g(\tilde{x})\right]} \right) 
\end{align}
This expression shows that the population AURC can be interpreted as the expectation of the risk function weighted by $\alpha(x)$ that accounts for the importance of each point. 
In order to better understand the population AURC, we study the behavior of the weight \( \alpha(x) \) in Eq.~\eqref{eq:defalpha}. The following proposition provides an equivalent expression for \( \alpha(x )\).
\begin{proposition}[An equivalent expression of \( \alpha(x) \)] \label{prop:1}
Define function $G(x)$ as the cumulative distribution function(CDF) of the CSF $g(x)$ such that
\begin{equation}
\begin{aligned} \label{eq:cdfgx}
G(x) = \text{Pr}\left( g(x')\le g \left(x \right)  \right) = \int \mathbb{I}\left[g(x')\le g \left(x \right)  \right] d\text{P}(x').
\end{aligned} 
\end{equation}
Under this definition, the $ \alpha(x)$ in Eq.~\eqref{eq:defalpha} is equivalent to 
\begin{align} \label{eq:defalpha2}
    \alpha(x) = -\ln(1 - G(x)).
\end{align}
\end{proposition}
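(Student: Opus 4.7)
The plan is to reduce the expression for $\alpha(x)$ to a one-dimensional integral over the unit interval by applying the probability integral transform to the random variable $T = g(\tilde{x})$, after which the logarithm emerges from $\int du/(1-u)$.

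First I would simplify the denominator inside the expectation. For fixed $\tilde{x}$,
\begin{equation*}
\mathbb{E}_{x'\sim\mathrm{P}(x)} \mathbb{I}[g(x') \ge g(\tilde{x})] = 1 - \Pr\bigl(g(x') < g(\tilde{x})\bigr),
\end{equation*}
and under the (mild) assumption that the pushforward distribution of $g(X)$ has no atoms, this equals $1 - G(\tilde{x})$ using the definition \eqref{eq:cdfgx}. The numerator $\mathbb{I}[g(x) \ge g(\tilde{x})]$ depends on $\tilde{x}$ only through $g(\tilde{x})$, so the whole expression is a function of the scalar random variable $g(\tilde{x})$.

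Next I would apply the probability integral transform. Writing $F(t) := \Pr(g(X) \le t)$, we have $G(\tilde{x}) = F(g(\tilde{x}))$, and under the no-atoms assumption $U := G(\tilde{x})$ is uniformly distributed on $[0,1]$. Since $F$ is non-decreasing and continuous, the event $\{g(x) \ge g(\tilde{x})\}$ coincides (up to a null set) with $\{U \le G(x)\}$. Writing $u_0 := G(x)$, we therefore obtain
\begin{equation*}
\alpha(x) \;=\; \mathbb{E}_{U\sim\mathrm{Unif}[0,1]} \frac{\mathbb{I}[U \le u_0]}{1 - U} \;=\; \int_0^{u_0} \frac{du}{1-u} \;=\; -\ln(1 - G(x)),
\end{equation*}
which is the desired identity.

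The main obstacle is the handling of ties, i.e.\ whether $g(X)$ may have atoms under $\mathrm{P}$. If atoms occur, then $\Pr(g(x') < g(\tilde{x})) \ne \Pr(g(x') \le g(\tilde{x}))$ in general, the transformed variable $G(\tilde{x})$ is no longer uniform, and the direction of the inequality in the numerator becomes delicate. I would either state an explicit continuity assumption on $g(X)$, or otherwise argue that the identity still holds almost everywhere by a limiting/approximation argument (e.g.\ via a small independent perturbation breaking ties, then passing to the limit using bounded convergence on the integrand $\mathbb{I}[U\le u_0]/(1-U)$, which is integrable away from $u_0 = 1$). Everything else in the proof is a short computation.
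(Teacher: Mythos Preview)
Your proposal is correct and follows essentially the same route as the paper: both identify the denominator as $1-G(\tilde{x})$, change variables to the CDF value (what you call the probability integral transform, what the paper calls ``reparameterizing in terms of $G(\tilde{x})$''), and evaluate $\int_0^{G(x)} du/(1-u)$. Your treatment is in fact more careful than the paper's, which silently assumes continuity of the law of $g(X)$ and does not discuss ties at all.
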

\begin{proof}
Since the expectation in the denominator in Eq.~\eqref{eq:defalpha} is the CDF of \( 1 - G(x) \), we have:

\[
\alpha(x) = \int_{\tilde{x}} \frac{\mathbb{I}\left[g(x) \geq g(\tilde{x})\right]}{1 - G(\tilde{x})} d\text{P}(\tilde{x}).
\]

This implies that we are integrating over the domain \(\tilde{x} \) where \( g(\tilde{x}) \leq g(x) \). Hence, we can rewrite it as:

\[
\alpha(x) = \int_{g(\tilde{x}) \leq g(x)} \frac{1}{1 - G(\tilde{x})} d\text{P}(\tilde{x}).
\]

To proceed, note that \( G(\tilde{x}) \) is the CDF of \( g(\tilde{x}) \), and since \( G(\tilde{x}) \) is monotonically increasing in \( g(\tilde{x}) \), we can reparameterize the integral in terms of \( G(\tilde{x}) \). Specifically, we know that \( G(\tilde{x}) \) takes values between \( 0 \) and \( 1 \) as it is a CDF, thus integration can be rewritten as:
\begin{equation*}\footnotesize
    \begin{aligned}
        \alpha(x) &= \int_{G(\tilde{x}) \leq G(x)} \frac{1}{1 - G(\tilde{x})} d\text{P}(\tilde{x})= \int_{0}^{G(x)} \frac{1}{1 - G(\tilde{x})} d\text{P}(\tilde{x}).
    \end{aligned}
\end{equation*}
Now, this integral is straightforward to compute:
\[
\int_0^{G(x)} \frac{1}{1 - t} dt = -\ln(1 - G(x)).
\]
Thus, we have derived the desired result:
\[
\alpha(x) = -\ln(1 - G(x)).
\]
\end{proof}
\vspace{-0.5cm}
Here $G(x)$ can be interpreted as the population rank percentile based on the CSF sorted in ascending order. This proposition motivates the following formulation, which can be considered equivalent to the population AURC in Eq.~\eqref{eq:AURCDef}.
\begin{definition}
[An equivalent expression of $\text{AURC}_p$] 
Given $G(x)$ as the CDF of the random variable $g(x)$, the population AURC in Eq.~\eqref{eq:AURCDef} is equivalent to:
\begin{equation} %\footnotesize
\begin{aligned} \label{eq:PopulationAurc2}
    \text{AURC}_{a}(f) & = \int \alpha(x)  \ell(f(x),y))d\text{P}(x,y ) 
\end{aligned} 
\end{equation}
where $\alpha(x) = -\ln \left(1-G(x)\right)$.
%and its plug-in estimator %where  $\hat{\alpha}_i =-\frac{1}{n} \ln\left(1- \beta_i \right).$
\end{definition}
We also provide empirical evidence supporting the equivalence in Appendix~\ref{sec:appedixequalvalence}. Notably, the following integral holds:
\begin{align} \int_0^1 -\ln\left(1 - z\right) dz = 1. \label{eq:IntegralHatAlpha} \end{align}
This result indicates that, in the limit of infinite data,
the integral of $\alpha(x)$ computed using this formula converges to one. Consequently, the population AURC can be interpreted as a redistribution of the risk.
\subsection{Plug-in Estimator of AURC} \label{sec:PlugInEstimator}
Given a finite sample of size $n$, namely $D_n = \left\{ \left(x_i, y_i\right) \right\}_{i=1}^n \subseteq (\mathcal{X} \times \mathcal{Y})$, which is sampled i.i.d. from the joint probability distribution $\text{P}(x, y)$, following Eq.~\eqref{eq:AURCDef}, the empirical AURC can be defined based on the sample as
\begin{equation}
\begin{aligned}
  \widehat{\text{AURC}}_p(f) & = \frac{1}{n} \sum_{j=1}^n \frac{\frac{1}{n} \sum_{i=1}^n \ell(f(x_i),y_i) \mathbb{I}[g(x_i) \ge g(x_j)]}{\frac{1}{n} \sum_{k=1}^n \mathbb{I}[g(x_k) \ge g(x_j)]}. \label{eq:empiricalAURCDef} 
\end{aligned} 
\end{equation}
This formulation represents the widely used AURC metric for evaluating the SC system. However, guarantees on the relationship to population-level AURC has not been considered, even though relying on this empirical estimator may introduce error, particularly when assessing the SC system with a small sample size. The naive implementation of this estimator incurs a quadratic computational cost of $\mathcal{O}(n^2)$ due to the nested loops. However, some packages e.g. \textit{torch-uncertainty}\footnote{\url{https://github.com/ENSTA-U2IS-AI/torch-uncertainty}} decrease this complexity to $\mathcal{O}(n\ln(n))$ by replacing redundant subset evaluations with a single sorting step followed by cumulative summation that efficiently computes error rates across all coverage levels. Here, we present a derivation of a method that achieves a computational complexity of $\mathcal{O}(n\ln(n))$. By leveraging the approach used to transform Eq.\eqref{eq:AURCDef} into Eq.\eqref{eq:pupulationaurc1}, the empirical AURC can be reformulated as a plug-in estimator:
\begin{align} 
  \widehat{\text{AURC}}_p(f) & = \frac{1}{n}\sum_{i=1}^n \hat{\alpha}(x_i) \ell(f(x_i),y_i) \label{eq:pluginAURC}
\end{align}
where
{\footnotesize
\begin{align}
\hat{\alpha}(x_i) & =  \sum_{j=1}^n  \frac{\mathbb{I}[g(x_i) \ge g(x_j)] }{\sum_{k=1}^n \mathbb{I}[g(x_k) \ge g(x_j)] } = \sum_{j=1}^{r_i} \frac{1}{n-j+1}
\end{align}}
where $r_i$ denotes the rank of $x_i$ when the data is sorted in ascending order according to the CSF, such that a larger $r_i$ corresponds to a higher CSF value. For simplicity, we use $ \hat{\alpha}_i$ as shorthand for $\hat{\alpha}(x_i)$. This estimator is a consistent estimator of $ \alpha_i$ for the population AURC, as directly established by the Continuous Mapping Theorem. Let \( \text{H}_n := \sum_{k=1}^n \frac{1}{k} \) denote the \( n \)th harmonic number, and define the digamma function as \( \psi(n) := \frac{\Gamma'(n)}{\Gamma(n)} \). The relationship between these two functions is given by $\text{H}_n = \psi(n+1) + \gamma$, where $\gamma \approx 0.577$ is the Euler–Mascheroni constant. Setting $\text{H}_0 = 0$, we can express $ \hat{\alpha}_i$ in terms of harmonic numbers or digamma functions: 
\begin{equation} 
\begin{aligned}
    \hat{\alpha}_i = \text{H}_{n} - \text{H}_{n-r_i} \label{eq:AURCcoeffsHarmonicEmpirical1} = \psi(n+1) - \psi(n-r_i+1),
\end{aligned}
\end{equation}
which enables efficient computation of the weight estimator. The computation cost of Eq.~\eqref{eq:pluginAURC} is $\mathcal{O}(n\ln(n))$ due to the sorting operation required for rank computation. Additionally, for the finite sample case, we have
\begin{align}
\frac{1}{n}  \sum_{i=1}^n  \hat{\alpha}_i = \frac{1}{n} \sum_{i=1}^n (H_{n}-H_{n-i}) = 1,
\end{align}
indicating the plug-in estimator with this weight $ \hat{\alpha}_i$ can be viewed as a redistribution of the risk. Each individual loss is weighted by $\frac{1}{n}\hat{\alpha}_i$, which depends on the rank of the corresponding sample point. 

\citet{franc2023optimal} gave another expression of the empirical AURC in Eq.~\eqref{eq:empiricalAURCDef} that can be interpreted as an arithmetic mean of the empirical selective risks corresponding to the coverage spread evenly over the interval $[0,1]$ with step $\frac{1}{n}$. In addition, they proposed the SELE score, served as a coarse lower bound for empirical AURC. Details about this estimator are provided in Appendix~\ref{sec:sele}.

\subsection{Alternate Derivation of Plug-in Estimators via Monte Carlo}

In this section, we explore an alternative derivation of plug-in estimators using the Monte Carlo method. For our population $\text{AURC}_a$ in Eq.~\eqref{eq:PopulationAurc2}, we aim to estimate this quantity using Monte Carlo integration. Since the cumulative distribution score $G(x)$ is unknown, we require an estimator for Eq.~\eqref{eq:defalpha2}, which can be achieved by taking the conditional expectation
\begin{equation} \label{eq:mc_weight}
    \begin{aligned}
     \mathbb{E}\left[-\ln(1-G(x_i)) \vert \{g(x_j)\}_{1\leq j\leq n}\right].  
    \end{aligned}
\end{equation}
Since $\{G(x_i)\}_{1\leq i\leq n}$ behave as i.i.d.\ samples from a uniform distribution $\mathcal{U}[0,1]$ when $x_i$ are i.i.d.\ from $P(x)$, we sample i.i.d.\ $\{\beta_i\}_{1\leq i \leq n} \sim \mathcal{U}[0,1]$, then sort this set in ascending order. Let $r_i$ be the rank for $\beta_i$ and set $\alpha_i= -\ln(1- \beta_i)$. Consequently, we can find a lower variance estimate with the same bias by repeating the process and averaging the obtained $\alpha_i$ estimates. The $\beta_i$ are order statistics of the uniform distribution, and have known distribution \citep[Section~2]{JONES200970} 
\begin{align}
    \beta_i \sim \operatorname{Beta}(r_i,n+1-r_i) = \frac{\beta_i^{r_i-1}(1-\beta_i)^{n-r_i}}{\operatorname{B}(r_i,n+1-r_i)} 
\end{align}
where $\text{B}$ denotes the beta function. Consequently, the limit expectation of our estimator with repeatedly resampled $\beta_i$ will yield %our weight estimator $\hat{\alpha}_i$ in Eq.~\eqref{eq:AURCcoeffsHarmonicEmpirical1} since 
\begin{equation}
\begin{aligned}
       \hat{\alpha}_i &= \mathbb{E}_{\beta_i }[-\ln(1-\beta_i) ] \\
    =&  \int_{0}^1 -\ln\left(1 - x \right) \frac{x^{r_i-1}(1-x)^{n-r_i}}{\operatorname{B}(r_i,n+1-r_i)}  dx \\
    =& H_n-H_{n-r_i}
\end{aligned}
\end{equation}
which leads to the weight estimator in Eq.~\eqref{eq:AURCcoeffsHarmonicEmpirical1}. This indicates that the plug-in estimator of Sec.~\ref{sec:PlugInEstimator} is in fact quite principled. Furthermore, we demonstrate the consistency of this weight estimator in Prop.~\ref{prop:msehatalphasum} and \ref{prop:convergehatalpha}. In the above procedure, we have used $ \hat{\alpha}_i = \mathbb{E}_{\beta_i }[-\ln(1-\beta_i)]$, but we can utilize the expectation of $\beta_i$, leading to another weight estimator
\begin{equation} \label{eq:lnestimator}
\begin{aligned}
    \hat{\alpha}_i^{\prime}  = -\ln \left(1-\mathbb{E}_{\beta_i}\left[\beta_i\right]\right)  = -\ln \left(1-  \frac{r_i}{n+1}  \right) 
\end{aligned}
\end{equation}
where the last equation is due to the fact that the expectation of $\operatorname{Beta}(a,b)$ is $ \frac{a}{a+b}$. This estimator is consistent, as $\displaystyle{\lim_{n \to \infty}} \left( \frac{r_i}{n+1} \right) = \beta_i$. In addition, we have
\begin{align}
  \frac{1}{n} \sum_{i=1}^n \hat{\alpha}_i^{\prime} = -\frac{1}{n} \sum_{i=1}^n  \ln\left(1-\frac{r_i}{n+1} \right) < 1,
\end{align}
but it approaches one as $n\rightarrow \infty$. Since the function $m(t) =-\ln(1-t)$ is convex, applying Jensen's inequality gives:
\begin{equation}
    -\ln \left(1-\mathbb{E}_{\beta_i}\left[\beta_i\right]\right)  \le \mathbb{E}_{\beta_i }[-\ln(1-\beta_i)] , 
\end{equation}
indicating the first estimator $\hat{\alpha}_i$ upper bounds the $\hat{\alpha}_i^{\prime}$. Thus this weight estimator $\hat{\alpha}_i^{\prime}$ will lead to a consistent plug-in estimator that lower bounds the plug-in estimator with $\hat{\alpha}_i$. In the next section, we will analyze the statistical properties of these two plug-in estimators, incorporating the weight estimators discussed above.

%Therefore, we could also consider an estimate $\hat{\alpha}_i := -\frac{1}{n} \ln\left(1-\frac{i}{n+1}\right)$, which is very similar to our previous suggestion of $-\ln\left(1-\frac{i}{n}\right)$ in Equation~\eqref{eq:limitLogAlphaEstimateFirstAttempt}, but we don't run into the problem of $-\ln(1-\frac{n}{n}) = \infty$, so it is now defined for all $i$ and not just $i<n$ as before. Not sure if normalizing to one would be a good idea, i.e.
% \begin{align}
%     \hat{\alpha}_i :=   \frac{\ln\left(1-\frac{i}{n+1} \right)}{\sum_{j=1}^n  \ln\left(1-\frac{j}{n+1} \right)} ,
% \end{align}
% but $\sum_{i=1}^n \hat{\alpha}_i = 1$ is a requirement for unbiasedness. 

\subsection{Statistical Properties}

In this section, we show that empirical estimators of AURC are in general biased (Prop.~\ref{prop:biashatalpha},~\ref{prop:biashatalphaprime}), but we also show consistency at a favorable convergence rate (Prop.~\ref{prop:convergehatalpha}) indicating the soundness of empirical estimators even at relatively small batch sizes.
Given the fact that the weight estimator and the losses are typically not independent as they both depend on the model logits, it is difficult to directly derive the statistical properties of the plug-in estimators. Therefore, we begin by examining the properties of the weight estimators $\hat{\alpha}_i$ and $\hat{\alpha}_i^{\prime}$ based on finite samples. For a specific data pair $(x_i, y_i) \in (\mathcal{X}, \mathcal{Y})$ with an unknown population rank percentile $\beta_i$, we consider randomly sampling $n-1$ i.i.d.\ samples repeatedly from the population. Our analysis focuses on assessing the bias and MSE of the weight estimators associated with this data pair.

\begin{proposition}[\textbf{Bias of $\hat{\alpha}_i$ }] \label{prop:biashatalpha}
The bias of the $\hat{\alpha}_i$ is given by 
    \begin{align}&\operatorname{Bias}\left(\hat{\alpha}_i \vert G(x_i)= \beta_i \right)\\&=  H_n - \sum_{i=1}^n H_{n-i} \text{C}^{i-1}_{n-1}\beta_i^{i-1} (1-\beta_i)^{n-i} + \ln(1-\beta_i). 
    \nonumber
\end{align}
\end{proposition}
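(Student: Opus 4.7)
The plan is to compute the conditional expectation $\mathbb{E}[\hat{\alpha}_i \mid G(x_i) = \beta_i]$ directly and then subtract the true weight $\alpha_i = -\ln(1-\beta_i)$. Since $\hat{\alpha}_i = H_n - H_{n-r_i}$ from Eq.~\eqref{eq:AURCcoeffsHarmonicEmpirical1}, only the rank $r_i$ is random under the conditioning, so it suffices to determine the conditional distribution of $r_i$ and push the expectation through the (deterministic in $r_i$) harmonic-number term.

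First I would identify the conditional law of $r_i$ given $\beta_i$. By the probability integral transform, the values $G(x_j)$ for $j \neq i$ are i.i.d.\ uniform on $[0,1]$ and, crucially, independent of $\beta_i = G(x_i)$. Under the (continuous) assumption that ties occur with probability zero, the rank of $x_i$ in ascending order of the CSF is
\begin{equation*}
r_i = 1 + \#\{j \neq i : g(x_j) \le g(x_i)\} = 1 + \#\{j \neq i : G(x_j) \le \beta_i\}.
\end{equation*}
The count on the right is $\operatorname{Binomial}(n-1, \beta_i)$, so
\begin{equation*}
\Pr(r_i = k \mid \beta_i) = \binom{n-1}{k-1}\beta_i^{k-1}(1-\beta_i)^{n-k}, \quad k=1,\dots,n.
\end{equation*}

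Next I would substitute into the expectation. Writing
\begin{equation*}
\mathbb{E}[\hat{\alpha}_i \mid \beta_i] = H_n - \mathbb{E}[H_{n-r_i} \mid \beta_i] = H_n - \sum_{k=1}^{n} H_{n-k}\binom{n-1}{k-1}\beta_i^{k-1}(1-\beta_i)^{n-k},
\end{equation*}
and then subtracting $\alpha_i = -\ln(1-\beta_i)$, I obtain exactly the stated expression (after renaming the summation index $k \to i$ to match the notation $\text{C}^{i-1}_{n-1}$ in the statement). The proof is concluded.

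The only step requiring care is the identification of the conditional distribution of $r_i$: one must justify that conditioning on $G(x_i)=\beta_i$ leaves the $G(x_j)$ for $j\neq i$ uniformly distributed and independent, and that ties can be ignored. This is a standard consequence of i.i.d.\ sampling plus the probability integral transform, so no obstacle is expected; the remainder is a direct rewriting.
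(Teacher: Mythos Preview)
Your proposal is correct and follows essentially the same approach as the paper: both identify the conditional distribution of $r_i$ given $G(x_i)=\beta_i$ as (a shifted) $\operatorname{Binomial}(n-1,\beta_i)$, compute $\mathbb{E}[\hat{\alpha}_i\mid\beta_i]$ by summing $(H_n-H_{n-k})$ against this distribution, and then subtract $\alpha_i=-\ln(1-\beta_i)$. Your justification via the probability integral transform is slightly more explicit than the paper's one-line argument, but the substance is the same.
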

\begin{proof}
The conditional expectation of $\hat{\alpha}_i$ associated with $(x_i,y_i)$ is given by
{\small
\begin{equation*} 
    \begin{aligned}
    \mathbb{E}\left[\hat{\alpha}_i \vert G(x_i)=\beta_i\right]  = \sum_{i=1}^n \left( H_n - H_{n-i} \right)  \text{Pr}\left(r_i=i \vert G\left(x_i\right)=\beta_i\right). 
\end{aligned}
\end{equation*}}
We notice that $\text{Pr}(r_i\vert G(x_i)=\beta_i)$ is a binomial distribution $\text{Bin}(n-1, \beta_i)$ given by:
\begin{equation*}
\text{Pr}\left(r_i=i \vert G(x_i)=\beta_i\right)  =\text{C}^{i-1}_{n-1}\beta_i^{i-1} (1-\beta_i)^{n-i} 
\end{equation*}
because the probability of any i.i.d.\ sample being ranked below $x_i$ is $\beta_i$, and there must be $(i-1)$ of them for the sample to be ranked $i$th, while the remaining $(n-i)$ samples must be ranked higher, each with an independent probability of $(1-\beta_i)$.
Combining these results gives us: 
    \begin{align}
    &\operatorname{Bias}\left(\hat{\alpha}_i \vert G(x_i)= \beta_i \right)= \mathbb{E}\left[\hat{\alpha}_i \vert G(x_i)=\beta_i\right]-\alpha_i     \nonumber \\ 
    &=  H_n - \sum_{i=1}^n H_{n-i} \text{C}^{i-1}_{n-1}\beta_i^{i-1} (1-\beta_i)^{n-i} + \ln(1-\beta_i).    \nonumber
\end{align}
% Similarly, we could derive the bias of $\hat{\alpha}_i^{\prime}$ given by
% \begin{align}
%  &\operatorname{Bias}\left(\hat{\alpha}_i^{\prime} \vert G(x_i)= \beta_i \right)= \mathbb{E}\left[\hat{\alpha}_i^{\prime} \vert G(x_i)=\beta_i\right]-\alpha_i \nonumber \\ 
%     &= -\sum_{i=1}^n \ln \left(1 - \frac{i}{n+1} \right) \text{C}^{i-1}_{n-1}\beta_i^{i-1} (1-\beta_i)^{n-i} + \ln(1-\beta_i)  \nonumber
% \end{align}
which concludes our proof.
\end{proof}

\begin{proposition}[\textbf{Bias of $\hat{\alpha}_i^{\prime}$}] \label{prop:biashatalphaprime}
The bias of the $\hat{\alpha}_i^{\prime}$ is given by 
{\small
\begin{align}
& \operatorname{Bias}\left(\hat{\alpha}_i^{\prime} \mid G(x_i)= \beta_i \right) \\
&= -\sum_{i=1}^n \ln \left(1 - \frac{i}{n+1} \right) \text{C}^{i-1}_{n-1} \beta_i^{i-1} (1-\beta_i)^{n-i}+ \ln(1-\beta_i) .\nonumber
\end{align}
}
\end{proposition}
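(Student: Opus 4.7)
The plan is to mirror the proof of Proposition~\ref{prop:biashatalpha} almost verbatim, since $\hat{\alpha}_i^{\prime} = -\ln(1 - r_i/(n+1))$ is, like $\hat{\alpha}_i$, a deterministic function of the single random quantity $r_i$ once $G(x_i)$ is fixed. First I would expand the conditional expectation as a finite sum over the possible values of the rank:
\begin{equation*}
\mathbb{E}\left[\hat{\alpha}_i^{\prime} \mid G(x_i) = \beta_i\right] = \sum_{k=1}^n -\ln\!\left(1 - \frac{k}{n+1}\right) \Pr\!\left(r_i = k \mid G(x_i) = \beta_i\right).
\end{equation*}

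Next, I would reuse the binomial rank distribution already derived in the proof of Proposition~\ref{prop:biashatalpha}. Conditional on $G(x_i) = \beta_i$, each of the remaining $n-1$ i.i.d.\ companion samples lies below $x_i$ independently with probability $\beta_i$, so the event $\{r_i = k\}$ corresponds to exactly $k-1$ of them being below and $n-k$ above. Hence $\Pr(r_i = k \mid G(x_i) = \beta_i) = \text{C}^{k-1}_{n-1}\beta_i^{k-1}(1-\beta_i)^{n-k}$. Substituting this into the conditional expectation and then subtracting the true weight $\alpha_i = -\ln(1-\beta_i)$ yields exactly
\begin{equation*}
\operatorname{Bias}(\hat{\alpha}_i^{\prime} \mid G(x_i) = \beta_i) = -\sum_{k=1}^n \ln\!\left(1 - \frac{k}{n+1}\right) \text{C}^{k-1}_{n-1}\beta_i^{k-1}(1-\beta_i)^{n-k} + \ln(1-\beta_i),
\end{equation*}
which is the stated formula (with the summation index relabelled as $i$ in the proposition, following the same notational overloading as in Proposition~\ref{prop:biashatalpha}).

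There is no real obstacle here: the argument is a direct transcription of the previous proof in which the summand $H_n - H_{n-k}$ is replaced by $-\ln(1 - k/(n+1))$, and the true value $\alpha_i = -\ln(1-\beta_i)$ is subtracted. The only step that requires any care is making sure the sign conventions line up (the outer minus sign in $\hat{\alpha}_i^{\prime}$ is absorbed into the leading $-$ in front of the sum), but no new probabilistic or analytic content is needed beyond the binomial identification of the distribution of $r_i$ that has already been established.
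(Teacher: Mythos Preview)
Your proposal is correct and follows essentially the same approach as the paper's own proof: expand the conditional expectation of $\hat{\alpha}_i^{\prime}$ over the possible rank values, invoke the binomial distribution $\text{Bin}(n-1,\beta_i)$ for the rank established in Proposition~\ref{prop:biashatalpha}, and subtract $\alpha_i = -\ln(1-\beta_i)$. Your observation about the overloaded summation index is accurate and matches the paper's notation.
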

\begin{proof}
The expected $\hat{\alpha}_i^{\prime}$ associated with $(x_i,y_i)$ is given by
{\small
\begin{equation*} 
    \begin{aligned}
    &\mathbb{E}\left[\hat{\alpha}_i^{\prime}\vert G(x_i) 
    =\beta_i\right] \\
    =& -\sum_{i=1}^n \ln \left(1 - \frac{i}{n+1} \right)  \text{Pr}(r_i=i \vert G(x_i)=\beta_i). 
\end{aligned}
\end{equation*}
}
Since $\text{Pr}(r_i\vert G(x_i)=\beta_i)$ follows $\text{Bin}(n-1, \beta_i)$, we obtain:
{\small
\begin{align}
    &\operatorname{Bias}\left(\hat{\alpha}_i^{\prime} \vert G(x_i)= \beta_i \right)= \mathbb{E}\left[\hat{\alpha}_i^{\prime} \vert G(x_i)=\beta_i\right]-\alpha_i \nonumber \\ 
    &= -\sum_{i=1}^n \ln \left(1 - \frac{i}{n+1} \right) \text{C}^{i-1}_{n-1}\beta_i^{i-1} (1-\beta_i)^{n-i} + \ln(1-\beta_i)  \nonumber
\end{align}
}
which concludes our proof.
\end{proof}

\begin{figure}
\footnotesize
\centering 
\hspace{-0.095\linewidth} 
\begin{minipage}[t]{0.49\linewidth} 
    \centering
    \includegraphics[width=1.85in]{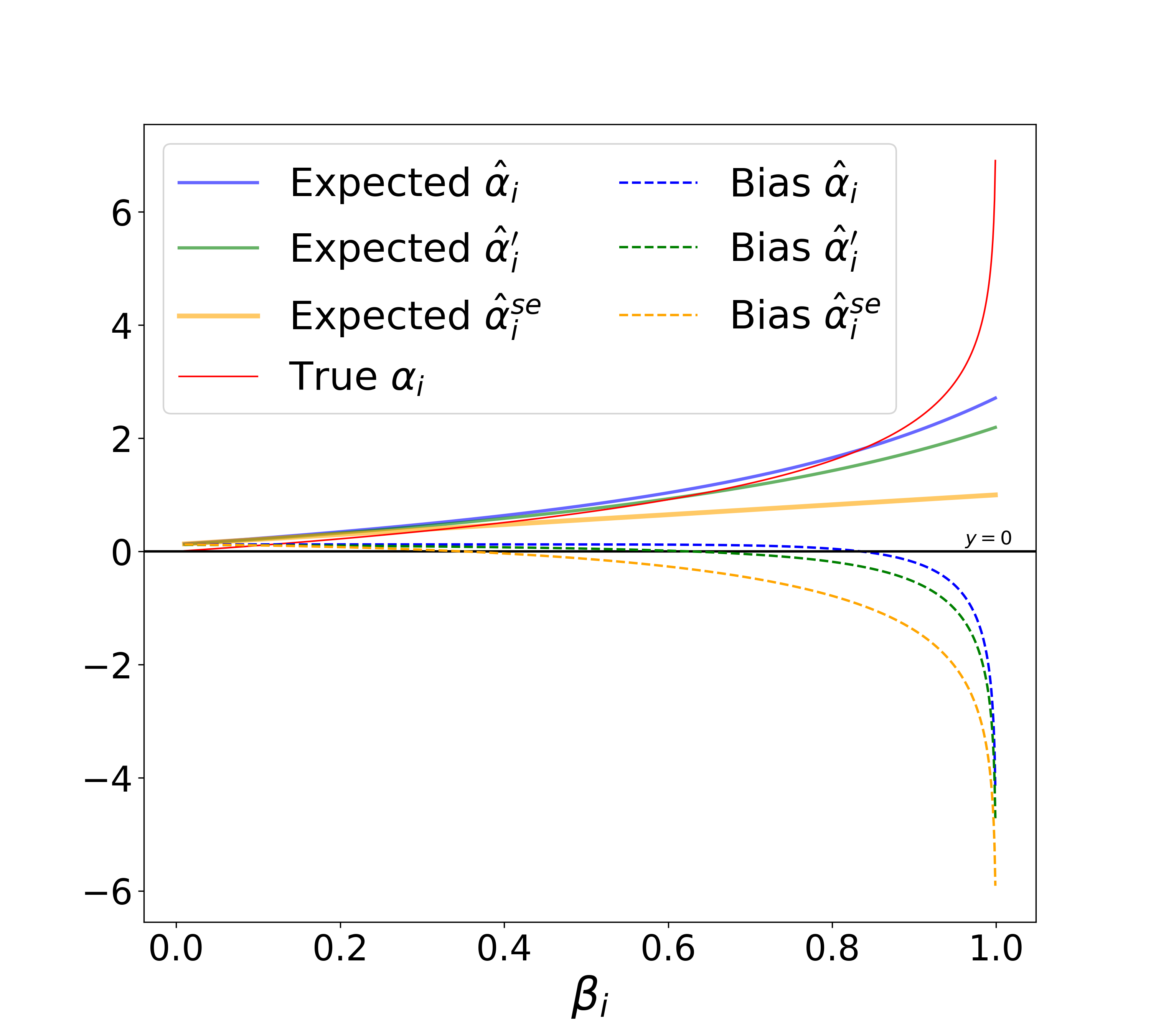}
    \subcaption{n=8}
    \label{fig:bias_n8}
\end{minipage}%
\hspace{0.02\linewidth} 
\begin{minipage}[t]{0.49\linewidth}
    \centering
    \includegraphics[width=1.85in]{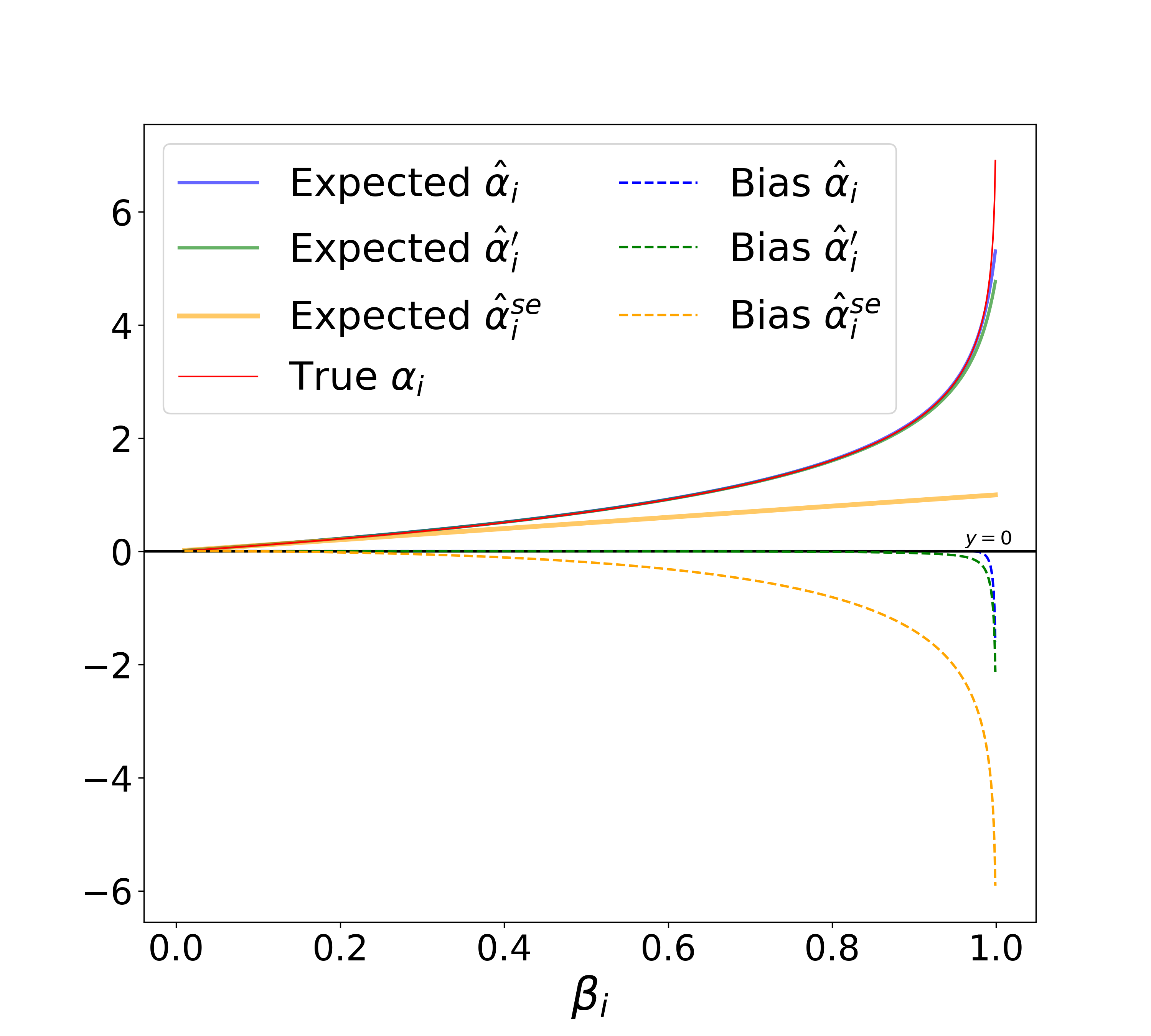}
    \subcaption{n=128}
    \label{fig:bias_n128}
\end{minipage}
\caption{The bias of the weights estimator as a function of $\beta$, based on the results in Propositions~\ref{prop:biashatalpha},~\ref{prop:biashatalphaprime} and~\ref{prop:biasalphasele}. The bias of these weight estimators is not equal to zero in general, but being positive for smaller $\beta_i$ and negative for larger $\beta_i$ as indicated. As sample size increases, the expected bias decreases significantly for larger $\beta_i$. The complete plots for different sample size can be found in Appendix~\ref{fig:fullbias}. }
\label{fig:bias_alpha}
\end{figure}
For the above mentioned weight estimators, the SELE Weight estimator $\hat{\alpha}_i^{se}$ exhibits the largest bias compared with $\hat{\alpha}$ and $\hat{\alpha}^{\prime}$, as indicated in Fig~\ref{fig:bias_alpha}. Due to this significant bias in weight estimation, SELE is not a reliable estimator for population AURC.

\begin{proposition}[\textbf{MSE of $\hat{\alpha}_i$}] \label{prop:msehatalpha}
The MSE of the $\hat{\alpha}_i$ is %given by
\begin{align}
   \operatorname{MSE}(\hat{\alpha}_i)&=\psi'(n+1-r_i) - \psi'(n+1)  \nonumber \\
   &\asymp \frac{\beta_i}{n(1-\beta_i)+1} . %\quad \forall \beta_i \in (0,1)
\end{align}
%that tightly bounded by \( %\mathcal{O}\left(
%         \frac{\beta_i}{n(1-\beta_i)+1}
%        \right) \).
\end{proposition}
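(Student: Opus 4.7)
The plan is to identify $\hat{\alpha}_i$ as a posterior-mean (Bayes) estimator and thereby reduce the MSE to a posterior variance that can be computed in closed form from the log-moments of a Beta distribution. The Monte Carlo derivation already showed that $\hat{\alpha}_i = \mathbb{E}[-\ln(1-\beta_i)\vert r_i]$ when $\beta_i\vert r_i\sim\operatorname{Beta}(r_i,\,n+1-r_i)$. Since a conditional expectation is the minimum-MSE estimator of what it averages, the bias-plus-variance decomposition collapses and
\[
\operatorname{MSE}(\hat{\alpha}_i\vert r_i)\,=\,\operatorname{Var}\!\bigl(\ln(1-\beta_i)\vert r_i\bigr),
\]
so the problem reduces to computing the variance of the logarithm of a Beta random variable.

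Next I would compute $\operatorname{Var}(\ln U)$ for $U\sim\operatorname{Beta}(a,b)$ in closed form. Starting from $\mathbb{E}[U^t]=\operatorname{B}(a+t,b)/\operatorname{B}(a,b)$, which is the MGF of $\ln U$, taking logs gives the cumulant generating function $K(t)=\ln\Gamma(a+t)-\ln\Gamma(a+b+t)+\text{const}$; its second derivative at $t=0$ is $\psi'(a)-\psi'(a+b)$. Applied to $1-\beta_i\vert r_i\sim\operatorname{Beta}(n+1-r_i,\,r_i)$, this yields the exact identity $\operatorname{MSE}(\hat{\alpha}_i\vert r_i)=\psi'(n+1-r_i)-\psi'(n+1)$.

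For the asymptotic $\asymp$, I would invoke the standard expansion $\psi'(x)=\tfrac{1}{x}+O(1/x^2)$ as $x\to\infty$, which gives
\[
\psi'(n+1-r_i)-\psi'(n+1)\,=\,\frac{r_i}{(n+1-r_i)(n+1)}+O\!\left(\tfrac{1}{n^2}\right),
\]
and then pass from $r_i$ to $\beta_i$ via the posterior-mean relation $\mathbb{E}[\beta_i\vert r_i]=r_i/(n+1)$, i.e.\ substituting $r_i\approx(n+1)\beta_i$. This produces $\beta_i/\bigl[(n+1)(1-\beta_i)\bigr]$, which is of the same order as the target $\beta_i/\bigl(n(1-\beta_i)+1\bigr)$, yielding the claimed $\asymp$ relation.

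The reduction and the closed-form variance computation are essentially routine; the main obstacle is the asymptotic step, because the exact identity depends on the random $r_i$ while the asymptotic is expressed purely in terms of $\beta_i$. I would make the $\asymp$ precise by fixing $\beta_i$ in a compact subinterval of $(0,1)$, substituting $r_i=(n+1)\beta_i$ into the leading-order expression, and verifying---either by direct algebra or via Chernoff-type concentration of $r_i\vert\beta_i$ around its mean---that the two expressions stay within bounded multiplicative constants as $n\to\infty$.
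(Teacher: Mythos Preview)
Your proposal is correct and reaches the same identity as the paper, but the route differs in packaging. The paper expands the square directly: it writes $\operatorname{MSE}(\hat{\alpha}_i)=\int_0^1(\hat{\alpha}_i+\ln(1-\beta_i))^2\,d\text{P}(\beta_i)$ with $\beta_i\sim\operatorname{Beta}(r_i,n+1-r_i)$, uses $\int\ln(1-\beta_i)\,d\text{P}(\beta_i)=-(H_n-H_{n-r_i})$ to kill the cross term, and then quotes the closed form $\int\ln^2(1-\beta_i)\,d\text{P}(\beta_i)=(H_n-H_{n-r_i})^2+\psi'(n+1-r_i)-\psi'(n+1)$. Your observation that $\hat{\alpha}_i$ is the posterior mean, so that the MSE collapses to the conditional variance, together with the cumulant-generating-function derivation of $\operatorname{Var}(\ln U)=\psi'(a)-\psi'(a+b)$, is a cleaner and more conceptual way to obtain exactly the same formula; it also explains \emph{why} the cross term vanishes rather than discovering it by computation. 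For the asymptotic $\asymp$, the paper avoids any concentration argument by using the explicit two-sided bounds $\tfrac{1}{x}+\tfrac{1}{2x^2}\le\psi'(x)\le\tfrac{1}{x}+\tfrac{1}{x^2}$ and then substituting $r_i\leftrightarrow n\beta_i$ directly; this gives matching upper and lower bounds without restricting $\beta_i$ to a compact subinterval or invoking Chernoff-type concentration of $r_i$. Your one-sided $O(1/x^2)$ expansion would need to be sharpened in this way (note the error for $\psi'(n+1-r_i)$ is $O(1/(n+1-r_i)^2)$, not $O(1/n^2)$), but otherwise the asymptotic step is the same.
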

\begin{proof}
    From result in Prop.\ref{prop:1}, we calculate:
\begin{equation*}
\begin{aligned}
& \operatorname{MSE}(\hat{\alpha}_i) = \mathbb{E}_{\beta_i} \left[(\hat{\alpha}_i +  \ln(1- \beta_i))^2\right] \\
    =& \int_0^1 \left( \left(\text{H}_n-\text{H}_{n-r_i}\right) +  \ln\left(1-\beta_i\right) \right)^2 d \text{P}(\beta_i) \\
    %=& \left( H_n-H_{n-i} \right)^2 +  \int_0^1 \ln(1-x)^2 dP(x) + 2 \left(H_n-H_{n-i}\right)^2 \\
    =&  \underbrace{\int_0^1 \ln(1-\beta_i)^2 d\text{P}(\beta_i)}_{:=\text{M}} - \left(\text{H}_n-\text{H}_{n-r_i} \right)^2 
\end{aligned}
\end{equation*}
where the second equality is led by $\int_0^1 \ln(1-\beta_i )d\text{P}(\beta_i)=-\left(\text{H}_n-\text{H}_{n-r_i} \right)$. And $d\text{P}(\beta_i)$ is taken to mean integration with respect to the measure induced by $\beta_i\sim \operatorname{Beta}(r_i,n+1-r_i)$.
Focusing on the remaining integral, we have the closed form:
\begin{equation*} 
%\int_0^1 \ln(1-x)^2 d\text{P}(x) 
\text{M}= \left( \text{H}_n - \text{H}_{n-r_i} \right)^2 + \psi'(n+1-r_i) - \psi'(n+1) ,
\end{equation*}
and the result:
\begin{align*}
    \operatorname{MSE}(\hat{\alpha}_i) =& \psi'(n+1-r_i) - \psi'(n+1).
\end{align*}
This term involves the first derivative of the digamma function for which the inequality $\frac{1}{n} + \frac{1}{2n^2} \leq \psi'(n) \leq \frac{1}{n} + \frac{1}{n^2}$ is well known. Applying these inequalities, we obtain
\begin{equation*}\small
    \begin{aligned}
        \operatorname{MSE}(\hat{\alpha}_i) \leq& \frac{1}{n+1-r_i} + \frac{1}{(n+1-r_i)^2} - \frac{1}{n+1} - \frac{1/2}{(n+1)^2}
        \\
        =&
        \mathcal{O}\left(
        \frac{1}{n-r_i +1} - \frac{1}{n+1}
        \right) =\mathcal{O}\left(
         \frac{\beta_i}{n(1-\beta_i)+1}
        \right).
    \end{aligned}
\end{equation*}

By analogous reasoning determining a lower bound on the MSE, we achieve the result 
\begin{align} \label{eq:mse_alphaUpperBound}
\operatorname{MSE}(\hat{\alpha}_i) \asymp \frac{\beta_i}{n(1-\beta_i)+1}  \quad \forall \beta_i \in (0,1).
\end{align}
which is visualized in Fig~\ref{fig:Var_nalphaUpperBound}.
\end{proof}

\begin{figure}[ht]
    \centering
        \includegraphics[width=0.4\textwidth]{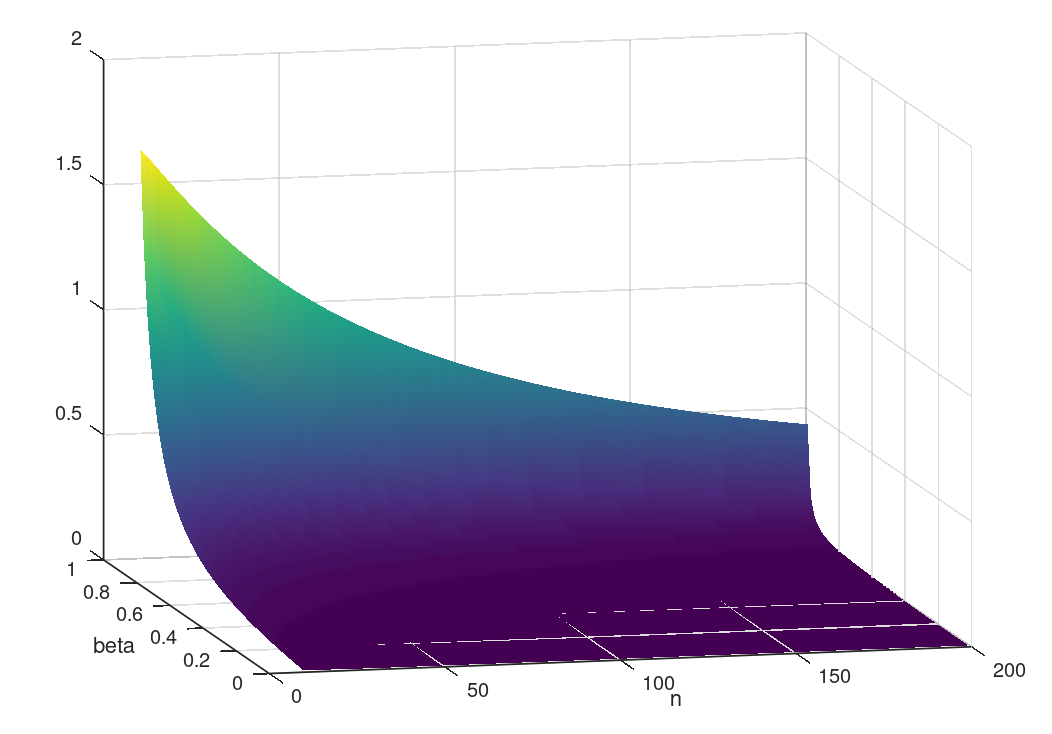}
        \caption{The bound in Eq.~\eqref{eq:mse_alphaUpperBound} as a function of $n$ and $\beta_i$.}
        \label{fig:Var_nalphaUpperBound}
\end{figure}
We also demonstrate in Appendix~\ref{prop:msealphaprime} that the MSE of $\hat{\alpha}_i^{\prime}$ is tightly upper bounded by Eq.~\eqref{eq:mse_alphaUpperBound}, though it remains larger than the MSE of $\hat{\alpha}_i$.

% \widehat{\text{AURC}}_p(f)$
\begin{proposition}[\textbf{Convergence Rate of the plug-in estimators with $\hat{\alpha}_i$ or $\hat{\alpha}_i^{\prime}$ }] \label{prop:convergehatalpha}
Assume that the loss function $\ell$ is square-integrable, i.e.,
%\begin{equation*}
    $\int \ell^2(f(x), y) \, d\text{P}(x, y) < \infty$.
%\end{equation*}
Then, the plug-in estimators with $\hat{\alpha}_i$ or $\hat{\alpha}_i^{\prime}$ as the weight estimator, converges at a rate of $\mathcal{O}(\sqrt{\ln(n)/n})$.
\end{proposition}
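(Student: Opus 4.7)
The plan is to decompose the error into a Monte Carlo piece and a plug-in (weight estimation) piece via the triangle inequality:
\begin{align*}
\widehat{\text{AURC}}_p(f) - \text{AURC}_a(f)
&= \underbrace{\frac{1}{n}\sum_{i=1}^n \bigl(\hat{\alpha}_i - \alpha_i\bigr)\ell(f(x_i),y_i)}_{=:T_1} \\
&\quad + \underbrace{\frac{1}{n}\sum_{i=1}^n \alpha_i\,\ell(f(x_i),y_i) - \mathbb{E}[\alpha(x)\ell(f(x),y)]}_{=:T_2},
\end{align*}
where $\alpha_i=-\ln(1-G(x_i))$. I will show $T_2 = \mathcal{O}_P(1/\sqrt{n})$ and $T_1 = \mathcal{O}_P(\sqrt{\ln(n)/n})$, so the slower rate dominates.

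\textbf{Bounding $T_2$.} Because $x_i$ are i.i.d., $\alpha_i\ell(f(x_i),y_i)$ are i.i.d.\ copies of $\alpha(x)\ell(f(x),y)$. Since $G(x)\sim\mathcal{U}[0,1]$, $\alpha(x)\sim\mathrm{Exp}(1)$ is independent of neither $x$ nor $\ell$ in general, but has finite moments of all orders. By Cauchy--Schwarz, $\mathbb{E}[\alpha^2\ell^2]\le\sqrt{\mathbb{E}[\alpha^4]\,\mathbb{E}[\ell^4]}$; although we only assume $\ell$ is square-integrable, the same Cauchy--Schwarz trick shows $\mathrm{Var}(\alpha\ell)\le\mathbb{E}[\alpha^2\ell^2]$ is finite whenever $\ell\in L^2$ and $\alpha$ has finite fourth moment (true since $\alpha\sim\mathrm{Exp}(1)$ in its marginal), and by conditioning on $x$ one shows $\mathbb{E}[\alpha^2\ell^2]<\infty$ under the square-integrability assumption alone via the layer-cake formula. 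Chebyshev's inequality then yields $T_2=\mathcal{O}_P(1/\sqrt{n})$.

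\textbf{Bounding $T_1$.} Apply Cauchy--Schwarz to the sum:
\begin{equation*}
|T_1| \;\le\; \sqrt{\frac{1}{n}\sum_{i=1}^n(\hat{\alpha}_i-\alpha_i)^2}\;\cdot\;\sqrt{\frac{1}{n}\sum_{i=1}^n \ell(f(x_i),y_i)^2}.
\end{equation*}
The second factor is $\mathcal{O}_P(1)$ by the law of large numbers and the square-integrability assumption. For the first factor, take expectations and use Proposition on the MSE of $\hat{\alpha}_i$ together with $\beta_i=G(x_i)\sim\mathcal{U}[0,1]$:
\begin{equation*}
\mathbb{E}\!\left[(\hat{\alpha}_i-\alpha_i)^2\right] \;=\; \int_0^1 \mathrm{MSE}(\hat{\alpha}_i\mid\beta_i)\,d\beta_i \;=\; \mathcal{O}\!\left(\int_0^1 \frac{\beta}{n(1-\beta)+1}\,d\beta\right).
\end{equation*}
The substitution $u=1-\beta$ and the elementary identity $(1-u)/(u+1/n)=-1+(1+1/n)/(u+1/n)$ give the integral equal to $\mathcal{O}(\ln(n)/n)$. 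Hence $\mathbb{E}[\frac{1}{n}\sum_i(\hat{\alpha}_i-\alpha_i)^2]=\mathcal{O}(\ln(n)/n)$, so by Markov the first factor is $\mathcal{O}_P(\sqrt{\ln(n)/n})$, giving $T_1=\mathcal{O}_P(\sqrt{\ln(n)/n})$. The argument for $\hat{\alpha}_i^{\prime}$ is identical, using the MSE bound cited in the appendix which is of the same order.

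\textbf{Main obstacle.} The delicate point is handling the lack of independence: $\hat{\alpha}_i$ and $\ell(f(x_i),y_i)$ are coupled through the ranks $r_i$, and the $r_i$ are not independent across $i$. Cauchy--Schwarz sidesteps this by decoupling the weight deviation from the losses at the cost of only a constant factor, while the marginal MSE bound (which does not rely on joint independence) controls the first factor in expectation. The $\ln n$ factor in the final rate is intrinsic: it arises from the logarithmic singularity of $\int_0^1 \beta/(n(1-\beta)+1)\,d\beta$ near $\beta=1$, i.e.\ the largest ranks contribute disproportionately to weight-estimation error, exactly as Figure~\ref{fig:Var_nalphaUpperBound} illustrates.
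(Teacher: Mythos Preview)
Your proof follows essentially the same route as the paper's: the identical decomposition into $T_1$ (the paper's term $A$) and $T_2$ (term $B$), Cauchy--Schwarz on $T_1$ to decouple the weight deviations from the losses, the same integral $\int_0^1 \beta/(n(1-\beta)+1)\,d\beta = \mathcal{O}(\ln n/n)$ for the averaged MSE, and the Monte Carlo $\mathcal{O}(n^{-1/2})$ rate for $T_2$. Your attempt to verify finite variance of $\alpha\ell$ for $T_2$ is muddled (the claim that $\ell\in L^2$ and $\alpha$ having finite fourth moment forces $\alpha\ell\in L^2$ is false without independence, and the conditioning/layer-cake remark is left vague), but the paper itself simply cites the Monte Carlo rate without checking this hypothesis, so the two arguments are at the same level of rigor on that point.
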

\begin{proof}
We first analyze the difference between the plug-in estimator with $\hat{\alpha}_i$ and the population expected value:
\begin{align*}
    \frac{1}{n}\sum_{i=1}^n \hat{\alpha}_i \ell(f(x_i), y_i) - \mathbb{E}[\alpha \ell(f(x), y)].
\end{align*}
This can be decomposed as the sum of the following two terms:
\begin{align*}
   & A = \frac{1}{n}\sum_{i=1}^n (\hat{\alpha}_i - \alpha_i)\ell(f(x_i), y_i) \\
    & B = \frac{1}{n}\sum_{i=1}^n \alpha_i \ell(f(x_i), y_i) - \mathbb{E}[\alpha \ell(f(x), y)]
\end{align*}
where term (1) captures the error caused by the bias in estimating $\alpha_i$ and term (2) represents the error introduced by approximating the expected value $\mathbb{E}[\alpha \ell(f(x), y)]$ with the empirical average.

Making use of the Cauchy–Schwarz inequality, we obtain the following result:
\begin{equation}
\begin{aligned}
A^2 & \leq  \left(\frac{1}{n} \sum_{i=1}^n (\hat{\alpha}_i - \alpha_i)^2 \right) \left( \frac{1}{n} \sum_{i=1}^n \ell(f(x_i), y_i)^2 \right) \\
& =  \left( \frac{1}{n} \sum_{i=1}^n (\hat{\alpha}_i - \alpha_i)^2 \right) \mathbb{E} \left[ \ell(f(x), y)^2 \right]. \label{eq:Asquared}
%& = C \left( \frac{1}{n} \sum_{i=1}^n (\hat{\alpha}_i - \alpha_i)^2 \right) 
%= \mathcal{O}(\frac{\ln(n)}{n^2}).
\end{aligned} 
\end{equation}
%where $C = \mathbb{E} \left[\ell^2 \left(f(x),y\right)\right]$ and the last equality is led by our assumption on the loss function $\ell$. 
From Proposition~\ref{prop:msehatalphasum}, $\frac{1}{n} \sum_{i=1}^n \operatorname{MSE}(\hat{\alpha}_i) $ is bounded by $\mathcal{O}\left(\frac{\ln(n)}{n}\right)$, which means 
    \begin{align}
   \frac{1}{n} \sum_{i=1}^n \left( \hat{\alpha}_i - \alpha_i \right)^2  =  \mathcal{O}\left(\frac{\ln(n)}{n}\right).
    \end{align}
By combining this with Eq.~\eqref{eq:Asquared} and the square-integrable assumption of the loss function $\ell$, term A asymptotically converges at a rate $\mathcal{O}(\sqrt{\ln(n)/n})$. Term B, corresponding to the Monte Carlo method, is well-known to converge at a rate $\mathcal{O}(n^{-1/2})$ \citep{caflisch1998monte}, which is faster than $\mathcal{O}(\sqrt{\ln(n)/n})$. Thus, our overall convergence rate is dominated by the rate derived for term A. Similarly, the same convergence rate applies to the %plug-in 
estimator with $\hat{\alpha}_i^{\prime}$.
\end{proof}

\section{Experiments}\label{sec:results}

%\todo[inline]{Indeed, it is a label issue with my plots, I will upload the new one soon.}
\textbf{Datasets}. We use images datasets such as CIFAR10/100~\citep{krizhevsky2009learning} and ImageNet~\citep{deng2009imagenet}, and a text dataset i.e Amazon Reviews \citep{ni2019justifying}. The Amazon dataset contains review text inputs paired with 1-out-of-5 star ratings as labels. 

\textbf{Models}. For experiments on CIFAR10/100, we report the results on the VGG13, VGG16, VGG19~\citep{simonyan2014very} model with batch norm layers, WideResNet28x10~\citep{zagoruyko2016wide}, and ResNet~\citep{he2016deep} models with different depths $(20, 56, 110, 164)$. For each model architecture, we have 5 different models that are pre-trained on the CIFAR10/100 dataset. For experiments on Amazon dataset, we use pre-trained transformer-based models – BERT~\citep{devlin2018bert}, RoBERTa~\citep{liu2021robustly}, Distill-Bert~\citep{sanh2019distilbert} (D-BERT), and Distill-Roberta (D-RoBERTa)\footnote{Obtained from RoBERTa with the procedure of \citet{sanh2019distilbert}}. For experiments on the ImageNet dataset, we use the pre-trained models from \textit{timm}~\citep{rw2019timm} package, including two vision transformer (ViT) ~\citep{dosovitskiy2020image} variants, ViT-Small and ViT-Large; and two Swin transformer-based models~\citep{liu2021swin}, Swin-Base and Swin-Tiny. All these models are configured with standard image resolution – 224.

\textbf{Metrics.}
For our comparative analysis, we evaluate several metrics, including the population $\text{AURC}_p$ and finite-sample estimators. The $\text{AURC}_p$ is computed using Eq.~\eqref{eq:pluginAURC} across the test set. In the finite-sample setting, we evaluate the plug-in estimators with $\hat{\alpha}$ or $\hat{\alpha}^{\prime}$, the SELE score~\citep{franc2023optimal}, and $2\times$SELE as proposed by the original authors (see Appendix~\ref{sec:sele} for a discussion). %Additionally, we explore the $2\times$SELE, in contrast to the SELE score, which serves as a lower bound for the empirical AURC. 
Beyond the 0/1 loss, we incorporate these metrics with the Cross-Entropy (CE) loss that serves as a complementary measure for assessing the classifier's performance.

\textbf{Experimental setup.}
We evaluate the metrics using several pre-trained models on the test set, which is randomly divided into various batch sizes ($8, 16, 32,\cdots, 1024$). We use MSP as our confidence score function, compute the metrics for these batch samples, and subsequently calculate the mean and standard deviation of these finite sample estimators. The population $\text{AURC}_p$ is computed across all samples in the test set. For the CIFAR10/100 datasets, we evaluate the mean and standard deviation of the Mean Absolute Error (MAE) across five distinct pre-trained models. For the ImageNet and Amazon datasets, we compute the mean, standard deviation, and MSE for different estimators of the pre-trained model across batch samples.\footnote{Code is available at \url{https://github.com/han678/AsymptoticAURC}.} % To facilitate an equitable comparison among the estimators, we compute the MAE based on the $\text{AURC}_p$ rather than $\text{AURC}_a$.

\begin{figure}[ht]
\footnotesize
\centering 
\hspace{-0.095\linewidth} 
\begin{minipage}[t]{0.48\linewidth} 
    \centering
    \includegraphics[width=1.68in]{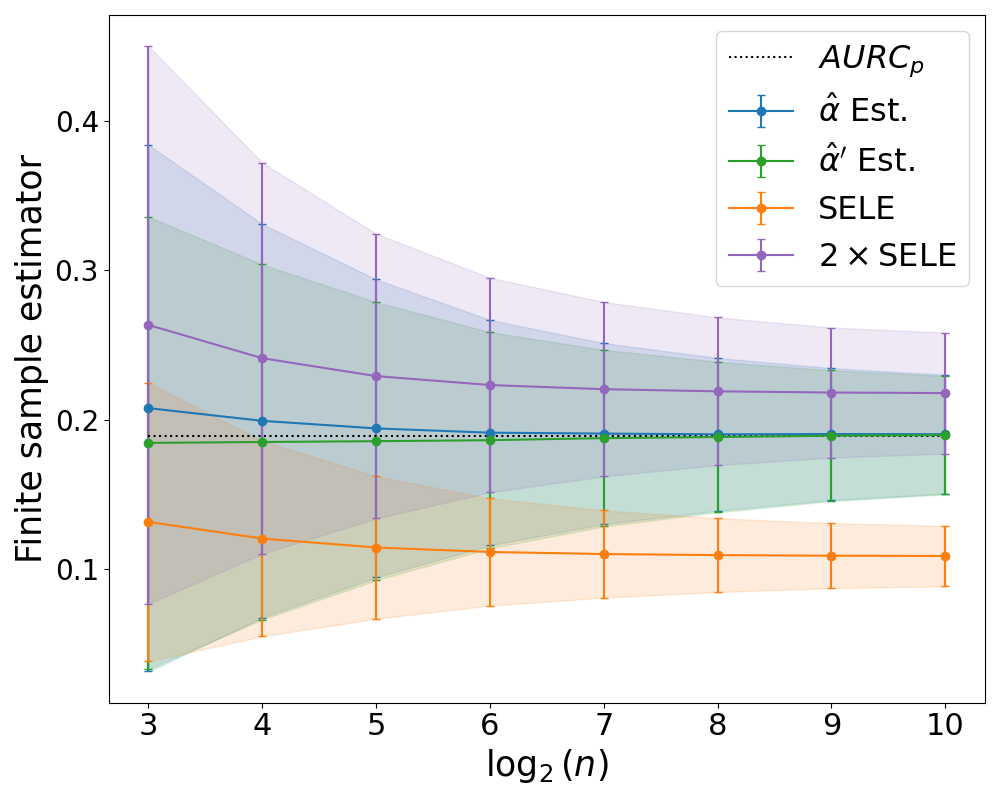}
    \subcaption{BERT (\textbf{0/1})}
    \label{fig:amazon-bert01}
\end{minipage}%
\hspace{0.02\linewidth}  
\begin{minipage}[t]{0.48\linewidth}
    \centering
    \includegraphics[width=1.68in]{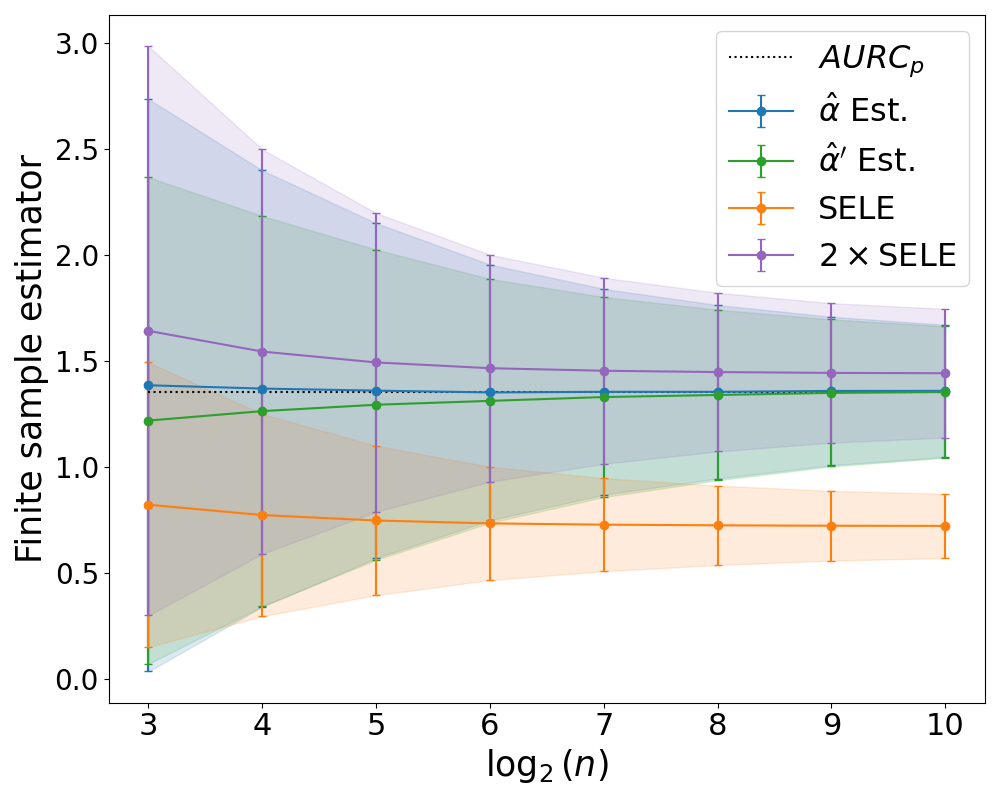}
    \subcaption{BERT (\textbf{CE})}
    \label{fig:amazon-bertce}
\end{minipage}%
\caption{(\textbf{Amazon}) Finite sample estimators with \textbf{0/1} or \textbf{CE} loss. We utilize a pre-trained model and randomly divide the test set into batch samples of size $n$. Subsequently, we compute the mean and std of various estimators applied to these batch samples.}
\label{fig:amazon}
\end{figure}

\begin{figure}[htb]
\centering % This centers the entire figure
\small
\hspace{-0.095\linewidth} 
\begin{subfigure}[t]{0.48\linewidth}
    \centering
    \includegraphics[width=1.65in]{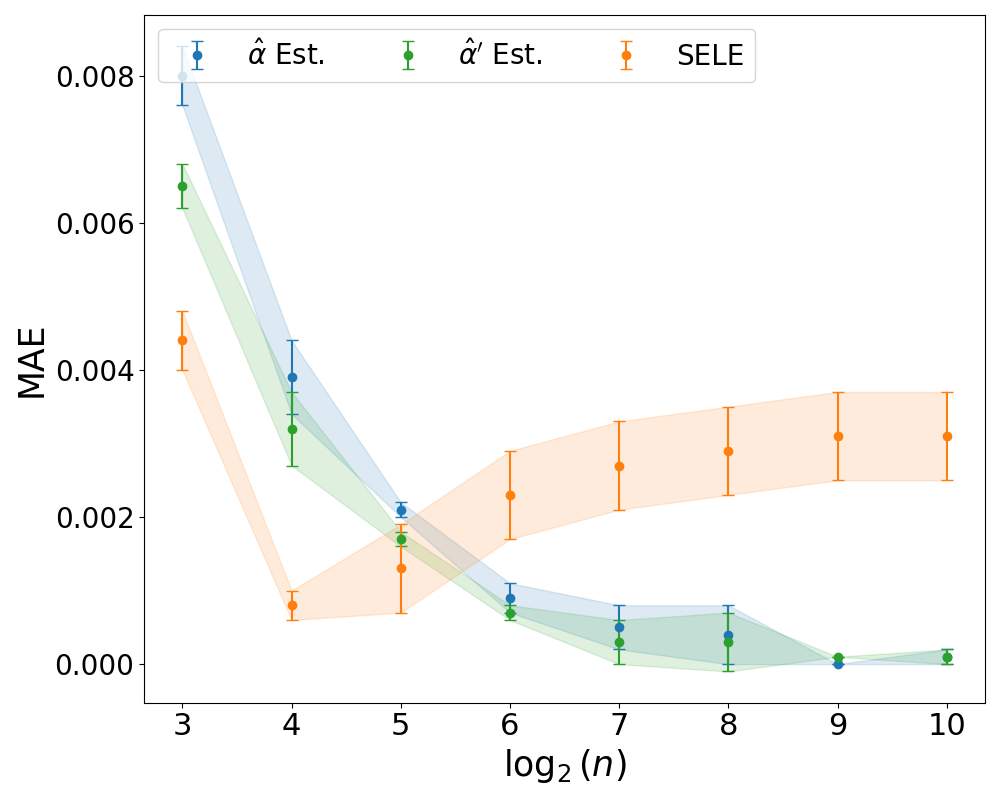}
    \subcaption{VGG16BN (\textbf{CIFAR10})}
    \label{fig:cifar10-vgg16bn}
\end{subfigure}%
\hspace{0.02\linewidth} 
\begin{subfigure}[t]{0.48\linewidth}
    \centering
    \includegraphics[width=1.65in]{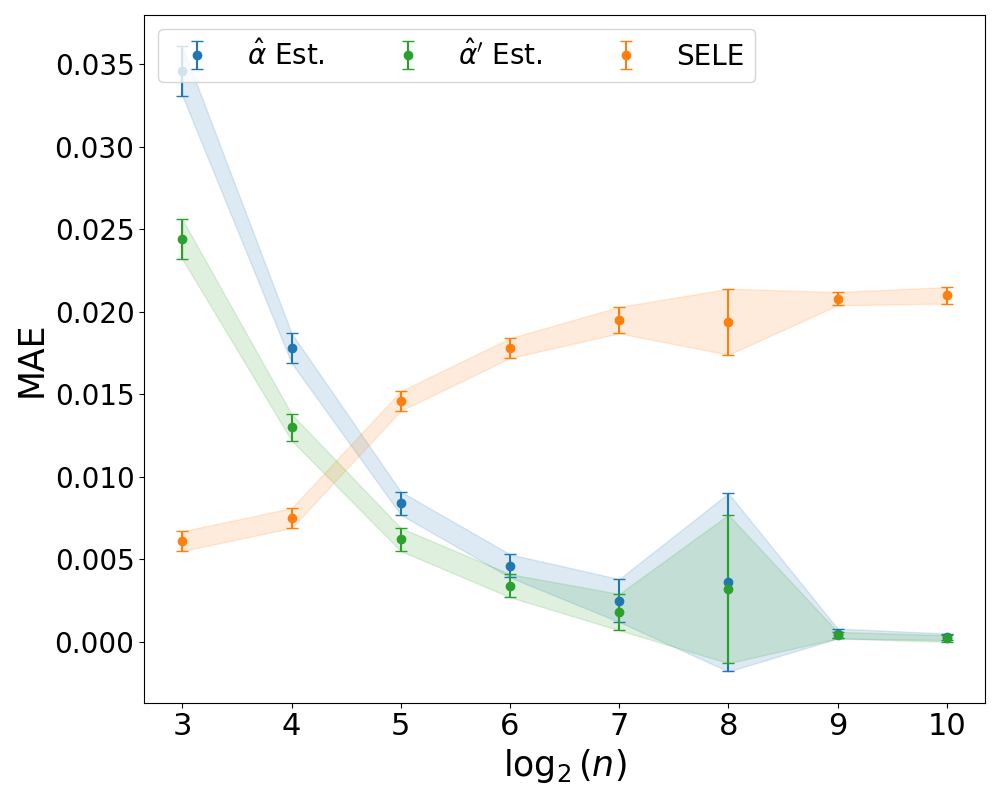}
    \subcaption{VGG16BN (\textbf{CIFAR100})}
    \label{fig:cifar10-vgg16bn-ce}
\end{subfigure}%
\caption{(\textbf{CIFAR10/100}) MAE of different finite sample estimators evaluated with \textbf{0/1} loss. For each model architecture, we compute the mean and std of the MAE across five distinct pre-trained models. The MAE for each model is calculated using batch samples divided from the test set. More results can be found in Figs.~\ref{fig:cifar10absbiasceloss}-\ref{fig:cifar100absbiasceloss}.}
\label{fig:cifar10absbias01loss}
\end{figure}

\begin{figure}[htb] 
\footnotesize
\centering
%\hspace{-0.095\linewidth} 
\begin{minipage}[t]{0.49\linewidth}
    \centering
    \includegraphics[width=1.6in]{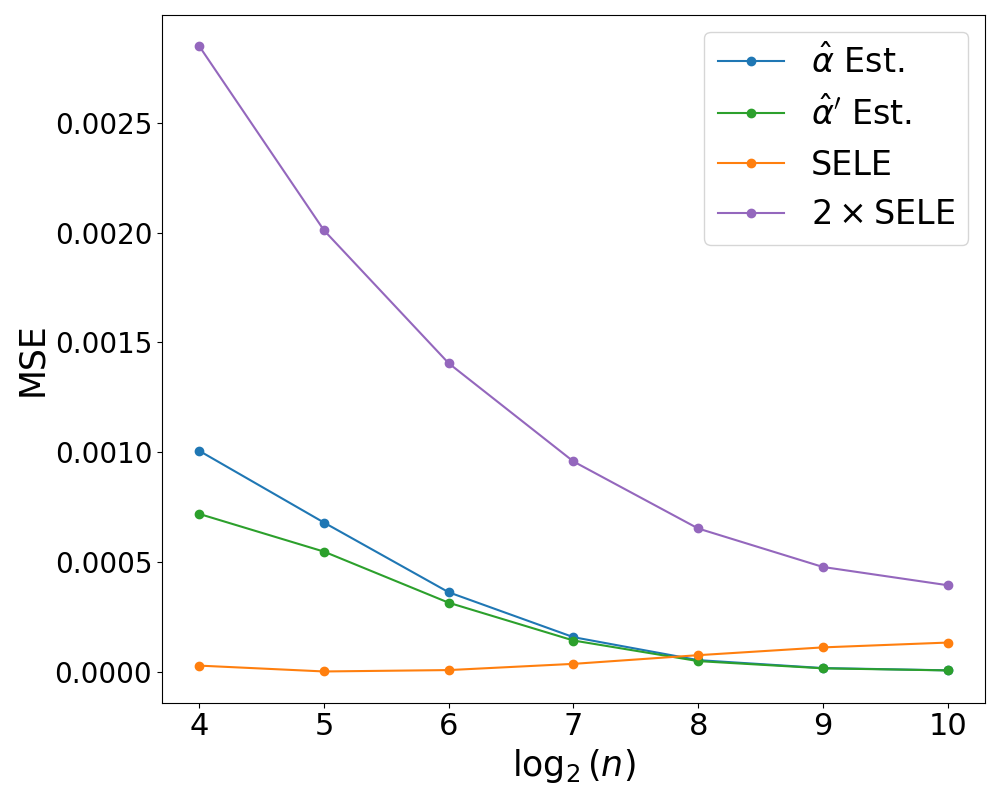}
    \parbox[t]{\linewidth}{\scriptsize \centering (a) Swin-Base (\textbf{0/1})}
\end{minipage}
\begin{minipage}[t]{0.49\linewidth}
    \centering
    \includegraphics[width=1.6in]{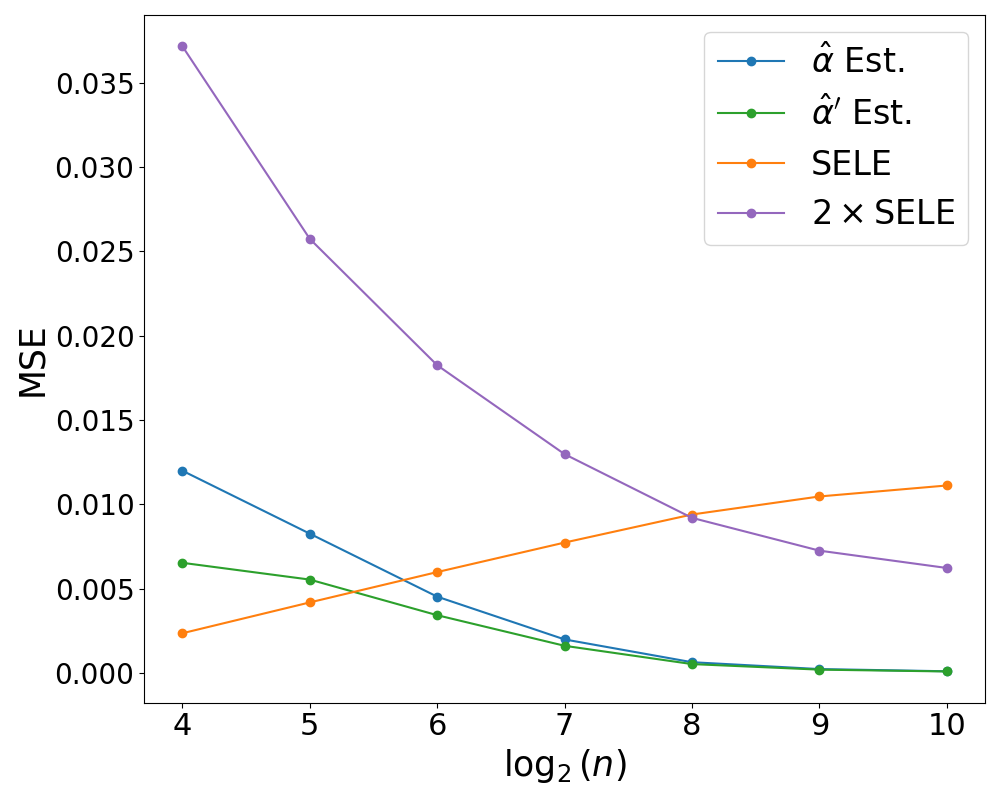}
    \parbox[t]{\linewidth}{\scriptsize \centering (b) Swin-Base (\textbf{CE})}
\end{minipage}
\caption{(\textbf{ImageNet}) MSE of finite sample estimators with \textbf{0/1} or \textbf{CE} loss. For each model architecture, we calculate the MSE of the estimators using a pre-trained model on batch samples derived from the test set. }
\label{fig:mseimagenet}
\end{figure}

\begin{figure}[htb]
\footnotesize
\centering
% First Row
\begin{minipage}[t]{0.49\linewidth}
    \centering
    \includegraphics[width=\linewidth]{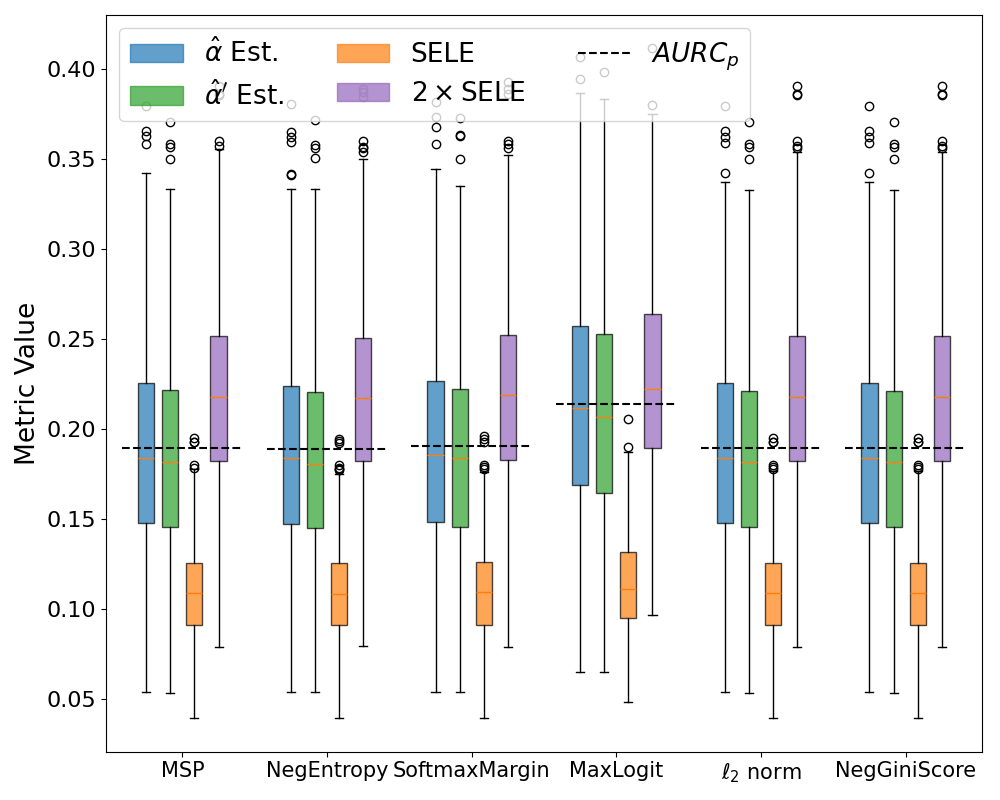}
    \parbox[t]{\linewidth}{\scriptsize \centering (a) BERT (0/1 loss)}
\end{minipage}%
\hspace{0.01\linewidth}% 可选：给两个图之间加一点间距
\begin{minipage}[t]{0.49\linewidth}
    \centering
    \includegraphics[width=\linewidth]{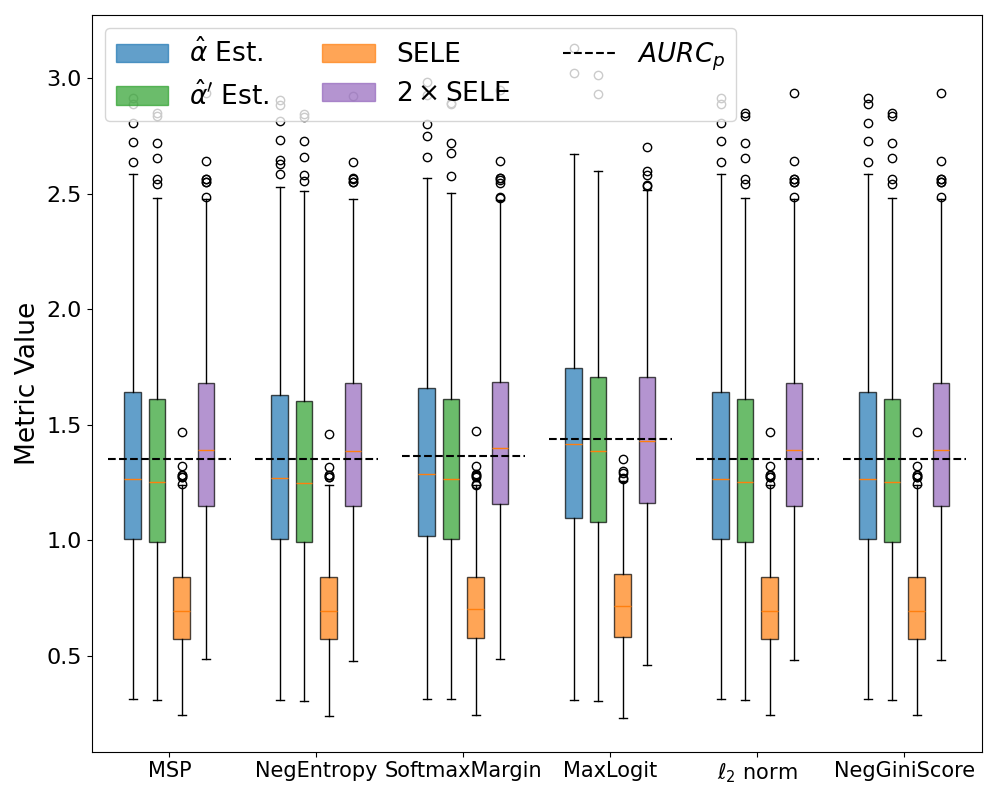}
    \parbox[t]{\linewidth}{\scriptsize \centering (b) BERT (CE loss)}
\end{minipage}
\caption{(\textbf{Amazon}) Finite sample estimators with different CSFs.}
\label{fig:csfamazon_compare}
\end{figure}

\begin{table*}[htb] \centering 
\caption{Summary of population $\text{AURC}_p$ (expressed as mean $\pm$ standard deviation, scaled by $10^{-2}$) of the test set for models fine-tuned with various loss functions. The $\text{AURC}_p$ is calculated for each model architecture based on the fine-tuned results aggregated from five different seeds, each using the same pre-trained model.} \resizebox{1\textwidth}{!}{ \begin{tabular}{lcccc|cccc} \toprule {} & \multicolumn{4}{c}{CIFAR10} & \multicolumn{4}{c}{CIFAR100} \\ \cmidrule(lr){2-5} \cmidrule(lr){6-9} Model & CE & SELE & $\hat{\alpha}$ Est. & $\hat{\alpha}^\prime$ Est. & CE & SELE & $\hat{\alpha}$ Est. & $\hat{\alpha}^\prime$ Est. \\ \midrule ResNet18 & $4.967_{\pm 0.038}$ & $\mathbf{\mathbf{4.470_{\pm 0.030}}}$ & $\mathbf{4.473_{\pm 0.030}}$ & $\mathbf{4.471_{\pm 0.030}}$ & $6.648_{\pm 0.021}$ & $\mathbf{6.577_{\pm 0.011}}$ & $\mathbf{\mathbf{6.532_{\pm 0.012}}}$ & $\mathbf{6.533_{\pm 0.014}}$ \\ ResNet34 & $6.464_{\pm 0.036}$ & $\mathbf{5.661_{\pm 0.039}}$ & $\mathbf{5.652_{\pm 0.036}}$ & $\mathbf{5.651_{\pm 0.036}}$ & $6.023_{\pm 0.016}$ & $\mathbf{5.862_{\pm 0.012}}$ & $\mathbf{5.825_{\pm 0.011}}$ & $\mathbf{5.826_{\pm 0.011}}$ \\ ResNet50 & $8.318_{\pm 0.002}$ & $\mathbf{7.892_{\pm 0.046}}$ & $\mathbf{7.921_{\pm 0.047}}$ & $\mathbf{7.918_{\pm 0.049}}$ & $6.225_{\pm 0.009}$ & $\mathbf{6.043_{\pm 0.015}}$ & $\mathbf{6.007_{\pm 0.008}}$ & $\mathbf{6.007_{\pm 0.009}}$ \\ VGG16BN & $7.922_{\pm 0.002}$ & $\mathbf{7.010_{\pm 0.018}}$ & $\mathbf{7.064_{\pm 0.014}}$ & $\mathbf{7.060_{\pm 0.015}}$ & $10.790_{\pm 0.001}$ & $\mathbf{10.586_{\pm 0.029}}$ & $\mathbf{10.559_{\pm 0.029}}$ & $\mathbf{10.560_{\pm 0.030}}$ \\ VGG19BN & $9.813_{\pm 0.192}$ & $\mathbf{8.475_{\pm 0.061}}$ & $\mathbf{8.528_{\pm 0.059}}$ & $\mathbf{8.524_{\pm 0.059}}$ & $10.633_{\pm 0.001}$ & $\mathbf{10.421_{\pm 0.026}}$ & $\mathbf{10.393_{\pm 0.025}}$ & $\mathbf{10.391_{\pm 0.024}}$ \\ WideResNet28x10 & $4.137_{\pm 0.046}$ & $\mathbf{3.867_{\pm 0.049}}$ & $\mathbf{3.864_{\pm 0.049}}$ & $\mathbf{3.863_{\pm 0.049}}$ & $5.912_{\pm 0.652}$ & $\mathbf{5.607_{\pm 0.707}}$ & $\mathbf{5.836_{\pm 0.652}}$ & $\mathbf{5.607_{\pm 0.707}}$ \\ 
\bottomrule \end{tabular} } \label{tb:finetune} \end{table*}
\textbf{Measurement of the statistical properties of the estimators.} From Fig.~\ref{fig:amazon}, it is observable that with 0/1 loss (accuracy) and increasing sample size, the SELE score tends to underestimate the population $\text{AURC}_p$. Conversely, $2 \times \text{SELE}$ tends to overestimate the population $\text{AURC}_p$. The plug-in estimator with $\hat{\alpha}^\prime$ empirically serves as a lower bound for that with $\hat{\alpha}$, supporting the correctness of our theoretical results. As the sample size grows, both estimators progressively converge to the population $\text{AURC}_p$. Similar trends can also be observed regardless of the 0/1 loss or CE loss in Fig.~\ref{fig:cifa10resultsseed5}-\ref{fig:imagenet}. Furthermore, a comparison between Fig.~\ref{fig:amazon-bert01} and~\ref{fig:amazon-bertce} reveals that using CE loss rather than 0/1 loss results in a different magnitude of variance and bias in the estimators. The bias plots in Figures~\ref{fig:cifar10biasceloss}-\ref{fig:imagenetbias} show similar findings. In Fig.\ref{fig:cifar10absbias01loss}, the MAE of the plug-in estimators consistently decreases as the sample size increases. However, the SELE score does not always exhibit this trend. Its performance lacks stability compared to the plug-in estimators and can even be worse as sample size increases. %as demonstrated in Fig.~\ref{fig:cifar10-vgg16bn-ce}. 
Results shown in Fig.~\ref{fig:mseimagenet} also indicate a declining trend in the MSE of the plug-in estimators on the ImageNet dataset as the sample size increases, regardless of whether 0/1 or CE loss is used.  This convergence is not reflected in the SELE scores, as expected. Similar MSE results are also observed across other model architectures for the CIFAR10/100 and Amazon datasets (see Fig.~\ref{fig:msecifa10resultsseed5}-\ref{fig:mseimagenet_ce}). Although $\hat{\alpha}$ theoretically exhibits lower MSE in weight estimation compared to $\hat{\alpha}^{\prime}$, its corresponding plug-in estimator empirically achieves an even higher MSE than that of $\hat{\alpha}$.

\textbf{Influence of the CSFs.}
We also examine the impact of various CSFs on the estimators to provide a thorough evaluation of the metrics. Specifically, we consider MSP, Negative Entropy, MaxLogit, Softmax Margin, MaxLogit-$\ell_2$ norm, and Negative Gini Score, as outlined in Table~\ref{tb:csfs}. The sample size is set to 128. We report the results for Amazon and ImageNet datasets in Fig.~\ref{fig:csfamazon}-\ref{fig:csfsimagenet_ce}. As indicated in Fig.~\ref{fig:csfamazon_compare}, both plug-in estimators exhibits lower bias compared to other estimators across various CSFs. The $2\times$SELE score is more likely to overestimate $\text{AURC}_p$, but this is not always the case, as shown in Fig.~\ref{fig:csfamazon_compare}(b). The SELE score is substantially lower than the population $\text{AURC}_p$ across various CSFs in our evaluations. We can also observe that compared to CSFs, these finite sample estimators are more sensitive to the choice of loss functions. When using 0/1 loss, they display lower variance than CE loss. % For Negative Entropy, MaxLogit-$\ell_2$ norm, and Negative Gini Score, these CSFs exhibit similar impacts on the finite sample estimators. Similar findings can also be observed from Fig.~\ref{fig:csfsimagenet}-\ref{fig:csfsimagenet_ce}.

\textbf{Training a selective classifier.}

We can finetune our pre-trained model using these finite sample estimators as a loss function. The MSP is employed as the CSF when applying the metrics. The models in Table~\ref{tb:finetune} are fine-tuned on the training set using these estimators incorporated with CE loss over 30 epochs, using a learning rate of $10^{-3}$. We set the training batch size to be 128. Additionally, we present the results for both CE loss and SELE score, as detailed in Table~\ref{tb:finetune} for the CIFAR10/100 dataset. As indicated by Table~\ref{tb:finetune}, training with these estimators can effectively optimize the $\text{AURC}_p$ compared with the CE. Moreover, training with SELE loss also accelerates the optimization of $\text{AURC}_p$ compared with CE optimization.

\section{Conclusion and Future Work}

In this work, we revisit the definition of empirical AURC and propose the population AURC from a statistical perspective, along with an equivalent expression that can be interpreted as a reweighted risk function. Subsequently, we introduce a plug-in estimator for population AURC, characterized by a biased weight estimator. Additionally, we provide an alternative derivation of this and another plug-in estimator using the Monte Carlo method. We rigorously analyze the statistical properties of these Monte Carlo-derived weight estimators, including their bias, MSE, and consistency, and establish their convergence rate to be $\mathcal{O}(\sqrt{\ln(n)/n})$.  To validate our theoretical results, we evaluate the estimator across various state-of-the-art neural network models and widely-used datasets.
Both plug-in estimators exhibit better performance compared to the SELE score. Finally, we have demonstrated that the combination of good statistical convergence and efficient computation make them suitable training objectives for directly fine-tuning networks to minimize AURC.

In this paper, our primary focus is on the estimation of AURC for a fixed model. For completeness, we discuss in the appendix the scenario in which a Bayesian model is considered, specifically when $f \sim \mathbb{P}(f|\mathcal{D})$. We anticipate that these directions will inspire further research. Additionally, investigating the performance of estimators under distribution shift or in the context of imbalanced datasets represents a promising avenue for future work. We also encourage studies that adapt estimators of the form developed in this paper to these settings.

% \section{Lay summary}
% When models are used in high-stakes situations such as healthcare, driving, or legal decisions, it is important not only to use the model for making predictions, but also to know when to trust their answers and when to back down. Selective classifiers are systems designed to do just that: only make a prediction when they’re confident, and otherwise remain silent to avoid costly mistakes.

% But how can we measure how well these systems balance safety with making useful decisions? In our research, we focus on an evaluation metric called the Area Under the Risk-Coverage Curve (AURC), which captures how effectively a system manages the trade-off between accuracy and caution.

% We developed a novel, statistical method to interpret and estimate this metric, even when data is limited. Our approach is not only statistically sound—becoming more accurate as more data is collected—but also practical. We tested our method on various datasets and models, showing it works reliably. This research helps make future AI systems more dependable when uncertainty really matters.

\section*{Impact Statement}

This paper presents work whose goal is to advance the field of Machine Learning. There are many potential societal consequences of our work, none which we feel must be specifically highlighted here.

\section*{Acknowledgement}
This research received funding from the Flemish Government (AI Research Program) and the Research Foundation
- Flanders (FWO) through project number G0G2921N. HZ is supported by the China Scholarship Council. We acknowledge EuroHPC JU for awarding the project ID EHPC-BEN-2024B10-050 and EHPC-BEN-2025B22-037 access to the EuroHPC supercomputer LEONARDO, hosted by CINECA (Italy) and the LEONARDO consortium.

\bibliographystyle{apalike}
\bibliography{ref}

\clearpage

\newpage
\appendix
\onecolumn

\renewcommand{\thepage}{S\arabic{page}} 
\renewcommand{\thetable}{S\arabic{table}}  
\renewcommand{\thefigure}{S\arabic{figure}}

\section{Appendix}
\subsection{Additional Proofs}
\begin{proposition}[Consistency of $\hat{\alpha}_i$] \label{prop:2}
Assume \( \beta_i \) is the population rank percentile of the observation \( (x_i, y_i) \) ranked by CSF. Under this definition, the parameter \(\hat{\alpha}_i \) is consistent, converging to the limit 
\[ \lim_{n \to \infty} \hat{\alpha}_i = -\ln(1 - \beta_i). \]
% Define \( \beta_i \) as the rank percentile of the observation \( (x_i, y_i) \) ranked by CSF such that \( \beta_i = \frac{r_i}{n}\in (0,1) \). Under this definition, the parameter \( \alpha_i \) is consistent, converging to the limit 
% \[ \lim_{n \to \infty} \hat{\alpha}_i = -\ln(1 - \beta_i). \]
\end{proposition}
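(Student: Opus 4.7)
The plan is to reduce consistency of $\hat{\alpha}_i = H_n - H_{n-r_i}$ to the consistency of the empirical rank percentile $r_i/n$ as an estimator of $G(x_i) = \beta_i$, combined with the standard asymptotic expansion of the harmonic numbers.

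First I would note that, conditional on $x_i$, the other samples $\{x_j\}_{j \neq i}$ are i.i.d.\ draws from $P$, and the event $g(x_j) \le g(x_i)$ has probability $\beta_i$. Hence
\begin{equation*}
\frac{r_i}{n} \;=\; \frac{1}{n}\sum_{j=1}^n \mathbb{I}[g(x_j)\le g(x_i)] \;=\; \frac{n-1}{n}\cdot\frac{1}{n-1}\sum_{j\neq i}\mathbb{I}[g(x_j)\le g(x_i)] \;+\; \tfrac{1}{n},
\end{equation*}
which by the weak (or strong) law of large numbers converges in probability (a.s.) to $\beta_i$ as $n\to\infty$.

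Next I would invoke the classical asymptotic expansion $H_k = \ln k + \gamma + O(1/k)$, valid for $k\to\infty$. Writing $n - r_i = n(1 - r_i/n)$, whenever $\beta_i \in [0,1)$ the quantity $n - r_i$ diverges in probability, so
\begin{equation*}
\hat{\alpha}_i \;=\; H_n - H_{n-r_i} \;=\; \ln\!\left(\frac{n}{n-r_i}\right) + O\!\left(\tfrac{1}{n}\right) + O\!\left(\tfrac{1}{n-r_i}\right) \;=\; -\ln\!\left(1-\tfrac{r_i}{n}\right) + o_P(1).
\end{equation*}
A single application of the continuous mapping theorem, using that $t\mapsto -\ln(1-t)$ is continuous on $[0,1)$, then yields $\hat{\alpha}_i \xrightarrow{P} -\ln(1-\beta_i)$, which is exactly the claim. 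Alternatively, one could cite Proposition~\ref{prop:1} directly: since $\hat{\alpha}_i$ is the plug-in version of $\alpha(x_i) = -\ln(1-G(x_i))$ with $G$ replaced by its empirical CDF $G_n$, Glivenko--Cantelli plus continuous mapping also delivers the conclusion.

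The main obstacle is the boundary behaviour when $\beta_i$ is close to $1$, where the logarithm is singular and the bound $O(1/(n-r_i))$ in the expansion might fail to vanish. For $\beta_i = 1$ (a measure-zero event under a continuous $G$) the target $-\ln(1-\beta_i)$ is $+\infty$ and one can only claim $\hat{\alpha}_i \to +\infty$; otherwise, for any fixed $\beta_i < 1$, the probability that $n - r_i$ fails to diverge can be controlled via a Chernoff bound on the binomial $\mathrm{Bin}(n-1,\beta_i)$, which guarantees the remainder terms in the expansion are $o_P(1)$. I would therefore state the result for $\beta_i \in [0,1)$, which covers the a.s.\ event under any continuous CSF distribution.
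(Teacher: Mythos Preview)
Your proposal is correct and follows essentially the same route as the paper: both rely on the asymptotic expansion of the harmonic numbers (equivalently, the digamma function, via $H_n = \psi(n+1)+\gamma$) combined with the convergence $r_i/n \to \beta_i$. The paper simply sets $r_i = \beta_i' n$ and computes $\lim_{n\to\infty}[\psi(n+1)-\psi(n-\beta_i' n+1)] = -\ln(1-\beta_i')$ using $\psi(m)\sim \ln m - \tfrac{1}{2m}$, whereas you additionally make the probabilistic step (LLN for $r_i/n$, continuous mapping theorem) and the boundary case $\beta_i=1$ explicit, which the paper leaves implicit.
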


\begin{proof}
Given the sample size $n$ and sample rank $r_i$,  let us set $r_i=\beta_i^\prime n$ for $\beta_i^\prime \in (0,1)$ and take the limit
\begin{equation} 
    \begin{aligned}
     \lim_{n\rightarrow \infty} \left[ \text{H}_{n} - \text{H}_{n-\beta_i n} \right] = & \lim_{n\rightarrow \infty} \left[ \psi(n+1) - \psi(n-\beta_i^\prime n+1)\right] \\
    =& \lim_{n\rightarrow \infty} \left[ \ln(n+1) - \frac{1}{2(n+1)} - \ln(n-\beta_i^\prime n+1) + \frac{1}{2(n-\beta_i^\prime n +1)}  \right]  \\
     =& \lim_{n\rightarrow \infty}   \left[ - \frac{1}{2(n+1)} + \frac{1}{2(n-\beta_i^\prime n +1)}  -\ln (1-\beta_i^\prime) + \frac{\beta_i^\prime}{n+1}) \right]  \\
    %=& \cdots\\
    =& -\lim_{n\rightarrow \infty}  \ln (1-\beta_i^\prime)  \\
    =& - \ln (1-\beta_i) 
    \end{aligned}
\end{equation}
where the 2nd equation was obtained using the asymptotic result that $\psi(n) \rightarrow \ln n - \frac{1}{2n}$ as $n \rightarrow \infty$.     
\end{proof}

\begin{proposition}[\textbf{Bias of $\hat{\alpha}_i^{se}$}] \label{prop:biasalphasele}
The bias of the the weight estimator $\hat{\alpha}_i^{se}$ corresponding to the SELE score is given by 
\begin{equation*} 
    \begin{aligned}
\operatorname{Bias}\left(\hat{\alpha}_i^{se} \vert G(x_i)= \beta_i \right) =\sum_{i=1}^n \frac{i}{n} \text{C}^{i-1}_{n-1}\beta_i^{i-1} (1-\beta_i)^{n-i} + \ln(1-\beta_i).
\end{aligned}
\end{equation*}
\end{proposition}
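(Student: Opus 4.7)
The plan is to mirror the structure of the proofs of Propositions~\ref{prop:biashatalpha} and~\ref{prop:biashatalphaprime}. First, I would identify the SELE weight $\hat{\alpha}_i^{se}$ from the SELE score defined in Appendix~\ref{sec:sele}. By matching the claimed bias formula
\[
\sum_{i=1}^n \tfrac{i}{n}\,\binom{n-1}{i-1}\beta_i^{i-1}(1-\beta_i)^{n-i} + \ln(1-\beta_i)
\]
against the binomial-plus-true-value template used in the other two propositions, one reads off that $\hat{\alpha}_i^{se} = r_i/n$, where $r_i$ is the rank of $x_i$ when data are sorted ascending by the CSF. This identification is the only nontrivial bookkeeping step; the rest is a direct calculation.

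Next, I would compute the conditional expectation of $\hat{\alpha}_i^{se}$ given $G(x_i)=\beta_i$ by writing
\begin{equation*}
\mathbb{E}\left[\hat{\alpha}_i^{se} \,\middle|\, G(x_i)=\beta_i\right]
= \sum_{i=1}^n \tfrac{i}{n}\, \Pr(r_i = i \mid G(x_i)=\beta_i).
\end{equation*}
As in the proofs of Propositions~\ref{prop:biashatalpha} and~\ref{prop:biashatalphaprime}, conditioning on $\beta_i$ makes each of the remaining $n-1$ i.i.d.\ samples independently fall below $x_i$ with probability $\beta_i$, so $r_i \mid G(x_i)=\beta_i \sim 1 + \operatorname{Bin}(n-1,\beta_i)$, giving
\[
\Pr(r_i = i \mid G(x_i)=\beta_i) = \binom{n-1}{i-1}\beta_i^{\,i-1}(1-\beta_i)^{n-i}.
\]

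Finally, I would subtract the target weight $\alpha_i = -\ln(1-\beta_i)$ from Proposition~\ref{prop:1} to obtain
\begin{equation*}
\operatorname{Bias}\!\left(\hat{\alpha}_i^{se} \,\middle|\, G(x_i)=\beta_i\right)
= \sum_{i=1}^n \tfrac{i}{n}\,\binom{n-1}{i-1}\beta_i^{\,i-1}(1-\beta_i)^{n-i} + \ln(1-\beta_i),
\end{equation*}
which is precisely the stated expression. The main (and essentially only) obstacle is the first step: correctly tracing through Appendix~\ref{sec:sele} to confirm that the SELE score, rewritten in plug-in form $\frac{1}{n}\sum_i \hat{\alpha}_i^{se}\,\ell(f(x_i),y_i)$, has per-point weight $r_i/n$; once that identification is made, the remainder is a direct application of the binomial conditional distribution already used twice above, and no new technical tools are required.
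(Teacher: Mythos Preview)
Your proposal is correct and follows essentially the same approach as the paper: identify $\hat{\alpha}_i^{se}=r_i/n$ from the plug-in rewriting of the SELE score, compute the conditional expectation via the shifted binomial law $\Pr(r_i=i\mid G(x_i)=\beta_i)=\binom{n-1}{i-1}\beta_i^{i-1}(1-\beta_i)^{n-i}$, and subtract $\alpha_i=-\ln(1-\beta_i)$. The paper's proof is exactly this computation, with the same references back to the binomial argument used in Propositions~\ref{prop:biashatalpha} and~\ref{prop:biashatalphaprime}.
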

\begin{proof}
From Sec.~\ref{sec:sele}, the expected $\hat{\alpha}_i^{se}$ associated with $(x_i,y_i)$ is given by
\begin{equation*} 
    \begin{aligned}
    \mathbb{E}\left[\hat{\alpha}_i^{se}\vert G(x_i)=\beta_i\right] = \sum_{i=1}^n \frac{i}{n}  \text{Pr}(r_i=i \vert G(x_i)=\beta_i). 
\end{aligned}
\end{equation*}
Since $\text{Pr}(r_i\vert G(x_i)=\beta_i)$ is a binomial distribution $\text{Bin}(n-1, \beta_i)$, we obtain:
\begin{equation} 
    \begin{aligned}
    \operatorname{Bias}\left(\hat{\alpha}_i^{\prime} \vert G(x_i)= \beta_i \right)= \mathbb{E}\left[\hat{\alpha}_i^{\prime} \vert G(x_i)=\beta_i\right]-\alpha_i = \sum_{i=1}^n \frac{i}{n}\text{C}^{i-1}_{n-1}\beta_i^{i-1} (1-\beta_i)^{n-i} + \ln(1-\beta_i)
\end{aligned}
\end{equation}
which conclude our proof.
\end{proof}

\begin{proposition}[\textbf{MSE of \( \hat{\alpha}_i^{\prime} \)}] \label{prop:msealphaprime}
The MSE of \( \hat{\alpha}_i^{\prime} \) is given by
\begin{align}
    \operatorname{MSE}(\hat{\alpha}_i^{\prime})=\psi'(n+1-r_i) - \psi'(n+1)  +  \left( \ln\left(1 - \frac{r_i}{n + 1}\right) + H_n - H_{n - r_i}\right)^2   \asymp \frac{\beta_i}{n(1-\beta_i)+1} .
\end{align}
% which is tightly bounded by 
% \( \mathcal{O}\left(
%          \frac{\beta_i}{n(1-\beta_i)+1}
%         \right) \).
\end{proposition}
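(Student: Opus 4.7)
The plan is to mirror the proof of Proposition~\ref{prop:msehatalpha}, exploiting the key simplification that, conditional on the rank $r_i$, the estimator $\hat{\alpha}_i^\prime = -\ln\!\left(1 - \frac{r_i}{n+1}\right)$ is a deterministic constant while $\beta_i \sim \operatorname{Beta}(r_i, n+1-r_i)$. Starting from
$$\operatorname{MSE}(\hat{\alpha}_i^\prime) = \mathbb{E}_{\beta_i}\!\left[(\hat{\alpha}_i^\prime + \ln(1 - \beta_i))^2\right],$$
I would expand the square and substitute the two moments already computed in the proof of Proposition~\ref{prop:msehatalpha}, namely $\mathbb{E}[\ln(1-\beta_i)] = -(H_n - H_{n-r_i})$ and $\mathbb{E}[\ln(1-\beta_i)^2] = (H_n - H_{n-r_i})^2 + \psi'(n+1-r_i) - \psi'(n+1)$. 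Completing the square immediately yields the claimed exact formula, which matches the statement after noting that the squared factor is invariant under negation. This portion of the argument is essentially bookkeeping.

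The second step is to establish the asymptotic equivalence $\operatorname{MSE}(\hat{\alpha}_i^\prime) \asymp \frac{\beta_i}{n(1-\beta_i)+1}$. The digamma-derivative term $\psi'(n+1-r_i) - \psi'(n+1)$ is already shown in Proposition~\ref{prop:msehatalpha} to be of this order via the standard inequalities $\frac{1}{n} + \frac{1}{2n^2} \leq \psi'(n) \leq \frac{1}{n} + \frac{1}{n^2}$, so it suffices to bound the extra squared-bias term and show that it does not dominate. For this I would use the expansion $\psi(m) = \ln m - \frac{1}{2m} + O(m^{-2})$ to write
$$H_n - H_{n-r_i} = \ln\frac{n+1}{n+1-r_i} - \frac{1}{2(n+1)} + \frac{1}{2(n+1-r_i)} + O\!\bigl((n+1-r_i)^{-2}\bigr),$$
and then observe that $\hat{\alpha}_i^\prime = \ln\frac{n+1}{n+1-r_i}$ exactly, so the bias reduces to $\frac{1}{2(n+1)} - \frac{1}{2(n+1-r_i)} + O((n+1-r_i)^{-2}) = O\!\bigl((n+1-r_i)^{-1}\bigr)$. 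Its square is therefore $O\!\bigl((n(1-\beta_i)+1)^{-2}\bigr)$, which is dominated by the variance term $\frac{\beta_i}{n(1-\beta_i)+1}$ whenever $\beta_i$ is not too close to $0$.

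The main obstacle is making this comparison uniform across the full range $\beta_i \in (0,1)$, and in particular near the boundaries where either $r_i$ or $n-r_i$ is small and the $\psi(m) \approx \ln m$ expansion degrades. In the regime $\beta_i \to 0$ one checks directly that $r_i$ must be small; in particular $r_i = 0$ makes both the variance and bias contributions vanish, and for $r_i$ of constant order a direct Taylor expansion of $\ln(1+1/n)$ shows the squared bias is $O(n^{-4})$ while the variance remains $\Theta(n^{-2})$. In the regime $\beta_i \to 1$ both terms are of constant order and the squared bias remains bounded by a constant multiple of the variance. Assembling these cases bounds the squared-bias contribution by $O\!\bigl(\frac{\beta_i}{n(1-\beta_i)+1}\bigr)$ uniformly, which, combined with the sharp two-sided bound on the digamma term, yields the claimed $\asymp$ relation and completes the proof.
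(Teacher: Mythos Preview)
Your proposal is correct and follows the paper's approach for the exact formula: expand the square, substitute the two Beta moments already computed in Proposition~\ref{prop:msehatalpha}, and complete the square to obtain $\psi'(n+1-r_i)-\psi'(n+1)+Q^2$ with $Q=\ln\!\bigl(1-\tfrac{r_i}{n+1}\bigr)+H_n-H_{n-r_i}$. The only difference is in how the extra squared-bias term $Q^2$ is bounded. The paper does not use the digamma expansion $\psi(m)=\ln m-\tfrac{1}{2m}+O(m^{-2})$; instead it applies the elementary harmonic-number inequalities $\gamma+\ln n\le H_n\le\gamma+\ln(n+1)$ to get, in one line, the non-asymptotic bound $0\le Q\le\ln\!\bigl(1+\tfrac{1}{n-r_i}\bigr)\le\tfrac{1}{n-r_i}$, and then simply observes that $Q^2=O\!\bigl((n-r_i)^{-2}\bigr)$ is dominated by the trigamma term. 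This sidesteps the separate boundary-case analysis you sketch for $\beta_i\to 0$ and $\beta_i\to 1$. Both routes are valid and arrive at the same $\asymp$ relation; the paper's is shorter and uniform by construction, while yours extracts a slightly sharper leading constant for the bias and is more explicit about uniformity across the range of $\beta_i$.
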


\begin{proof}
From the result in Proposition \ref{prop:1}, we calculate the MSE as follows:
\begin{equation*}
\operatorname{MSE}(\hat{\alpha}_i^{\prime}) = \mathbb{E}_{\beta_i} \left[(\hat{\alpha}_i^{\prime} +  \ln(1- \beta_i))^2\right],
\end{equation*}
which becomes:
\begin{equation*}
\operatorname{MSE}(\hat{\alpha}_i^{\prime}) = \int_0^1 \left( - \ln\left(1 - \frac{r_i}{n+1}\right) + \ln(1 - \beta_i) \right)^2 d \text{P}(\beta_i).
\end{equation*}
We can break this into two parts, denoted \( M \) and \( N \):
\begin{equation*}
\operatorname{MSE}(\hat{\alpha}_i^{\prime}) = M + N,
\end{equation*}
where
\begin{align}
M = \int_0^1 \ln^2(1 - \beta_i) \, d\text{P}(\beta_i) = \left( \text{H}_n - \text{H}_{n - r_i} \right)^2 + \psi'(n + 1 - r_i) - \psi'(n + 1),  \nonumber
\end{align}
and
\begin{equation*}
N = \ln^2\left( 1 - \frac{r_i}{n+1} \right) + 2 \ln\left(1 - \frac{r_i}{n+1}\right)\left(H_n - H_{n - r_i}\right).
\end{equation*}
Here, \( d\text{P}(\beta_i) \) refers to the integration with respect to the probability measure induced by \( \beta_i \sim \operatorname{Beta}(r_i, n + 1 - r_i) \). 

Now, by combining \( M \) and \( N \), we obtain the MSE:
\begin{equation*}
\operatorname{MSE}(\hat{\alpha}_i^{\prime}) = \psi'(n + 1 - r_i) - \psi'(n + 1) + Q^2,
\end{equation*}
where
\begin{equation*}
Q = \ln\left(1 - \frac{r_i}{n + 1}\right) + H_n - H_{n - r_i}.
\end{equation*}
Using the inequalities of the harmonic numbers, \( \gamma + \ln(n) \leq \text{H}_n \leq \ln(n + 1) + \gamma \), we obtain the following bounds:
\begin{equation*}
0 \leq \text{H}_n - \text{H}_{n - r_i} \leq \ln(n + 1) - \ln(n - r_i).
\end{equation*}
This leads to the upper bound for \( Q \):
\begin{equation*}
Q \leq \ln\left(1 + \frac{1}{n - r_i}\right) \leq \frac{1}{n - r_i},
\end{equation*}
since it holds that \( \ln(1 + x) \leq x \) for \( x > -1 \). For the remaining term, we use the same approach as in Proposition \ref{prop:msehatalpha} and obtain the following estimate:
\begin{equation*}
\psi'(n + 1 - r_i) - \psi'(n + 1) \asymp \mathcal{O}\left( \frac{\beta_i}{n(1 - \beta_i) + 1} \right).
\end{equation*}
Combining this with the upper bound for \( Q \), we obtain the final bound for the MSE:
\begin{equation} \label{eq:Var_nalphaUpperBound}
\operatorname{MSE}(\hat{\alpha}_i^{\prime}) \asymp \frac{\beta_i}{n(1 - \beta_i) + 1} \quad \forall \beta_i \in (0, 1)
\end{equation}
as the MSE is dominated by this remaining term.
\end{proof}

\begin{proposition}[$\frac{1}{n} \sum_{i=1}^n \operatorname{MSE}(\hat{\alpha}_i) $ or $\frac{1}{n} \sum_{i=1}^n \operatorname{MSE}(\hat{\alpha}_i^{\prime}) $] \label{prop:msehatalphasum}
The average of the sum of the MSEs of the weight estimators $\hat{\alpha}_i$ or $\hat{\alpha}_i^{\prime}$ is tightly bounded by \( \mathcal{O}\left(\frac{\ln(n)}{n}\right) \).  
\end{proposition}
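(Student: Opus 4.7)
The plan is to reduce everything to the closed-form MSE expressions already established in Propositions~\ref{prop:msehatalpha} and~\ref{prop:msealphaprime}, then exploit the fact that ranks form a permutation of $\{1,\dots,n\}$ to convert the sum into a harmonic sum. Concretely, writing
\begin{equation*}
\operatorname{MSE}(\hat{\alpha}_i) = \psi'(n+1-r_i) - \psi'(n+1),
\end{equation*}
and observing that on any sample $(r_1,\dots,r_n)$ is a permutation of $\{1,\dots,n\}$ (ties occur with probability zero under continuous $g$; otherwise break them arbitrarily), I would rewrite
\begin{equation*}
\sum_{i=1}^n \operatorname{MSE}(\hat{\alpha}_i) = \sum_{k=1}^n \psi'(k) - n\,\psi'(n+1).
\end{equation*}

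Next I would invoke the standard trigamma inequality $\tfrac{1}{k}+\tfrac{1}{2k^2}\le \psi'(k)\le \tfrac{1}{k}+\tfrac{1}{k^2}$ used in the proof of Prop.~\ref{prop:msehatalpha}. Summing the upper and lower bounds gives
\begin{equation*}
H_n + \tfrac{1}{2}\sum_{k=1}^n \tfrac{1}{k^2} - n\,\psi'(n+1) \;\le\; \sum_{i=1}^n \operatorname{MSE}(\hat{\alpha}_i) \;\le\; H_n + \sum_{k=1}^n \tfrac{1}{k^2} - n\,\psi'(n+1).
\end{equation*}
Since $\sum_{k\ge1} 1/k^2 < \pi^2/6$ is bounded and $n\,\psi'(n+1) = \mathcal{O}(1)$, both sides are $\Theta(H_n) = \Theta(\ln n)$. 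Dividing by $n$ yields $\tfrac{1}{n}\sum_{i=1}^n \operatorname{MSE}(\hat{\alpha}_i) \asymp \ln(n)/n$, which is the desired tight bound.

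For $\hat{\alpha}_i^\prime$, I would split the MSE formula from Prop.~\ref{prop:msealphaprime} into the trigamma part (handled exactly as above) plus the extra term $Q_i^2$ where $Q_i := \ln(1 - r_i/(n+1)) + H_n - H_{n-r_i}$. The proof of Prop.~\ref{prop:msealphaprime} already shows $|Q_i| \le 1/(n-r_i)$ whenever $r_i \le n-1$, giving $\sum_{r_i \le n-1} Q_i^2 \le \sum_{k=1}^{n-1} 1/k^2 < \pi^2/6$. The only place this could fail is the boundary index $r_i = n$, which contributes a single term $Q_n^2 = (H_n - \ln(n+1))^2 \to \gamma^2$, a universal constant. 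Therefore $\sum_{i=1}^n Q_i^2 = \mathcal{O}(1)$, so $\sum_{i=1}^n \operatorname{MSE}(\hat{\alpha}_i^\prime) = \Theta(\ln n) + \mathcal{O}(1) = \Theta(\ln n)$, and the same $\ln(n)/n$ rate follows.

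The only real obstacle is the isolated boundary case $r_i = n$ in the $\hat{\alpha}^\prime$ analysis, where the simple bound $|Q_i| \le 1/(n-r_i)$ degenerates; I would dispose of it by direct asymptotics of $H_n - \ln(n+1) \to \gamma$. Everything else is a routine rearrangement once the permutation observation is invoked, and the matching upper and lower bounds from the trigamma inequalities deliver tightness rather than just a one-sided $\mathcal{O}$-bound.
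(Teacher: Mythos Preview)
Your proof is correct and takes a somewhat different, more direct route than the paper's. The paper invokes the asymptotic relation $\operatorname{MSE}(\hat{\alpha}_i) \asymp \frac{\beta_i}{n(1-\beta_i)+1}$ from Proposition~\ref{prop:msehatalpha}, replaces the average $\frac{1}{n}\sum_i \frac{\beta_i}{n(1-\beta_i)+1}$ by the Riemann-type integral $\int_0^1 \frac{\beta}{n(1-\beta)+1}\,d\beta = \frac{(n+1)\ln(n+1)}{n^2} - \frac{1}{n}$, and reads off the $\ln(n)/n$ rate. You instead stay with the exact trigamma identity $\operatorname{MSE}(\hat{\alpha}_i) = \psi'(n+1-r_i) - \psi'(n+1)$, use the permutation observation to collapse the sum to $\sum_{k=1}^n \psi'(k) - n\,\psi'(n+1)$, and sandwich it with the elementary trigamma inequalities. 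Your argument is cleaner in that it avoids both the pointwise $\asymp$ and the informal sum-to-integral passage, and it delivers matching upper and lower bounds in one stroke rather than only an $\mathcal{O}$-statement. You also treat the $\hat{\alpha}'$ case explicitly, isolating the boundary index $r_i=n$ where the bound $|Q_i|\le 1/(n-r_i)$ degenerates; the paper simply asserts the same reasoning carries over. The trade-off is that the paper's integral computation is marginally shorter on the page, while yours is fully self-contained and rigorous once the permutation structure is noted.
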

\begin{proof}
From result in Proposition~\ref{prop:msehatalpha}, we derive:
\begin{equation}
    \begin{aligned}
    \frac{1}{n} \sum_{i=1}^n \operatorname{MSE}(\hat{\alpha}_i) &\rightarrow  \frac{1}{n} \sum_{i=1}^n \frac{\beta_i}{n(1-\beta_i)+1} \\ % \xrightarrow{n\rightarrow \infty}
    & \rightarrow   \int_{0}^1 \frac{\beta}{n(1-\beta)+1} d\beta \\
    &= \frac{(n+1)\ln(n+1)}{n^2} - \frac{1}{n} \\
    %&= \frac{\ln(n+1)-1}{n} + \frac{\ln(n+1)}{n^2}
    &= \mathcal{O}\left(\frac{\ln(n)}{n}\right)
    \end{aligned}.
\end{equation}
We could obtain the same results for $\hat{\alpha}_i^{\prime}$, thereby concluding our proof.
\end{proof}

\subsection{SELE score} \label{sec:sele}
\citet{franc2023optimal} gave another expression of the empirical AURC in Eq.~\eqref{eq:empiricalAURCDef} that can be interpreted as an arithmetic mean of the empirical selective risks corresponding to the coverage spread evenly over the interval $[0,1]$ with step $\frac{1}{n}$. In addition, they proposed a coarse lower bound, referred to as the SELE score given by
\begin{equation}
    \begin{aligned}
    \Delta_{\text{sele}}\left(f\right)=\frac{1}{n^2} \sum_{i=1}^n \sum_{j=1}^n \ell\left(f\left(x_i\right),y_i\right) \mathbb{I}\left[g\left(x_i\right)\ge g(x_j)\right],
\end{aligned}
\end{equation}
as an alternative to the empirical $\text{AURC}$. They claim that $2\Delta_{\text{sele}}\left(f\right)$ is an upper bound to the empirical $\text{AURC}$, but in the Appendix~\ref{sec:sele}, we demonstrate that this is not the case. This naive implementation of this metric requires $O(n^2)$ operations but we can rewrite it in a form that can be computed in $\mathcal{O}(n \ln(n))$ using the trick for empirical AURC:
\begin{align} \label{eq:sele}
       \Delta_{\text{sele}}\left(f\right) &= \sum_{i=1}^n \frac{r_i}{n^2}  \ell\left(f\left(x_i\right),y_i\right) .
\end{align}  
For this metric, $\hat{\alpha}^{se}_i = \frac{r_i}{n}$ serves as the estimate for $\alpha(\cdot)$. For the SELE score in Eq.~\eqref{eq:sele}, we can show that $2 \Delta_{\text{sele}}\left(f\right)$ does not always serve as an upper bound for empirical AURC, meaning that   \citep[Theorem~8]{franc2023optimal} does not hold. We could find a simple counterexample s.t. the following inequality holds
\begin{equation}
     \widehat{\text{AURC}}_p(f) > 2 \Delta_{\text{sele}}\left(f\right).% \nleqq
\end{equation}
Given a dataset of 5 observations sorted according to the CSF $\{x_i,y_i\}_{i=1}^5$, it is possible to find a classifier $f$ s.t. $\ell\left(f\left(x_i\right),y_i\right) =0 $ for $i=1\cdots 4$ and $\ell\left(f\left(x_5\right),y_5\right)>0 $. Then we would have:
\begin{align}
\hat{\alpha}_5 = H_5-H_{0} \approx 2.2833 \ge 2 \hat{\alpha}_5^{se} = 2,
\end{align} 
which leads to 
\begin{align}
   \widehat{\text{AURC}}_p(f) = \sum_{i=1}^5 \hat{\alpha}_i \ell\left(f\left(x_i\right),y_i\right)  \ge  \sum_{i=1}^5  2  \hat{\alpha}_i^{se} \ell\left(f\left(x_i\right),y_i\right)  = 2 \Delta_{\text{sele}}\left(f\right).
\end{align}
In their proof of Theorem 8, since $\frac{a_i}{b_i}$ is a non-decreasing sequence so they cannot use Lemma 15 to derive their results.

\subsection{A Generalized Approach to Epistemic Risk for Bayesian Models}
In this work, we mainly focus on the estimation of AURC for a fixed model. However, when considering a Bayesian model with $f \sim \mathbb{P}(f|\mathcal{D})$, a natural approach is to consider the following expectation: 
\begin{align}
    \mathbb{E}_{f\sim \mathbb{P}(f|\mathcal{D})}[\text{AURC}(f)] = \mathbb{E}_{f\sim \mathbb{P}(f|\mathcal{D})}\big[ \mathbb{E}_{x} \left [ \mathcal{R} \left(f,g\right) |\tau = g(x)\right] \big]
\end{align}
where $\tau$ is the threshold value. This can be computed directly using our method with Monte Carlo sampling. By applying Fubini’s Theorem, we can exchange the order of the two expectations, leading to
\begin{align}
      \mathbb{E}_{f\sim \mathbb{P}(f|\mathcal{D})}[\text{AURC}(f)] =    \mathbb{E}_{x}\big[ \mathbb{E}_{f\sim \mathbb{P}(f|\mathcal{D})} \left [ \mathcal{R} \left(f,g\right) |\tau = g(x)\right] \big]. 
\end{align}
Since $g(x)$ depends on $f(x)$ under the posterior distribution $ \mathbb{P}(f|\mathcal{D})$, and the predictions are made through a full Bayesian framework, this formulation allows the evaluation of AURC in a way analogous to the standard AURC for a fixed model. We can envision several ways to define potential quantities of interest based on model uncertainty. Many of these quantities could be potentially connected to AURC and epistemic risk, and exploring these relationships could open up valuable avenues for further investigation.

\subsection{Confidence Score Functions (CSFs)}
 The CSFs are generally defined as functions of the predicted probabilities $\mathbf{p}_i$, which are the outputs by passing the logits $\mathbf{z}$ produced by the model for the input $x$ through the softmax function $\sigma(\cdot)$, expressed as $\sigma(\mathbf{z}) \in \mathbb{R}^K$. The specific forms of these CSFs are outlined as follows:
\begin{table}[!ht] 
\centering
\caption{Commonly Used CSFs} 
\label{tb:csfs}
\begin{tabular}{|>{\raggedright\arraybackslash}m{3.2cm}|>{\raggedright\arraybackslash}m{3.7cm}|}
\hline
\textbf{Method} & \textbf{Equation} \\ \hline
MSP & $g(\mathbf{z})= \max_{i=1}^K \mathbf{p}_i$ \\ \hline
MaxLogit & $g(\mathbf{z})=\max_{i=1}^K \mathbf{z}_i$ \\ \hline
Softmax Margin& \begin{tabular}[c]{@{}l@{}}$g(\mathbf{z})=\mathbf{p}_i-\max_{j\neq i} \mathbf{p}_j$\\  with $i=\arg\max_{i=1}^K\mathbf{p}_i$\end{tabular}  \\ \hline
Negative Entropy & $g(\mathbf{z}) = \sum_{i=1}^K \mathbf{z}_i \log \mathbf{z}_i$ \\ \hline
MaxLogit-$\ell_p$ Norm & $g(\mathbf{z}) = \|\mathbf{z}\|_{p}$\\ \hline
Negative Gini Score  & $g(\mathbf{z}) =-1+\sum_{i=1}^K\mathbf{p}_i^2  $ \\  \hline
\end{tabular}
\end{table}

\subsection{Empirical Comparison Between $\text{AURC}_a$ and $\text{AURC}_p$} \label{sec:appedixequalvalence}

\begin{figure}[!htb]
    \centering
    \begin{minipage}[t]{0.46\linewidth}
        \centering
        \includegraphics[width=0.9\linewidth]{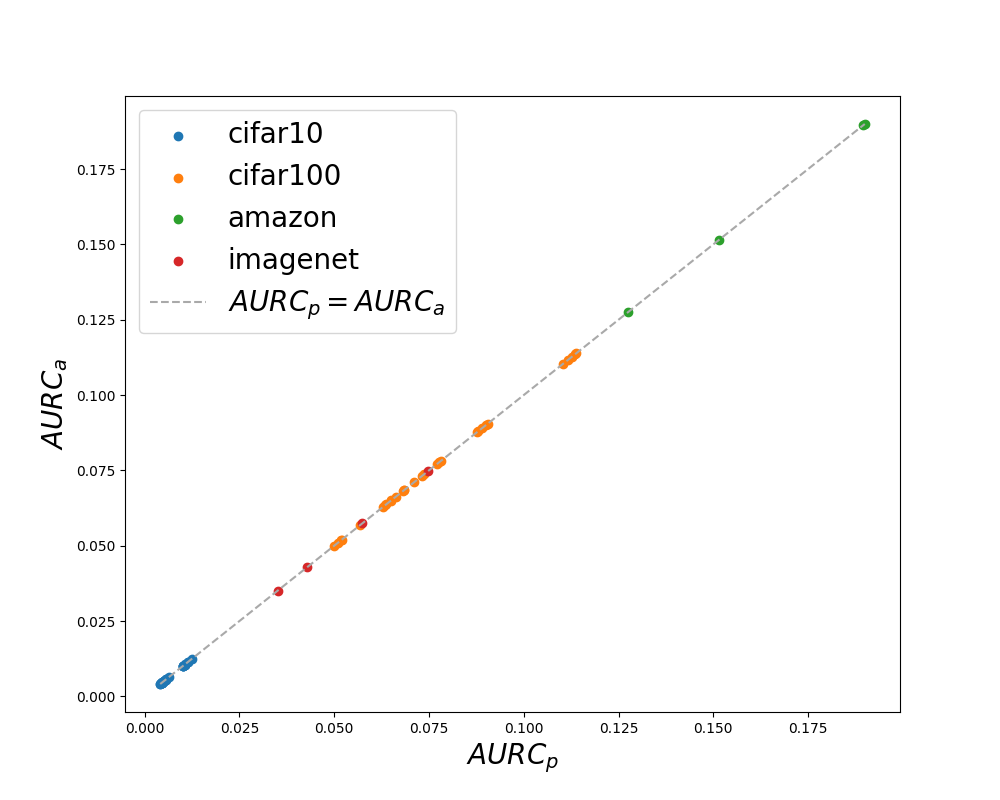} % Scales image for consistency
        \caption*{\scriptsize (a) 0/1 Loss} % Clearer caption styling
    \end{minipage}%
    \hspace{0.04\linewidth} % Consistent horizontal spacing
    \begin{minipage}[t]{0.46\linewidth}
        \centering
        \includegraphics[width=0.9\linewidth]{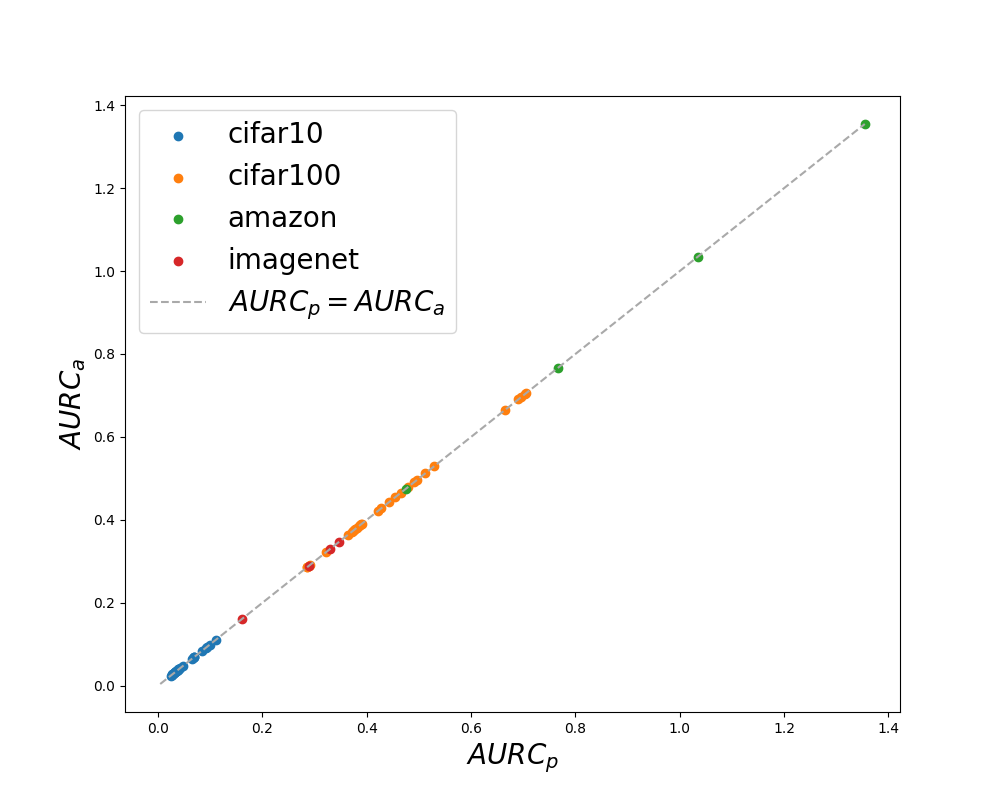} % Matches the width with the first image
        \caption*{\scriptsize (b) CE Loss} % Clearer caption styling
    \end{minipage}
    \vspace{0.5em} % Adds slight vertical spacing for better caption distinction
    \caption{Comparison of $\text{AURC}_a$ and $\text{AURC}_p$ under different loss functions. The $\text{AURC}_p$ and $\text{AURC}_a$ are computed across the test set using Eqs. \eqref{eq:pupulationaurc1} and \eqref{eq:PopulationAurc2}, respectively. Subfigure (a) shows the 0/1 loss, while subfigure (b) depicts the CE loss. }
    \label{fig:aurc_p}
\end{figure}

In Section~\ref{sec:PlugInEstimator}, we demonstrate the theoretical equivalence of these two metrics, and here, we aim to provide an empirical validation of this equivalence. We evaluate the two population AURC metrics using either 0/1 or CE loss across 30 different models on test sets from CIFAR10/100. We also assess these metrics for the previously mentioned models on the test sets from Amazon and ImageNet datasets. The results are reported in Fig.~\ref{fig:aurc_p}, where these two population AURC metrics are shown to be identical to each other. We also assessed the results using a two-sided t-test, which yielded p-values of 0.9981 and 0.998 for the 0/1 and CE loss, respectively. These values suggest that we should accept the null hypothesis that these two metrics are identical. 
\subsection{Figures}

\begin{figure}[!htb]
\scriptsize
%\centerfloat
\begin{minipage}[t]{0.25\linewidth}
\centering
\includegraphics[width=1.8in]{figs/bias_plot_n8.png}
\parbox[t]{\linewidth}{\scriptsize \centering (a) $n=8$}
\end{minipage}%
\begin{minipage}[t]{0.25\linewidth}
\centering
\includegraphics[width=1.8in]{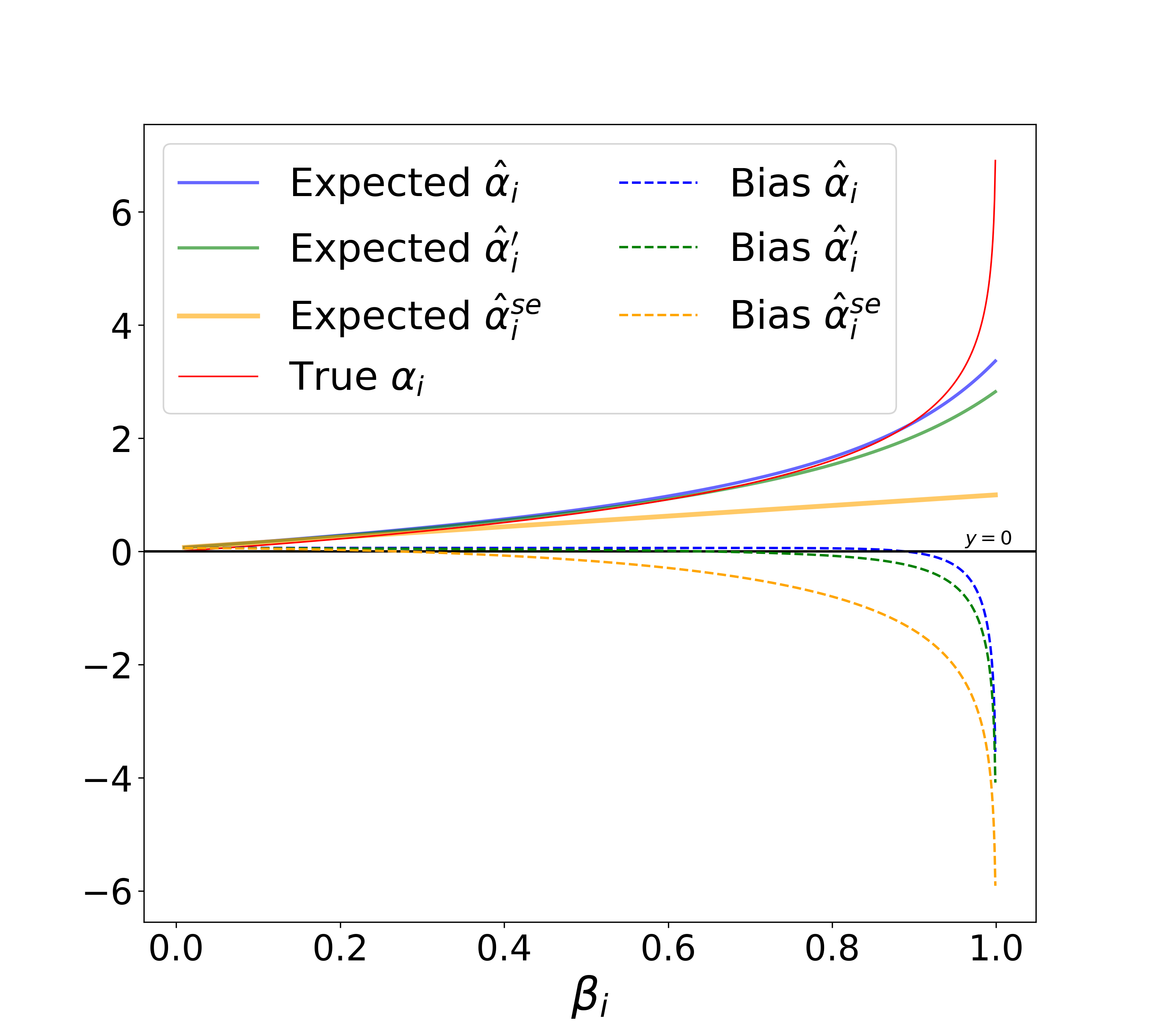}
\parbox[t]{\linewidth}{\scriptsize \centering (b) $n=16$}
\end{minipage}%
\begin{minipage}[t]{0.25\linewidth}
\centering
\includegraphics[width=1.8in]{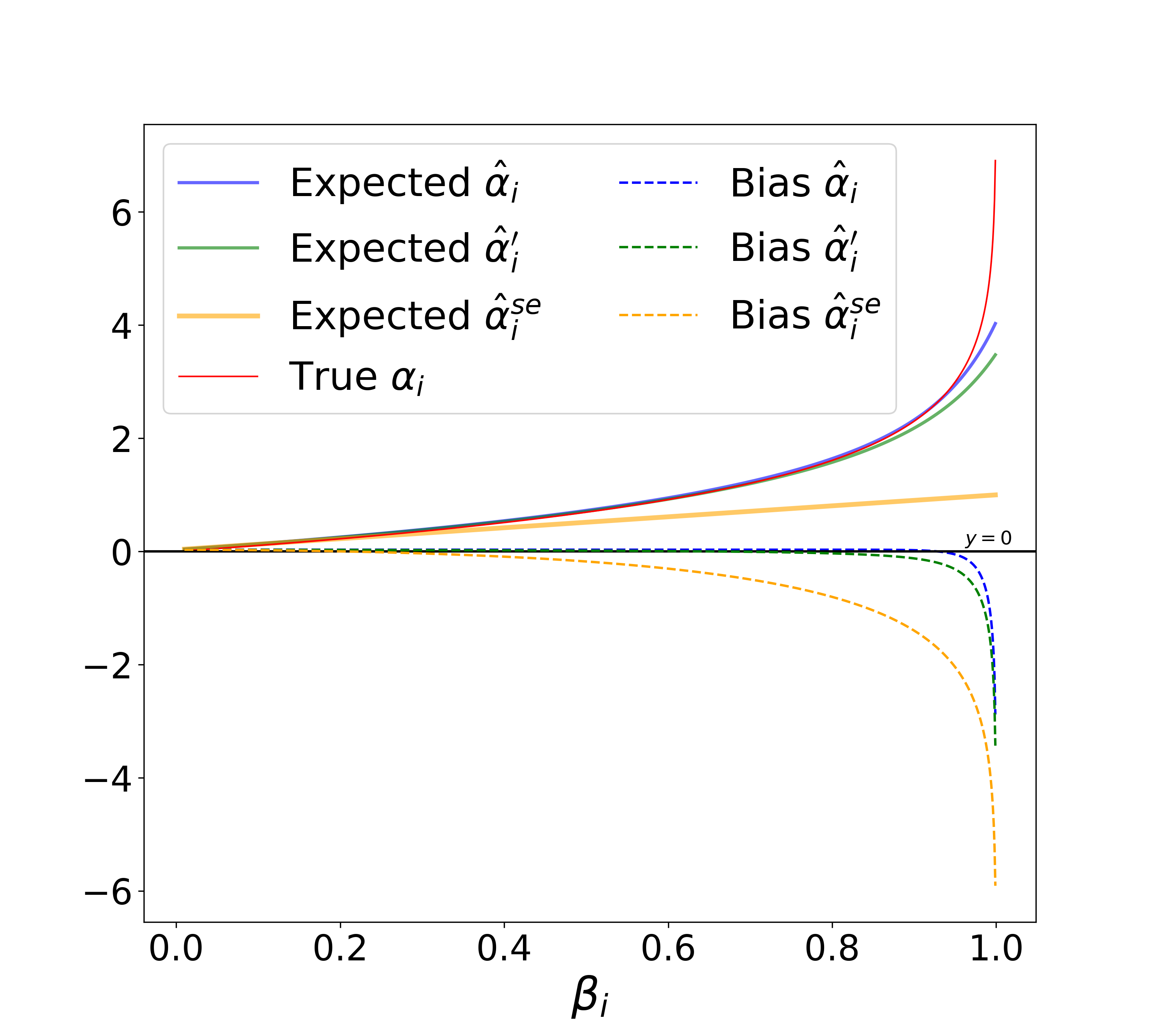}
\parbox[t]{\linewidth}{\scriptsize \centering (c) $n=32$}
\end{minipage}%
\begin{minipage}[t]{0.25\linewidth}
\centering
\includegraphics[width=1.8in]{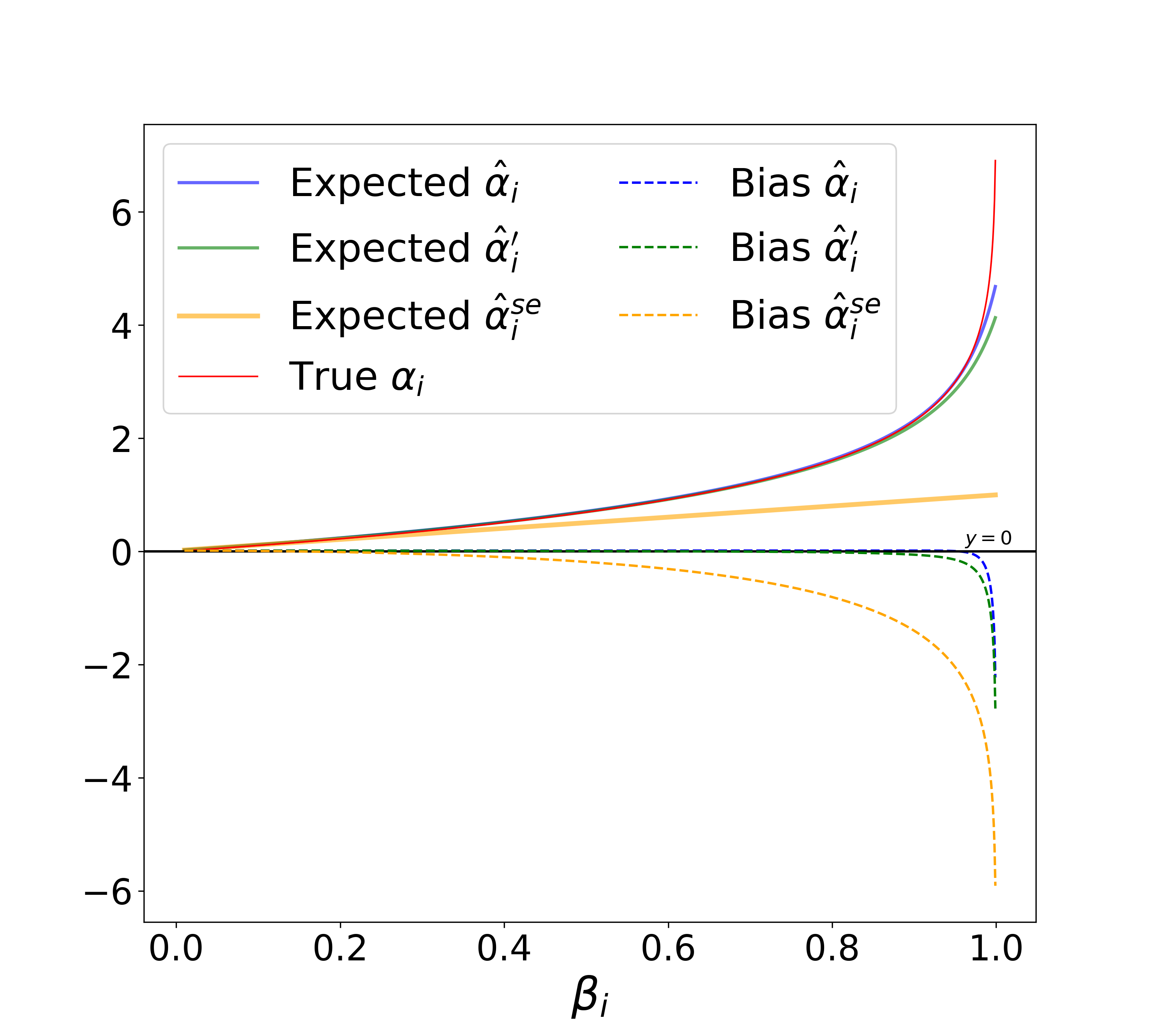}
\parbox[t]{\linewidth}{\scriptsize \centering (d) $n=64$}
\end{minipage}%

\begin{minipage}[t]{0.25\linewidth}
\centering
\includegraphics[width=1.8in]{figs/bias_plot_n128.png}
\parbox[t]{\linewidth}{\scriptsize \centering (e) $n=128$}
\end{minipage}%
\begin{minipage}[t]{0.25\linewidth}
\centering
\includegraphics[width=1.8in]{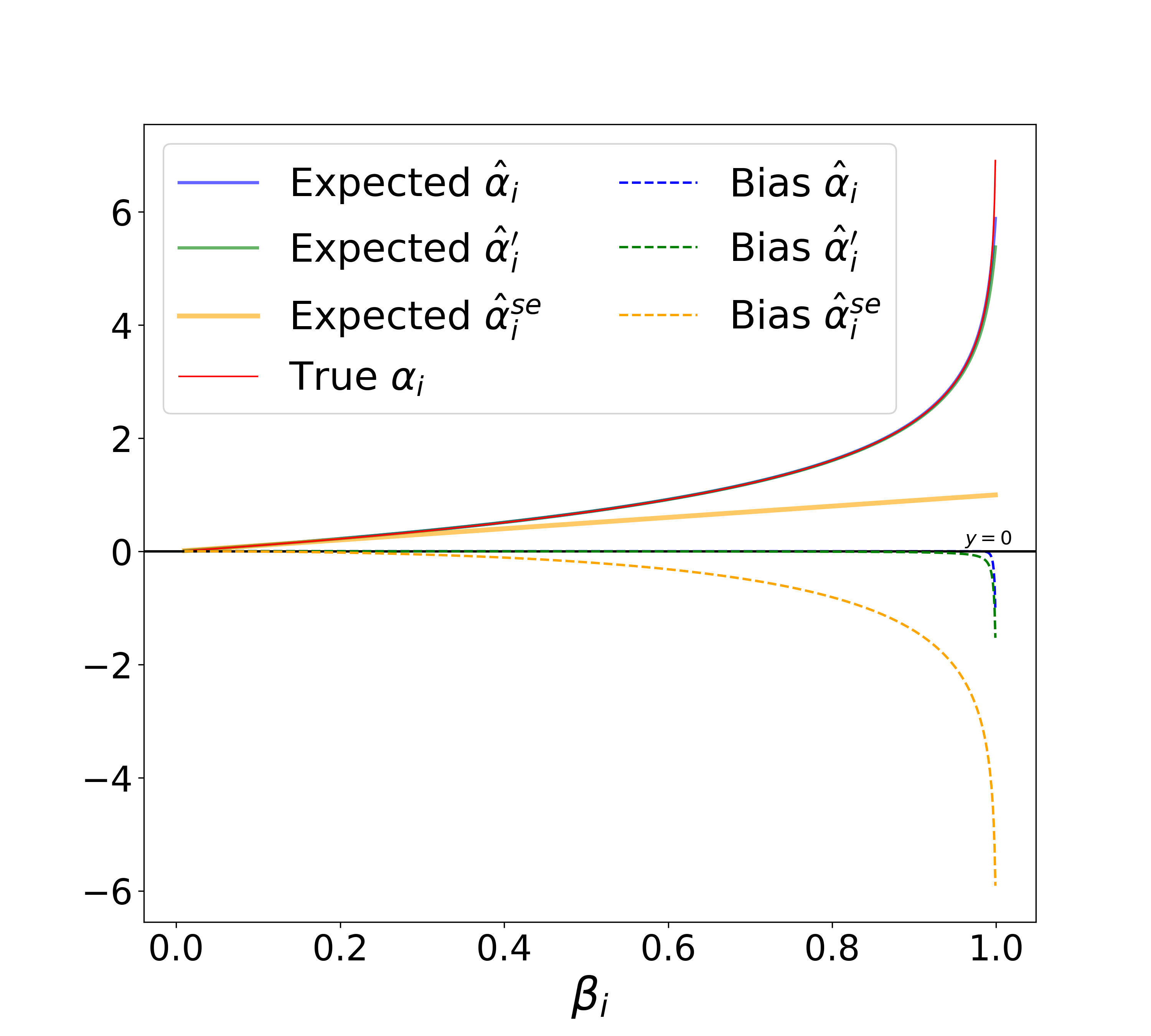}
\parbox[t]{\linewidth}{\scriptsize \centering (f) $n=256$}
\end{minipage}%
\begin{minipage}[t]{0.25\linewidth}
\centering
\includegraphics[width=1.8in]{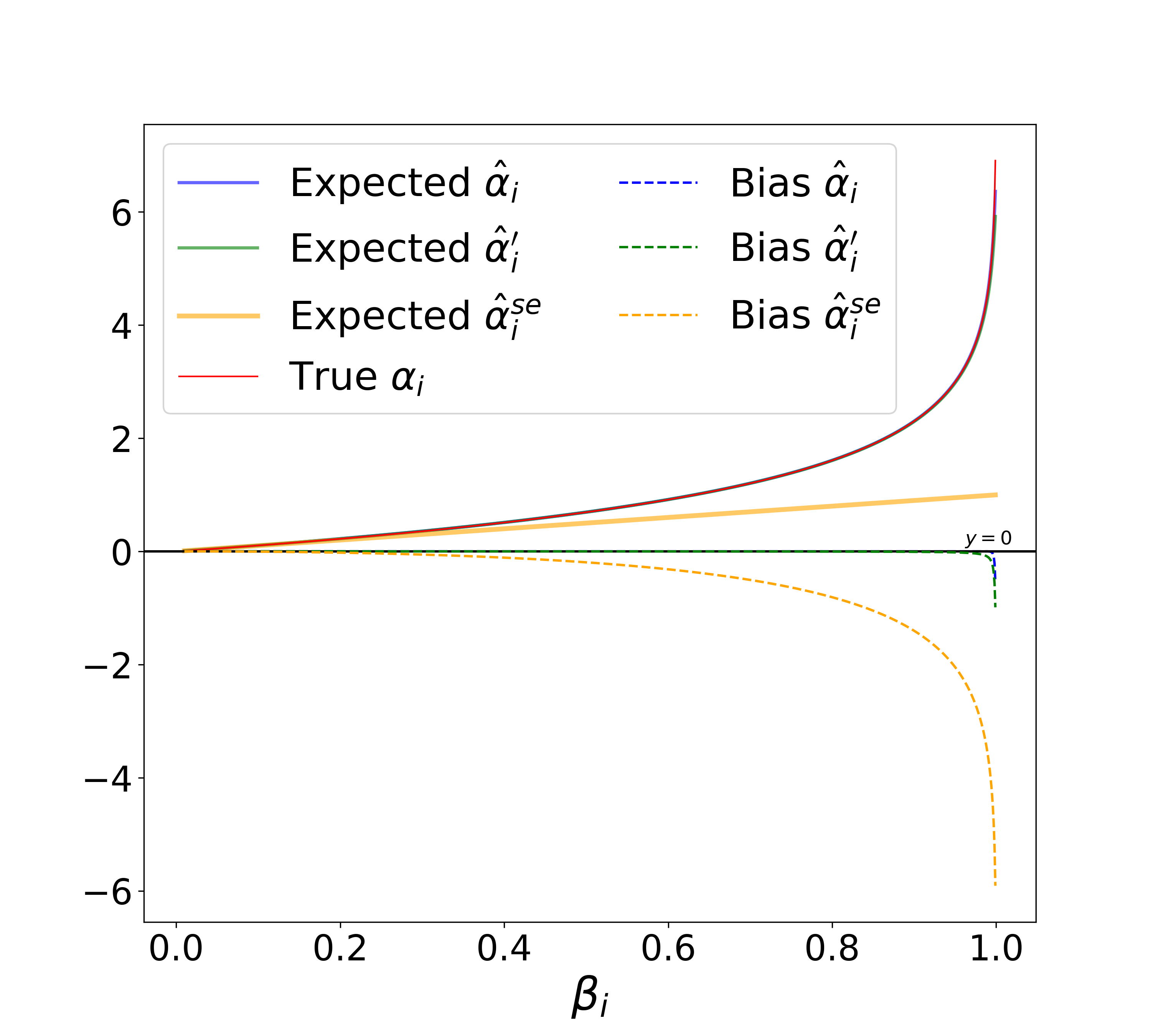}
\parbox[t]{\linewidth}{\scriptsize \centering (g) $n=512$}
\end{minipage}%
\begin{minipage}[t]{0.25\linewidth}
\centering
\includegraphics[width=1.8in]{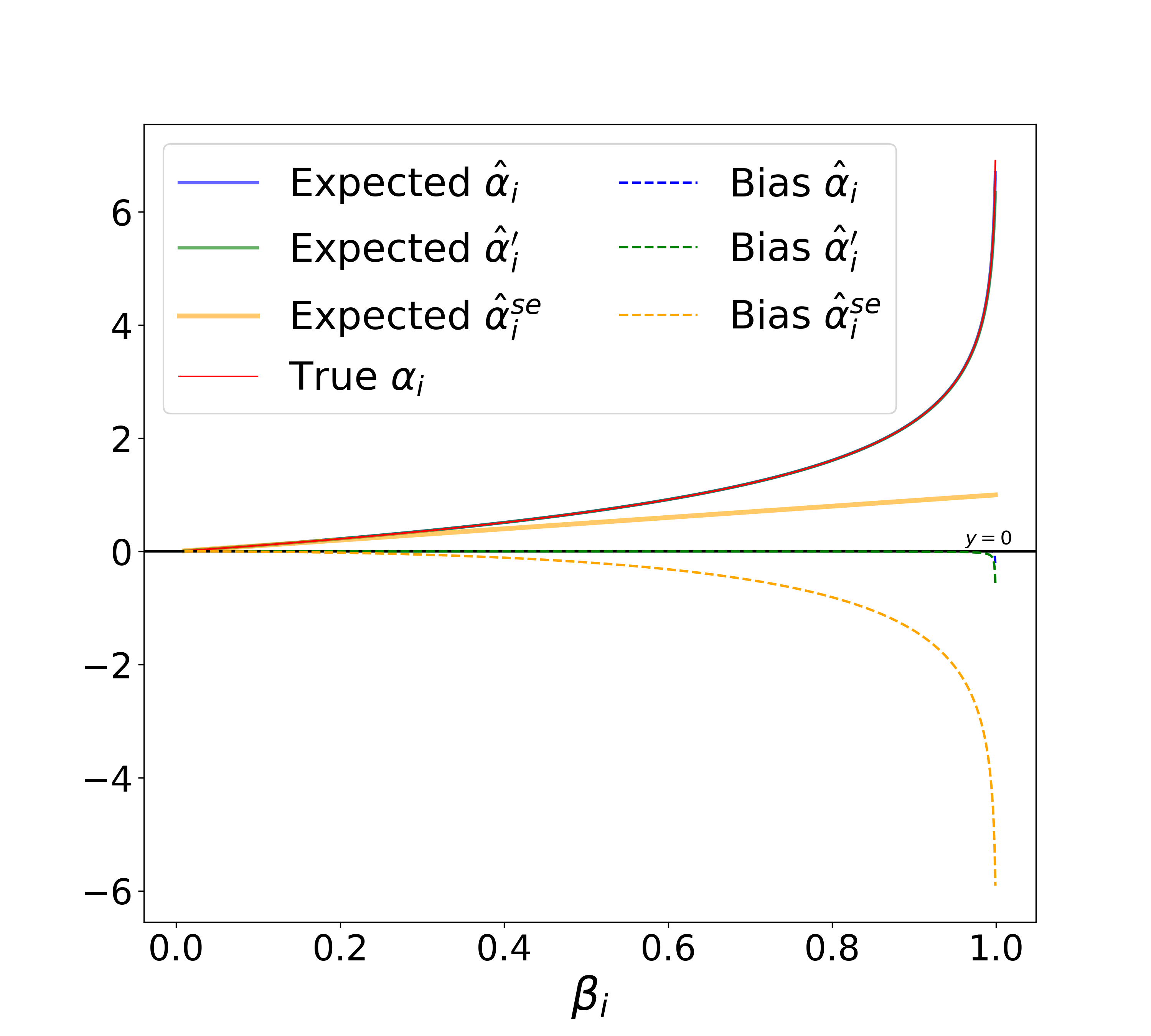}
\parbox[t]{\linewidth}{\scriptsize \centering (h) $n=1024$}
\end{minipage}%
\caption{The bias of the weights estimator as a function of $\beta$ for different sample size $n$. The bias is computed based on the results in Prop.~\ref{prop:biashatalpha},~\ref{prop:biashatalphaprime} and~\ref{prop:biasalphasele}.}
\label{fig:fullbias}
\end{figure}

\begin{figure}[!htb]
\scriptsize
%% \centerfloat
\begin{minipage}[t]{0.25\linewidth}
\centering
\includegraphics[width=1.7in]{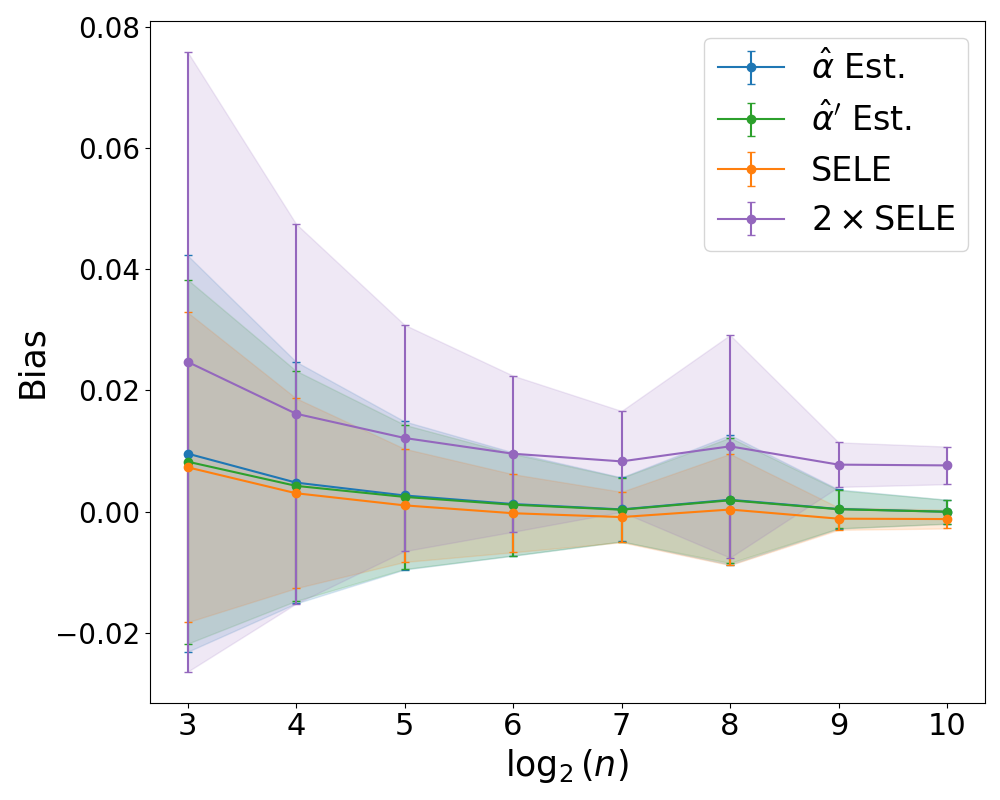}
\parbox[t]{\linewidth}{\scriptsize \centering (a) PreResNet20 (\textbf{0/1})}
\end{minipage}%
\begin{minipage}[t]{0.25\linewidth}
\centering
\includegraphics[width=1.7in]{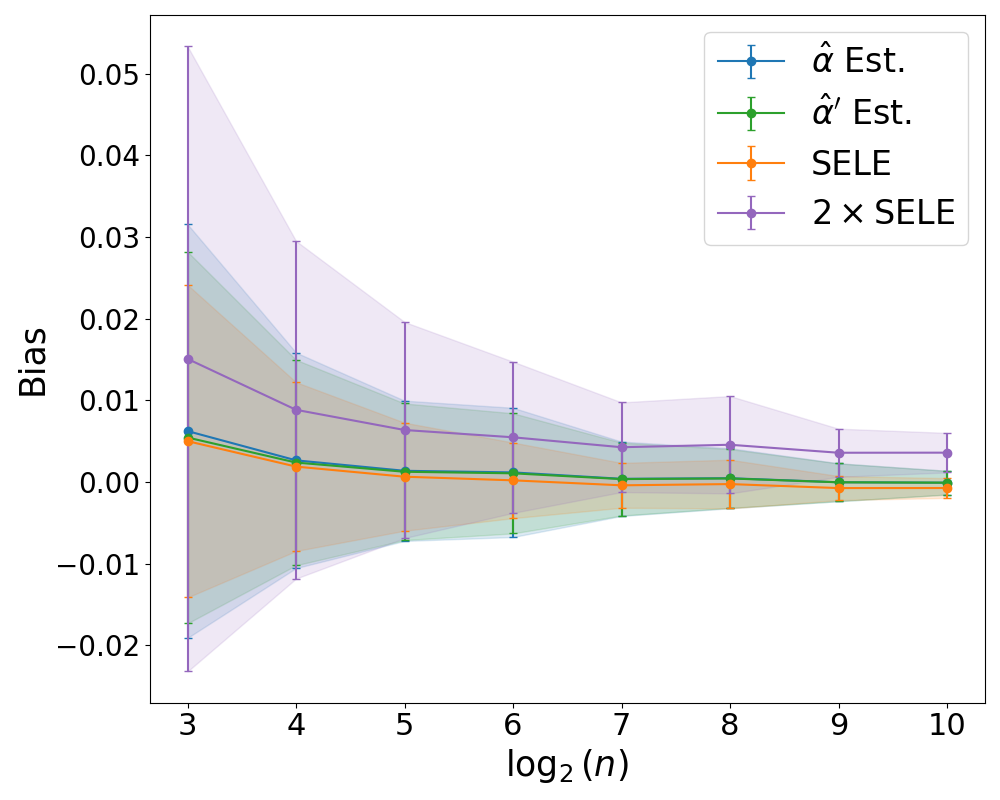}
\parbox[t]{\linewidth}{\scriptsize \centering (b) PreResNet110 (\textbf{0/1})}
\end{minipage}%
\begin{minipage}[t]{0.25\linewidth}
\centering
\includegraphics[width=1.7in]{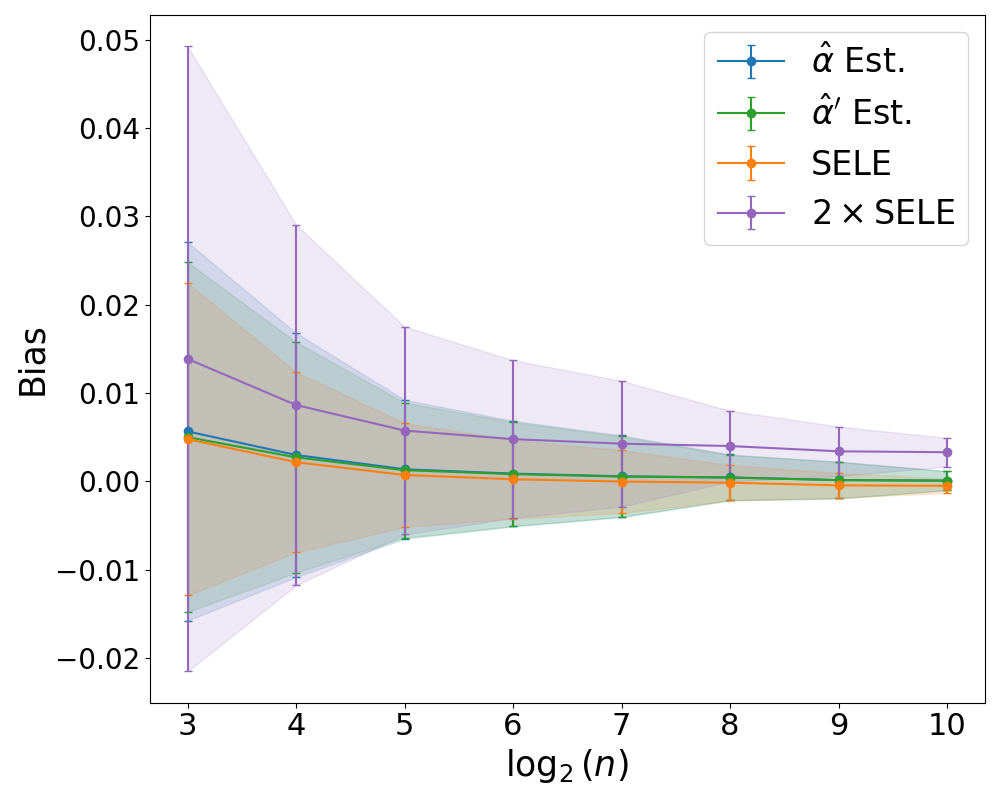}
\parbox[t]{\linewidth}{\scriptsize \centering (c) PreResNet164 (\textbf{0/1})}
\end{minipage}%
\begin{minipage}[t]{0.25\linewidth}
\centering
\includegraphics[width=1.7in]{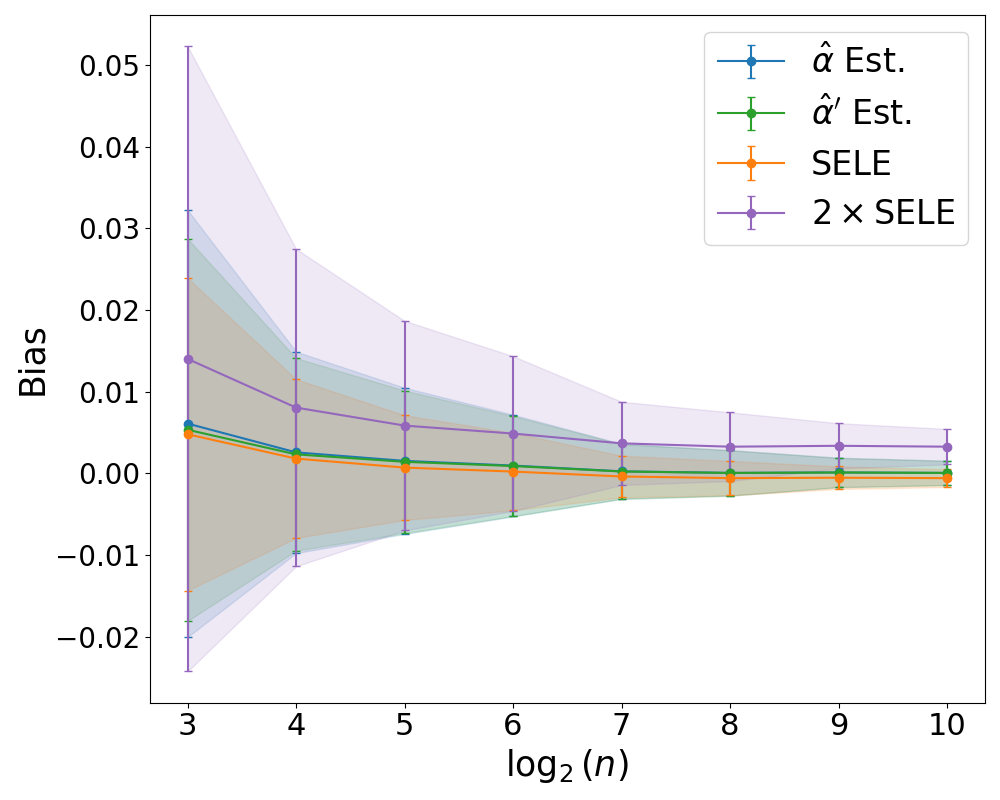}
\parbox[t]{\linewidth}{\scriptsize \centering (d) WideResNet28x10 (\textbf{0/1})}
\end{minipage}%

\begin{minipage}[t]{0.25\linewidth}
\centering
\includegraphics[width=1.7in]{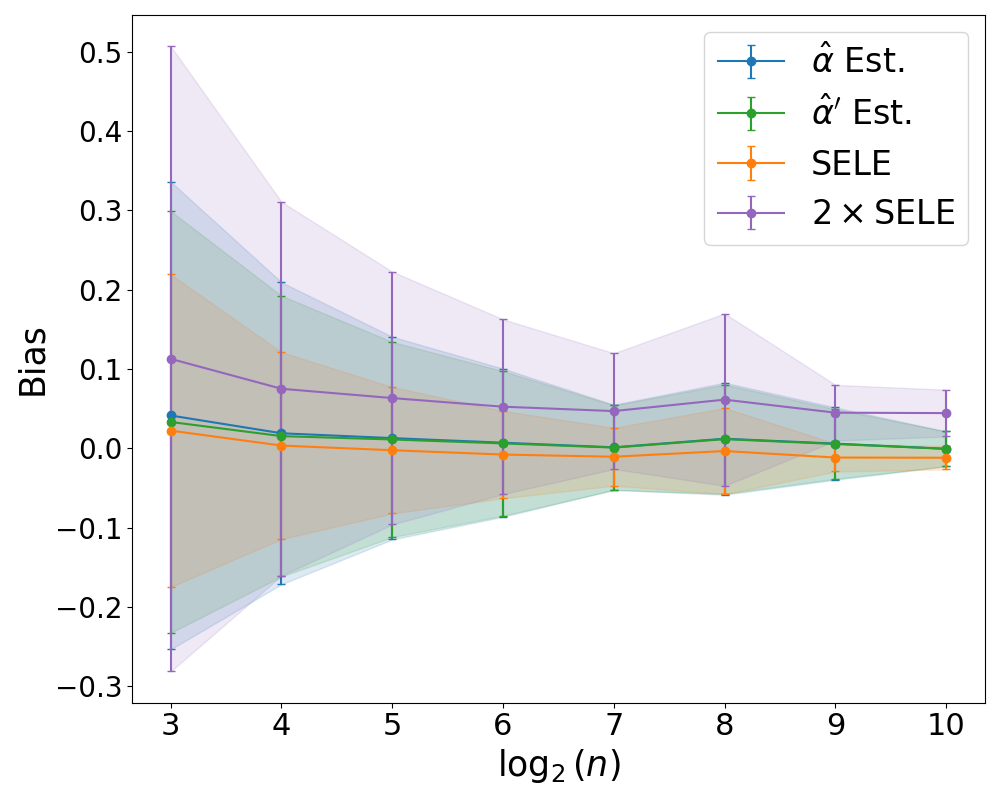}
\parbox[t]{\linewidth}{\scriptsize \centering (e) PreResNet20 (\textbf{CE})}
\end{minipage}%
\begin{minipage}[t]{0.25\linewidth}
\centering
\includegraphics[width=1.7in]{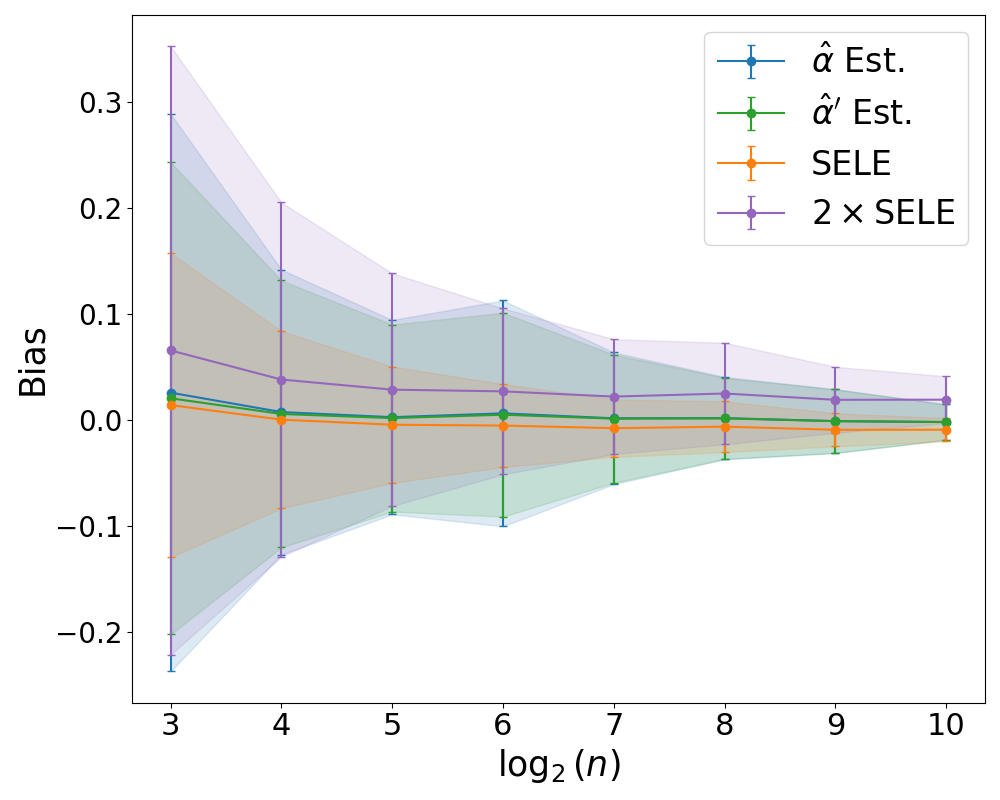}
\parbox[t]{\linewidth}{\scriptsize \centering (f) PreResNet110 (\textbf{CE})}
\end{minipage}%
\begin{minipage}[t]{0.25\linewidth}
\centering
\includegraphics[width=1.7in]{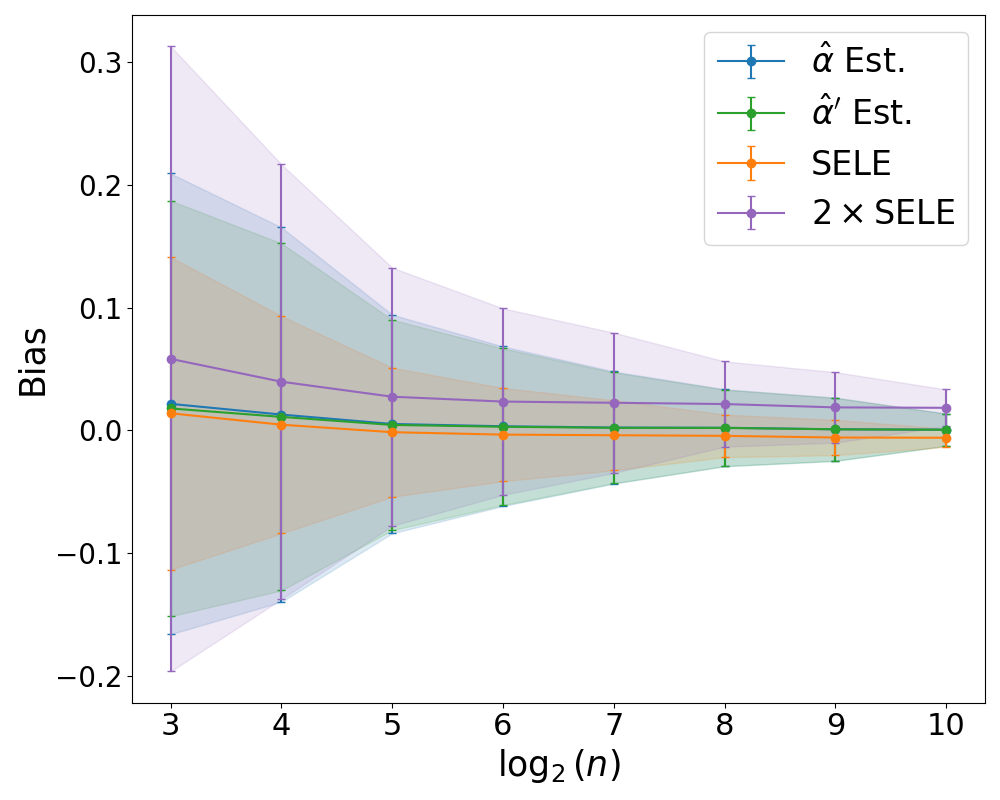}
\parbox[t]{\linewidth}{\scriptsize \centering (g) PreResNet164 (\textbf{CE})}
\end{minipage}%
\begin{minipage}[t]{0.25\linewidth}
\centering
\includegraphics[width=1.7in]{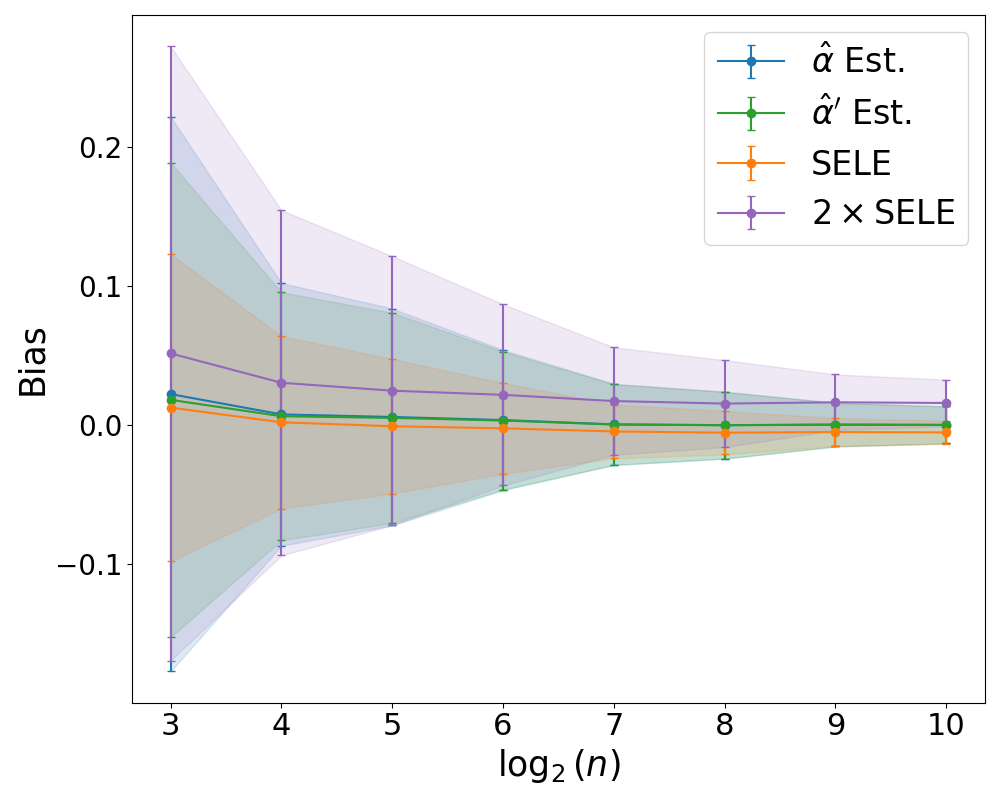}
\parbox[t]{\linewidth}{\scriptsize \centering (h) WideResNet28x10 (\textbf{CE})}
\end{minipage}%

\begin{minipage}[t]{0.25\linewidth}
\centering
\includegraphics[width=1.7in]{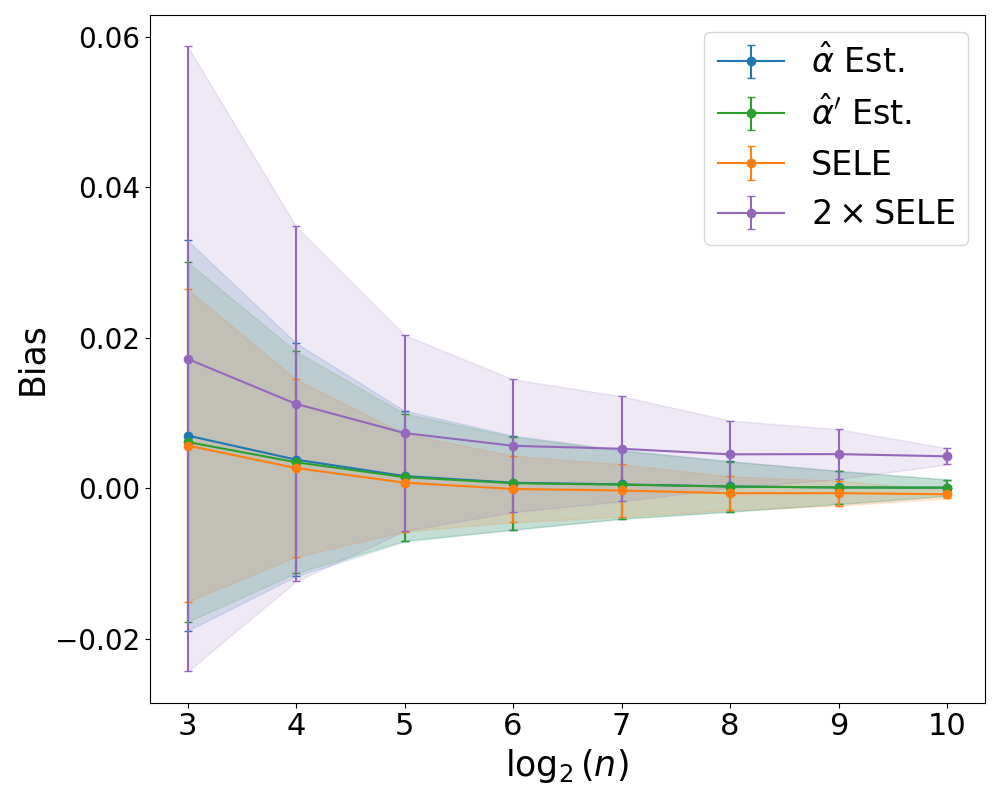}
\parbox[t]{\linewidth}{\scriptsize \centering (i) PreResNet56 (\textbf{0/1})}
\end{minipage}%
\begin{minipage}[t]{0.25\linewidth}
\centering
\includegraphics[width=1.7in]{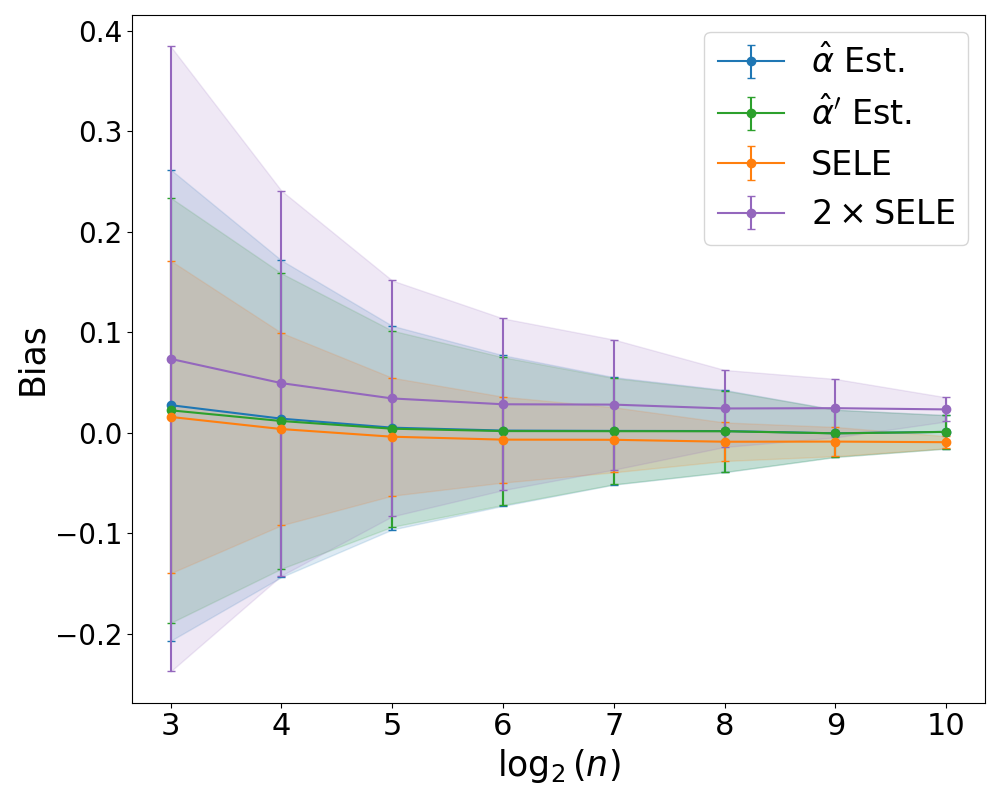}
\parbox[t]{\linewidth}{\scriptsize \centering (j) PreResNet56 (\textbf{CE})}
\end{minipage}%
\begin{minipage}[t]{0.25\linewidth}
\centering
\includegraphics[width=1.7in]{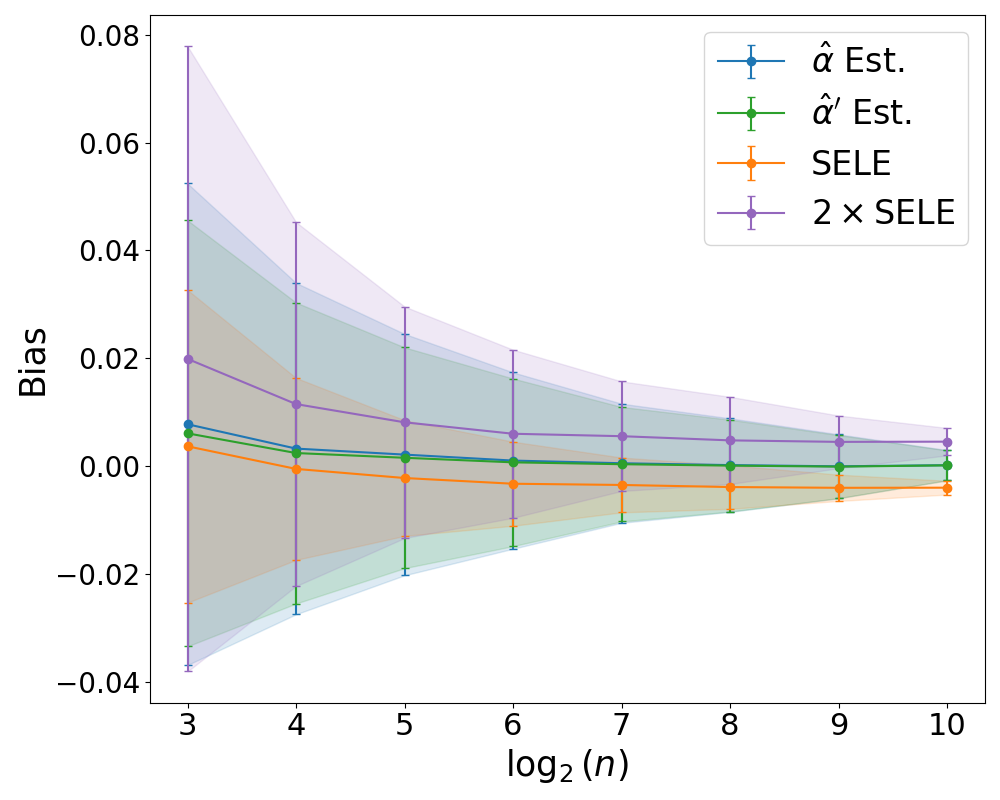}
\parbox[t]{\linewidth}{\scriptsize \centering (k) VGG16 (\textbf{0/1})}
\end{minipage}%
\begin{minipage}[t]{0.25\linewidth}
\centering
\includegraphics[width=1.7in]{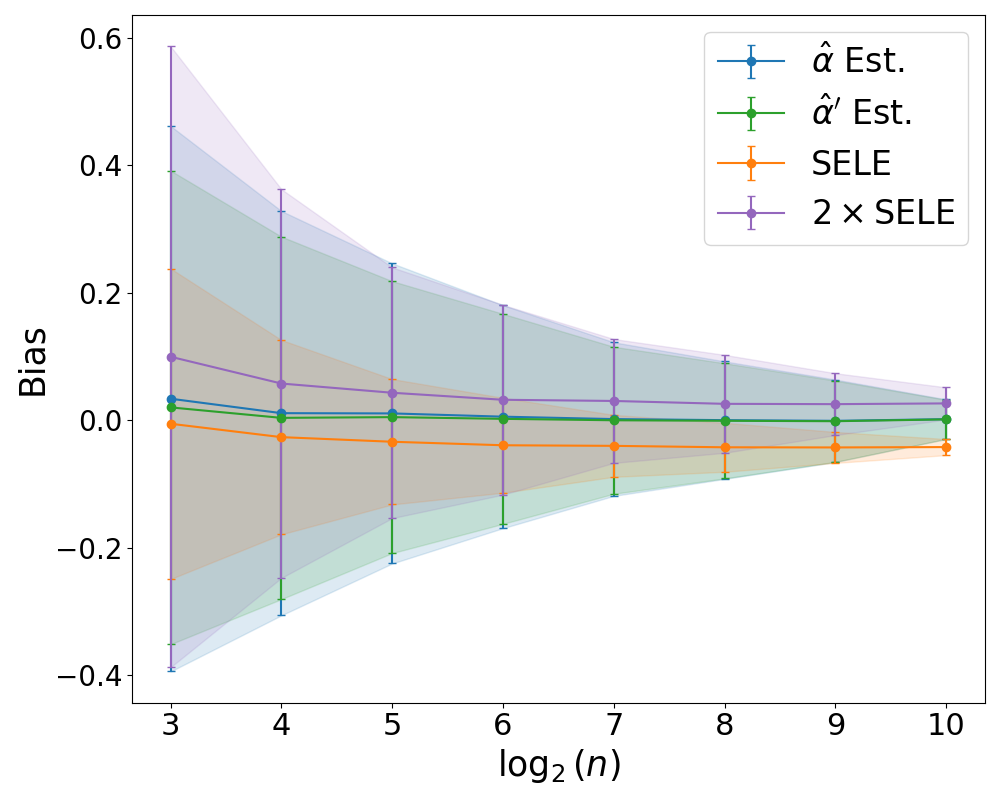}
\parbox[t]{\linewidth}{\scriptsize \centering (l) VGG16(\textbf{CE})}
\end{minipage}%
\caption{(\textbf{CIFAR10}) Bias of different finite sample estimators evaluated with \textbf{0/1} or \textbf{CE} loss. For each model architecture, We utilize a pre-trained model and randomly divide the test set into batch samples of size $n$. Subsequently, we compute the mean and std of bias for various estimators applied to these batch samples.}
\label{fig:cifar10biasceloss}
\end{figure}

\begin{figure*}[!ht]
\footnotesize
% \centerfloat
\begin{minipage}[t]{0.25\linewidth}
\centering
\includegraphics[width=1.7in]{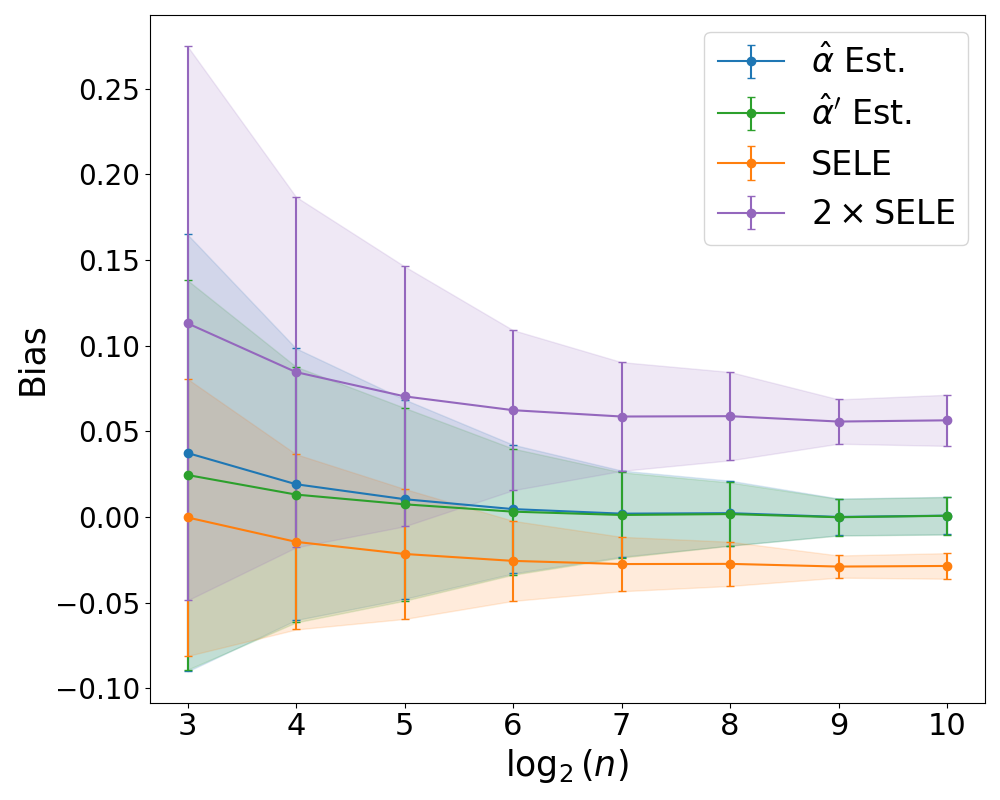}
\parbox[t]{\linewidth}{\scriptsize \centering (a) PreResNet20 (\textbf{0/1})}
\end{minipage}%
\begin{minipage}[t]{0.25\linewidth}
\centering
\includegraphics[width=1.7in]{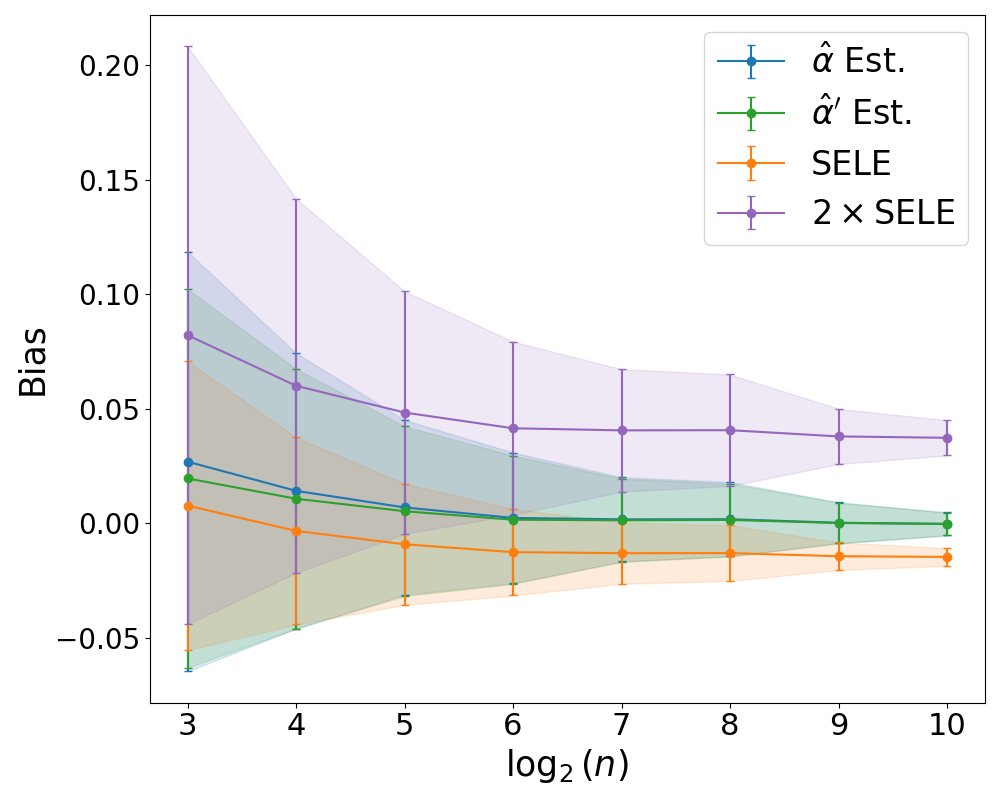}
\parbox[t]{\linewidth}{\scriptsize \centering (b) PreResNet110 (\textbf{0/1})}
\end{minipage}%
\begin{minipage}[t]{0.25\linewidth}
\centering
\includegraphics[width=1.7in]{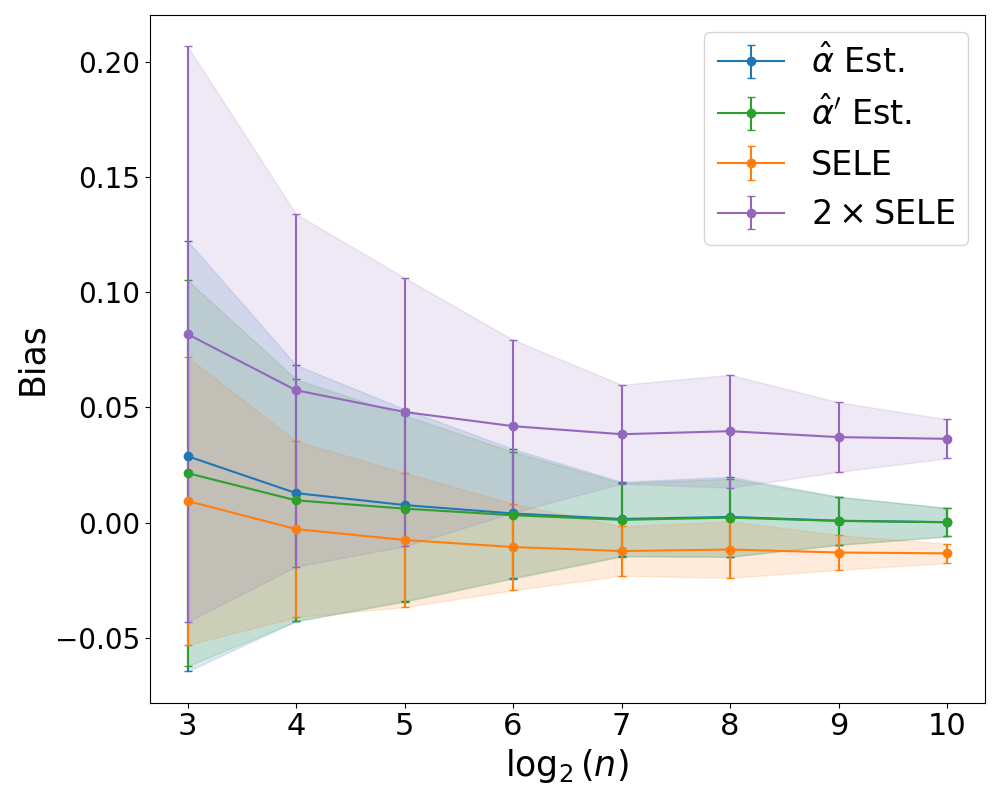}
\parbox[t]{\linewidth}{\scriptsize \centering (c) PreResNet164 (\textbf{0/1})}
\end{minipage}%
\begin{minipage}[t]{0.25\linewidth}
\centering
\includegraphics[width=1.7in]{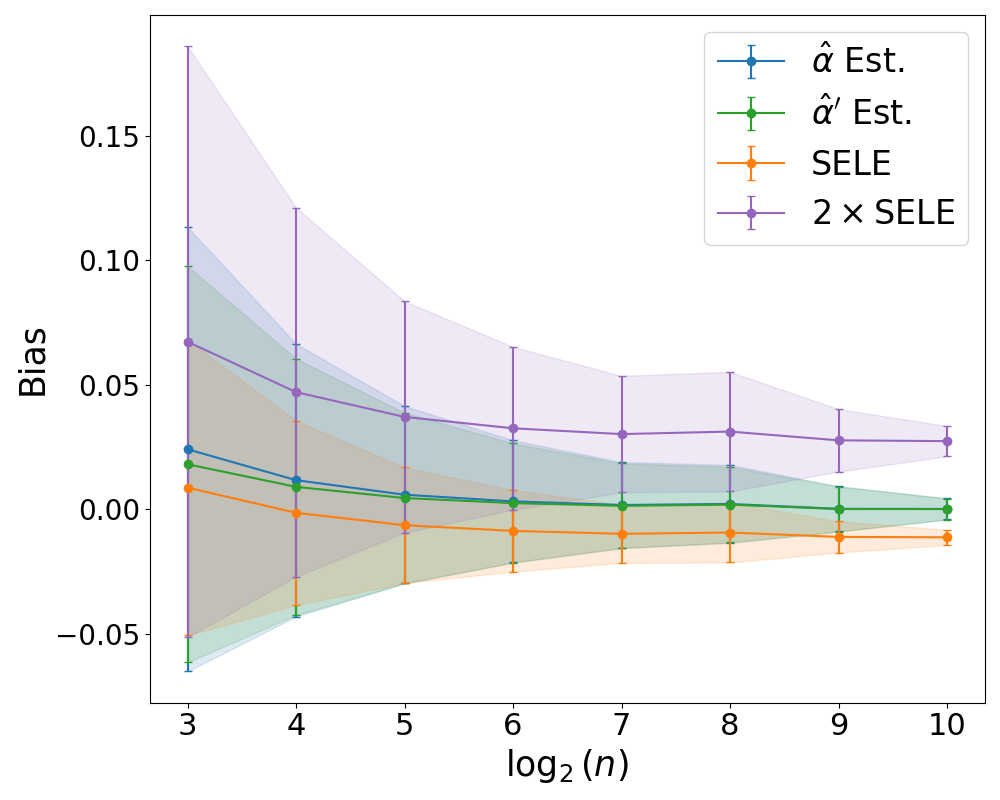}
\parbox[t]{\linewidth}{\scriptsize \centering (d) WideResNet28x10 (\textbf{0/1})}
\end{minipage}%

\begin{minipage}[t]{0.25\linewidth}
\centering
\includegraphics[width=1.7in]{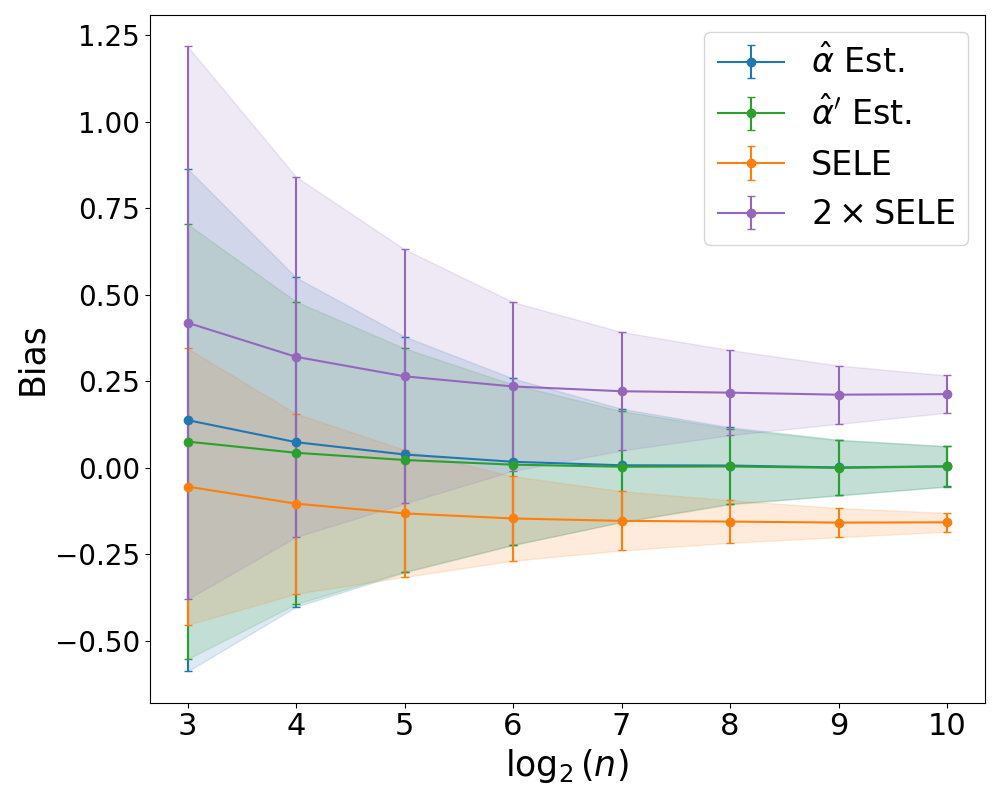}
\parbox[t]{\linewidth}{\scriptsize \centering (e) PreResNet20 (\textbf{CE})}
\end{minipage}%
\begin{minipage}[t]{0.25\linewidth}
\centering
\includegraphics[width=1.7in]{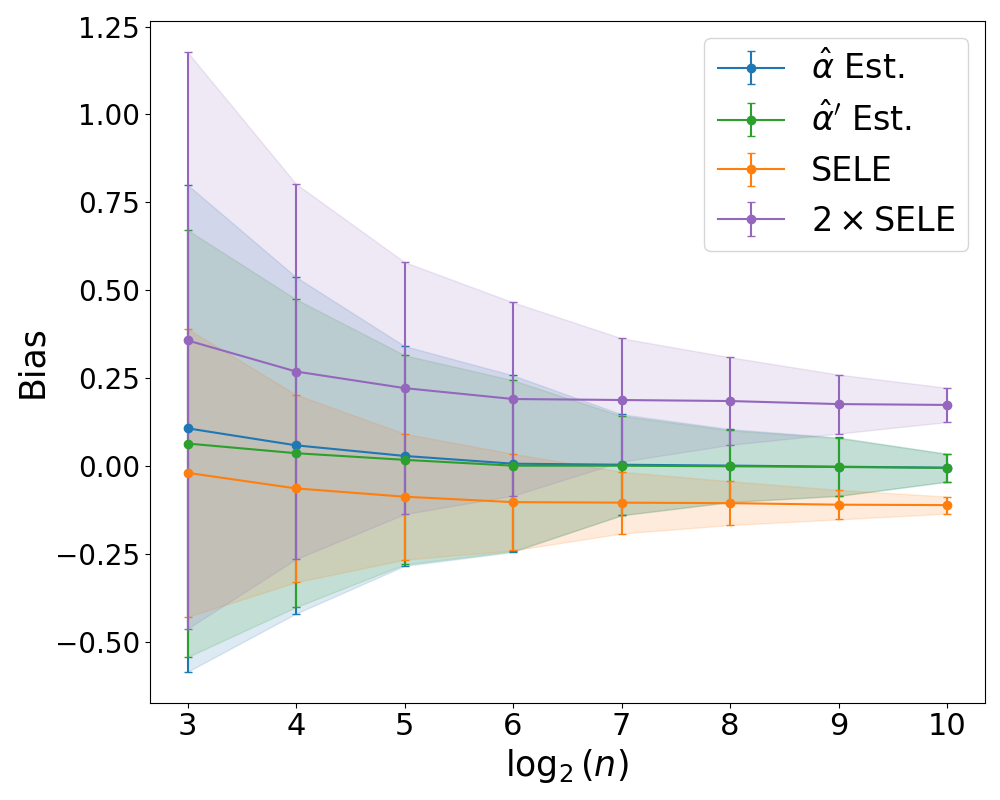}
\parbox[t]{\linewidth}{\scriptsize \centering (f) PreResNet110 (\textbf{CE})}
\end{minipage}%
\begin{minipage}[t]{0.25\linewidth}
\centering
\includegraphics[width=1.7in]{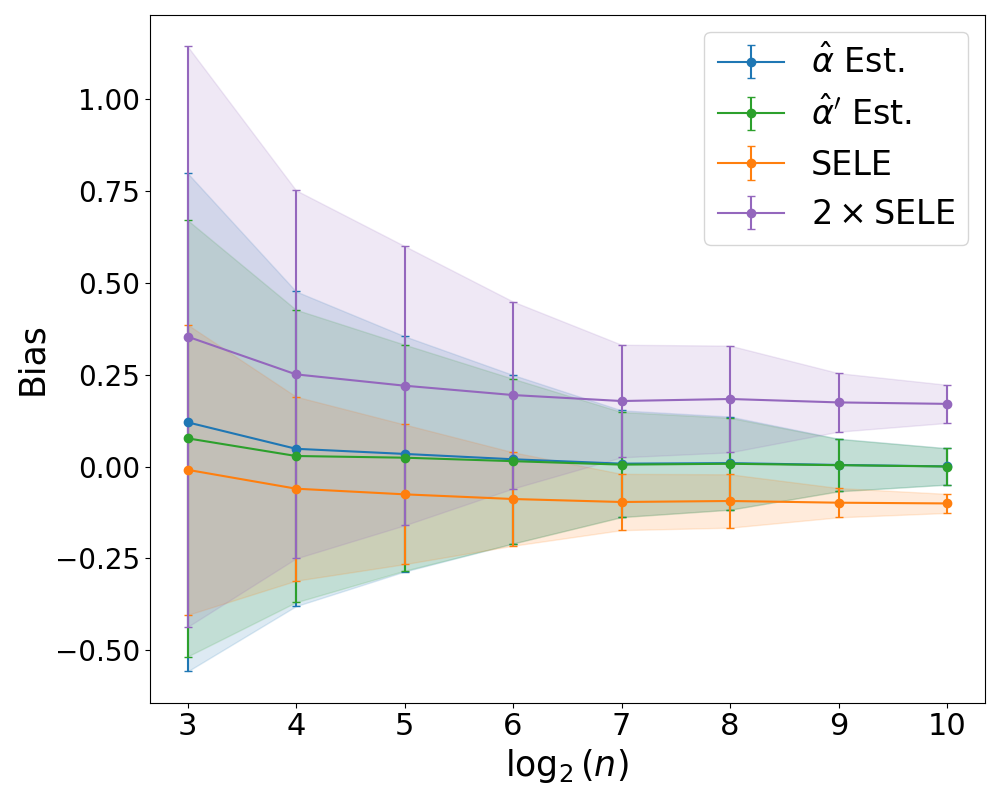}
\parbox[t]{\linewidth}{\scriptsize \centering (g) PreResNet164 (\textbf{CE})}
\end{minipage}%
\begin{minipage}[t]{0.25\linewidth}
\centering
\includegraphics[width=1.7in]{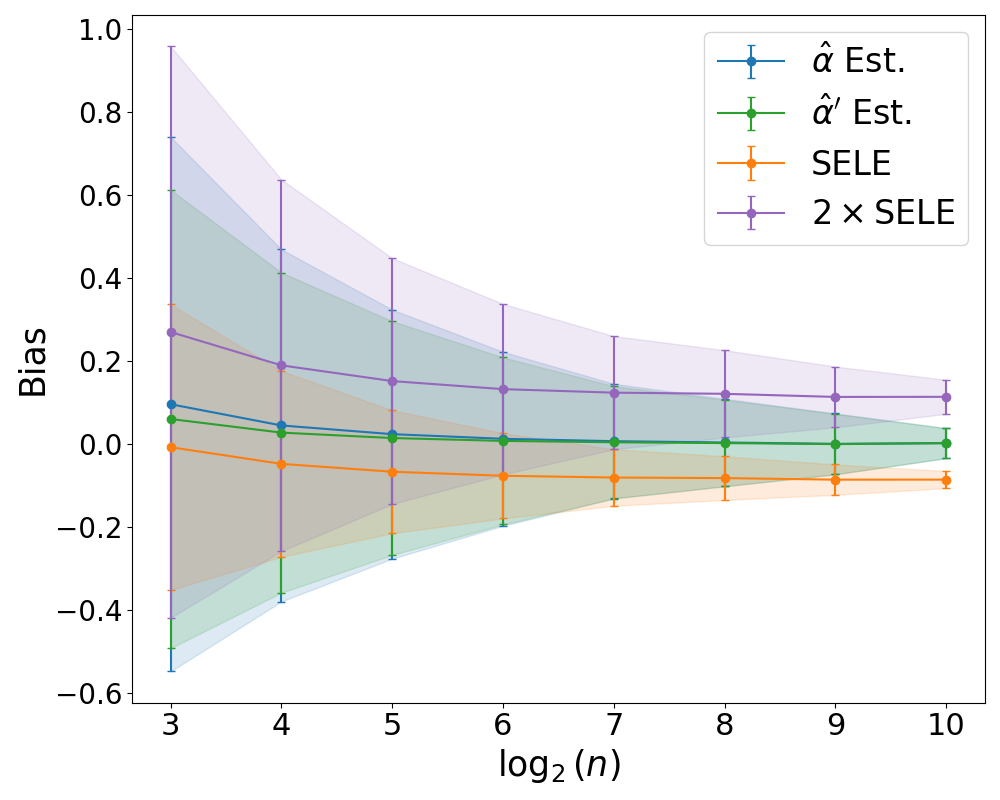}
\parbox[t]{\linewidth}{\scriptsize \centering (h) WideResNet28x10 (\textbf{CE})}
\end{minipage}%

\begin{minipage}[t]{0.25\linewidth}
\centering
\includegraphics[width=1.7in]{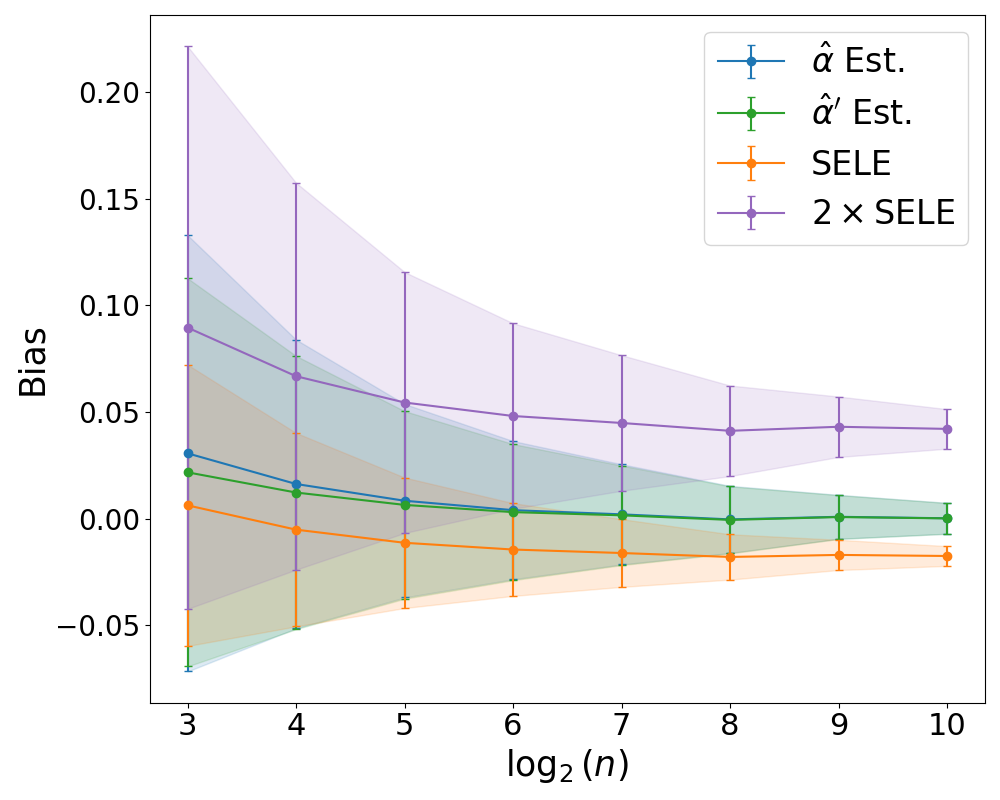}
\parbox[t]{\linewidth}{\scriptsize \centering (i) PreResNet56 (\textbf{0/1})}
\end{minipage}%
\begin{minipage}[t]{0.25\linewidth}
\centering
\includegraphics[width=1.7in]{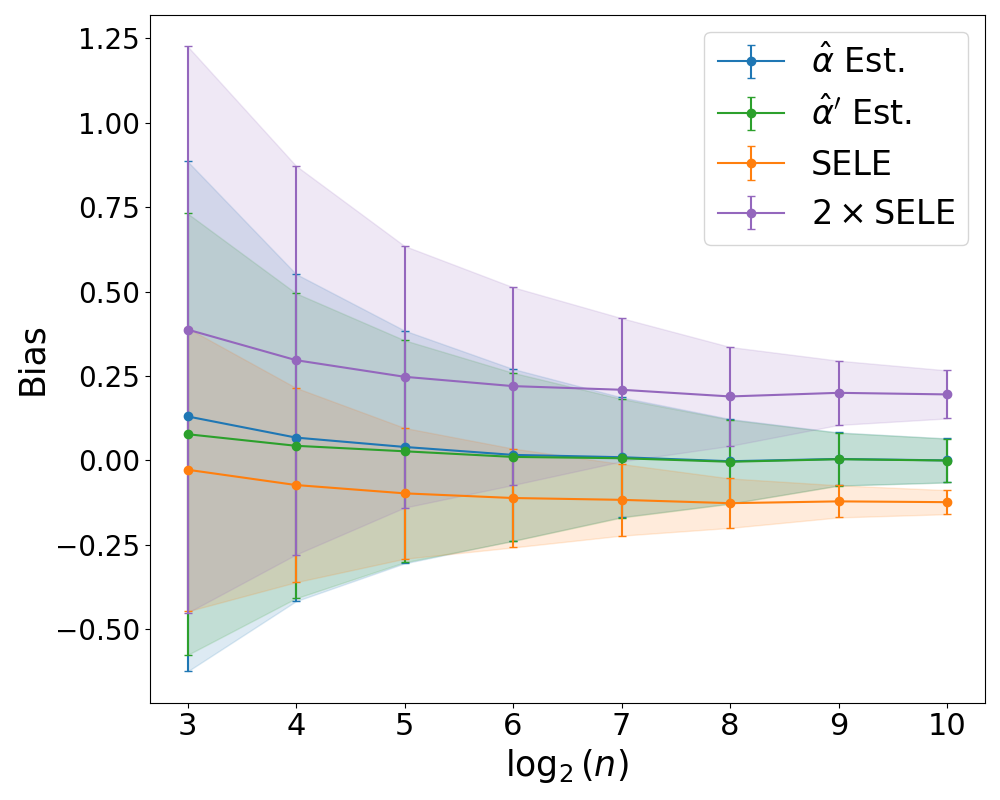}
\parbox[t]{\linewidth}{\scriptsize \centering (j) PreResNet56 (\textbf{CE})}
\end{minipage}%
\begin{minipage}[t]{0.25\linewidth}
\centering
\includegraphics[width=1.7in]{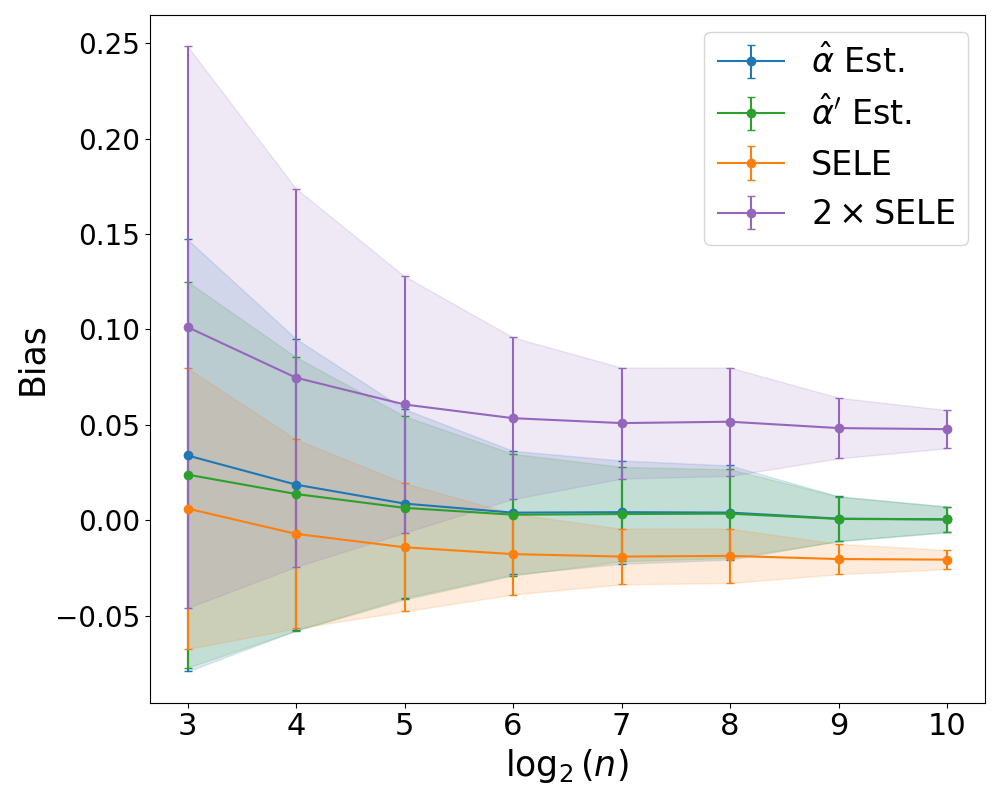}
\parbox[t]{\linewidth}{\scriptsize \centering (k) VGG16 (\textbf{0/1})}
\end{minipage}%
\begin{minipage}[t]{0.25\linewidth}
\centering
\includegraphics[width=1.7in]{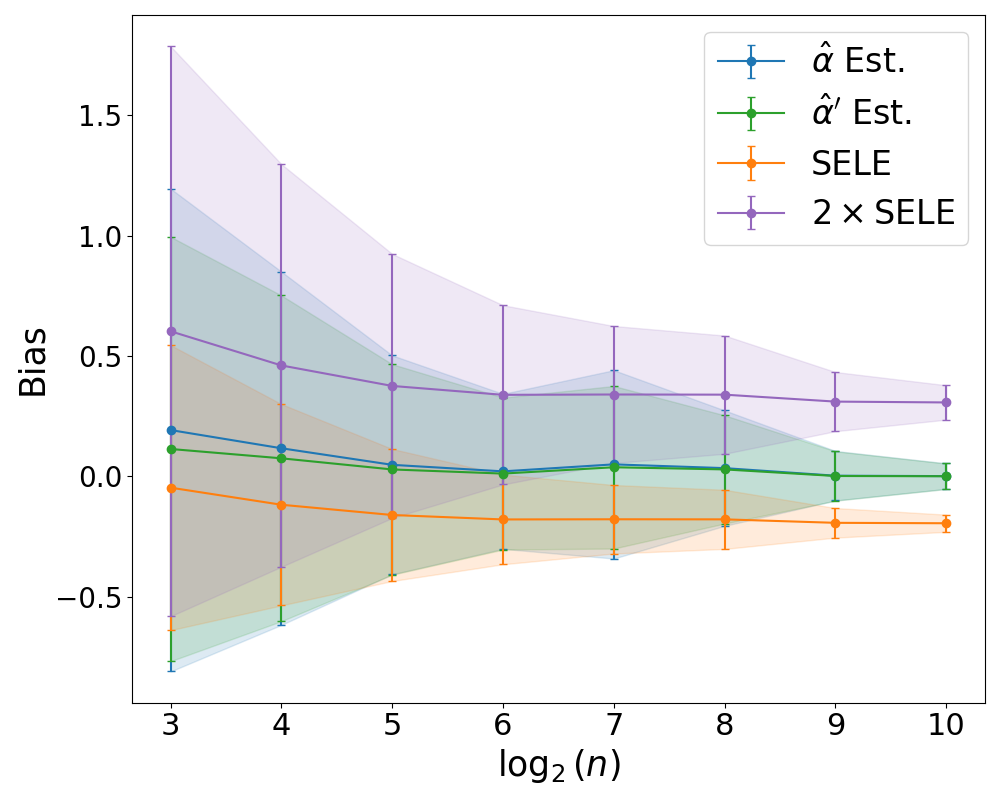}
\parbox[t]{\linewidth}{\scriptsize \centering (l) VGG16(\textbf{CE})}
\end{minipage}%
\caption{(\textbf{CIFAR100}) Bias of different finite sample estimators evaluated with \textbf{0/1} or \textbf{CE} loss. For each model architecture, We utilize a pre-trained model and randomly divide the test set into batch samples of size $n$. Subsequently, we compute the mean and std of bias for various estimators applied to these batch samples.}
\label{fig:cifar100biasceloss}
\end{figure*}

\begin{figure*}[!ht] 
\footnotesize
% \centerfloat
\begin{minipage}[t]{0.245\linewidth}
\centering
\includegraphics[width=1.66in]{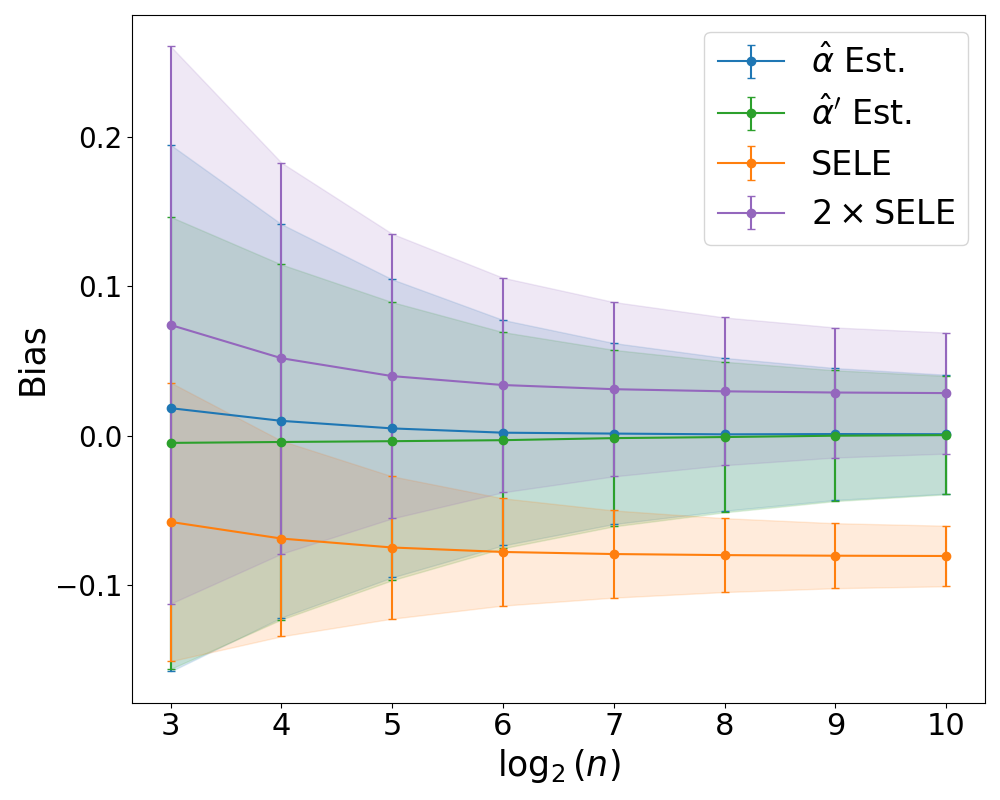}
\parbox[t]{\linewidth}{\scriptsize \centering(a) BERT (\textbf{0/1})}
\end{minipage}%
\begin{minipage}[t]{0.245\linewidth}
\centering
\includegraphics[width=1.66in]{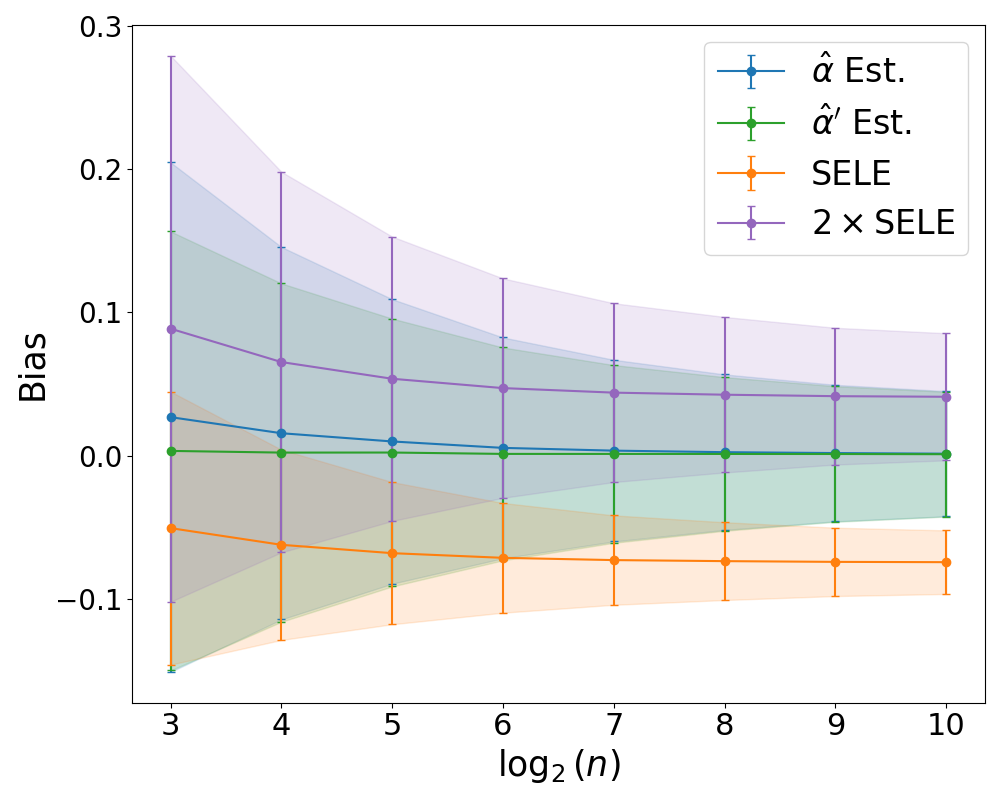}
\parbox[t]{\linewidth}{\scriptsize \centering(b) D-BERT (\textbf{0/1})}
\end{minipage}
\begin{minipage}[t]{0.245\linewidth}
\centering
\includegraphics[width=1.66in]{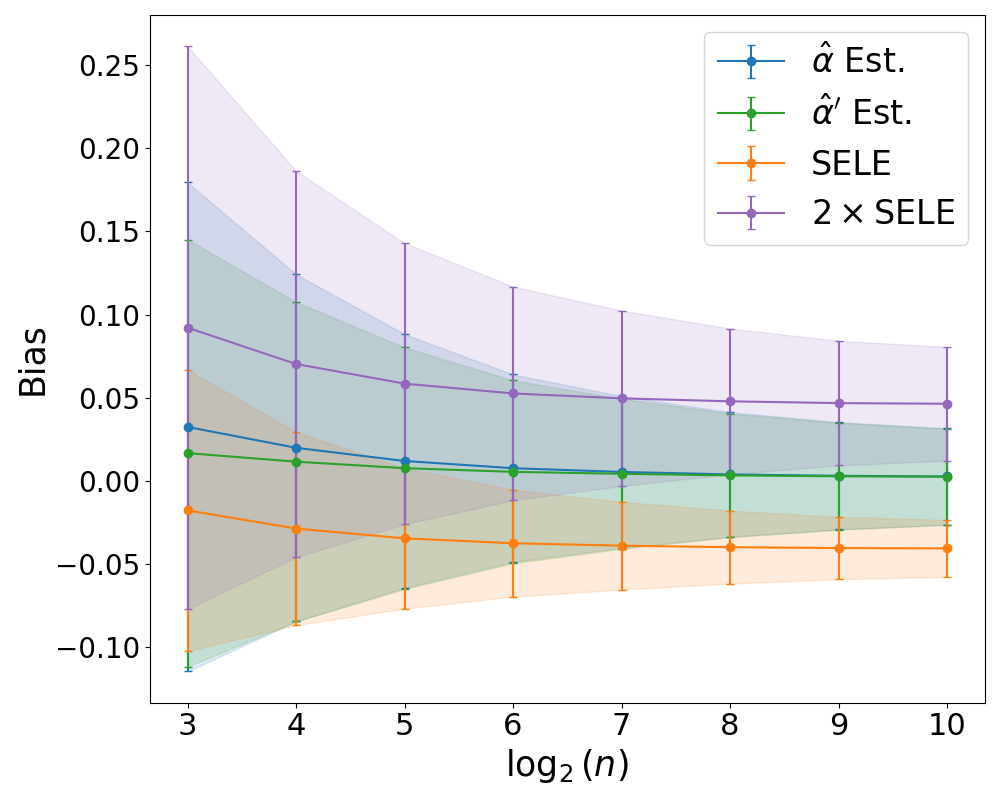}
\parbox[t]{\linewidth}{\scriptsize \centering(c) RoBERTa (\textbf{0/1})}
\end{minipage}
\begin{minipage}[t]{0.245\linewidth}
\centering
\includegraphics[width=1.66in]{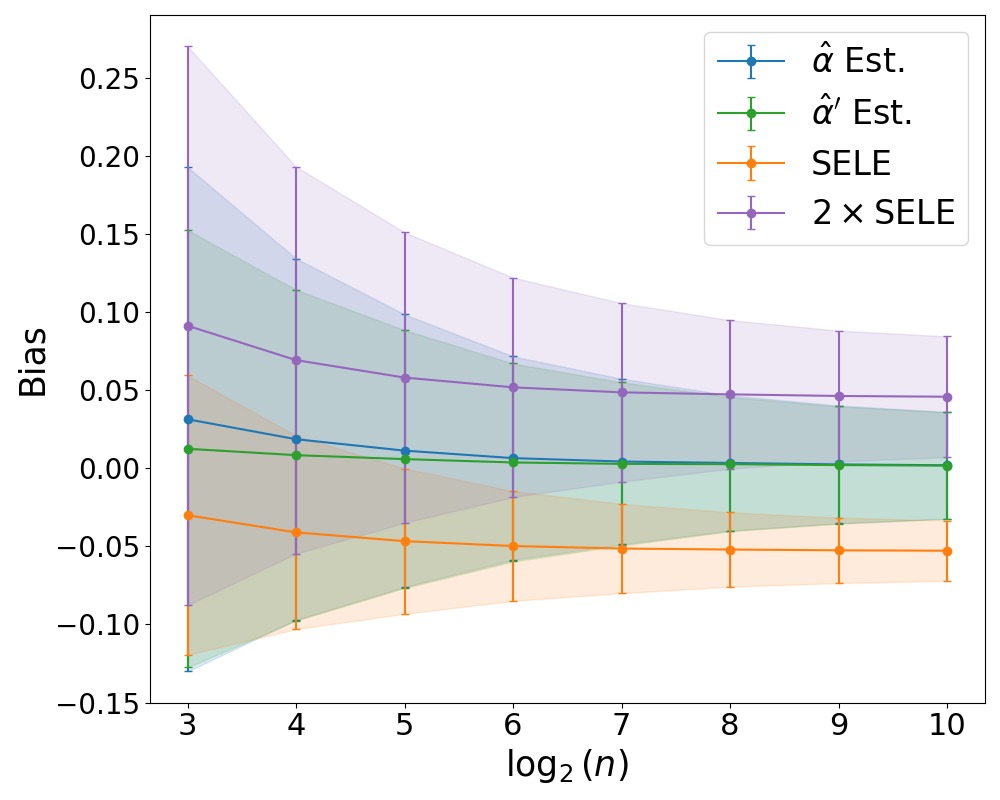}
\parbox[t]{\linewidth}{\scriptsize \centering(d) D-RoBERTa (\textbf{0/1})}
\end{minipage}

\begin{minipage}[t]{0.245\linewidth}
\centering
\includegraphics[width=1.66in]{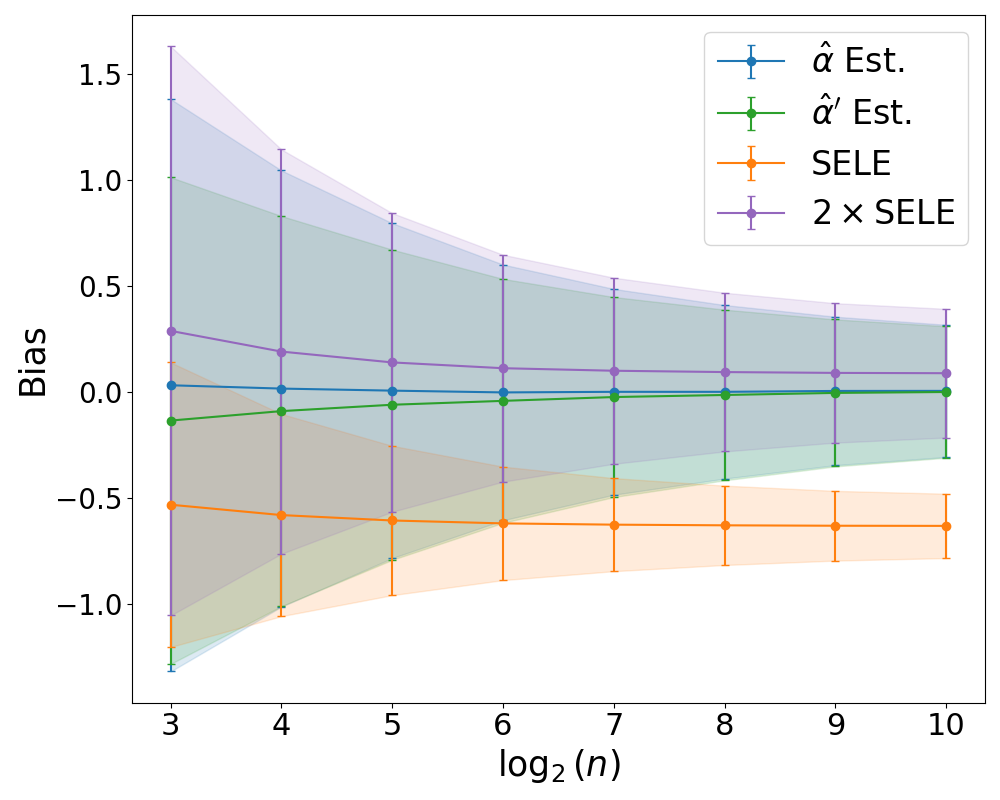}
\parbox[t]{\linewidth}{\scriptsize \centering(e) BERT (\textbf{CE})}
\end{minipage}%
\begin{minipage}[t]{0.245\linewidth}
\centering
\includegraphics[width=1.66in]{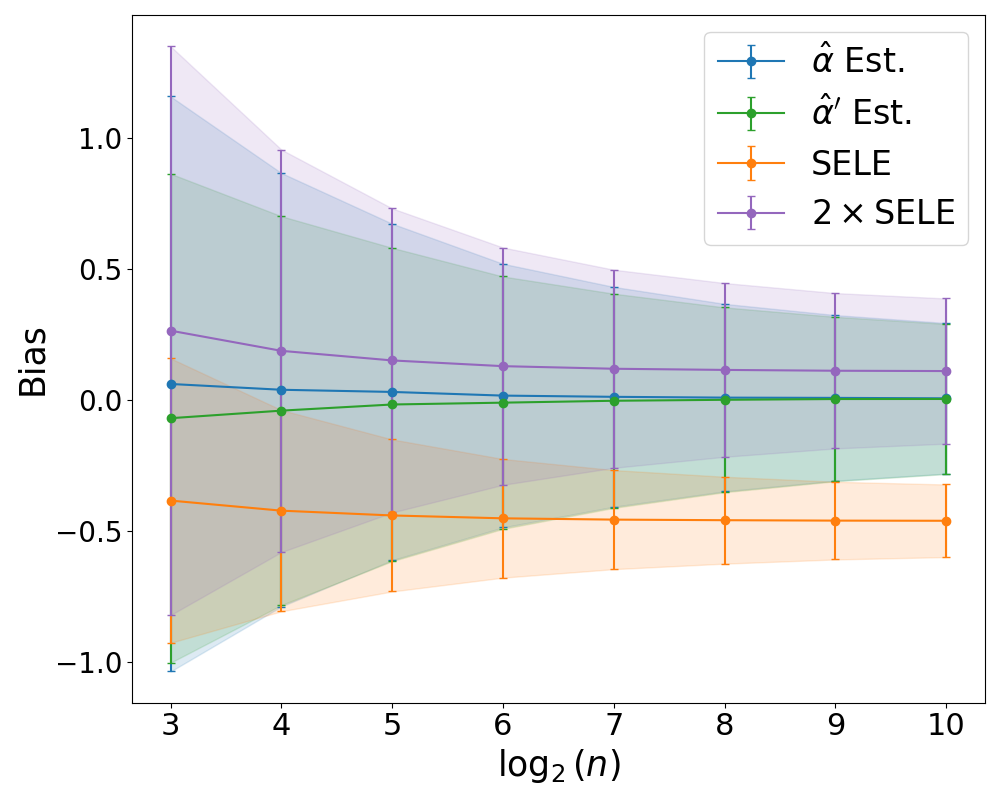}
\parbox[t]{\linewidth}{\scriptsize \centering(f) D-BERT (\textbf{CE})}
\end{minipage}
\begin{minipage}[t]{0.245\linewidth}
\centering
\includegraphics[width=1.66in]{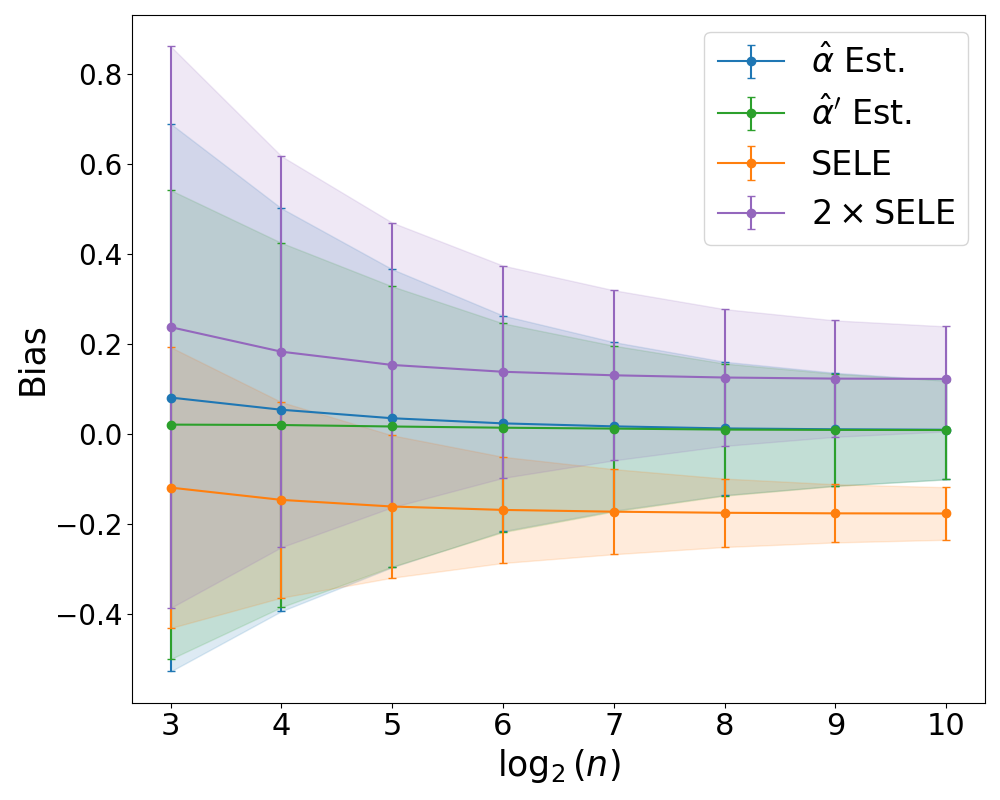}
\parbox[t]{\linewidth}{\scriptsize \centering(g) RoBERTa (\textbf{CE})}
\end{minipage}
\begin{minipage}[t]{0.245\linewidth}
\centering
\includegraphics[width=1.66in]{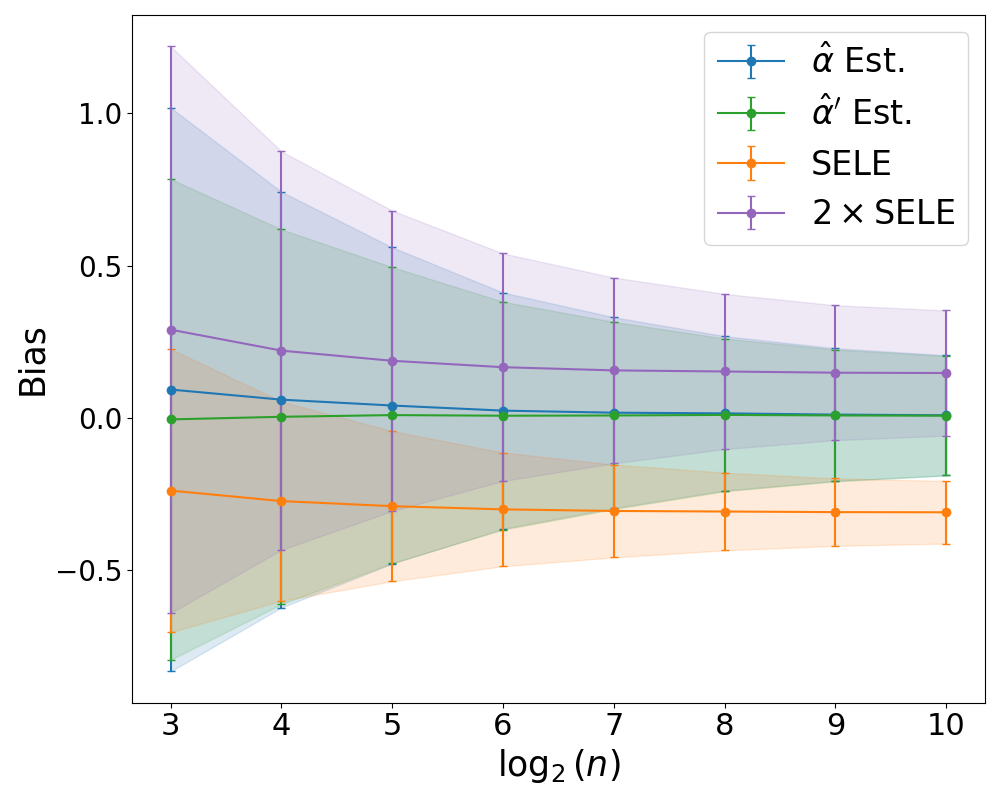}
\parbox[t]{\linewidth}{\scriptsize \centering(h) D-RoBERTa (\textbf{CE})}
\end{minipage}
\caption{(\textbf{Amazon}) Bias of different finite sample estimators evaluated with \textbf{0/1} or \textbf{CE} loss. For each model architecture, We utilize a pre-trained model and randomly divide the test set into batch samples of size $n$. Subsequently, we compute the mean and std of bias for various estimators applied to these batch samples.}
\label{fig:amazonbias}
\end{figure*}

\begin{figure*}[!ht]
\footnotesize
% % \centerfloat
\begin{minipage}[t]{0.245\linewidth}
\centering
\includegraphics[width=1.66in]{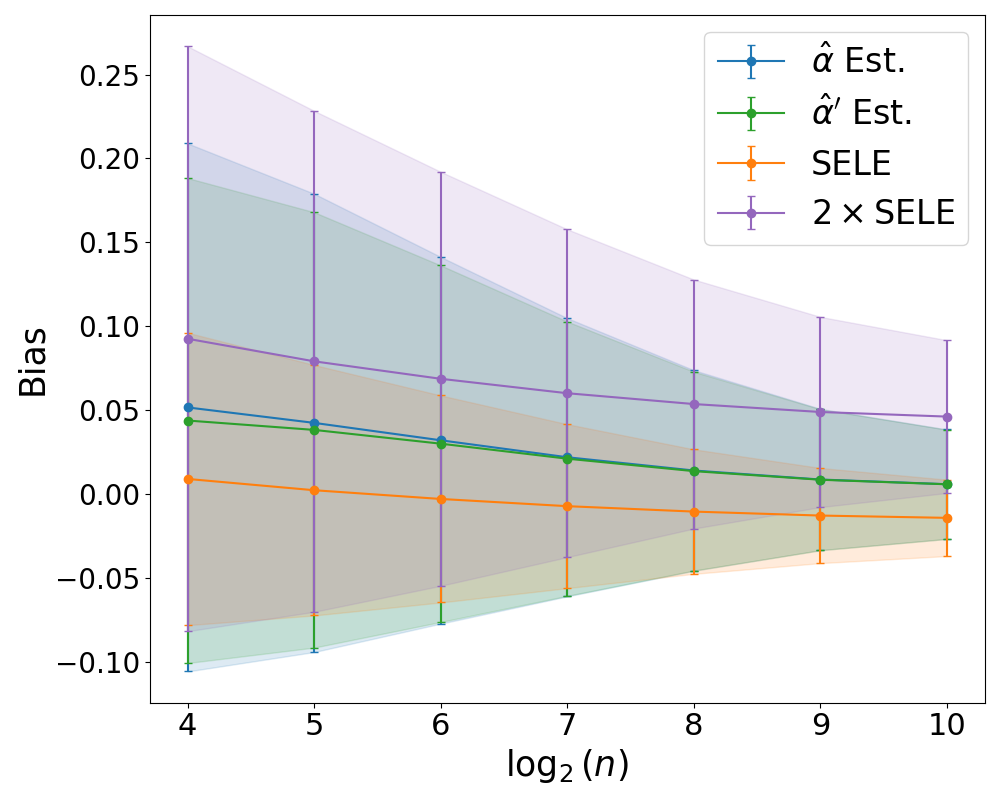}
\parbox[t]{\linewidth}{\scriptsize \centering(a) ViT-Small (\textbf{0/1})}
\end{minipage}%
\begin{minipage}[t]{0.245\linewidth}
\centering
\includegraphics[width=1.66in]{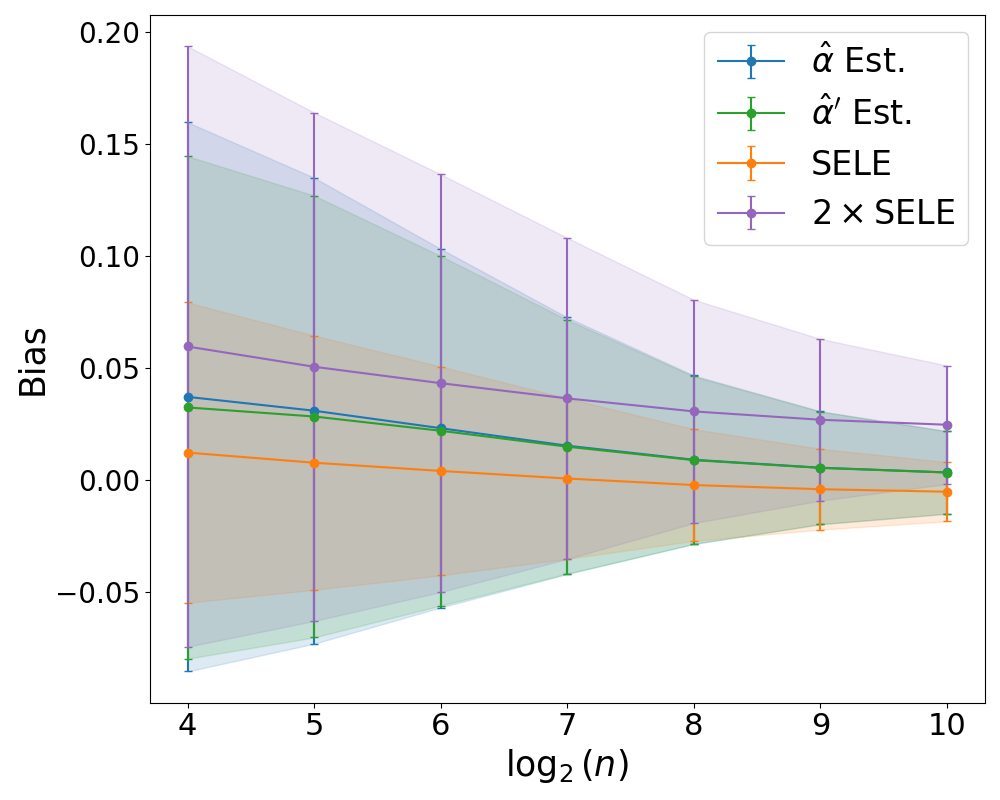}
\parbox[t]{\linewidth}{\scriptsize \centering(b) ViT-Large (\textbf{0/1})}
\end{minipage}
\begin{minipage}[t]{0.245\linewidth}
\centering
\includegraphics[width=1.66in]{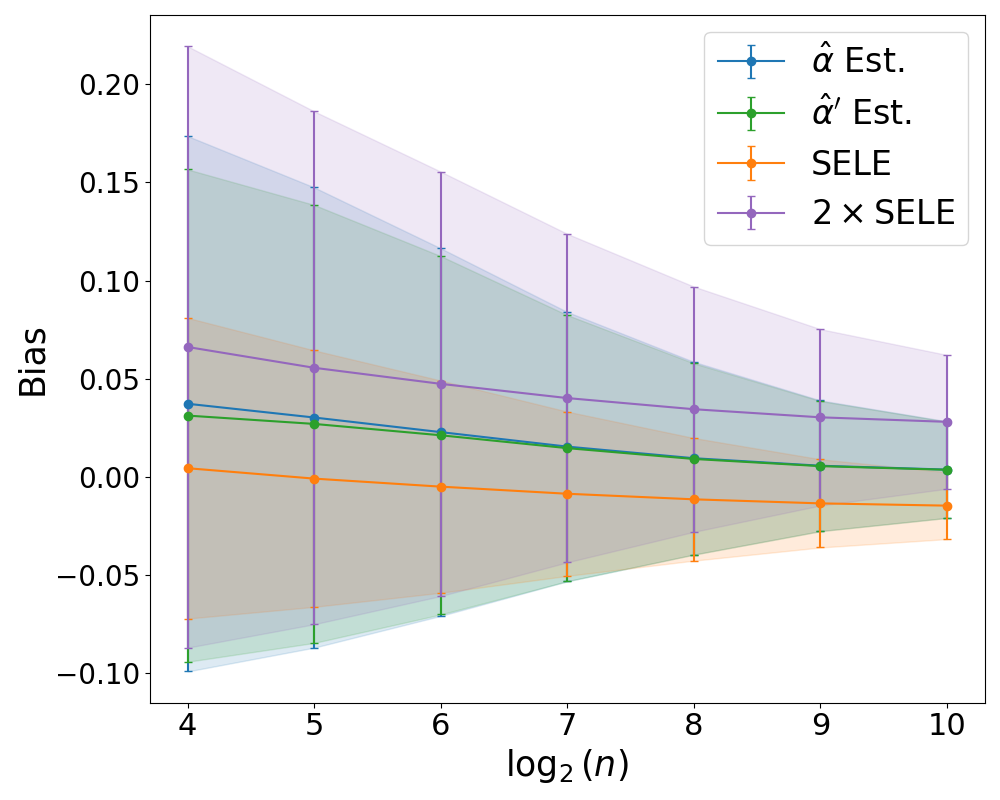}
\parbox[t]{\linewidth}{\scriptsize \centering(c) Swin-Tiny (\textbf{0/1})}
\end{minipage}
\begin{minipage}[t]{0.245\linewidth}
\centering
\includegraphics[width=1.66in]{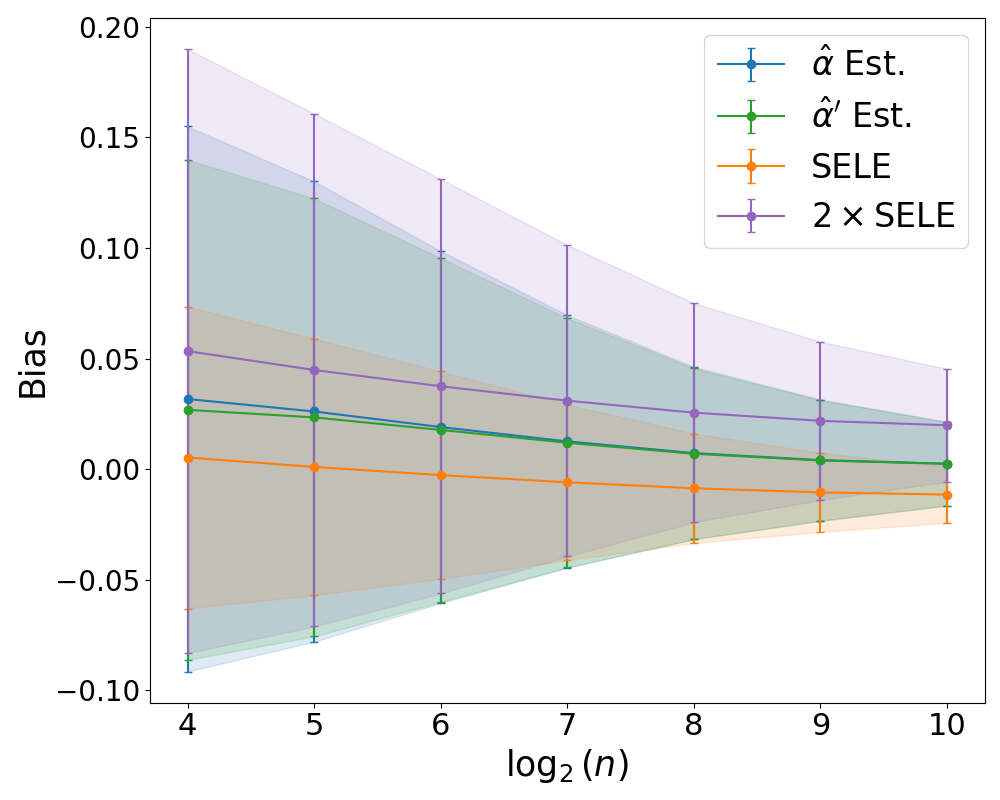}
\parbox[t]{\linewidth}{\scriptsize \centering(d) Swin-Base (\textbf{0/1})}
\end{minipage}

\begin{minipage}[t]{0.25\linewidth}
    \centering
    \includegraphics[width=1.66in]{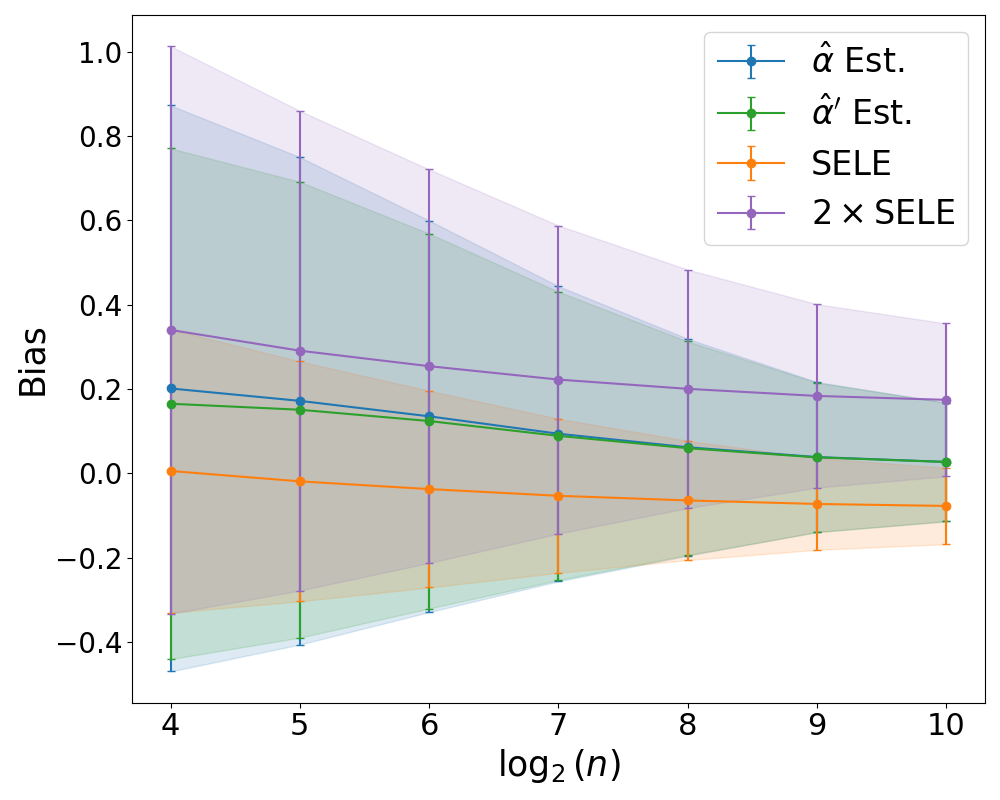}
    \parbox[t]{\linewidth}{\scriptsize \centering(e) ViT-Small (\textbf{CE})}
\end{minipage}%
\begin{minipage}[t]{0.25\linewidth}
    \centering
    \includegraphics[width=1.66in]{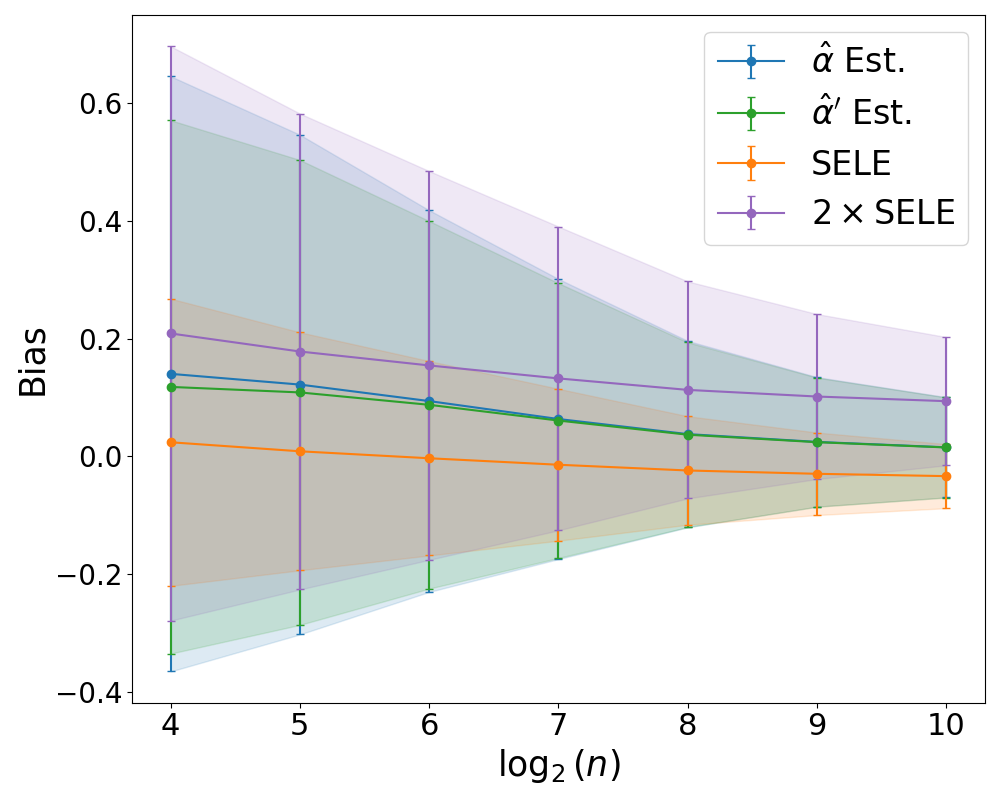}
    \parbox[t]{\linewidth}{\scriptsize \centering(f) ViT-Large (\textbf{CE})}
\end{minipage}%
\begin{minipage}[t]{0.25\linewidth}
    \centering
    \includegraphics[width=1.66in]{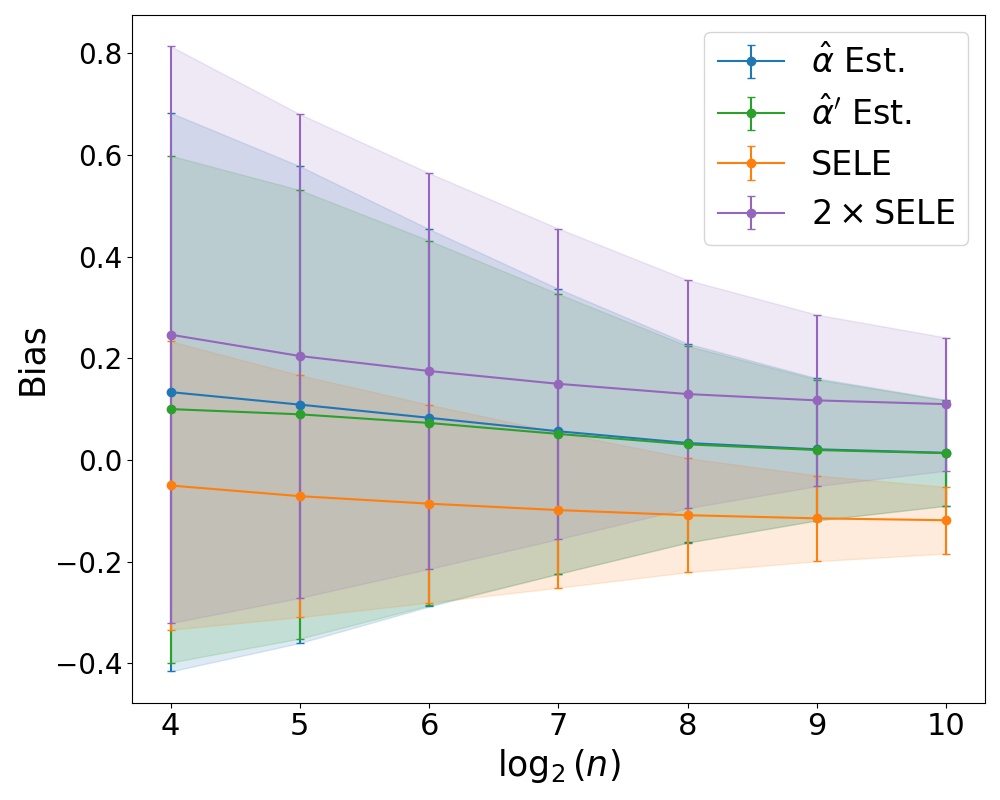}
    \parbox[t]{\linewidth}{\scriptsize \centering(g) Swin-Tiny (\textbf{CE})}
\end{minipage}%
\begin{minipage}[t]{0.25\linewidth}
    \centering
    \includegraphics[width=1.66in]{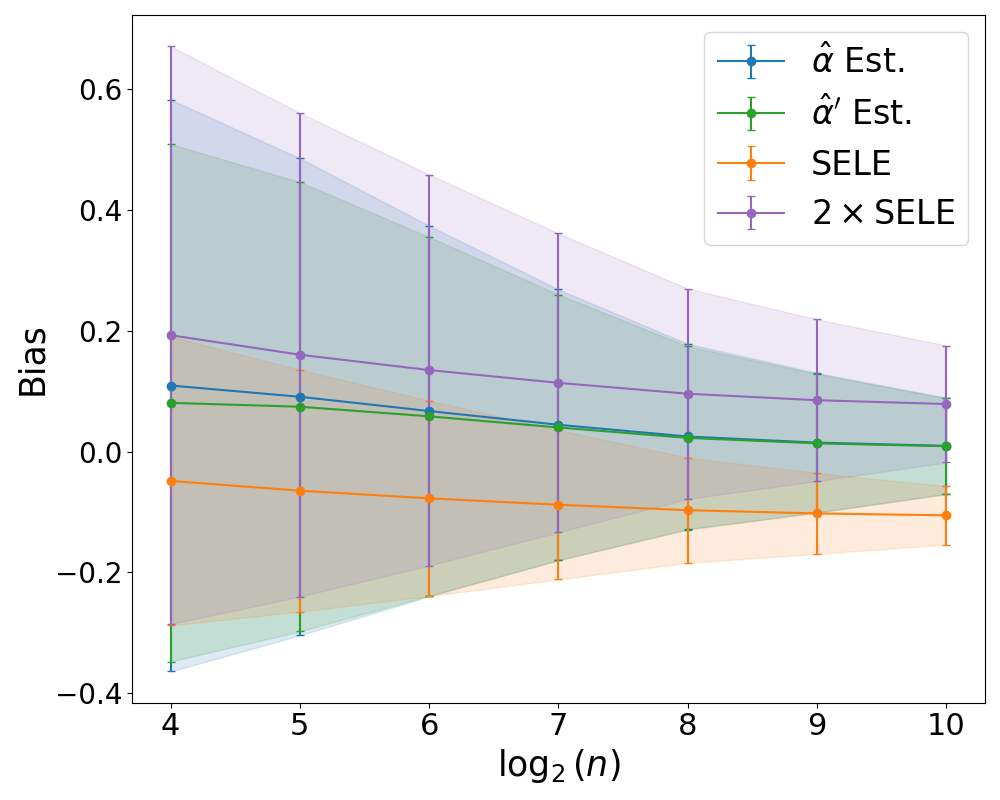}
    \parbox[t]{\linewidth}{\scriptsize \centering(h) Swin-Base (\textbf{CE})}
\end{minipage}
\caption{(\textbf{ImageNet}) Bias of different finite sample estimators evaluated with \textbf{0/1} or \textbf{CE} loss. For each model architecture, We utilize a pre-trained model and randomly divide the test set into batch samples of size $n$. Subsequently, we compute the mean and std of bias for various estimators applied to these batch samples.}
\label{fig:imagenetbias}
\end{figure*}

\begin{figure}[!ht]
\scriptsize
% % \centerfloat
\begin{minipage}[t]{0.25\linewidth}
\centering
\includegraphics[width=1.7in]{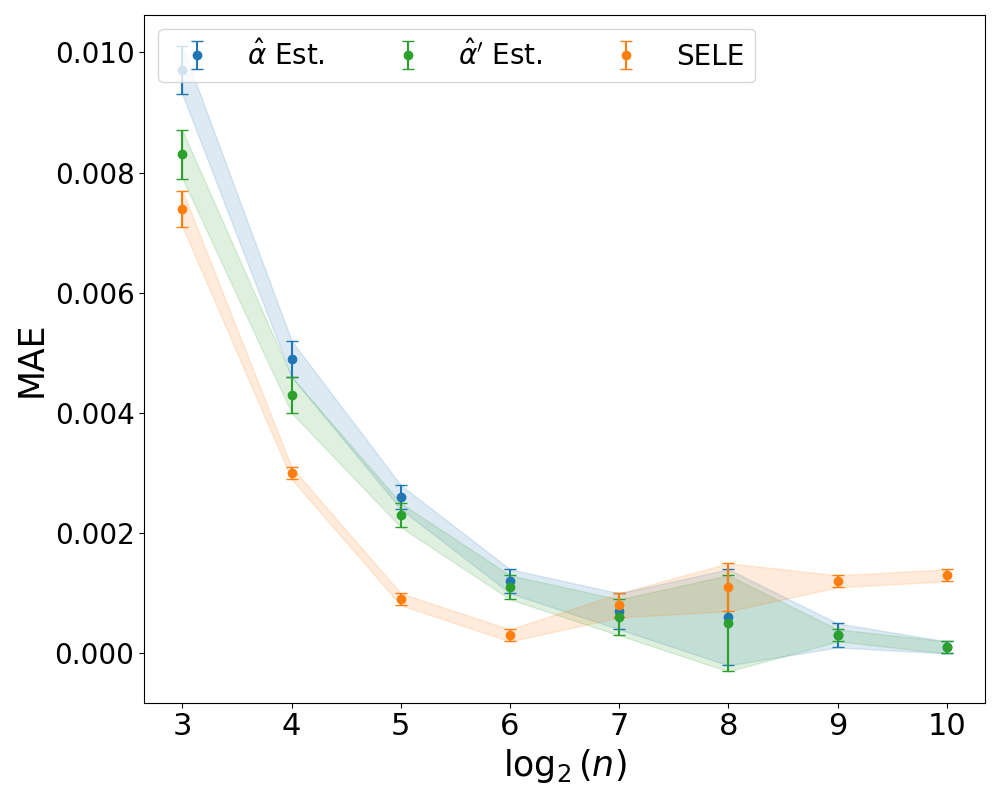}
\parbox[t]{\linewidth}{\scriptsize \centering (a) PreResNet20 (\textbf{0/1})}
%\parbox[t]{\linewidth}{\scriptsize \centering(a)PreResNet20(\textbf{0/1})}
\end{minipage}%
\begin{minipage}[t]{0.25\linewidth}
\centering
\includegraphics[width=1.7in]{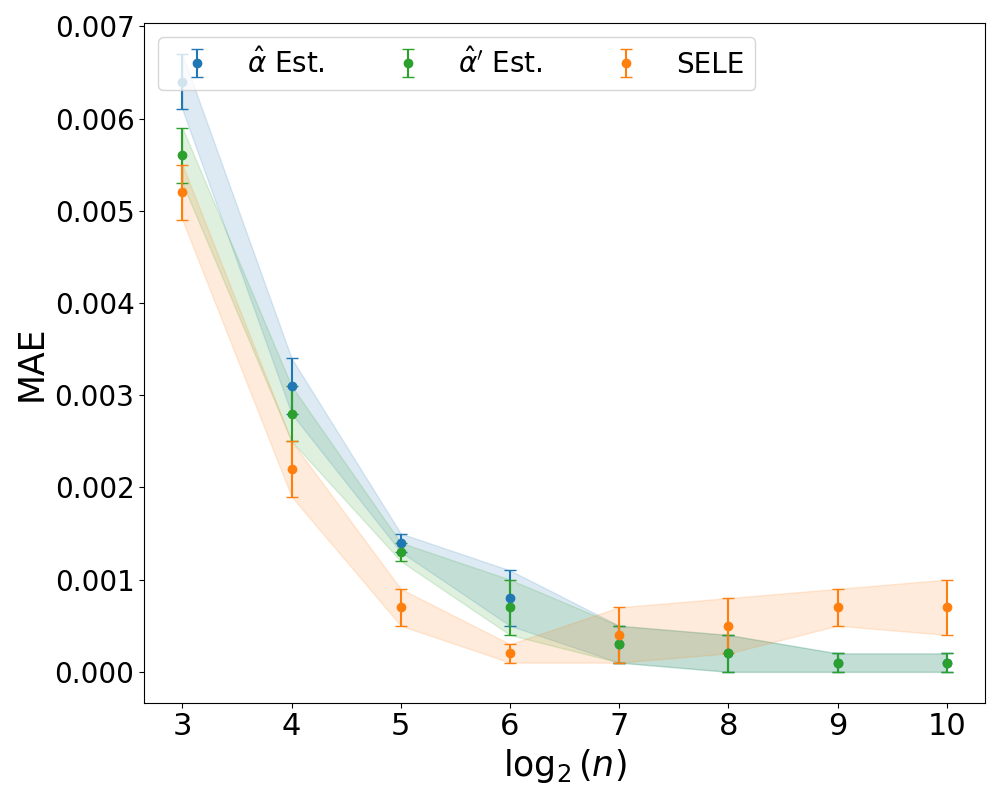}
\parbox[t]{\linewidth}{\scriptsize \centering (b) PreResNet110 (\textbf{0/1})}
\end{minipage}%
\begin{minipage}[t]{0.25\linewidth}
\centering
\includegraphics[width=1.7in]{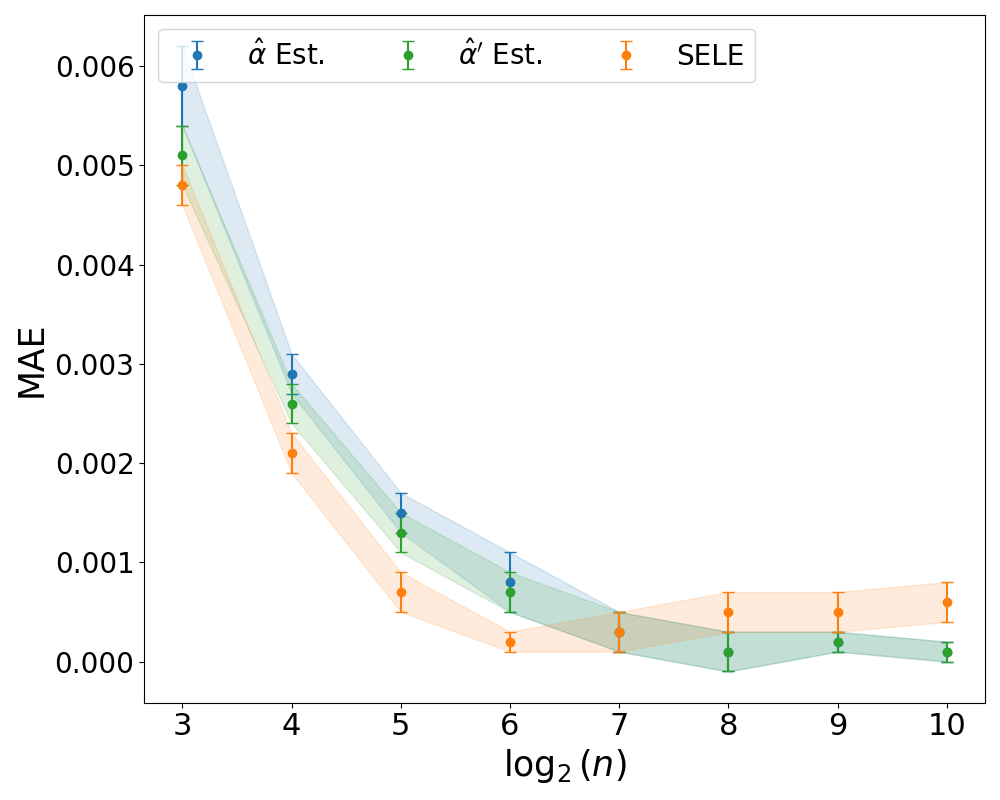}
\parbox[t]{\linewidth}{\scriptsize \centering (c) PreResNet164 (\textbf{0/1})}
\end{minipage}%
\begin{minipage}[t]{0.25\linewidth}
\centering
\includegraphics[width=1.7in]{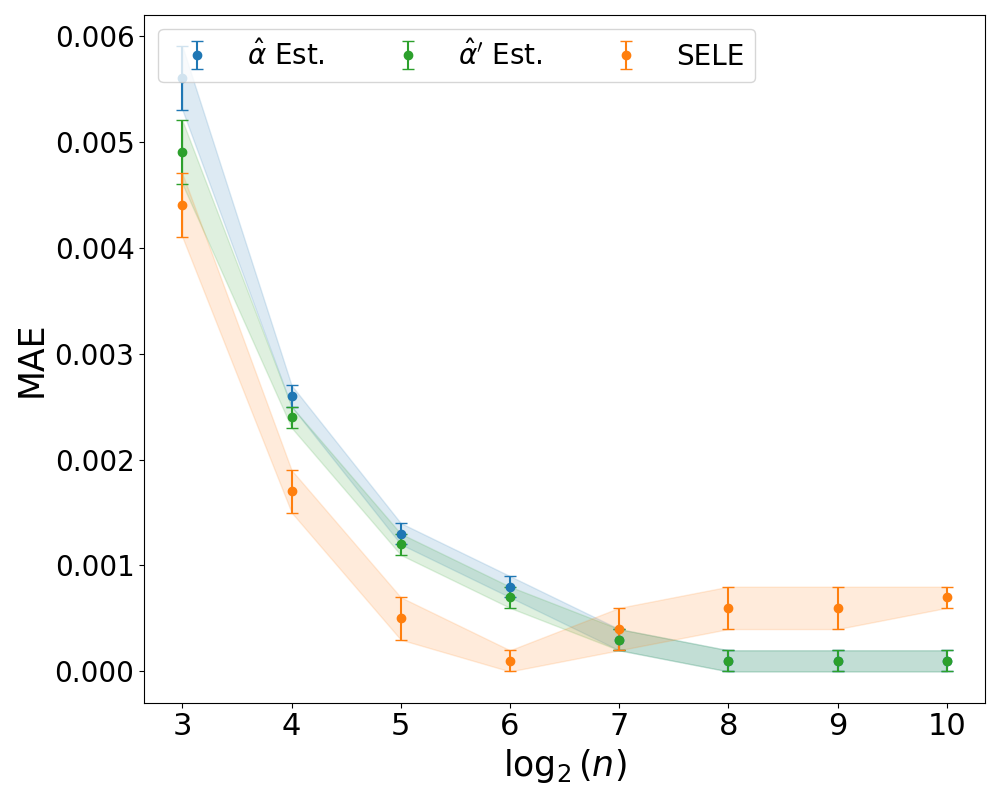}
\parbox[t]{\linewidth}{\scriptsize \centering (d) WideResNet28x10 (\textbf{0/1})}
\end{minipage}%
    \vspace{0.5cm}
\begin{minipage}[t]{0.25\linewidth}
\centering
\includegraphics[width=1.7in]{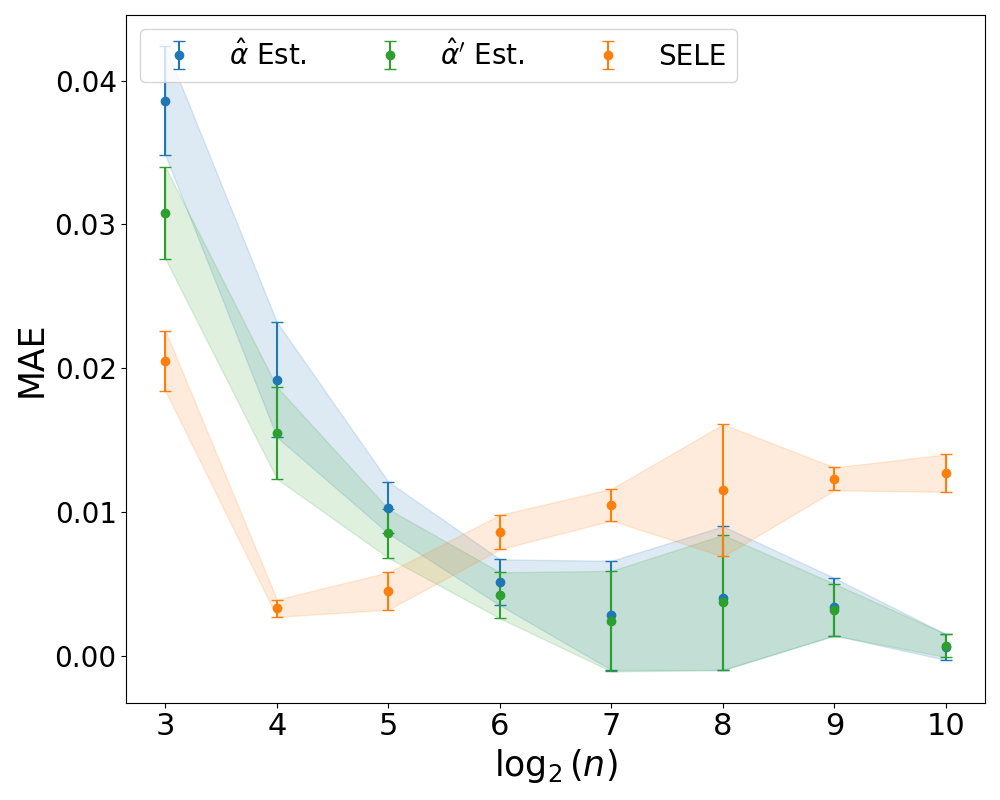}
\parbox[t]{\linewidth}{\scriptsize \centering (e) PreResNet20 (\textbf{CE})}
\end{minipage}%
\begin{minipage}[t]{0.25\linewidth}
\centering
\includegraphics[width=1.7in]{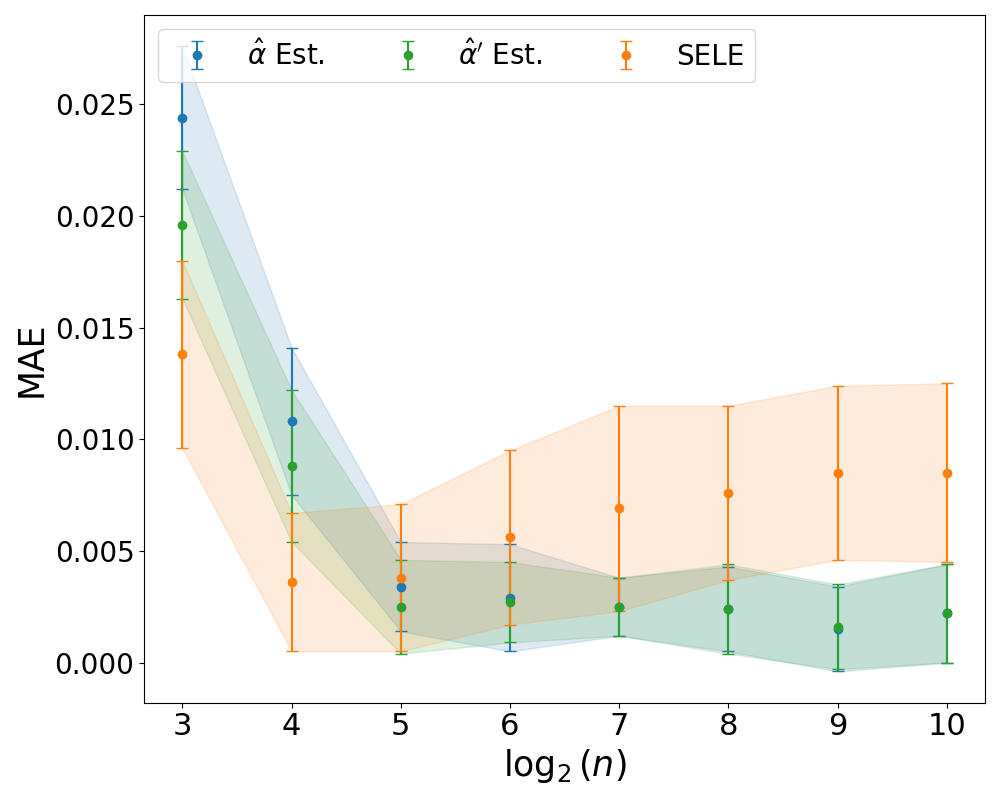}
\parbox[t]{\linewidth}{\scriptsize \centering (f) PreResNet110 (\textbf{CE})}
\end{minipage}%
\begin{minipage}[t]{0.25\linewidth}
\centering
\includegraphics[width=1.7in]{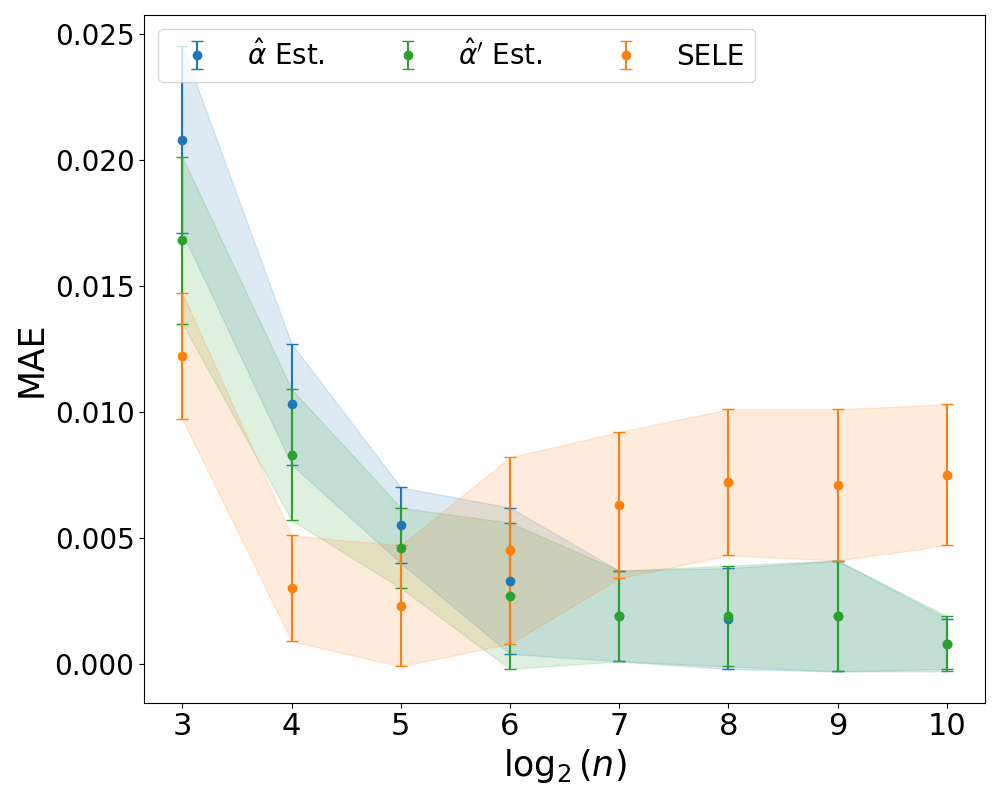}
\parbox[t]{\linewidth}{\scriptsize \centering (g) PreResNet164 (\textbf{CE})}
\end{minipage}%
\begin{minipage}[t]{0.25\linewidth}
\centering
\includegraphics[width=1.7in]{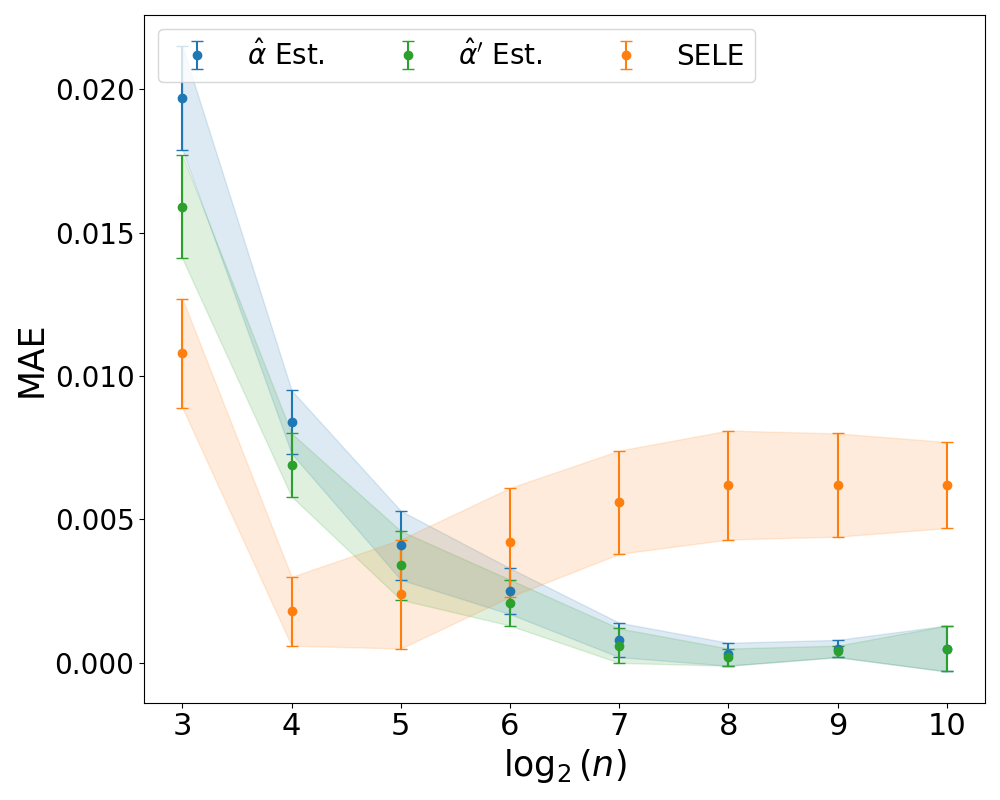}
\parbox[t]{\linewidth}{\scriptsize \centering (h) WideResNet28x10 (\textbf{CE})}
\end{minipage}%
    \vspace{0.5cm}
\begin{minipage}[t]{0.33\linewidth}
\centering
\includegraphics[width=1.7in]{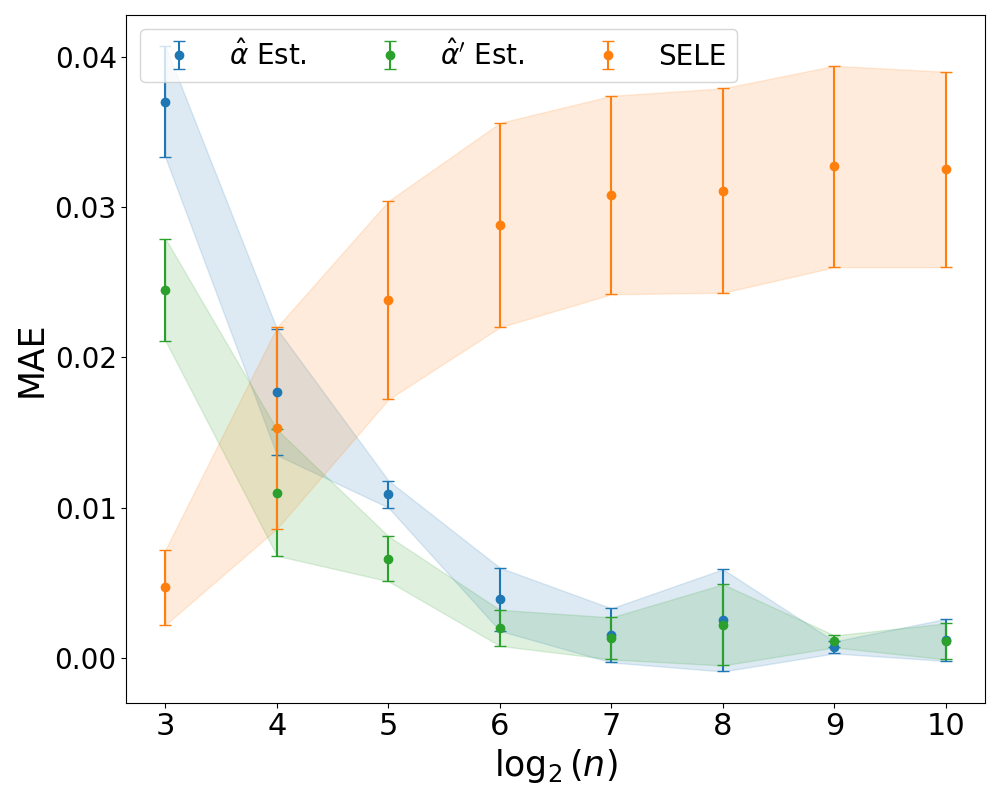}
\parbox[t]{\linewidth}{\scriptsize \centering (i) VGG16BN (\textbf{CE})}
\end{minipage}%
\begin{minipage}[t]{0.33\linewidth}
\centering
\includegraphics[width=1.7in]{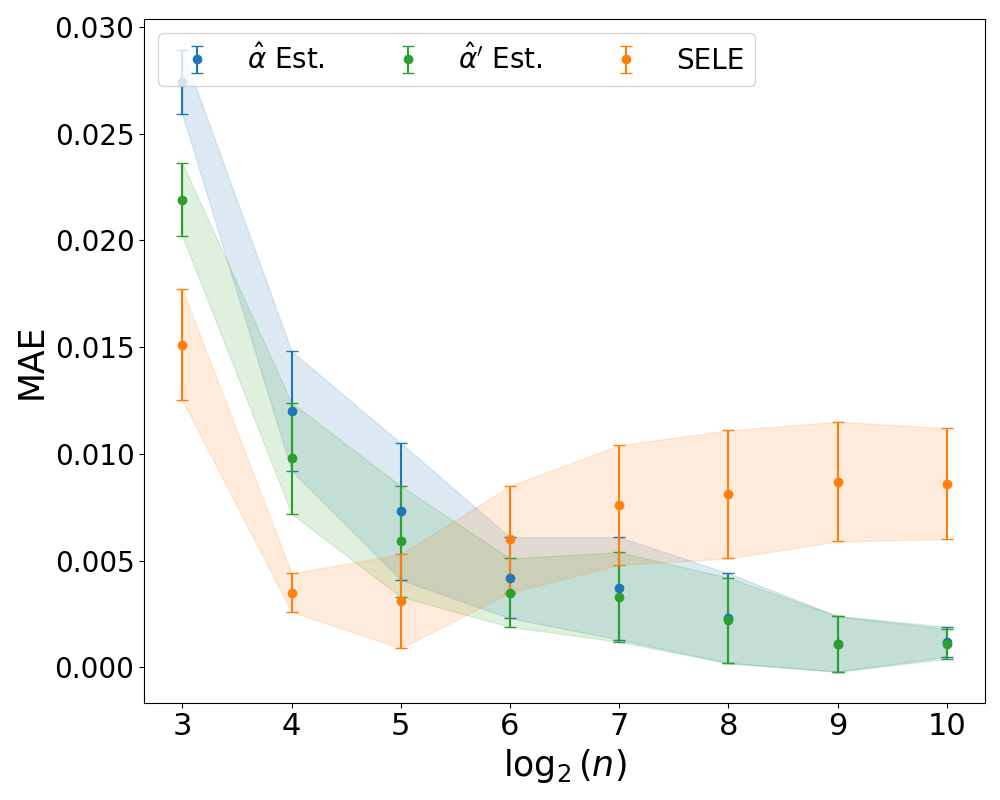}
\parbox[t]{\linewidth}{\scriptsize \centering (j) PreResNet56 (\textbf{CE})}
\end{minipage}%
\begin{minipage}[t]{0.33\linewidth}
\centering
\includegraphics[width=1.7in]{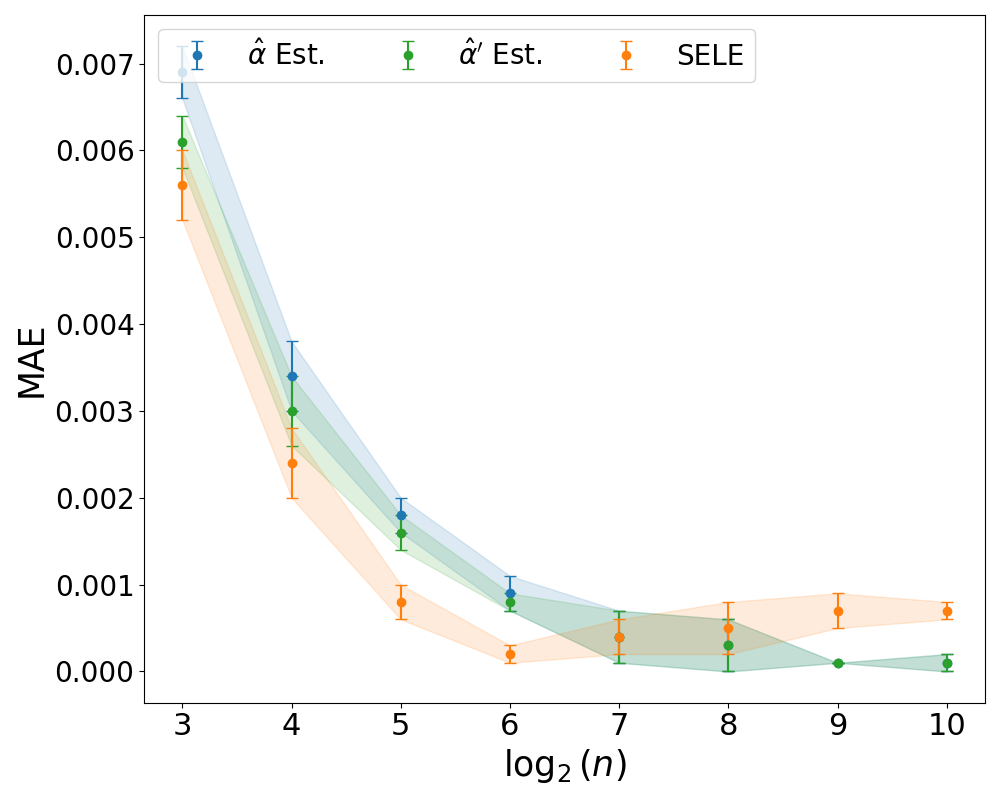}
\parbox[t]{\linewidth}{\scriptsize \centering (k) PreResNet56 (\textbf{0/1})}
\end{minipage}%
\caption{(\textbf{CIFAR10}) MAE of different finite sample estimators evaluated with \textbf{0/1} or \textbf{CE} loss. For each model architecture, we compute the mean and std of the MAE across five distinct pre-trained models. The MAE for each model is calculated using batch samples divided from the test set.}
\label{fig:cifar10absbiasceloss}
\end{figure}

\begin{figure*}[ht]  
\scriptsize
% % \centerfloat
\begin{minipage}[t]{0.25\linewidth}
\centering
\includegraphics[width=1.7in]{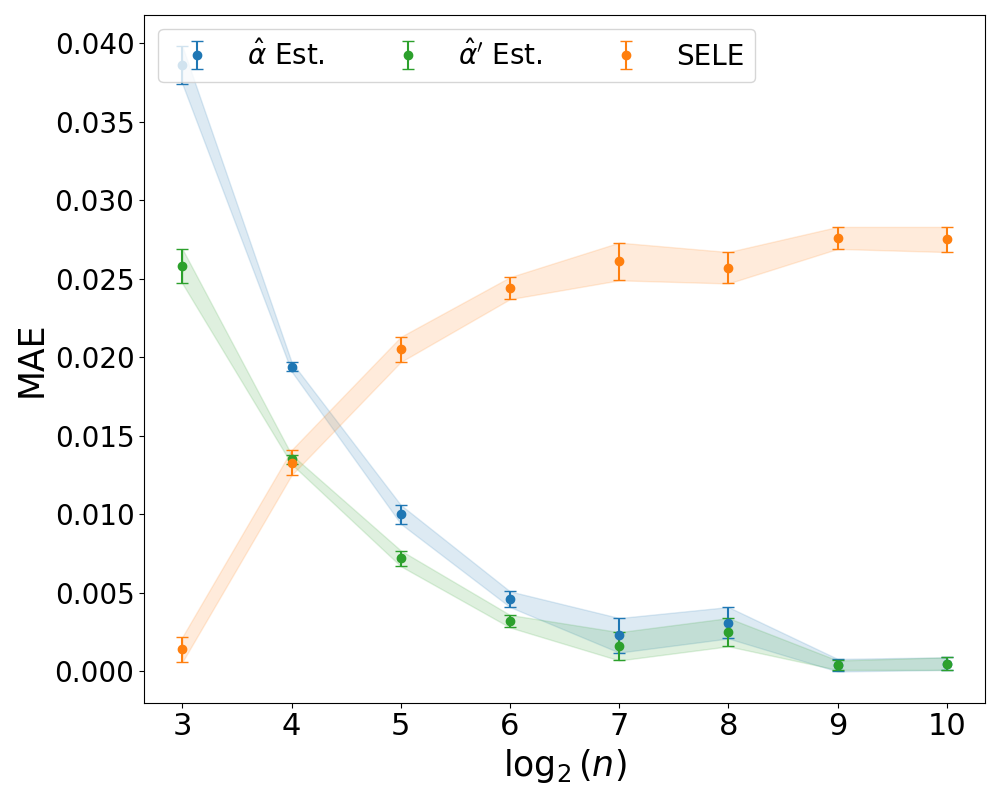}
\parbox[t]{\linewidth}{\scriptsize \centering(a) PreResNet20 (\textbf{0/1})}
\end{minipage}%
\begin{minipage}[t]{0.25\linewidth}
\centering
\includegraphics[width=1.7in]{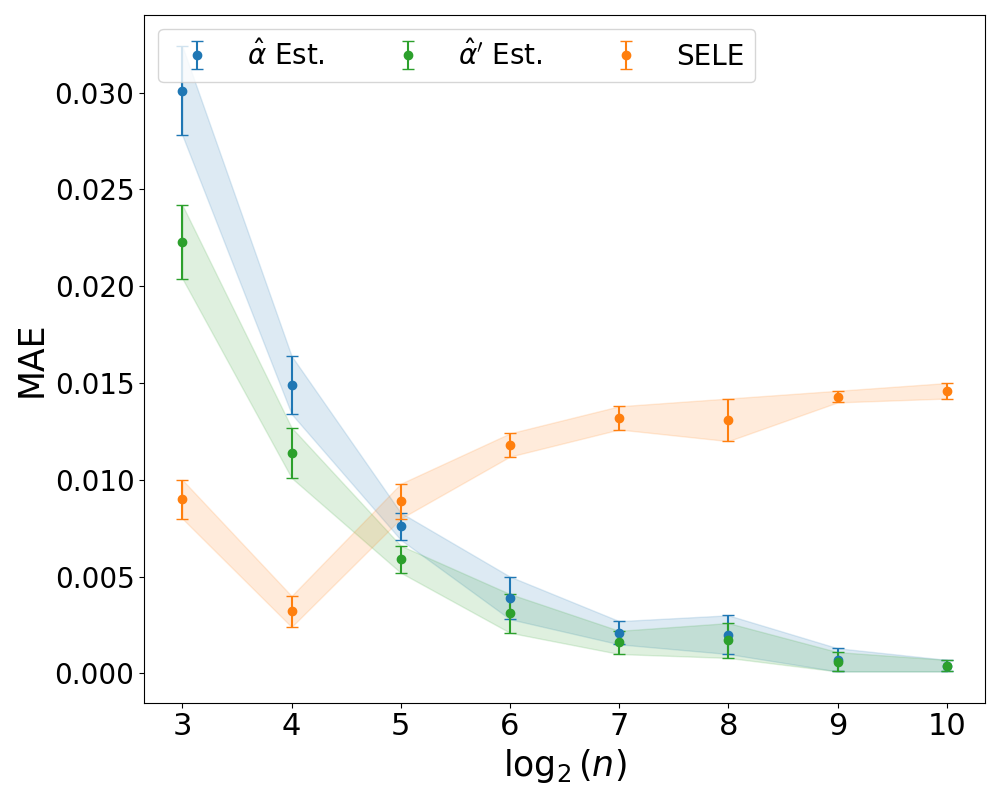}
\parbox[t]{\linewidth}{\scriptsize \centering(b) PreResNet110 (\textbf{0/1})}
\end{minipage}%
\begin{minipage}[t]{0.25\linewidth}
\centering
\includegraphics[width=1.7in]{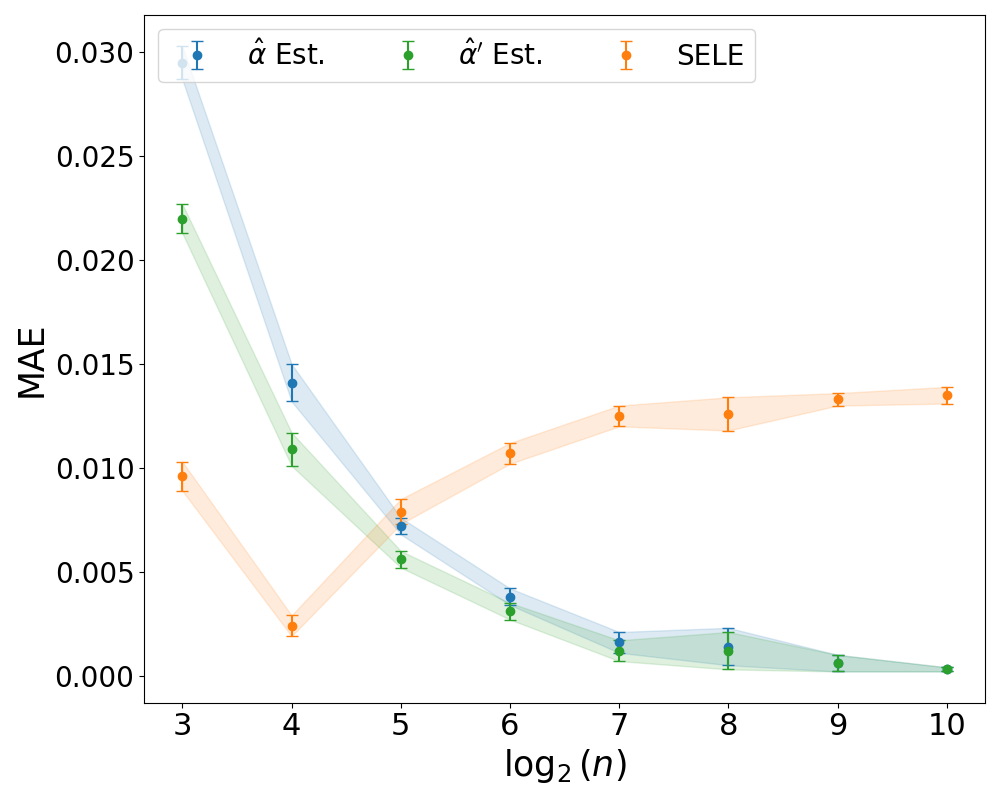}
\parbox[t]{\linewidth}{\scriptsize \centering(e) PreResNet164 (\textbf{0/1})}
\end{minipage}%
\begin{minipage}[t]{0.25\linewidth}
\centering
\includegraphics[width=1.7in]{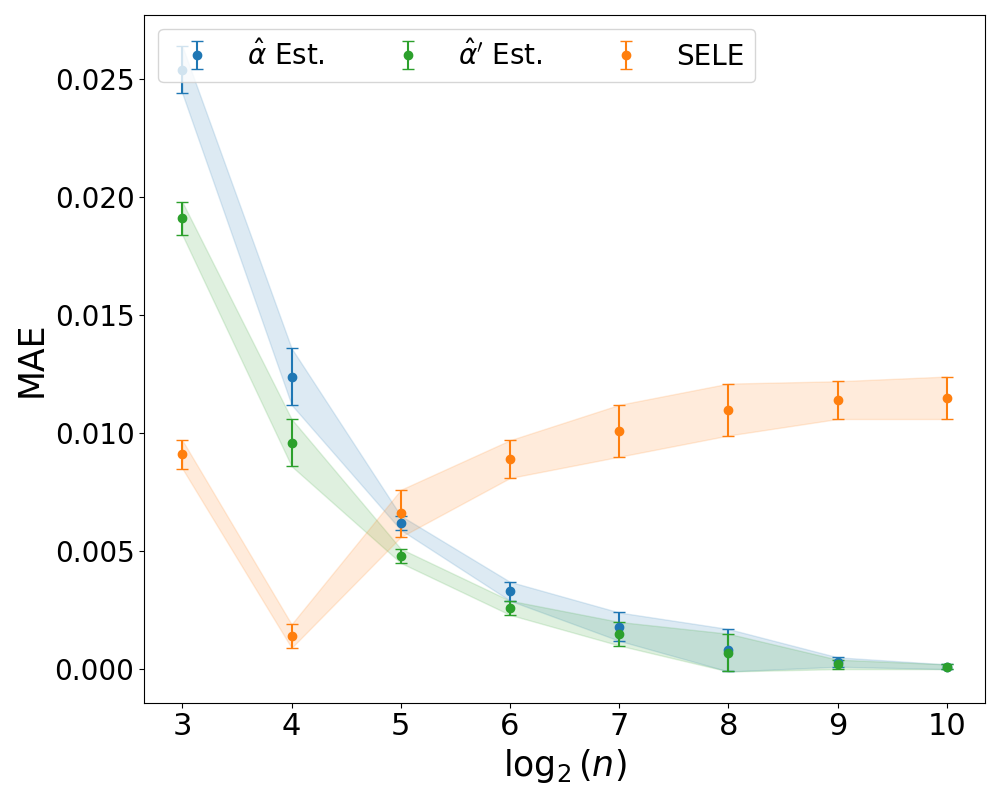}
\parbox[t]{\linewidth}{\scriptsize \centering(d) WideResNet28x10 (\textbf{0/1})}
\end{minipage}%
    \vspace{0.5cm}
\begin{minipage}[t]{0.25\linewidth}
\centering
\includegraphics[width=1.7in]{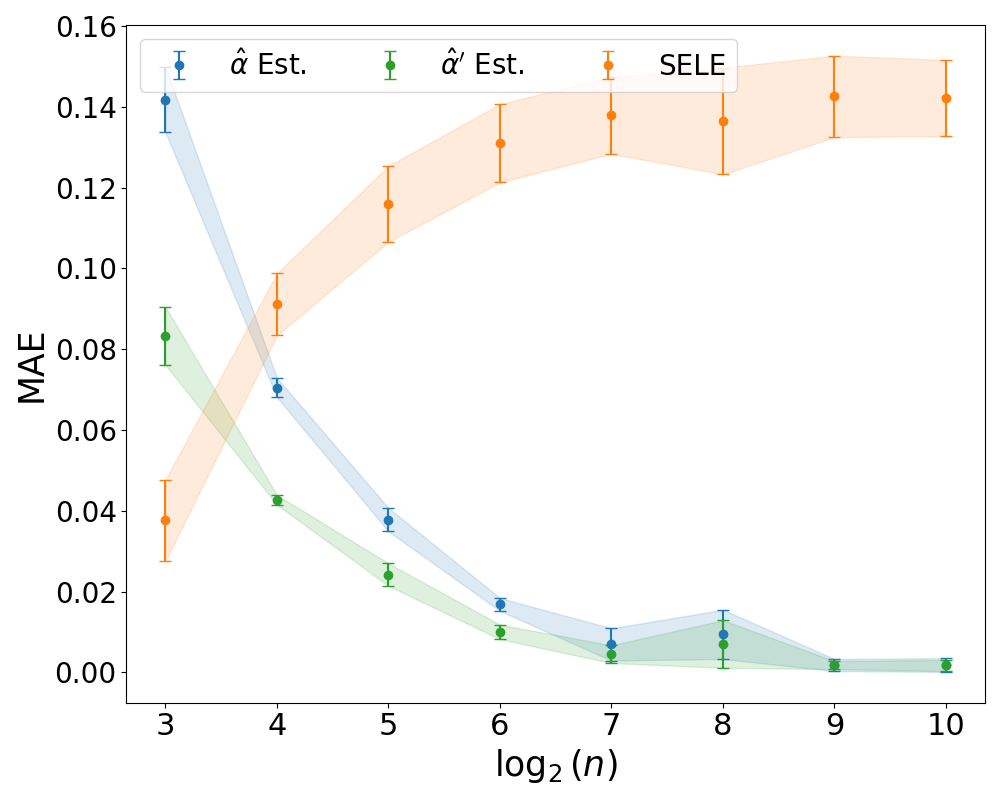}
\parbox[t]{\linewidth}{\scriptsize \centering(e) PreResNet20 (\textbf{CE})}
\end{minipage}%
\begin{minipage}[t]{0.25\linewidth}
\centering
\includegraphics[width=1.7in]{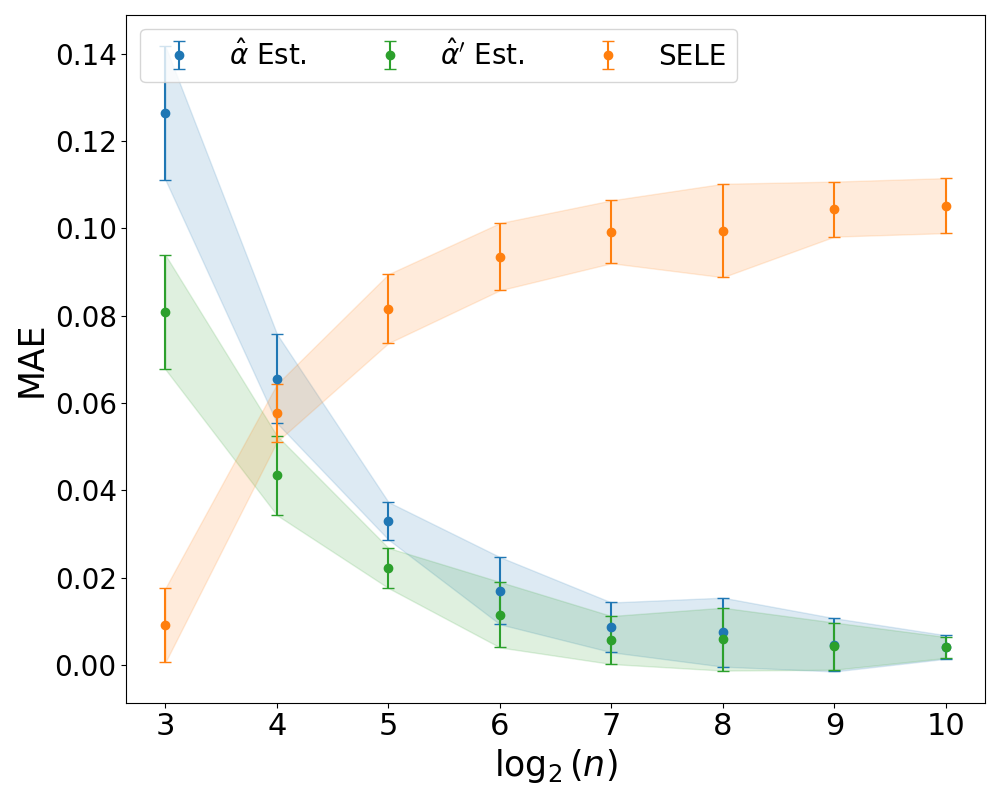}
\parbox[t]{\linewidth}{\scriptsize \centering(f) PreResNet110 (\textbf{CE})}
\end{minipage}%
\begin{minipage}[t]{0.25\linewidth}
\centering
\includegraphics[width=1.7in]{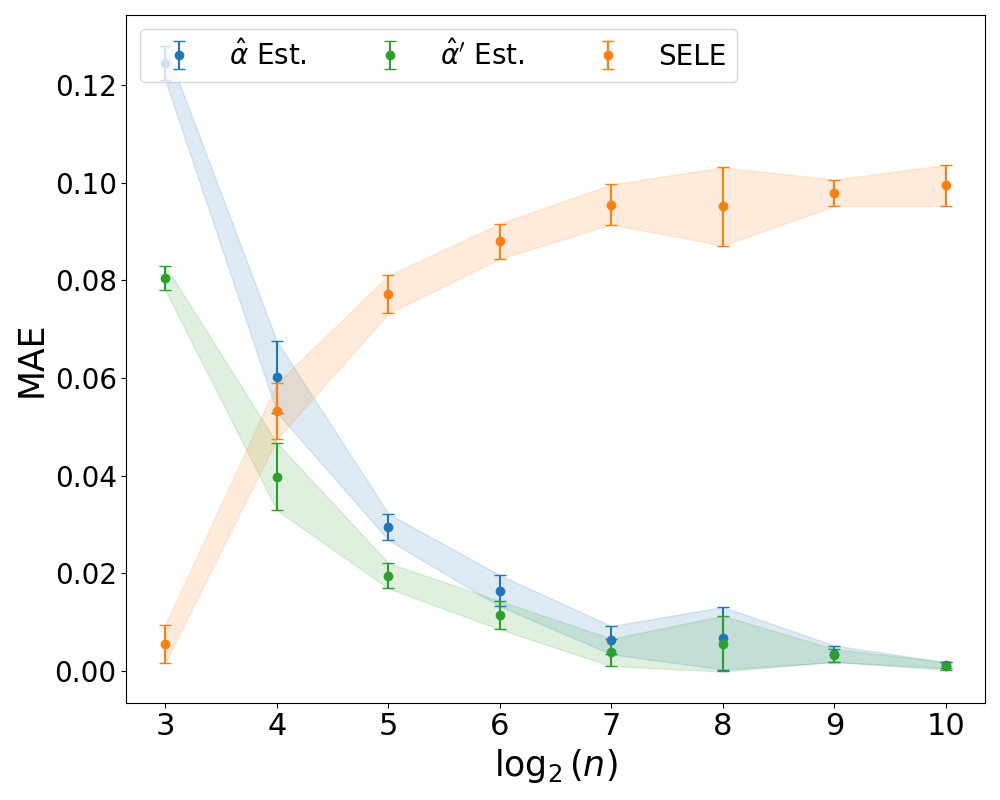}
\parbox[t]{\linewidth}{\scriptsize \centering(g) PreResNet164 (\textbf{CE})}
\end{minipage}%
 \begin{minipage}[t]{0.25\linewidth}
\centering
\includegraphics[width=1.7in]{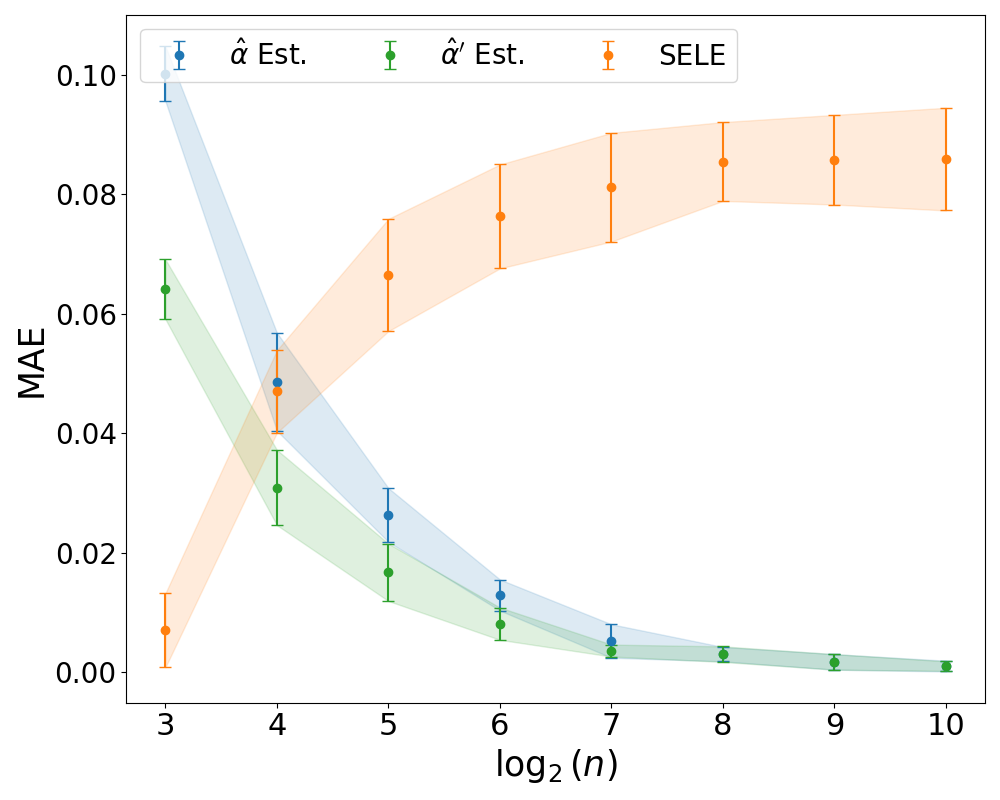}
\parbox[t]{\linewidth}{\scriptsize \centering(h) WideResNet28x10 (\textbf{CE})}
\end{minipage}%
    \vspace{0.5cm}
\begin{minipage}[t]{0.33\linewidth}
\centering
\includegraphics[width=1.7in]{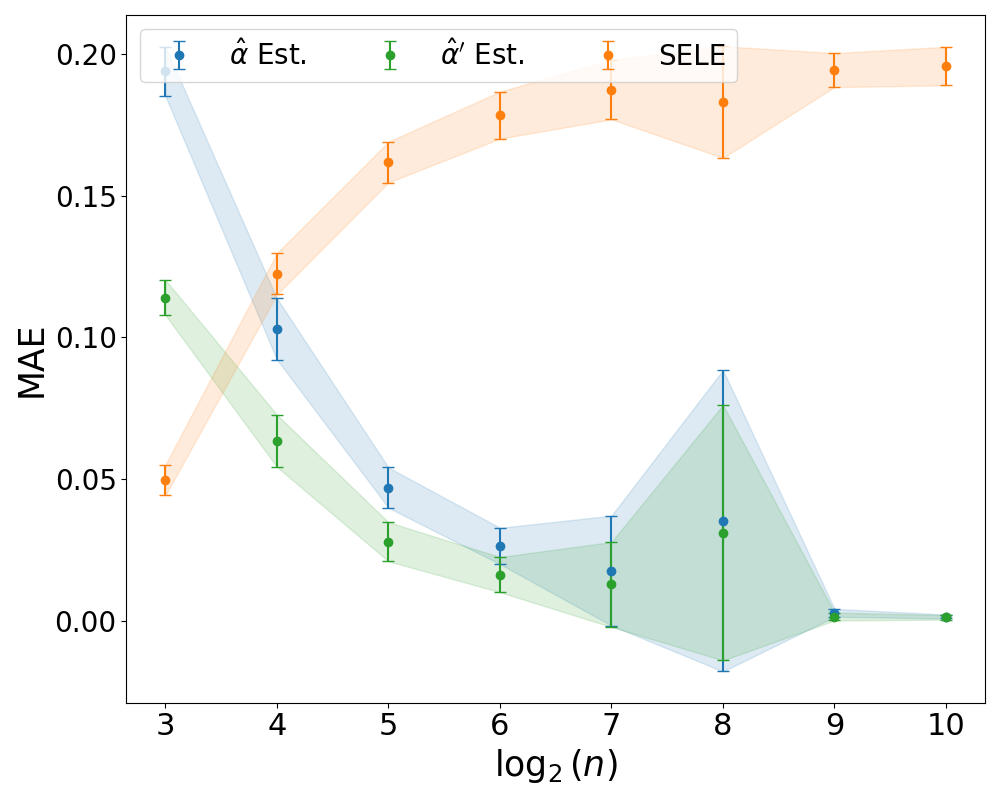}
\parbox[t]{\linewidth}{\scriptsize \centering(i) VGG16BN (\textbf{CE})}
\end{minipage}%
\begin{minipage}[t]{0.33\linewidth}
\centering
\includegraphics[width=1.7in]{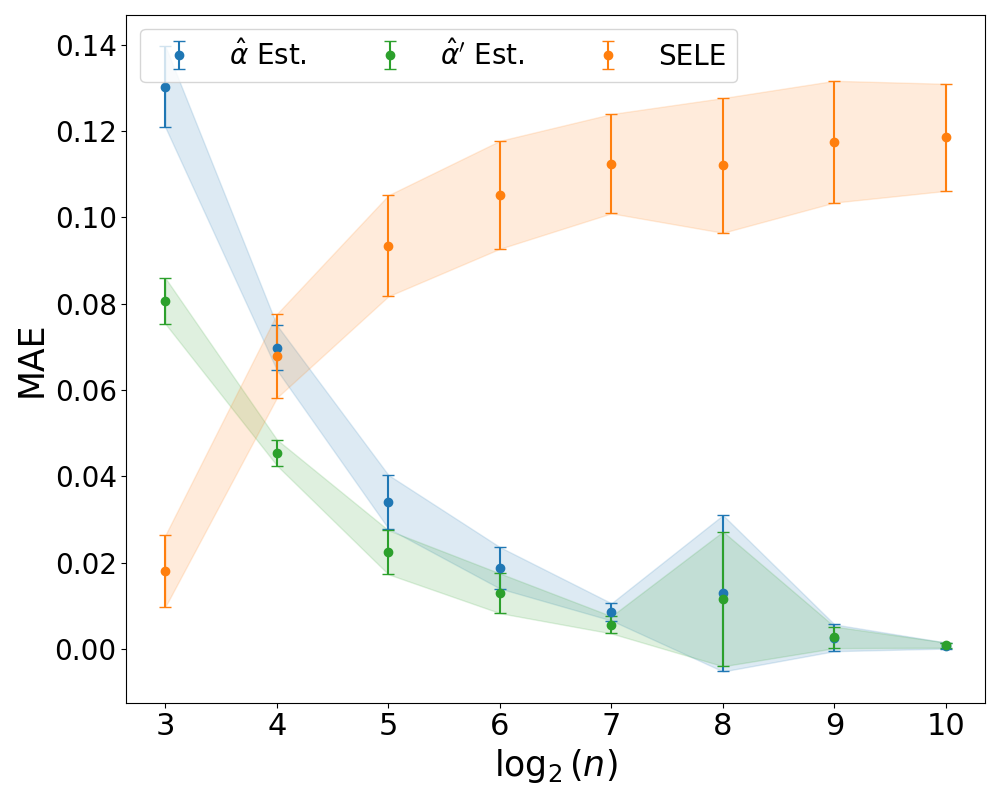}
\parbox[t]{\linewidth}{\scriptsize \centering(j) PreResNet56 (\textbf{CE})}
\end{minipage}%
\begin{minipage}[t]{0.33\linewidth}
\centering
\includegraphics[width=1.7in]{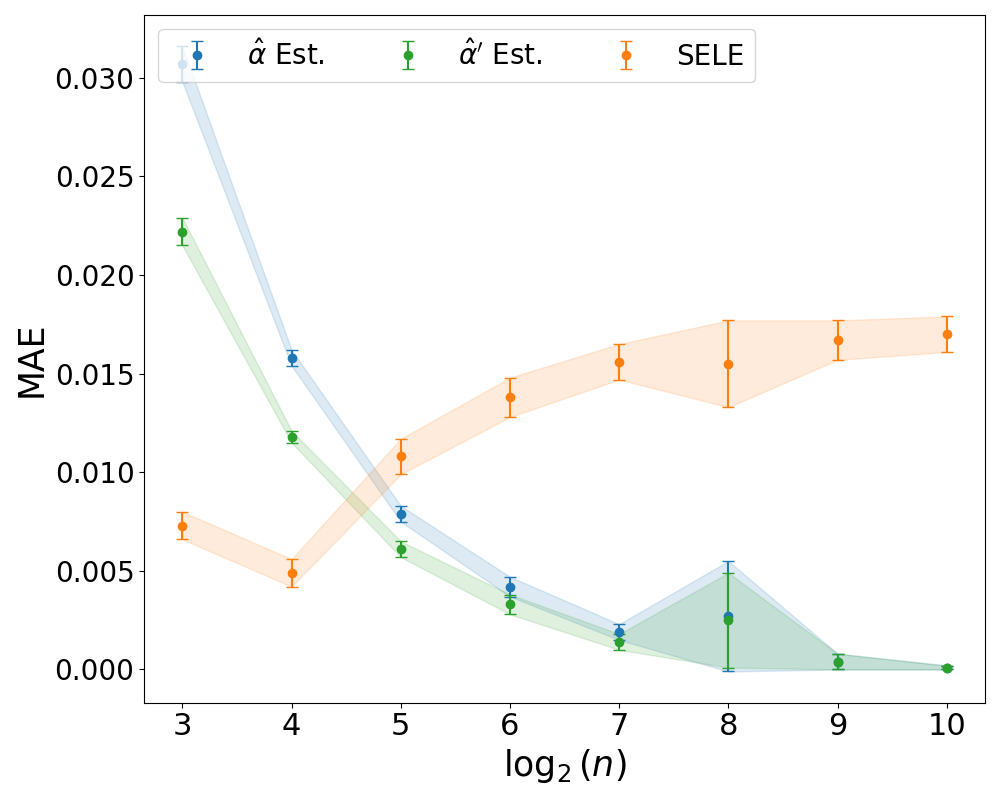}
\parbox[t]{\linewidth}{\scriptsize \centering(k) PreResNet56 (\textbf{0/1})}
\end{minipage}%

\caption{(\textbf{CIFAR100}) MAE of different finite sample estimators evaluated with \textbf{0/1} or \textbf{CE} loss. For each model architecture, we compute the mean and std of the MAE across five distinct pre-trained models. The MAE for each model is calculated using batch samples divided from the test set.}
\label{fig:cifar100absbiasceloss}
\end{figure*}

\begin{figure}[ht]
\footnotesize
% % \centerfloat
\begin{minipage}[t]{0.3\linewidth}
\centering
\includegraphics[width=2.0in]{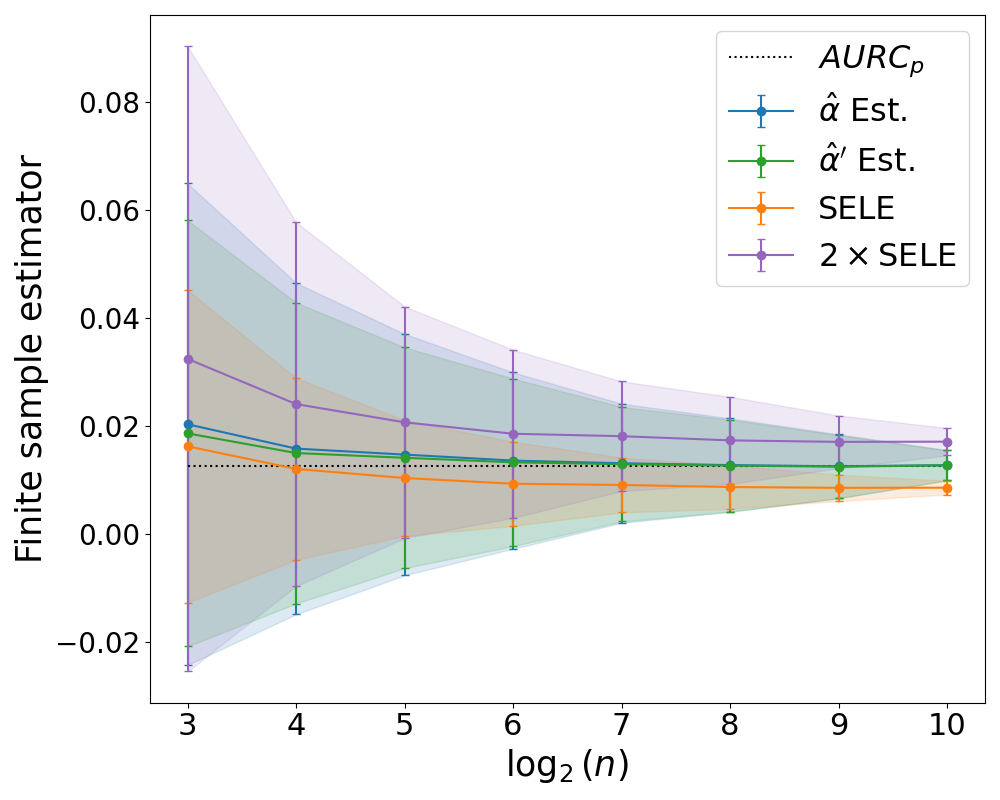}
\parbox[t]{\linewidth}{\scriptsize \centering(a) VGG16BN}
\end{minipage}%
\begin{minipage}[t]{0.3\linewidth}
\centering
\includegraphics[width=2.0in]{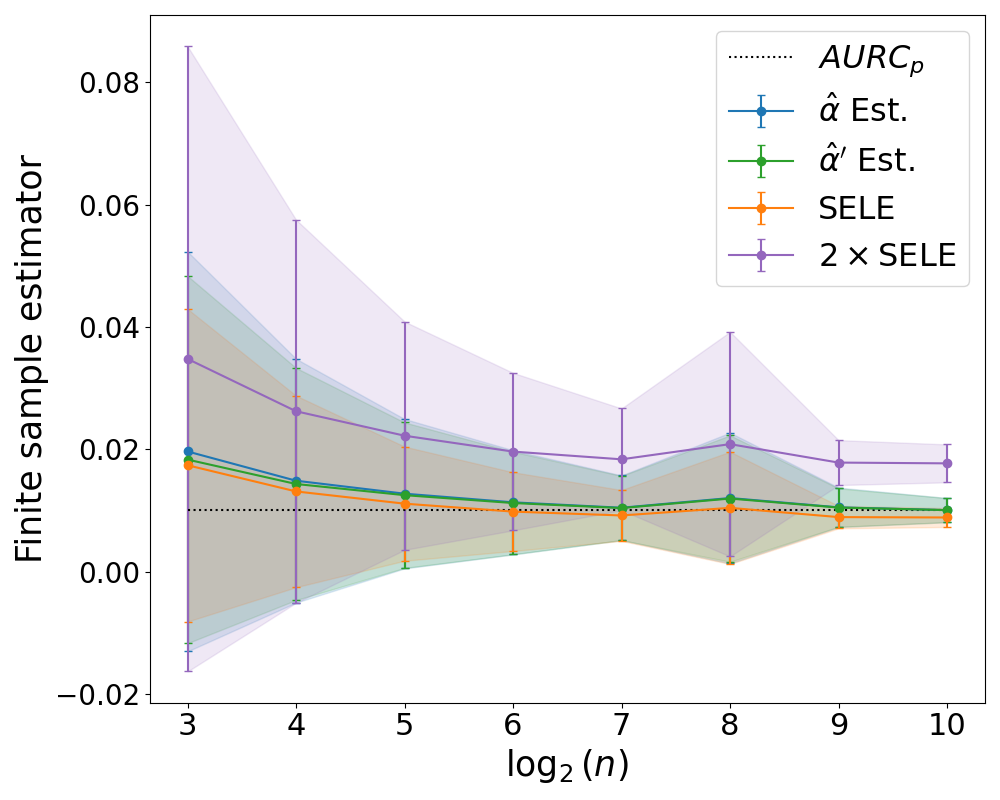}
\parbox[t]{\linewidth}{\scriptsize \centering(b) PreResNet20}
\end{minipage}%
\begin{minipage}[t]{0.3\linewidth}
\centering
\includegraphics[width=2.0in]{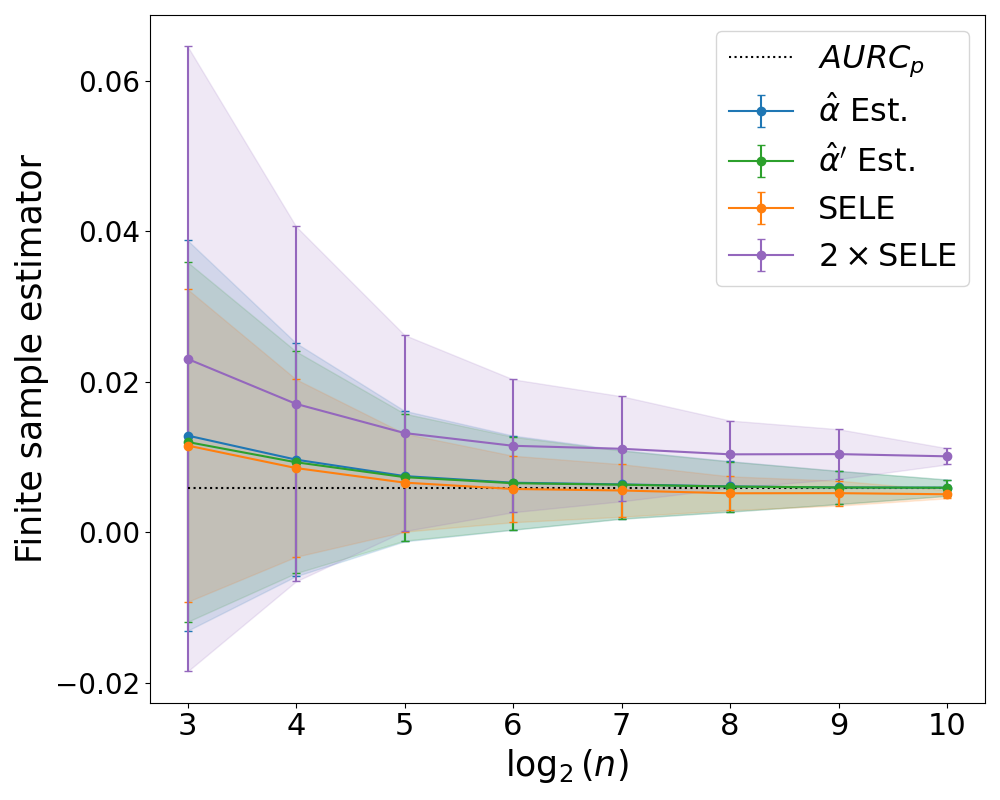}
\parbox[t]{\linewidth}{\scriptsize \centering(c) PreResNet56}
\end{minipage}%

\begin{minipage}[t]{0.3\linewidth}
% \centerfloat
\includegraphics[width=2.0in]{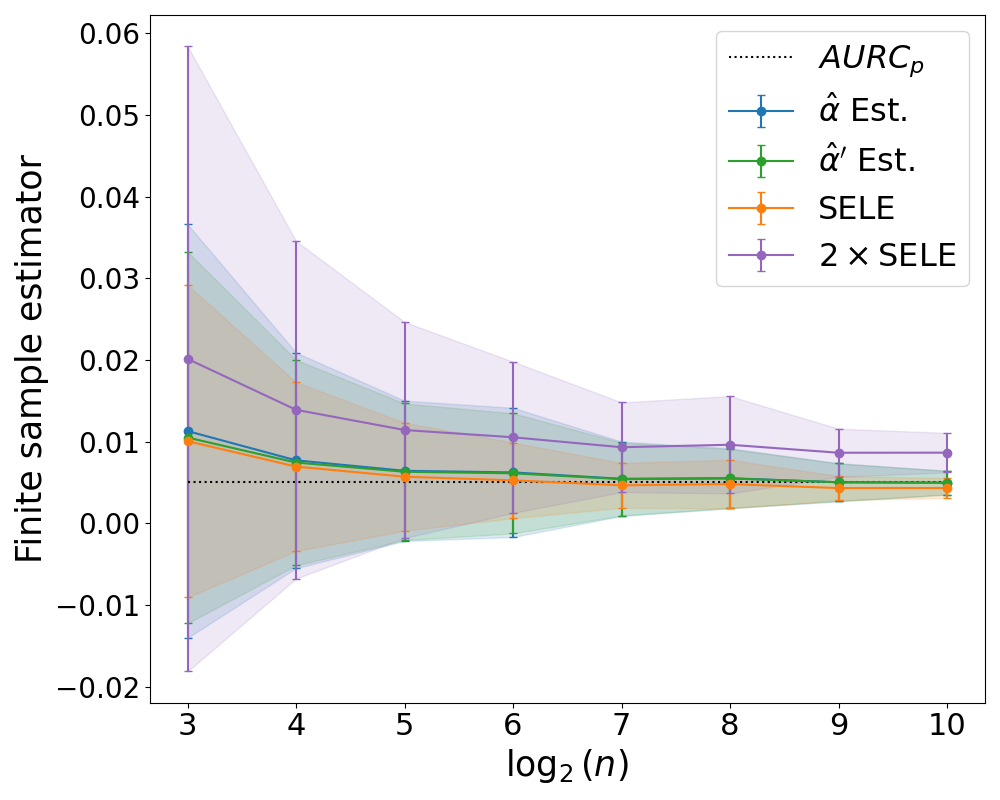}
\parbox[t]{\linewidth}{\scriptsize \centering(d) PreResNet110}
\end{minipage}%
\begin{minipage}[t]{0.3\linewidth}
\centering
\includegraphics[width=2.0in]{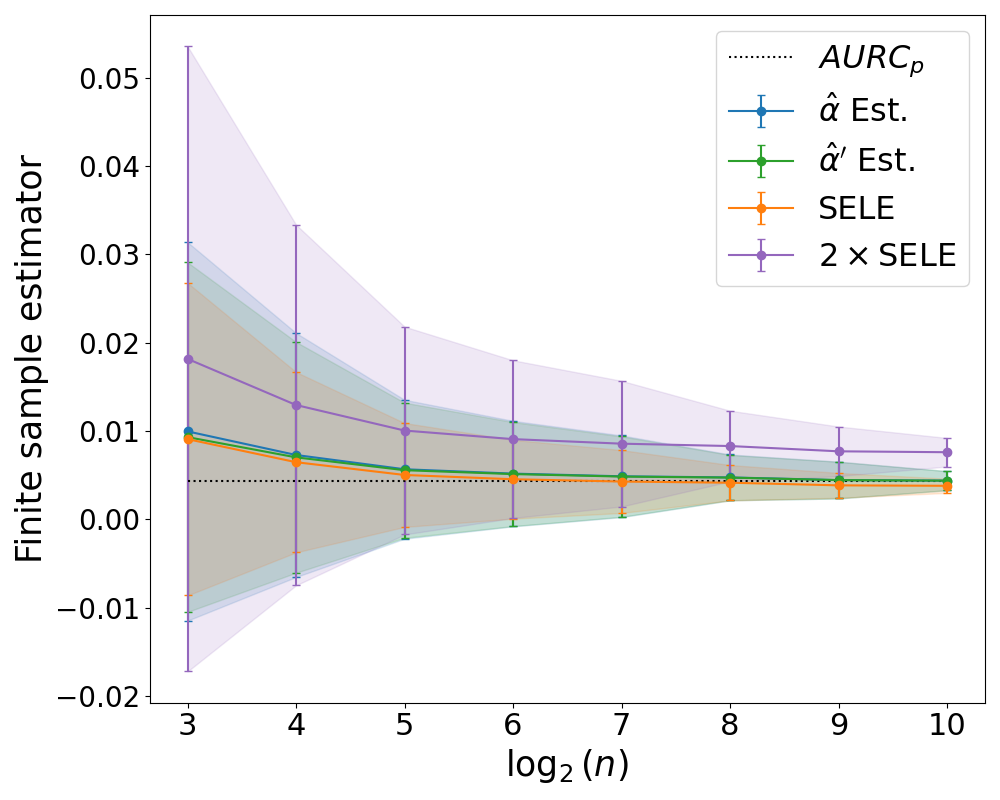}
\parbox[t]{\linewidth}{\scriptsize \centering(e) PreResNet164}
\end{minipage}%
\begin{minipage}[t]{0.3\linewidth}
\centering
\includegraphics[width=2.0in]{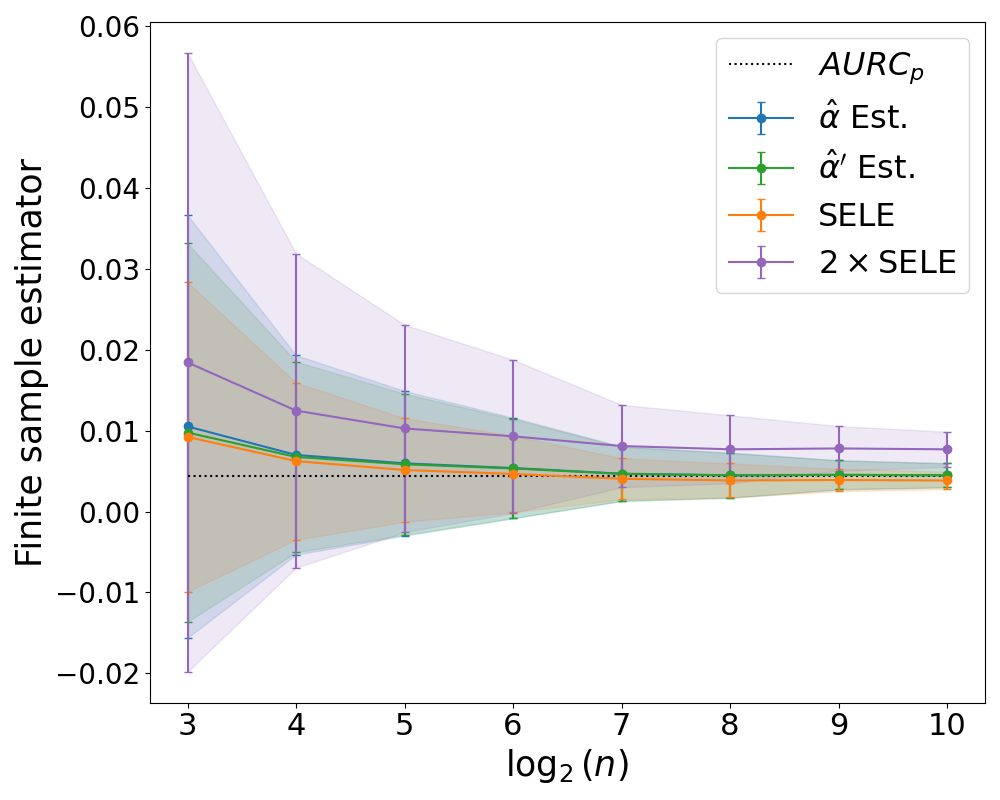}
\parbox[t]{\linewidth}{\scriptsize \centering(f) WideResNet28x10}
\end{minipage}%
\caption{Finite sample estimators on \textbf{CIFAR10} dataset with \textbf{0/1} loss. We utilize a pre-trained model and randomly divide the test set into batch samples of size $n$. Subsequently, we compute the mean and variance of various estimators applied to these batch samples. Additionally, the population $\text{AURC}_p$ is computed across all samples in the test set. }
\label{fig:cifa10resultsseed5}
\end{figure}

\begin{figure*}[!ht]
\footnotesize
% \centerfloat
\begin{minipage}[t]{0.3\linewidth}
\centering
\includegraphics[width=2.0in]{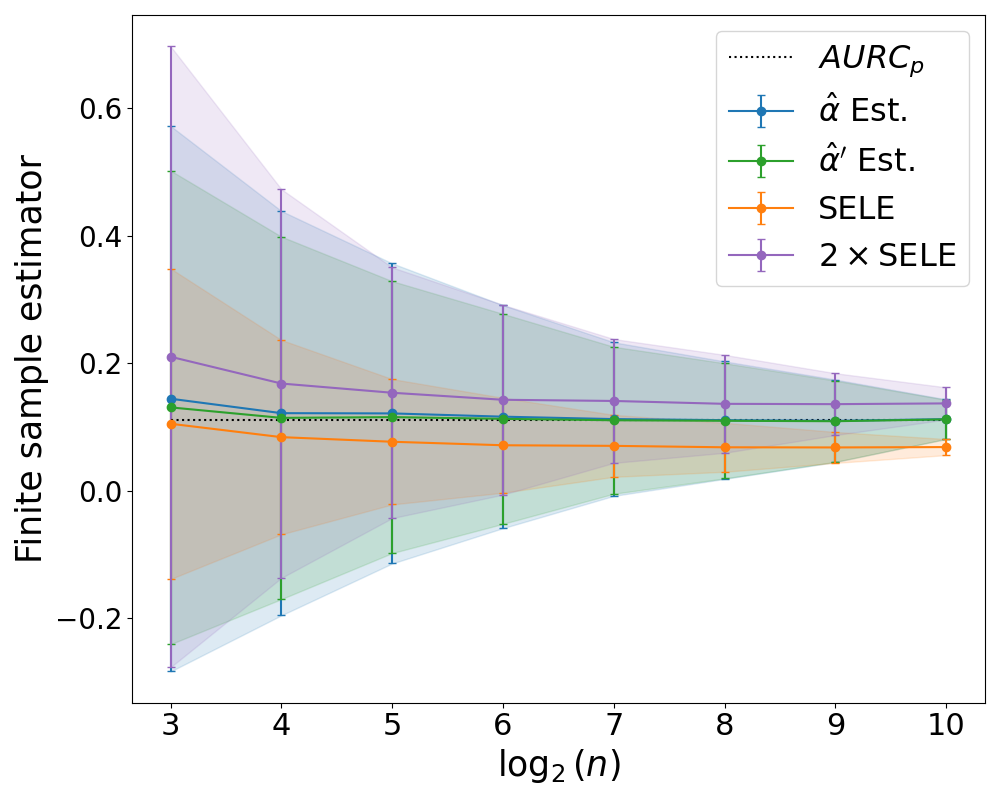}
\parbox[t]{\linewidth}{\scriptsize \centering(a) VGG16BN}
\end{minipage}%
\begin{minipage}[t]{0.3\linewidth}
\centering
\includegraphics[width=2.0in]{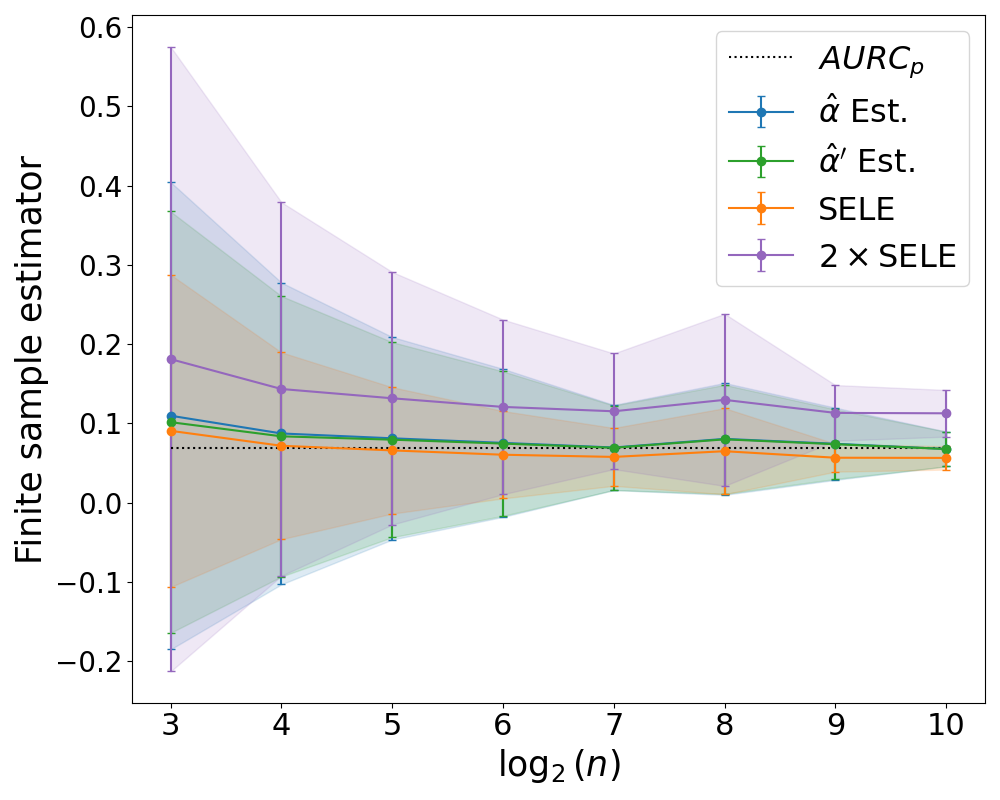}
\parbox[t]{\linewidth}{\scriptsize \centering(b) PreResNet20}
\end{minipage}%
\begin{minipage}[t]{0.3\linewidth}
\centering
\includegraphics[width=2.0in]{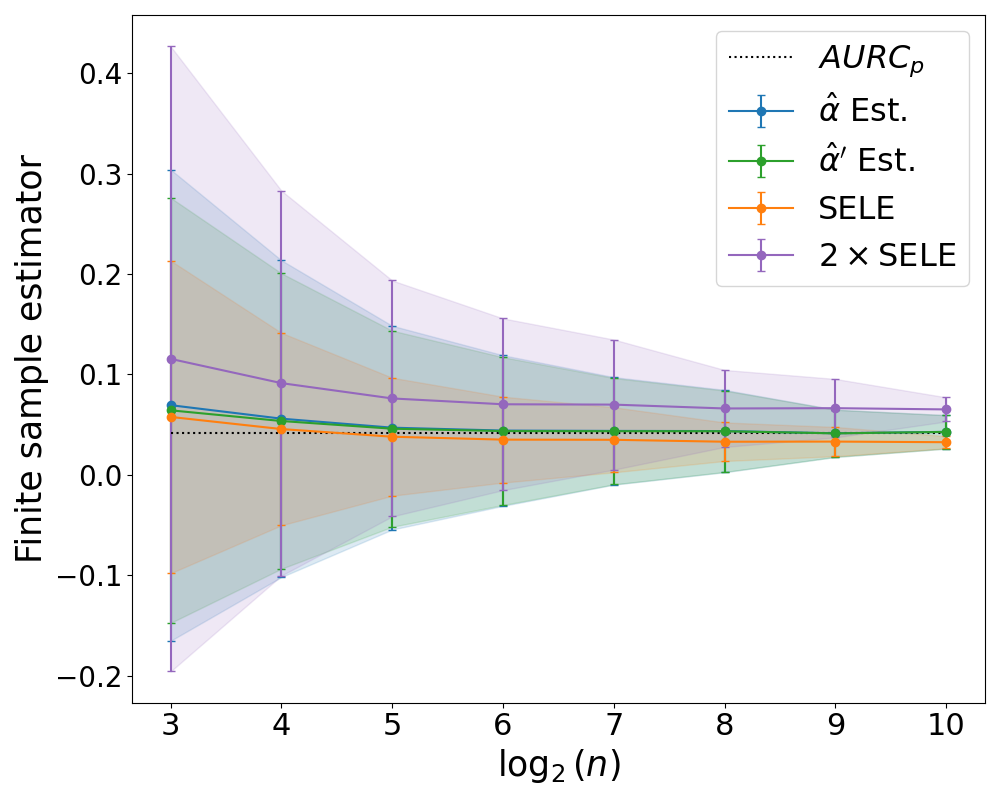}
\parbox[t]{\linewidth}{\scriptsize \centering(c) PreResNet56}
\end{minipage}%

\begin{minipage}[t]{0.3\linewidth}
% \centerfloat
\includegraphics[width=2.0in]{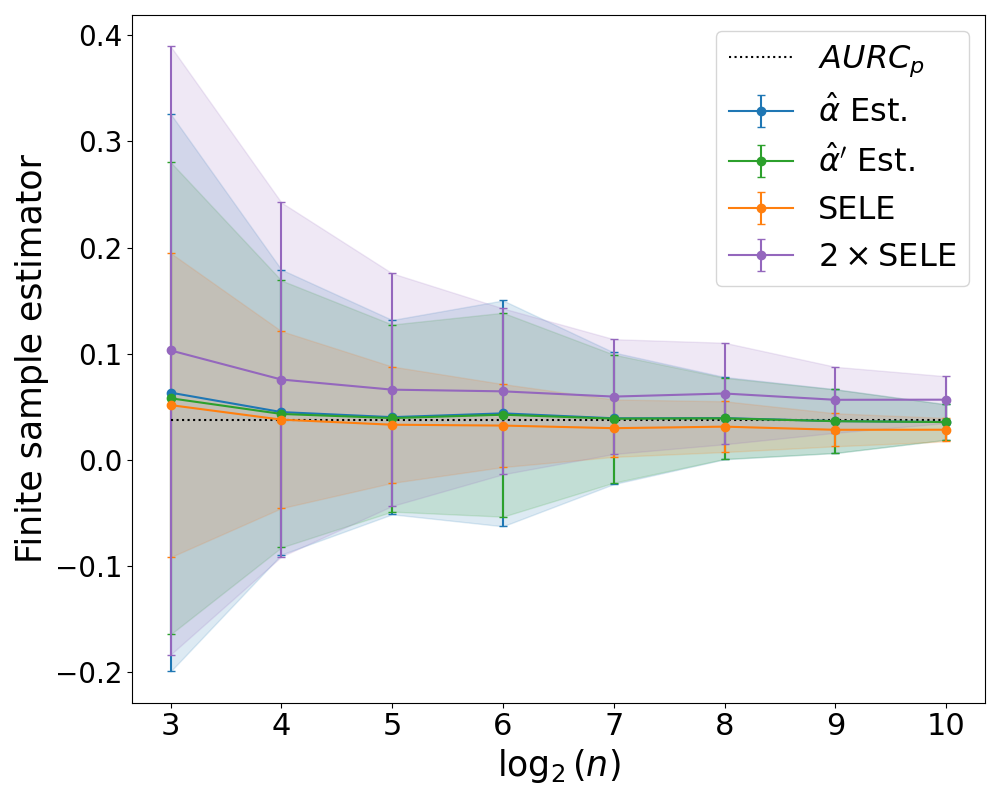}
\parbox[t]{\linewidth}{\scriptsize \centering(d) PreResNet110}
\end{minipage}%
\begin{minipage}[t]{0.3\linewidth}
\centering
\includegraphics[width=2.0in]{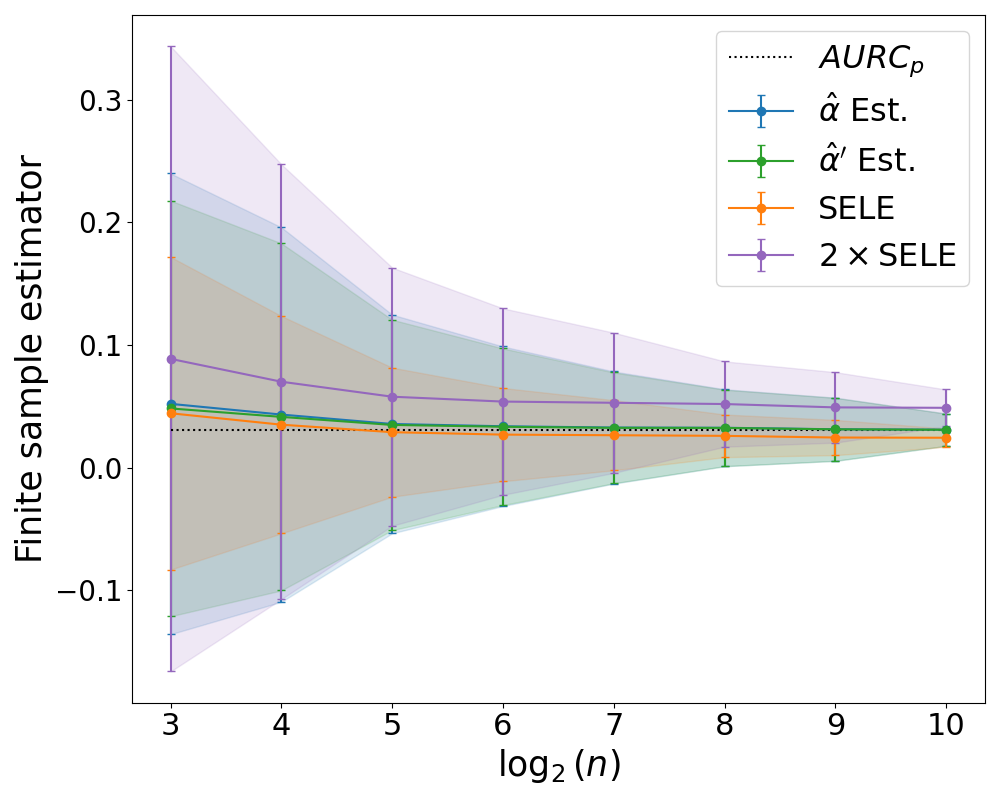}
\parbox[t]{\linewidth}{\scriptsize \centering(e) PreResNet164}
\end{minipage}%
\begin{minipage}[t]{0.3\linewidth}
\centering
\includegraphics[width=2.0in]{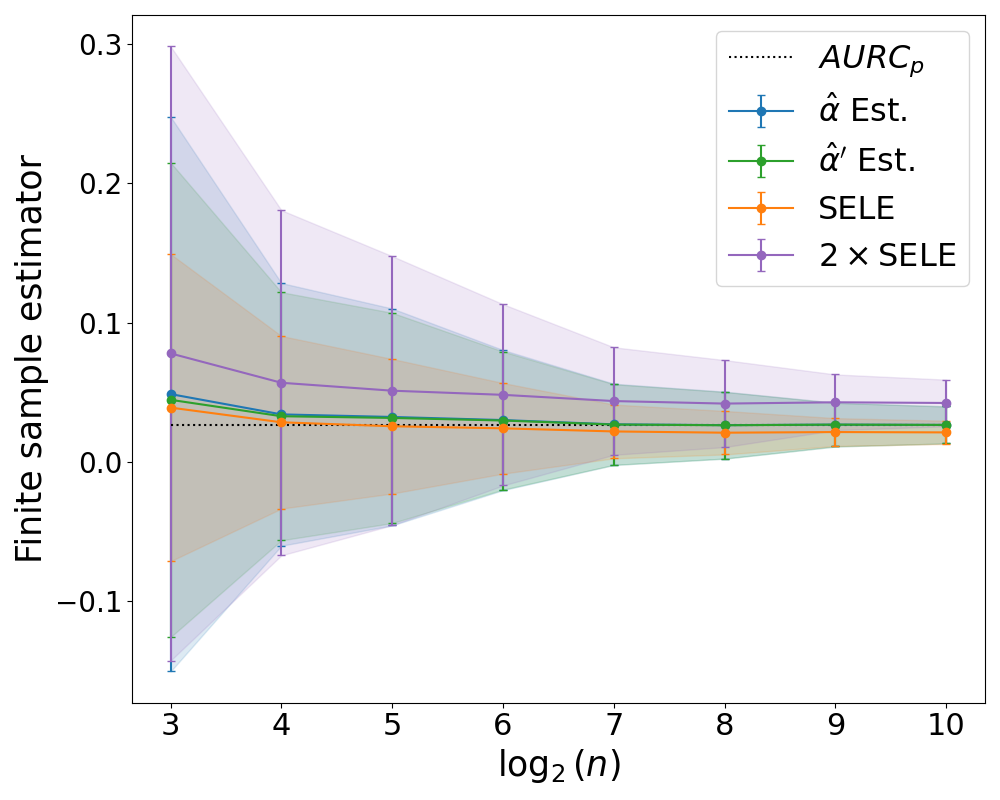}
\parbox[t]{\linewidth}{\scriptsize \centering(f) WideResNet28x10}
\end{minipage}%
\caption{Finite sample estimators on \textbf{CIFAR10} dataset with \textbf{CE} loss. We utilize a pre-trained model and randomly divide the test set into batch samples of size $n$. Subsequently, we compute the mean and variance of various estimators applied to these batch samples. Additionally, the population $\text{AURC}_p$ is computed across all samples in the test set. }
\label{fig:cifa10resultsseed5ce}
\end{figure*}

\begin{figure*}[ht]
%\vspace{-0.7cm}
\footnotesize
% \centerfloat
\begin{minipage}[t]{0.3\linewidth}
\centering
\includegraphics[width=2.0in]{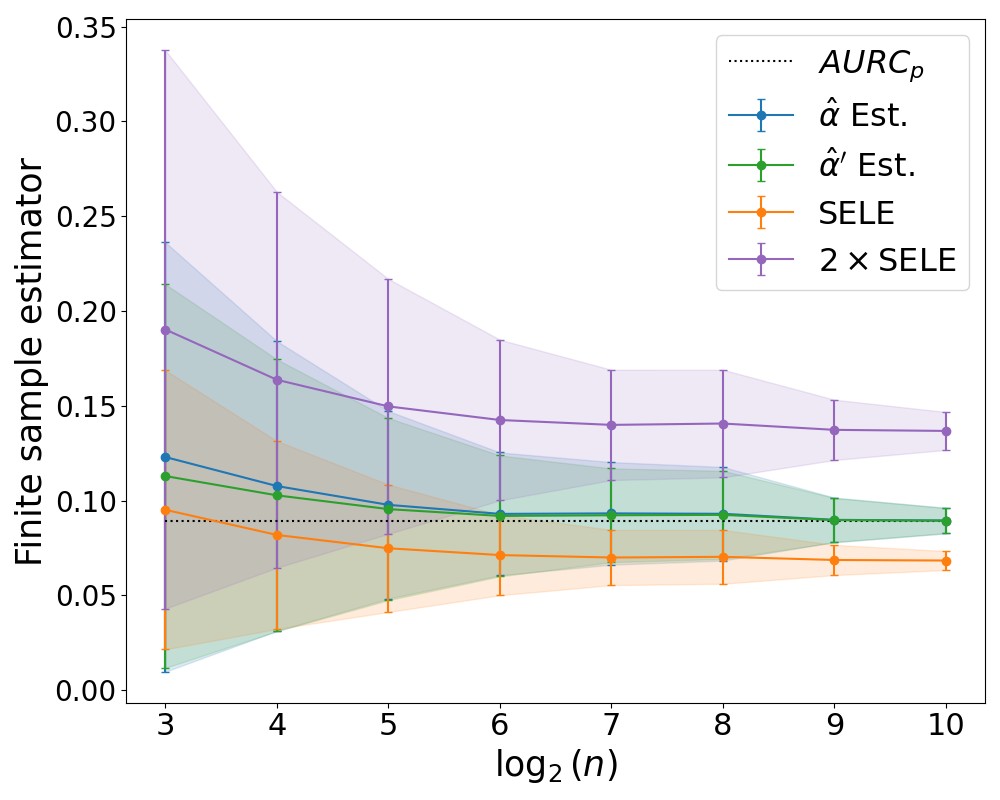}
\parbox[t]{\linewidth}{\scriptsize \centering(a) VGG16BN}
\end{minipage}%
\begin{minipage}[t]{0.3\linewidth}
\centering
\includegraphics[width=2.0in]{figs/estimator/cifar10_PreResNet20_5.png}
\parbox[t]{\linewidth}{\scriptsize \centering(b) PreResNet20}
\end{minipage}%
\begin{minipage}[t]{0.3\linewidth}
\centering
\includegraphics[width=2.0in]{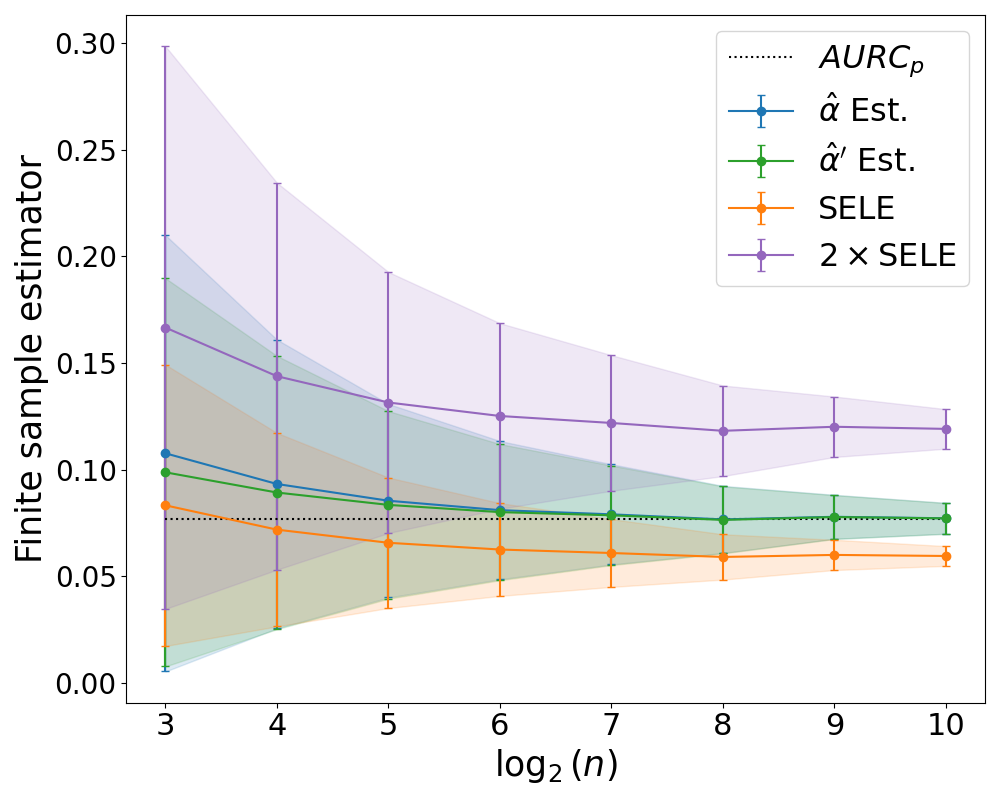}
\parbox[t]{\linewidth}{\scriptsize \centering(c) PreResNet56}
\end{minipage}%

\begin{minipage}[t]{0.3\linewidth}
% \centerfloat
\includegraphics[width=2.0in]{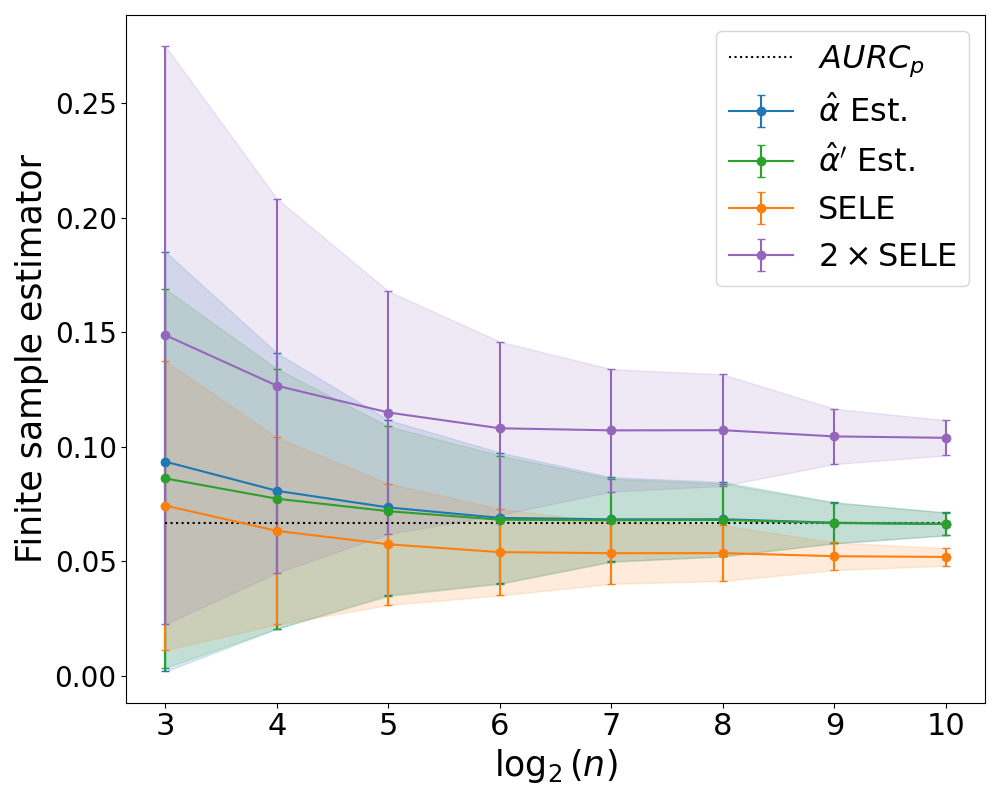}
\parbox[t]{\linewidth}{\scriptsize \centering(d) PreResNet110}
\end{minipage}%
\begin{minipage}[t]{0.3\linewidth}
\centering
\includegraphics[width=2.0in]{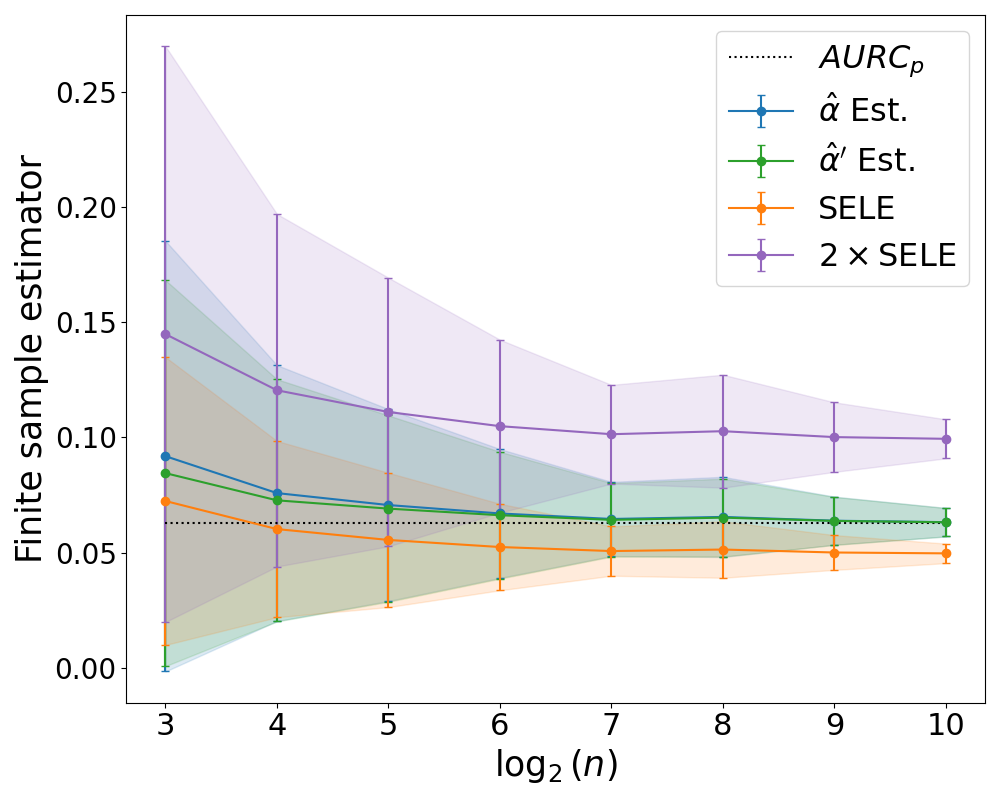}
\parbox[t]{\linewidth}{\scriptsize \centering(e) PreResNet164}
\end{minipage}%
\begin{minipage}[t]{0.3\linewidth}
\centering
\includegraphics[width=2.0in]{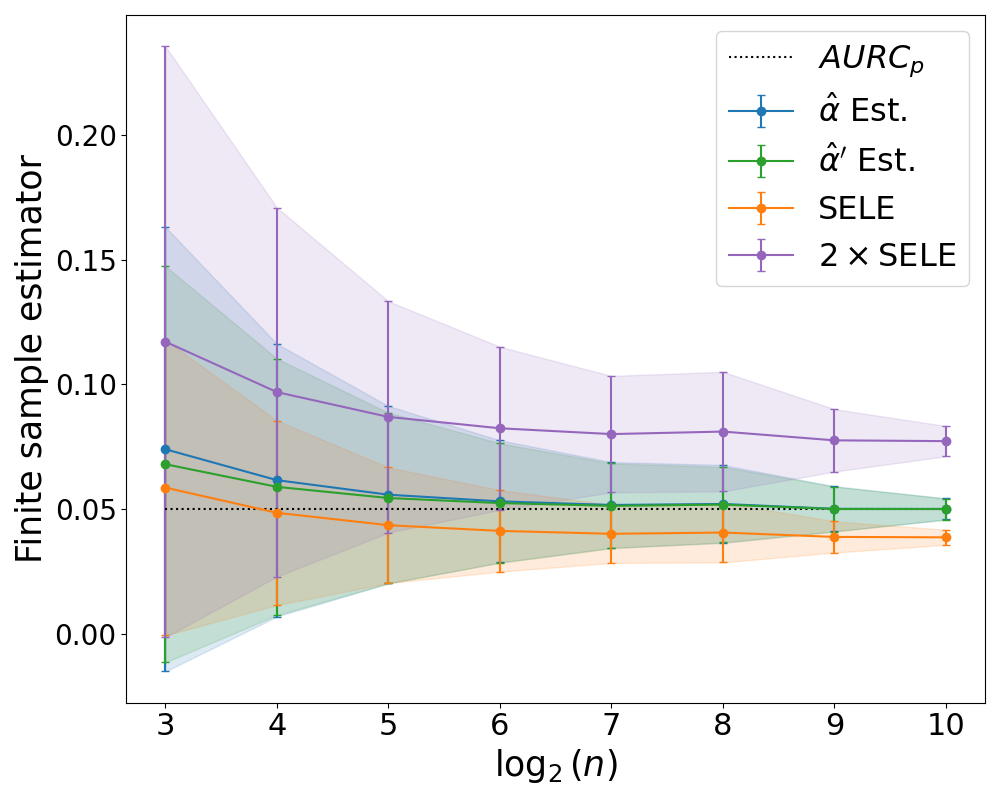}
\parbox[t]{\linewidth}{\scriptsize \centering(f) WideResNet28x10}
\end{minipage}%
\caption{Finite sample estimators on the \textbf{CIFAR100} dataset with \textbf{0/1} loss. We utilize a pre-trained model and randomly divide the test set into batch samples of size $n$. Subsequently, we compute the mean and variance of various estimators applied to these batch samples. Additionally, the population $\text{AURC}_p$ is computed across all samples in the test set. }
\label{fig:cifa100resultsseed5}
\end{figure*}

\begin{figure*}[ht]  % Use figure* for spanning both columns in a two-column layout
\footnotesize
% \centerfloat
\begin{minipage}[t]{0.245\linewidth}
    \centering
    \includegraphics[width=1.66in]{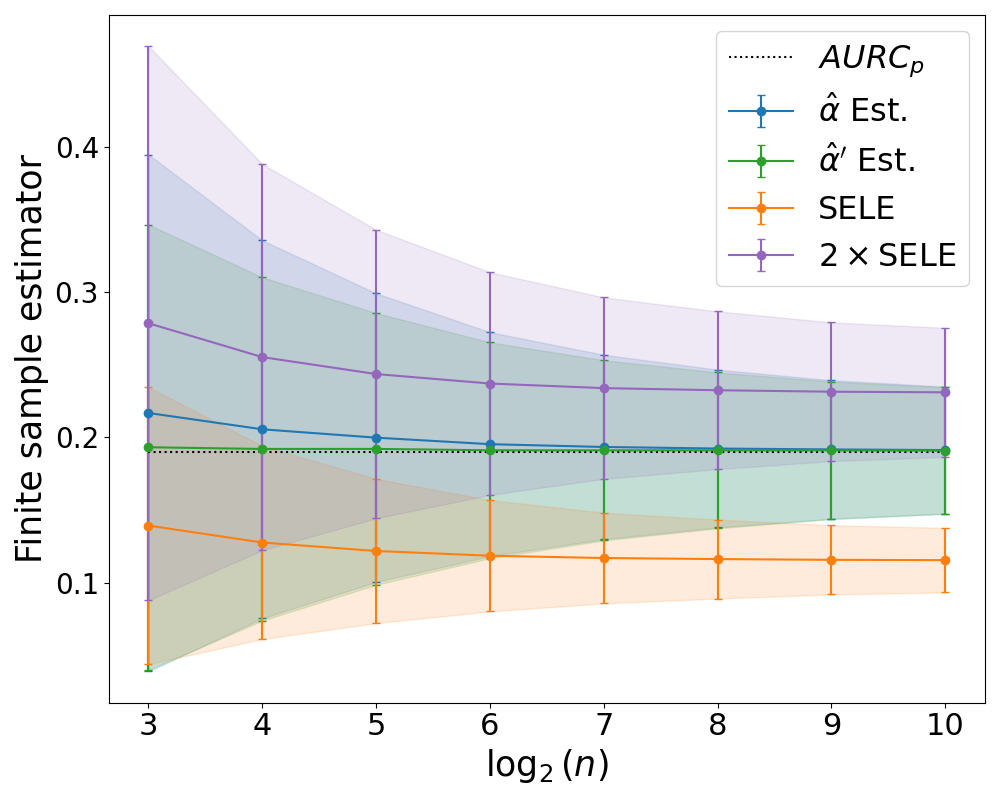}
    \parbox[t]{\linewidth}{\scriptsize \centering(a) D-BERT (\textbf{0/1})}
\end{minipage}
\begin{minipage}[t]{0.245\linewidth}
    \centering
    \includegraphics[width=1.66in]{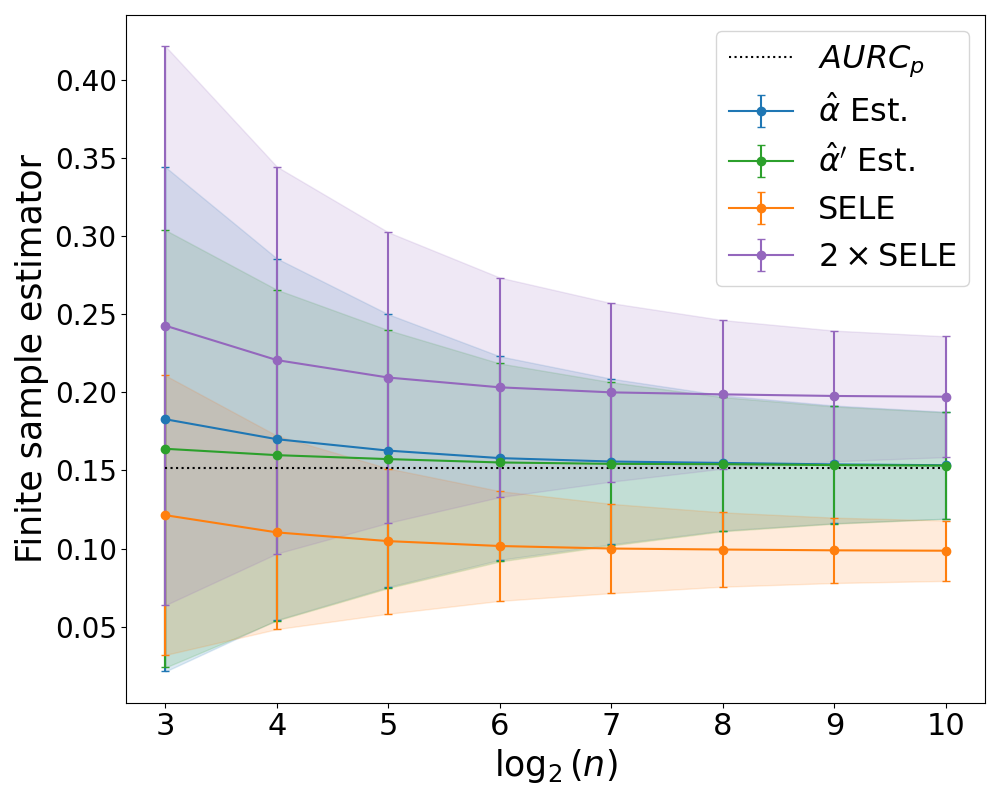}
    \parbox[t]{\linewidth}{\scriptsize \centering(b) D-RoBERTa (\textbf{0/1})}
\end{minipage}
\begin{minipage}[t]{0.245\linewidth}
\centering
\includegraphics[width=1.66in]{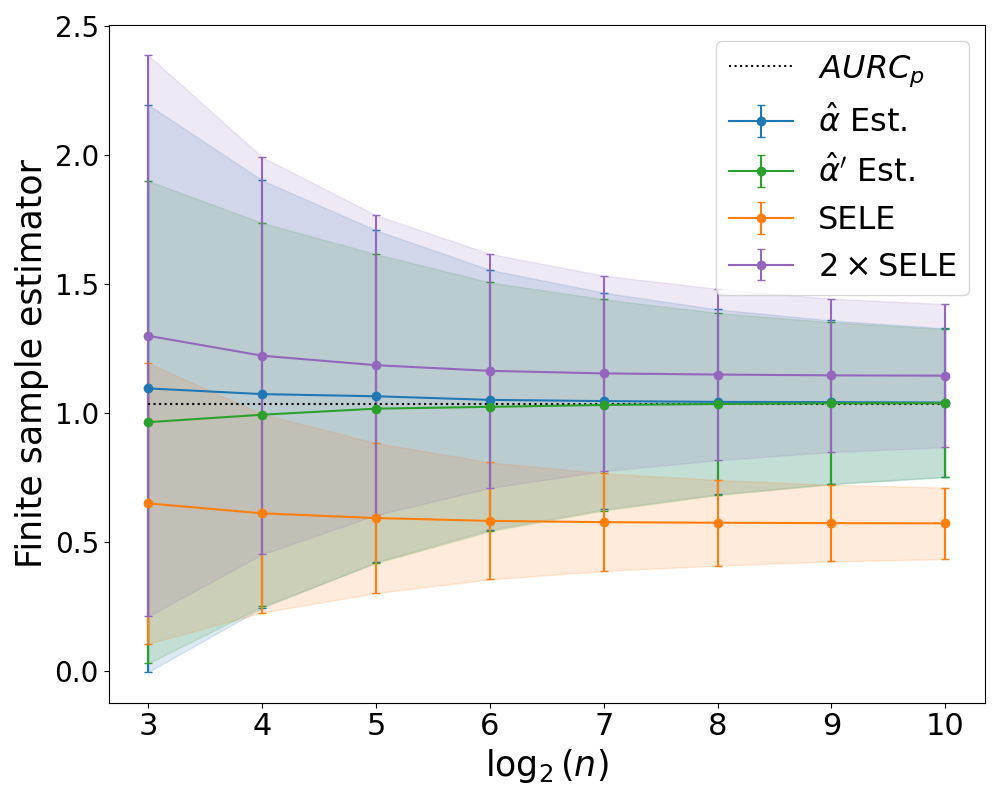}
\parbox[t]{\linewidth}{\scriptsize \centering(c) D-BERT (\textbf{CE})}
\end{minipage}
\begin{minipage}[t]{0.245\linewidth}
\centering
\includegraphics[width=1.66in]{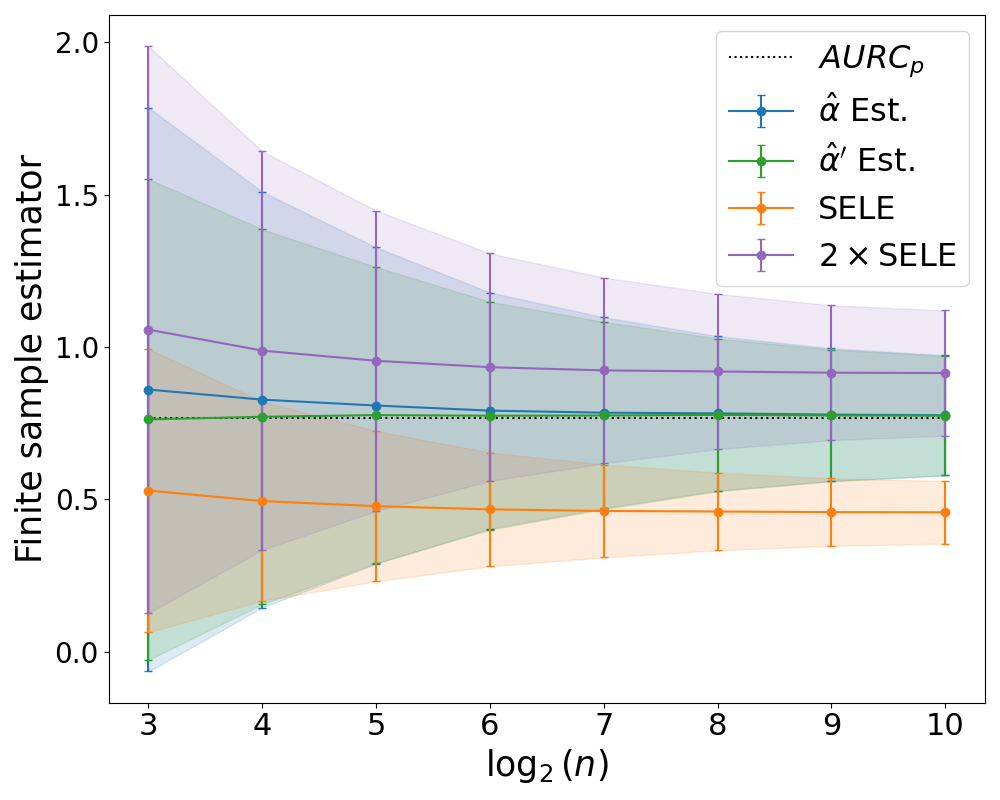}
\parbox[t]{\linewidth}{\scriptsize \centering(d) D-RoBERTa (\textbf{CE})}
\end{minipage}
\caption{(\textbf{Amazon}) Finite sample estimators with \textbf{0/1} or \textbf{CE} loss. We utilize a pre-trained model and randomly divide the test set into batch samples of size $n$. Subsequently, we compute the mean and variance of various estimators applied to these batch samples. Additionally, the population $\text{AURC}_p$ is computed across all samples in the test set. }
\label{fig:amazon_ce}
\end{figure*}

\begin{figure*}[!ht]
\footnotesize
% \centerfloat
\begin{minipage}[t]{0.245\linewidth}
\centering
\includegraphics[width=1.66in]{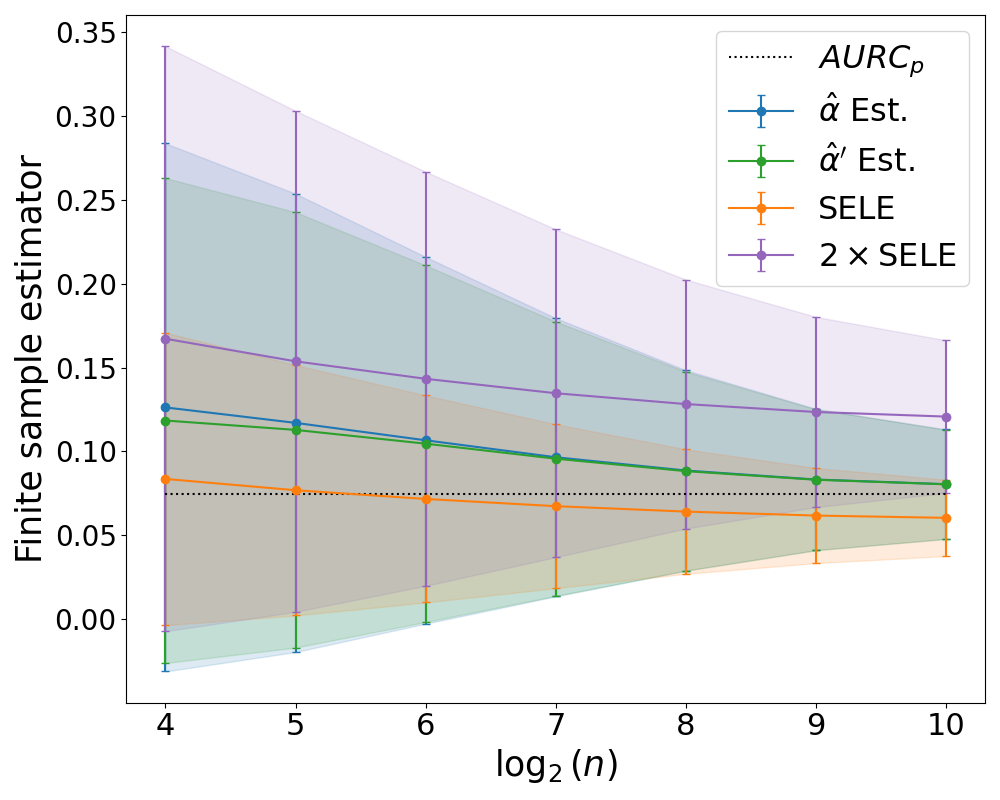}
\parbox[t]{\linewidth}{\scriptsize \centering(a) ViT-Small (\textbf{0/1})}
\end{minipage}%
\begin{minipage}[t]{0.245\linewidth}
\centering
\includegraphics[width=1.66in]{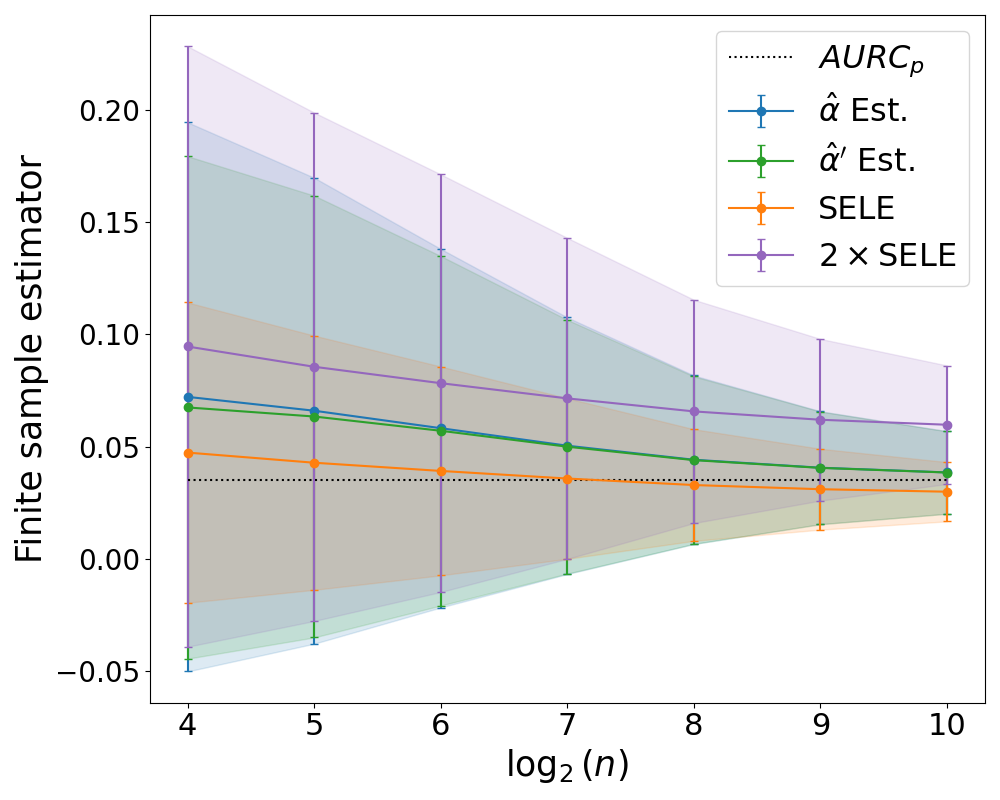}
\parbox[t]{\linewidth}{\scriptsize \centering(b) ViT-Large (\textbf{0/1})}
\end{minipage}
\begin{minipage}[t]{0.245\linewidth}
\centering
\includegraphics[width=1.66in]{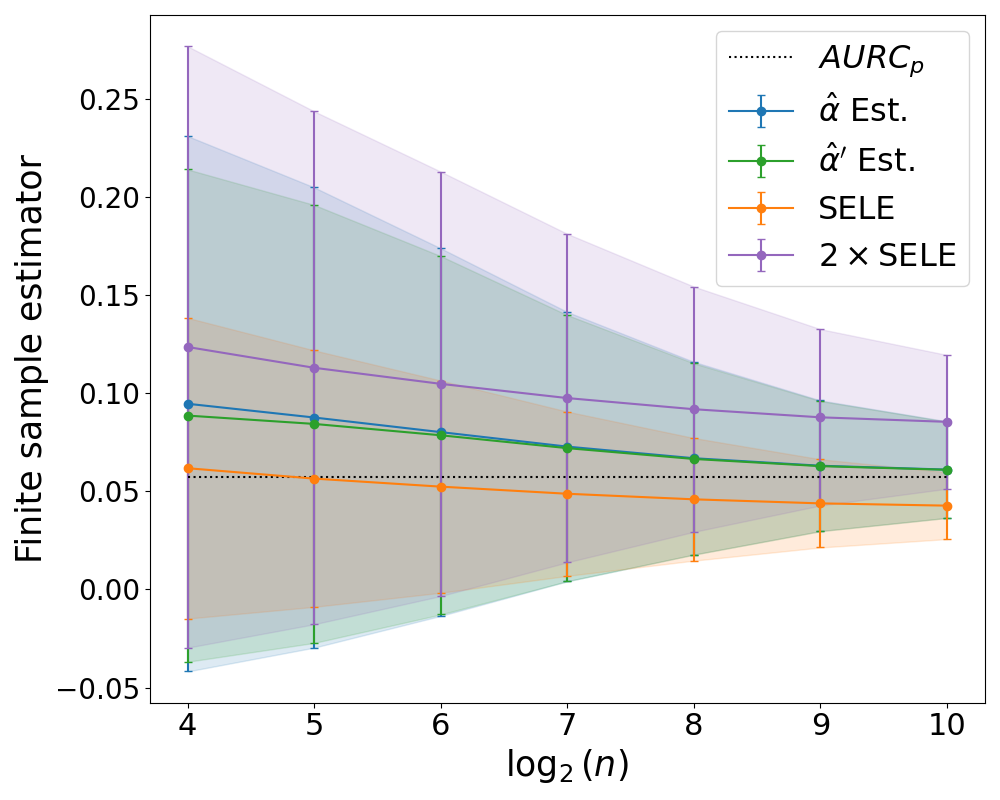}
\parbox[t]{\linewidth}{\scriptsize \centering(c) Swin-Tiny (\textbf{0/1})}
\end{minipage}
\begin{minipage}[t]{0.245\linewidth}
\centering
\includegraphics[width=1.66in]{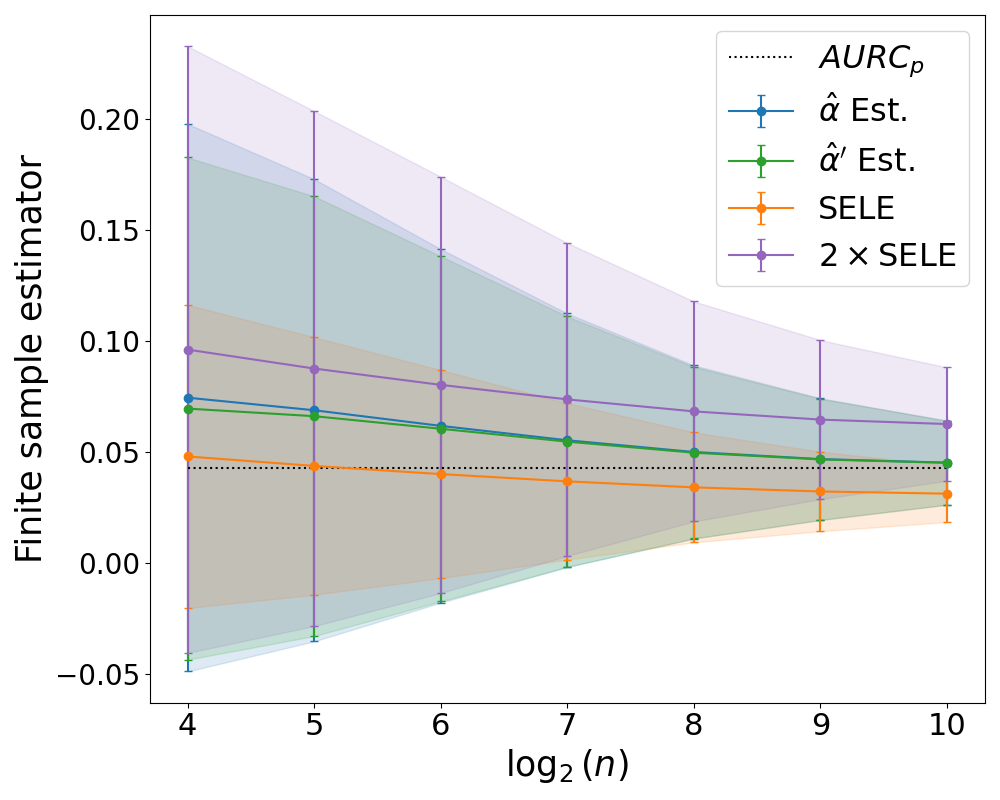}
\parbox[t]{\linewidth}{\scriptsize \centering(d) Swin-Base (\textbf{0/1})}
\end{minipage}

\begin{minipage}[t]{0.25\linewidth}
    \centering
    \includegraphics[width=1.66in]{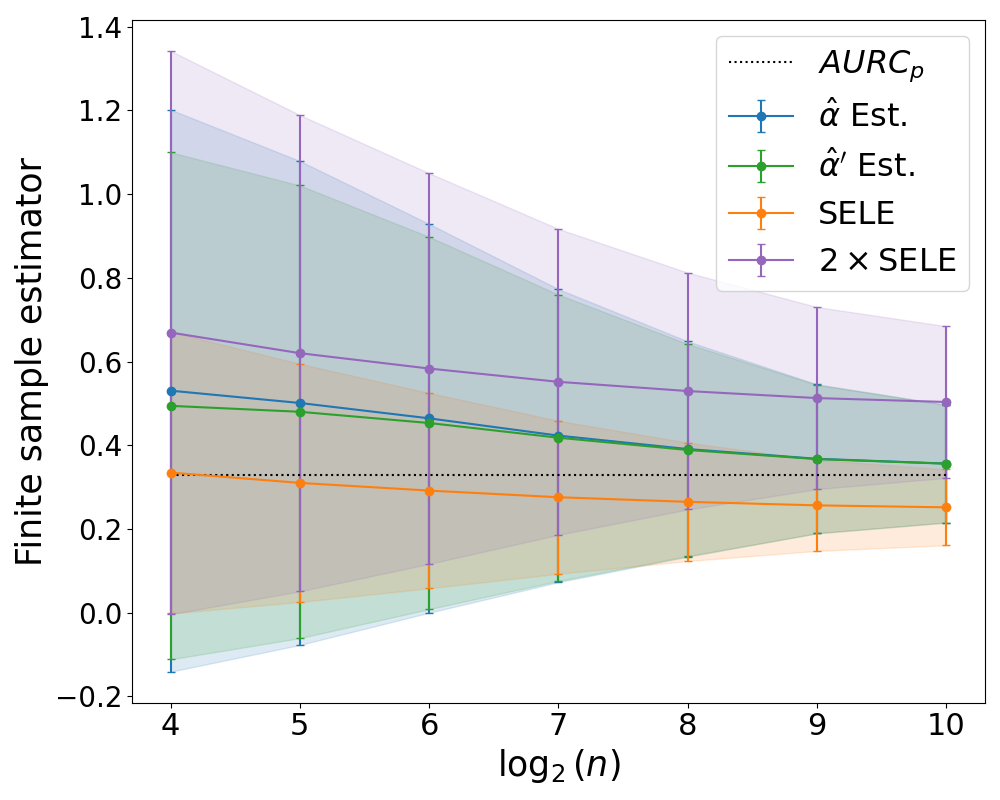}
    \parbox[t]{\linewidth}{\scriptsize \centering(e) ViT-Small (\textbf{CE})}
\end{minipage}%
\begin{minipage}[t]{0.25\linewidth}
    \centering
    \includegraphics[width=1.66in]{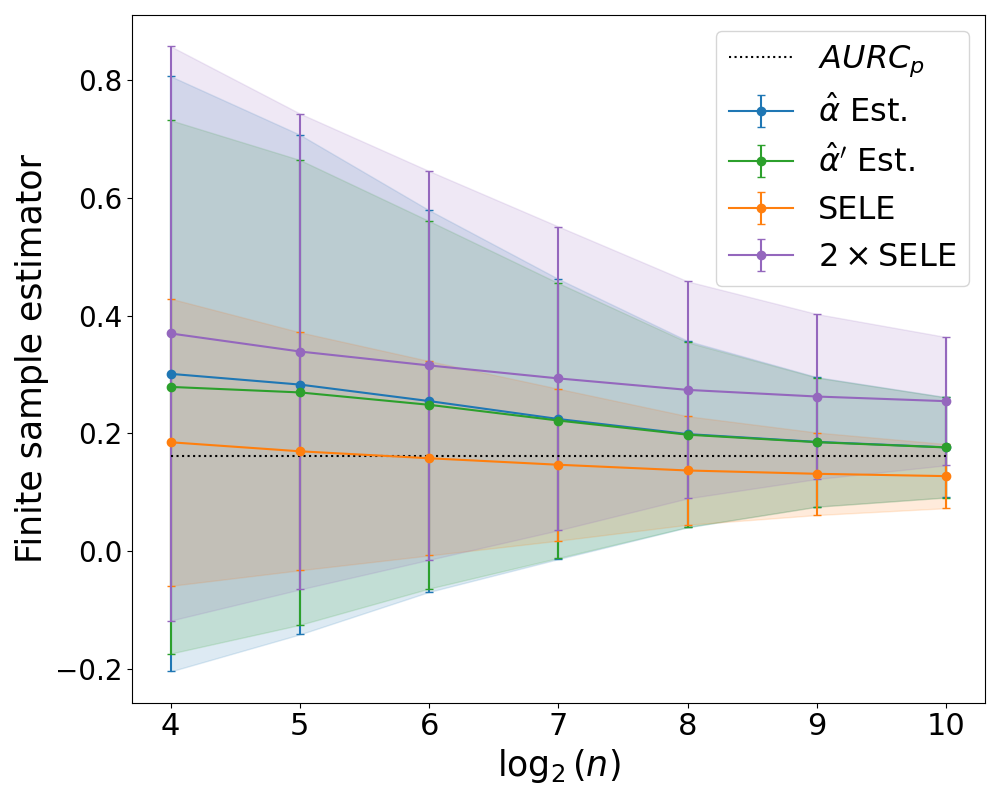}
    \parbox[t]{\linewidth}{\scriptsize \centering(f) ViT-Large (\textbf{CE})}
\end{minipage}%
\begin{minipage}[t]{0.25\linewidth}
    \centering
    \includegraphics[width=1.66in]{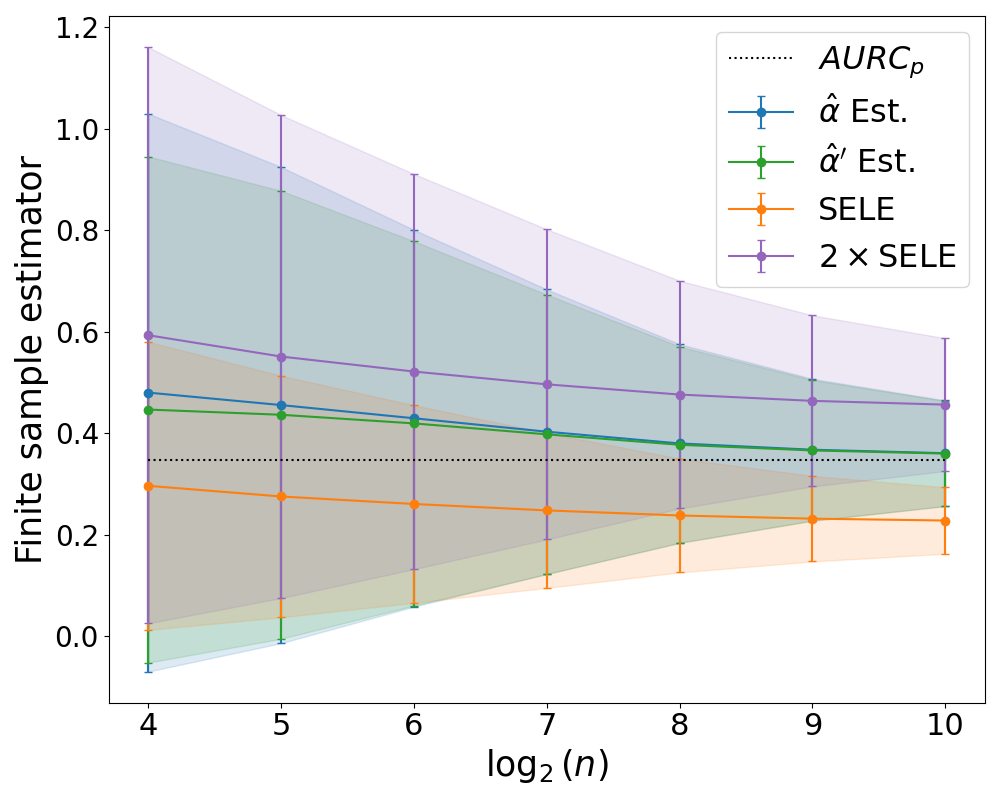}
    \parbox[t]{\linewidth}{\scriptsize \centering(g) Swin-Tiny (\textbf{CE})}
\end{minipage}%
\begin{minipage}[t]{0.25\linewidth}
    \centering
    \includegraphics[width=1.66in]{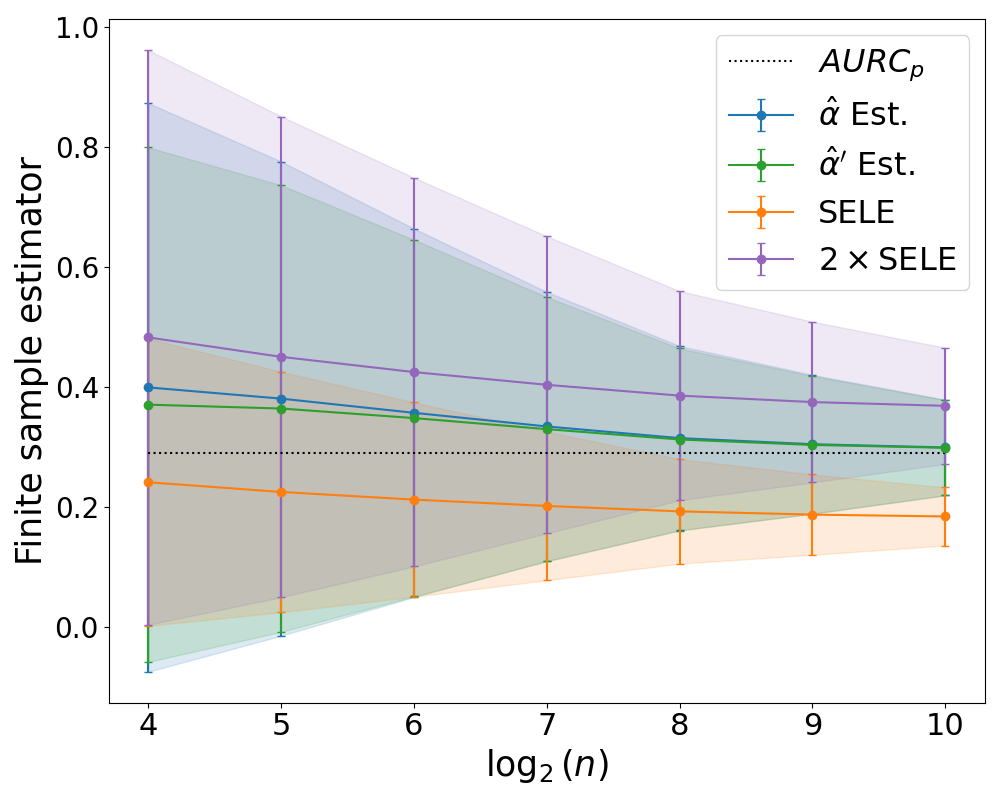}
    \parbox[t]{\linewidth}{\scriptsize \centering(h) Swin-Base (\textbf{CE})}
\end{minipage}
\caption{(\textbf{ImageNet}) Finite sample estimators with \textbf{0/1} or \textbf{CE} loss. We utilize a pre-trained model and randomly divide the test set into batch samples of size $n$. Subsequently, we compute the mean and variance of various estimators applied to these batch samples. Additionally, the population $\text{AURC}_p$ is computed across all samples in the test set. }
\label{fig:imagenet}
\end{figure*}

\begin{figure*}[!ht]
\footnotesize
% \centerfloat
\begin{minipage}[t]{0.3\linewidth}
\centering
\includegraphics[width=2.0in]{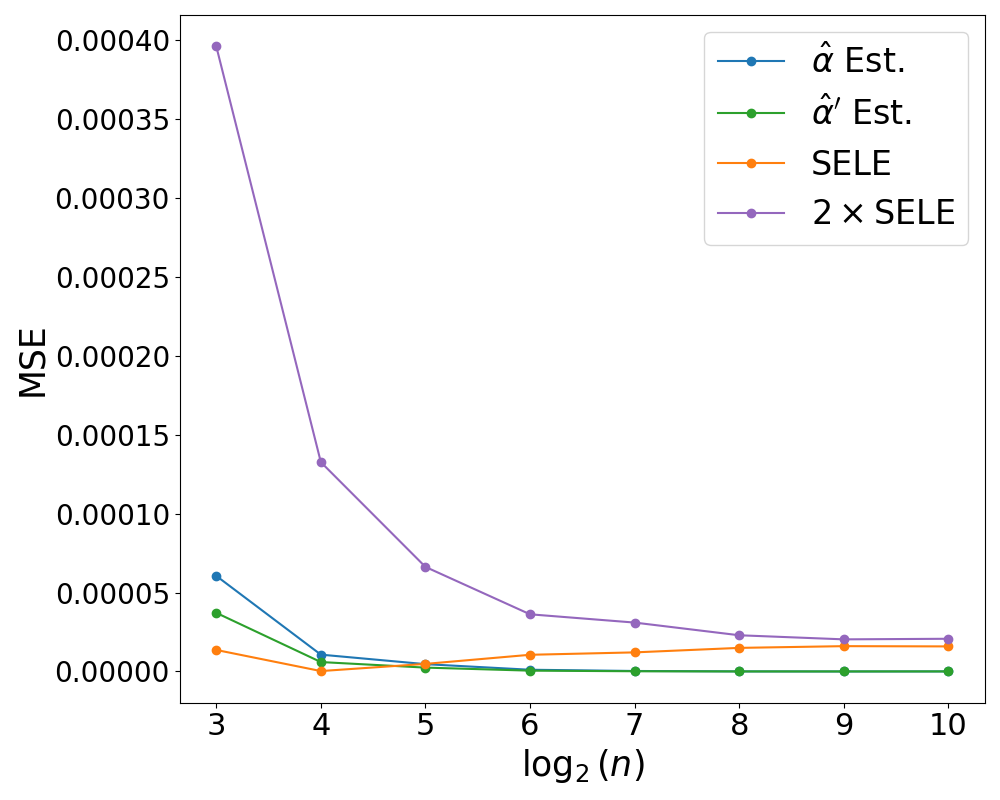}
\parbox[t]{\linewidth}{\scriptsize \centering(a) VGG16BN}
\end{minipage}%
\begin{minipage}[t]{0.3\linewidth}
\centering
\includegraphics[width=2.0in]{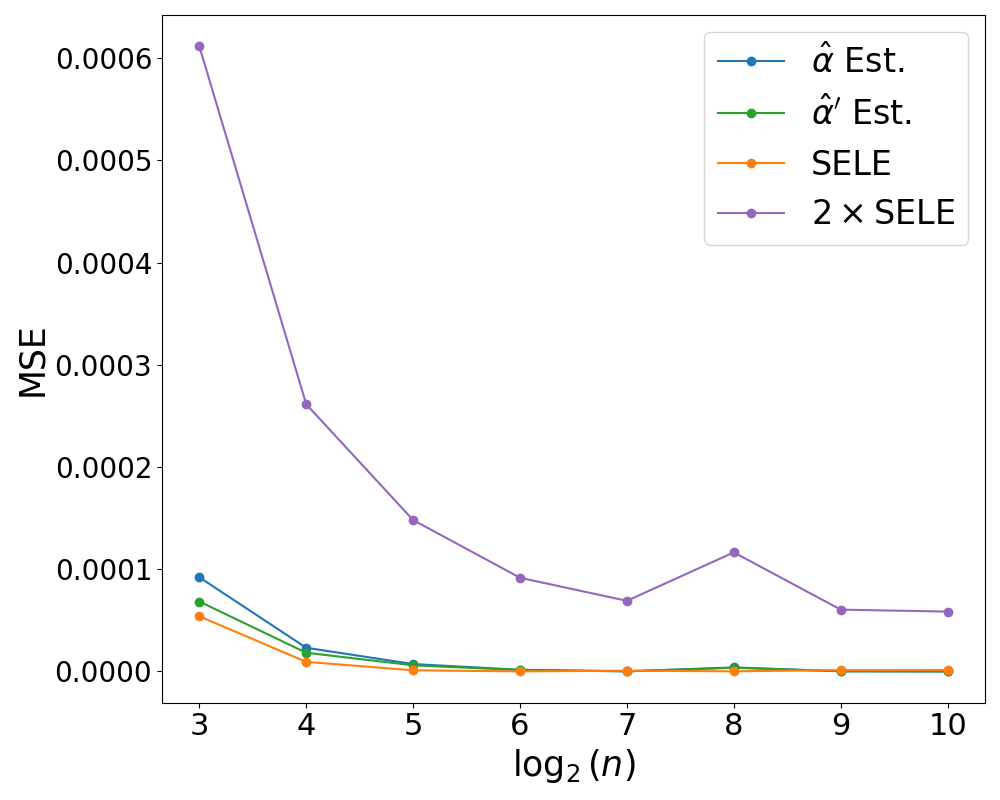}
\parbox[t]{\linewidth}{\scriptsize \centering(b) PreResNet20}
\end{minipage}%
\begin{minipage}[t]{0.3\linewidth}
\centering
\includegraphics[width=2.0in]{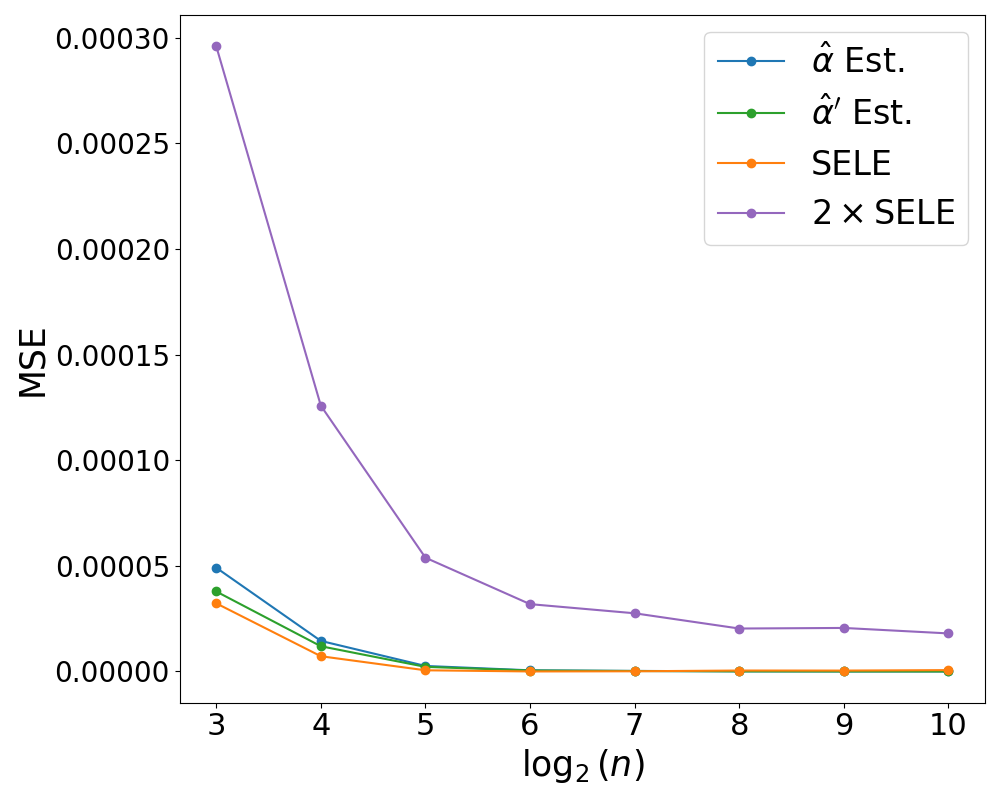}
\parbox[t]{\linewidth}{\scriptsize \centering(c) PreResNet56}
\end{minipage}%

\begin{minipage}[t]{0.3\linewidth}
% \centerfloat
\includegraphics[width=2.0in]{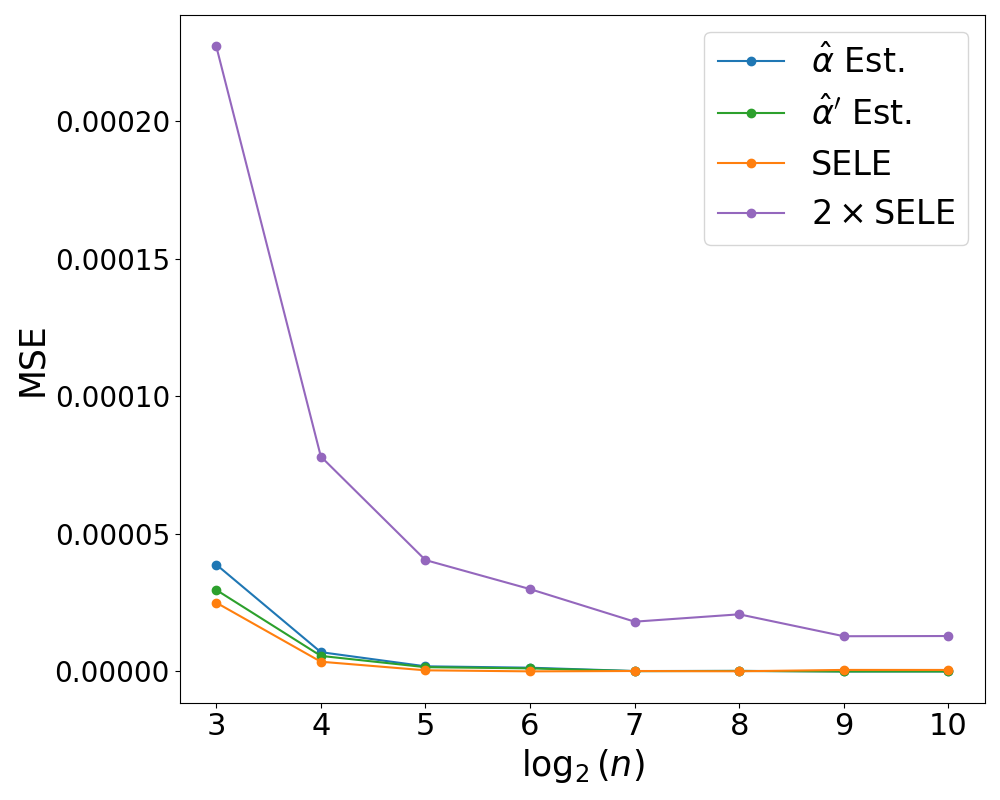}
\parbox[t]{\linewidth}{\scriptsize \centering(d) PreResNet110}
\end{minipage}%
\begin{minipage}[t]{0.3\linewidth}
\centering
\includegraphics[width=2.0in]{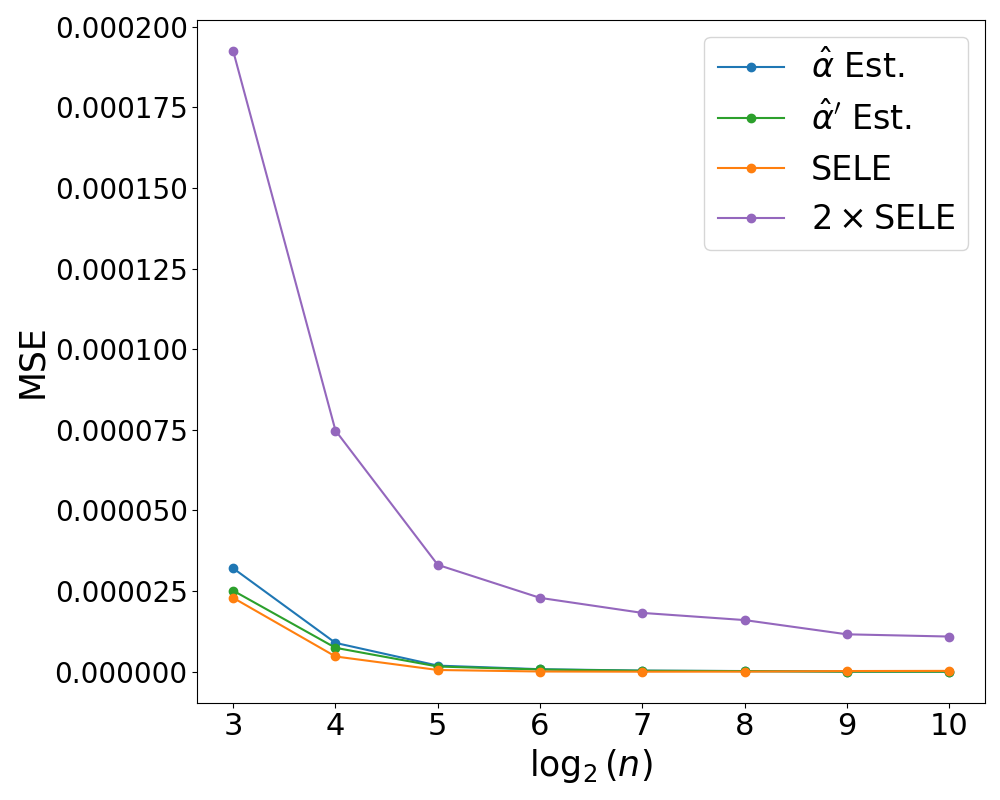}
\parbox[t]{\linewidth}{\scriptsize \centering(e) PreResNet164}
\end{minipage}%
\begin{minipage}[t]{0.3\linewidth}
\centering
\includegraphics[width=2.0in]{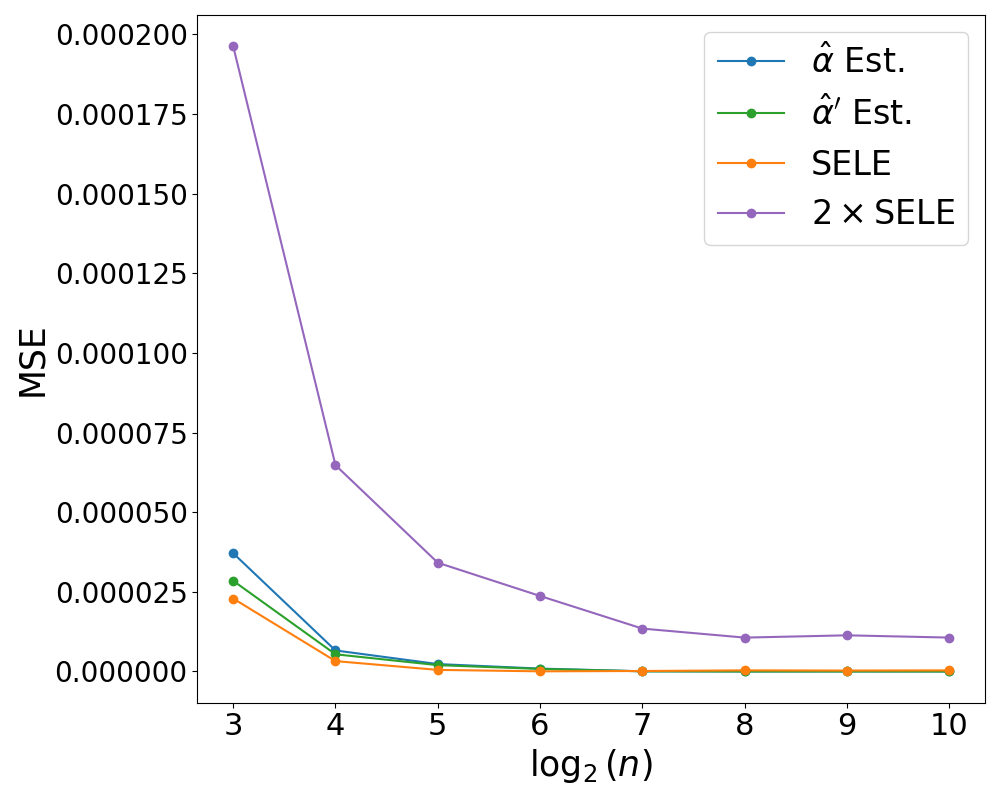}
\parbox[t]{\linewidth}{\scriptsize \centering(f) WideResNet28x10}
\end{minipage}%
\caption{(\textbf{CIFAR10}) MSE of different finite sample estimators with \textbf{0/1} loss. For each model architecture, we calculate the MSE of the estimators using a pre-trained model on batch samples derived from the test set.}
\label{fig:msecifa10resultsseed5}
\end{figure*}

\begin{figure*}[ht]
%\vspace{-0.7cm}% Use figure* for spanning both columns in a two-column layout
\footnotesize
% \centerfloat
\begin{minipage}[t]{0.3\linewidth}
\centering
\includegraphics[width=2.0in]{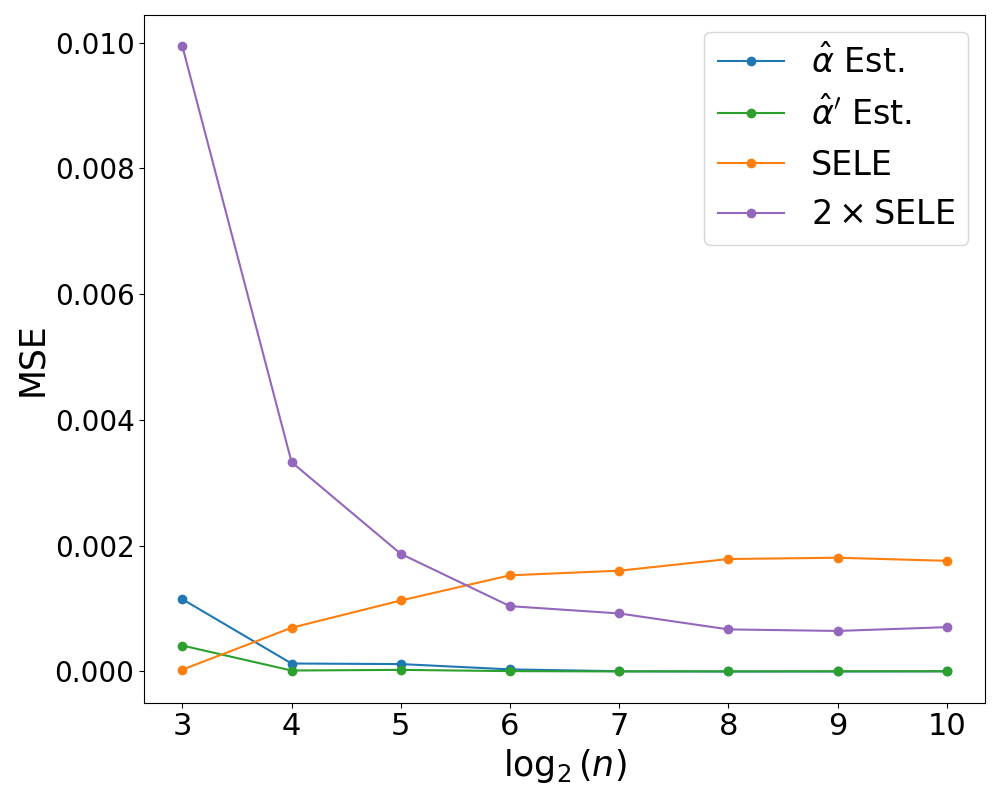}
\parbox[t]{\linewidth}{\scriptsize \centering(a) VGG16BN}
\end{minipage}%
\begin{minipage}[t]{0.3\linewidth}
\centering
\includegraphics[width=2.0in]{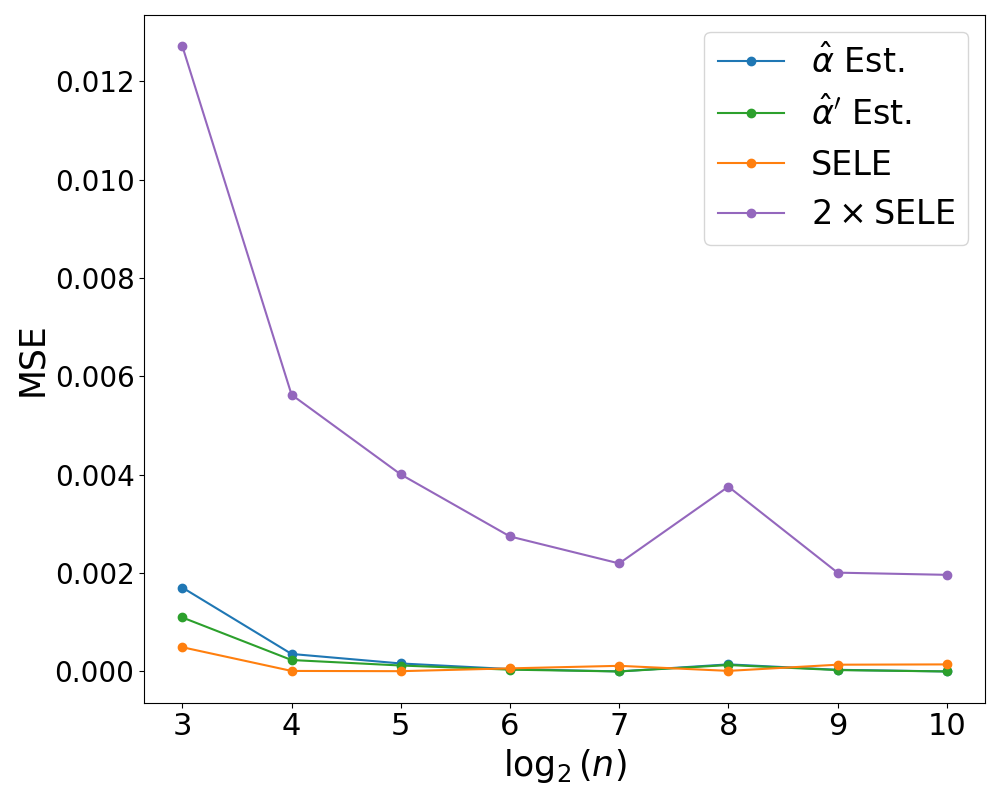}
\parbox[t]{\linewidth}{\scriptsize \centering(b) PreResNet20}
\end{minipage}%
\begin{minipage}[t]{0.3\linewidth}
\centering
\includegraphics[width=2.0in]{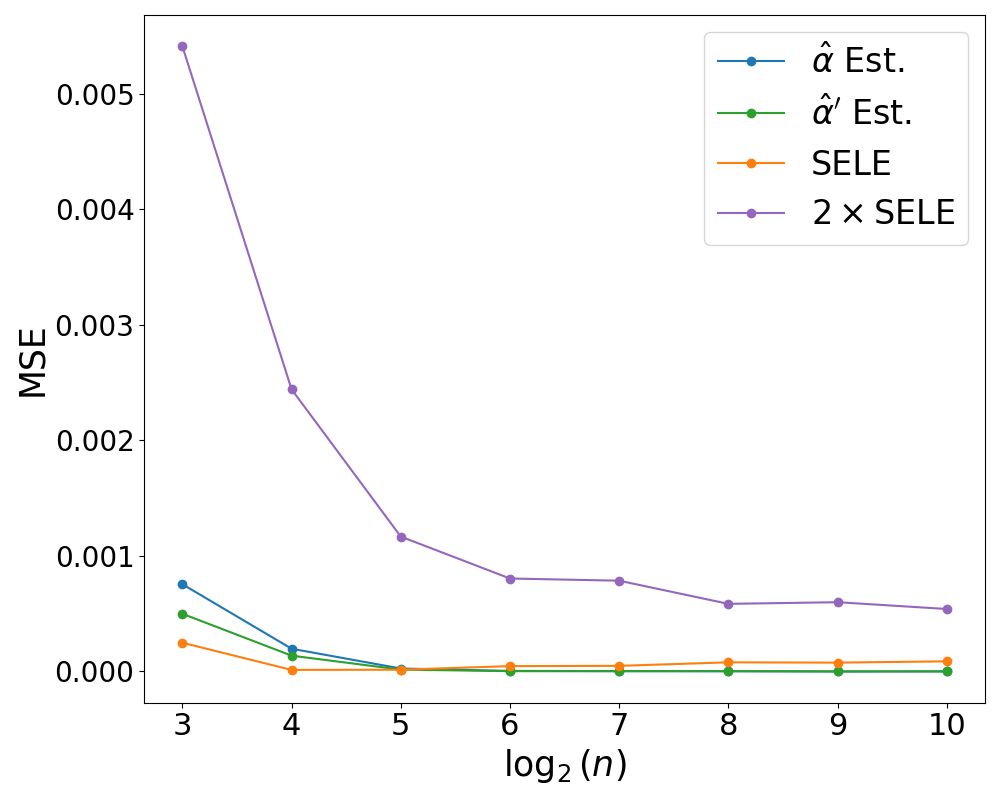}
\parbox[t]{\linewidth}{\scriptsize \centering(c) PreResNet56}
\end{minipage}%

\begin{minipage}[t]{0.3\linewidth}
% \centerfloat
\includegraphics[width=2.0in]{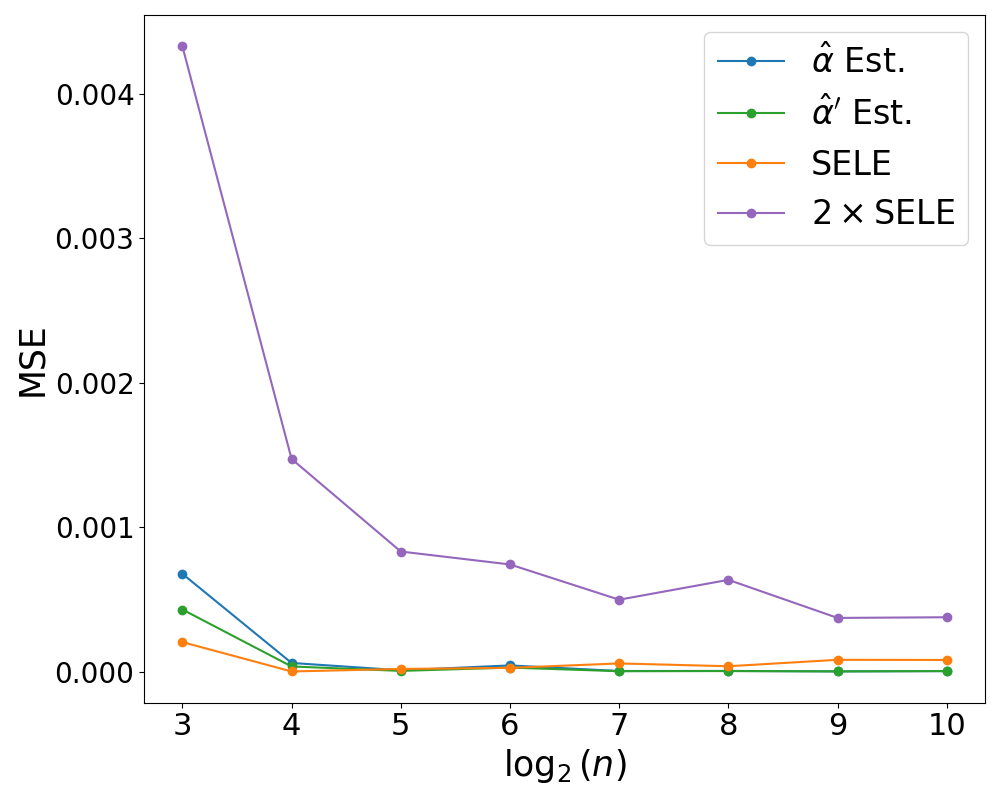}
\parbox[t]{\linewidth}{\scriptsize \centering(d) PreResNet110}
\end{minipage}%
\begin{minipage}[t]{0.3\linewidth}
\centering
\includegraphics[width=2.0in]{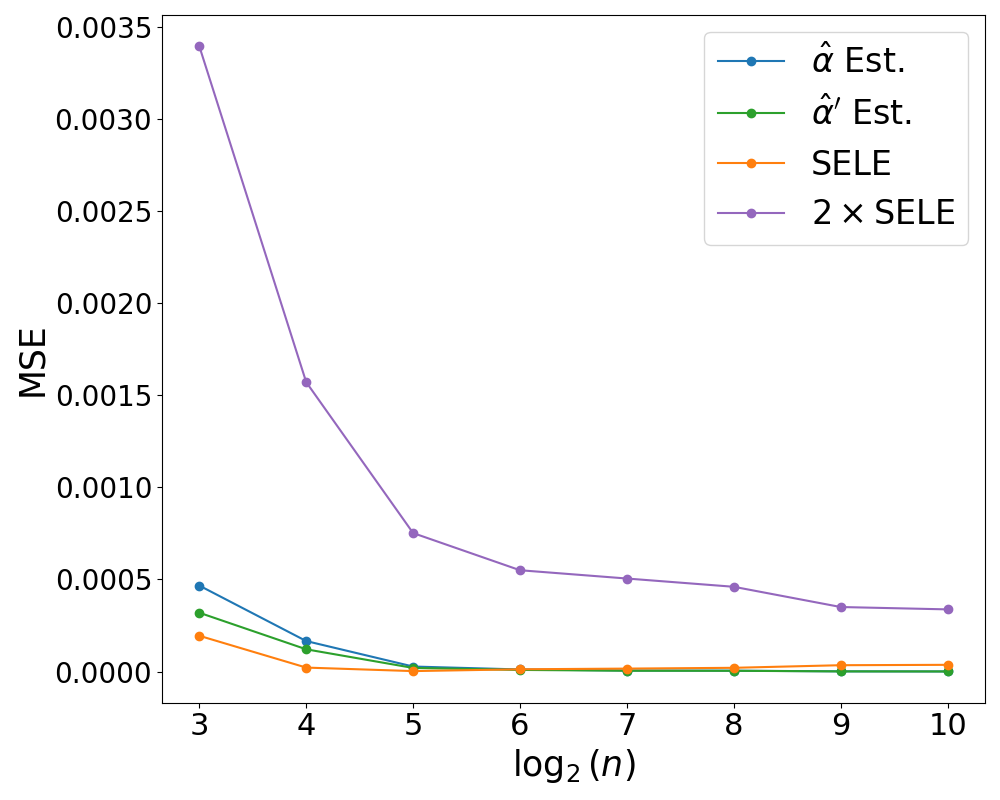}
\parbox[t]{\linewidth}{\scriptsize \centering(e) PreResNet164}
\end{minipage}%
\begin{minipage}[t]{0.3\linewidth}
\centering
\includegraphics[width=2.0in]{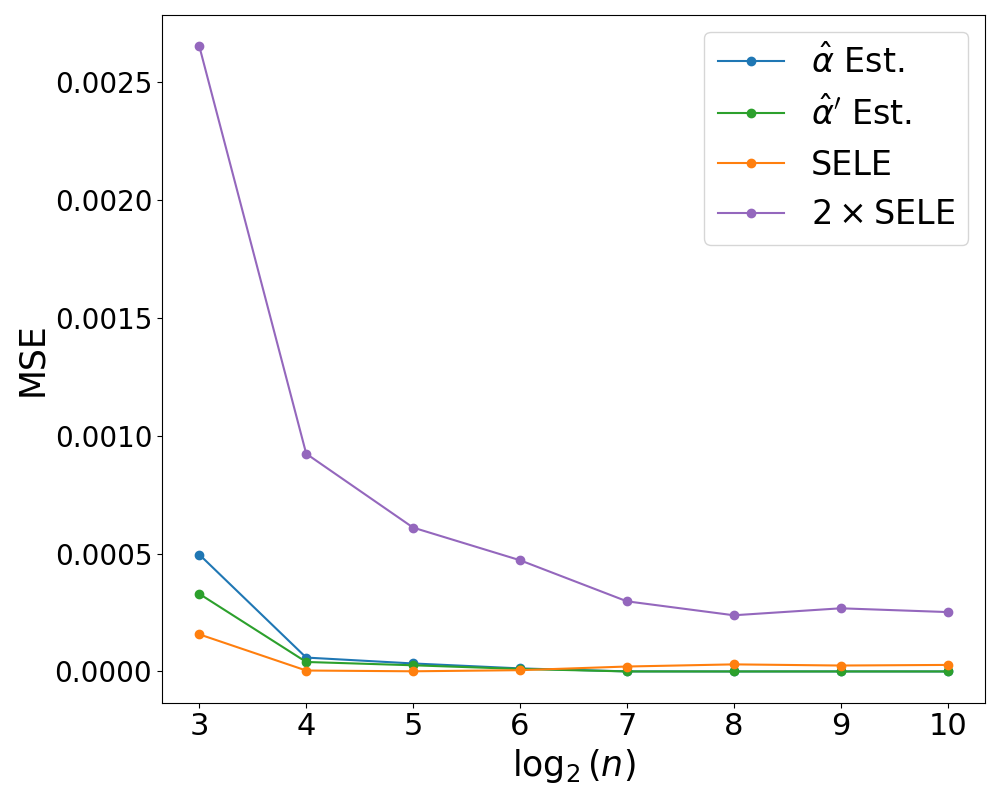}
\parbox[t]{\linewidth}{\scriptsize \centering(f) WideResNet28x10}
\end{minipage}%
\caption{(\textbf{CIFAR10}) MSE of different finite sample estimators with \textbf{CE} loss. For each model architecture, we calculate the MSE of the estimators using a pre-trained model on batch samples derived from the test set.}
\label{fig:msecifa10resultsseed5ce}
\end{figure*}

\begin{figure*}[ht]
%\vspace{-0.7cm}
\footnotesize
% \centerfloat
\begin{minipage}[t]{0.3\linewidth}
\centering
\includegraphics[width=2.0in]{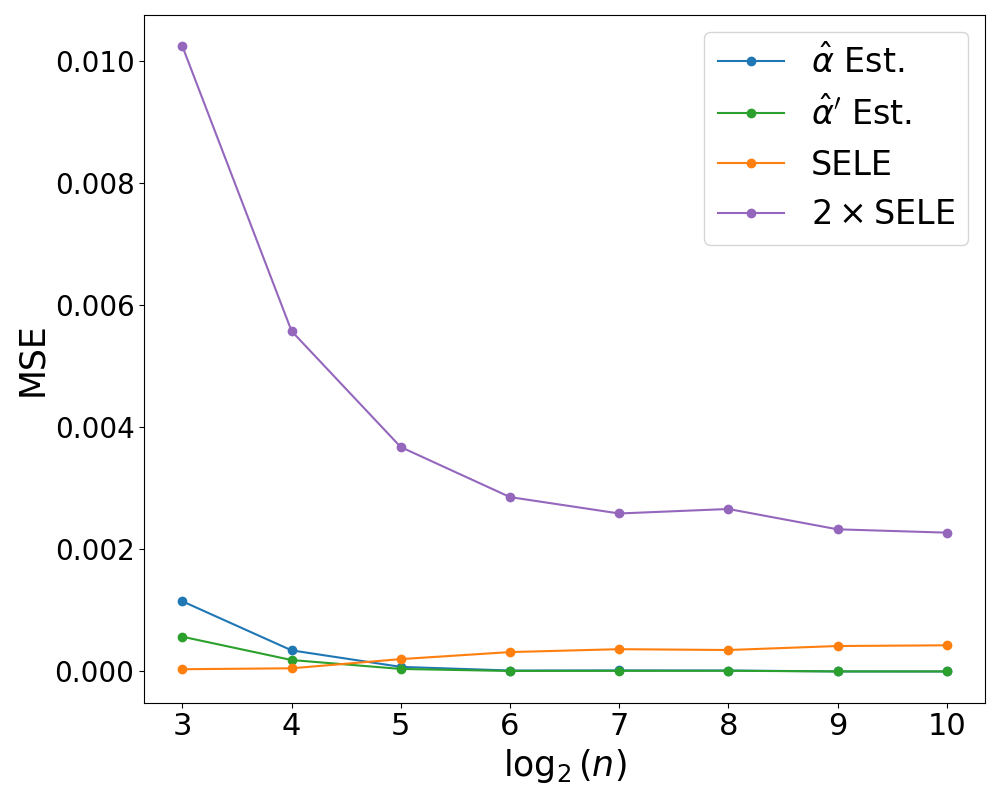}
\parbox[t]{\linewidth}{\scriptsize \centering(a) VGG16BN}
\end{minipage}%
\begin{minipage}[t]{0.3\linewidth}
\centering
\includegraphics[width=2.0in]{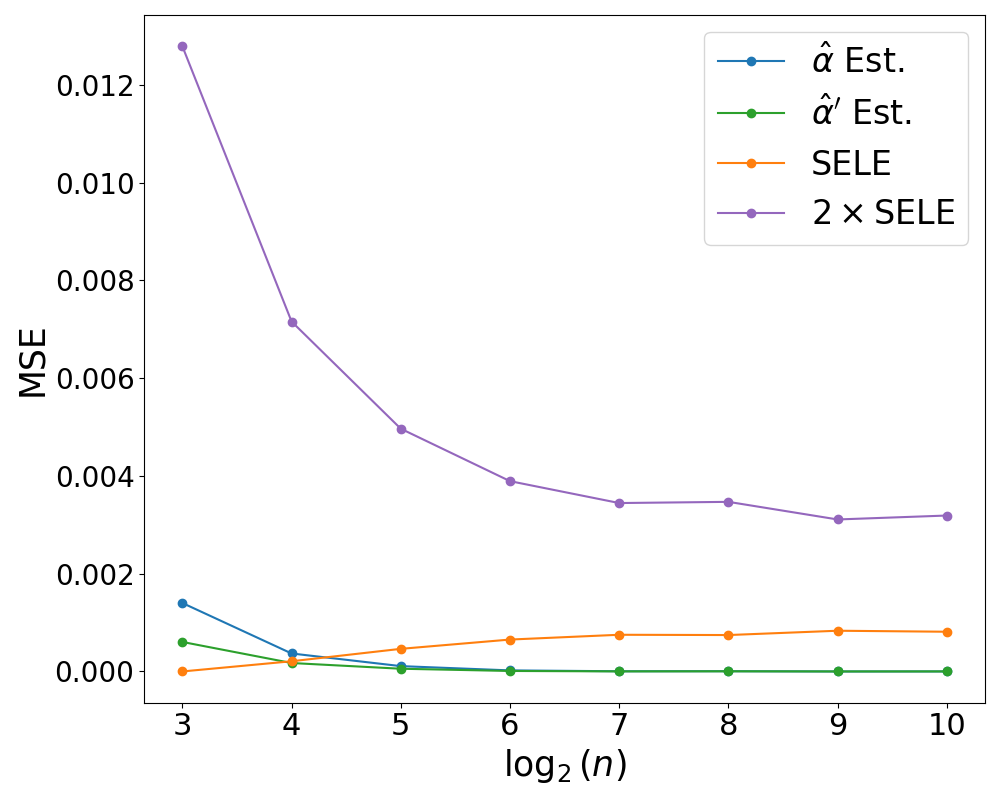}
\parbox[t]{\linewidth}{\scriptsize \centering(b) PreResNet20}
\end{minipage}%
\begin{minipage}[t]{0.3\linewidth}
\centering
\includegraphics[width=2.0in]{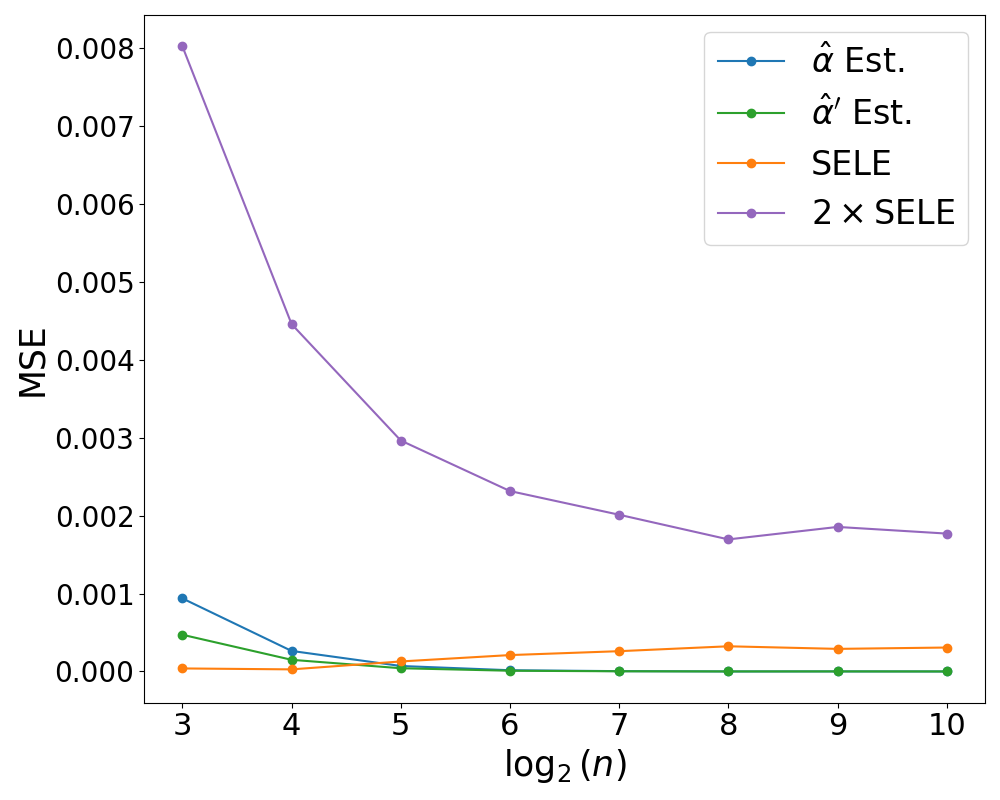}
\parbox[t]{\linewidth}{\scriptsize \centering(c) PreResNet56}
\end{minipage}%

\begin{minipage}[t]{0.3\linewidth}
% \centerfloat
\includegraphics[width=2.0in]{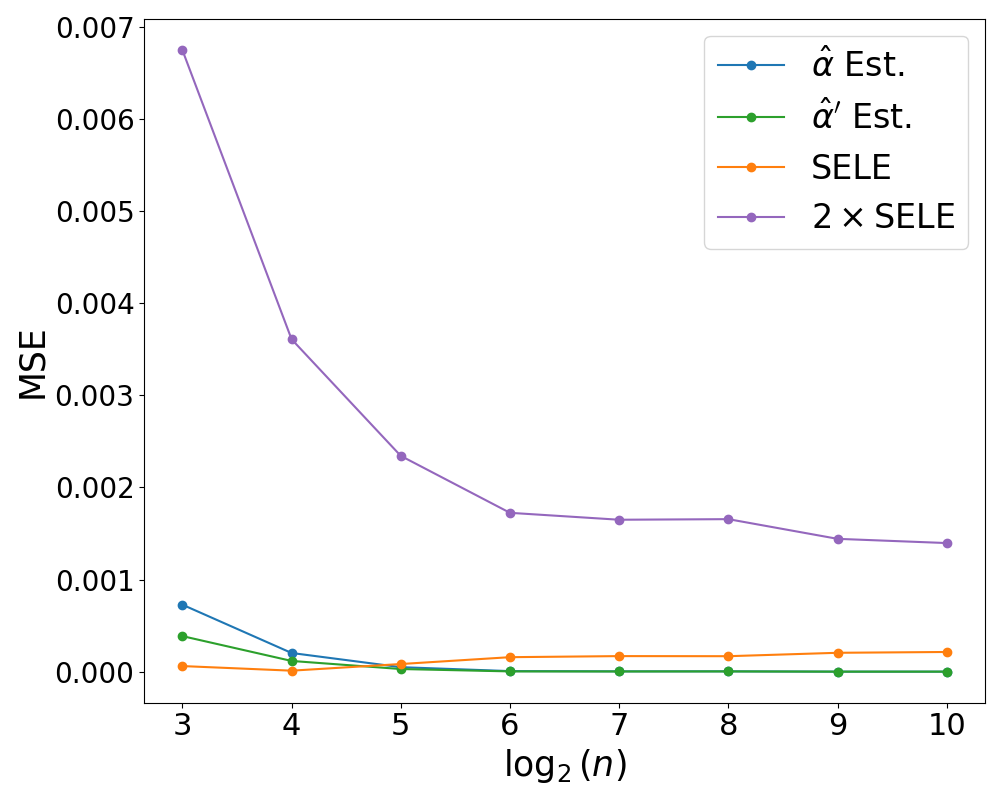}
\parbox[t]{\linewidth}{\scriptsize \centering(d) PreResNet110}
\end{minipage}%
\begin{minipage}[t]{0.3\linewidth}
\centering
\includegraphics[width=2.0in]{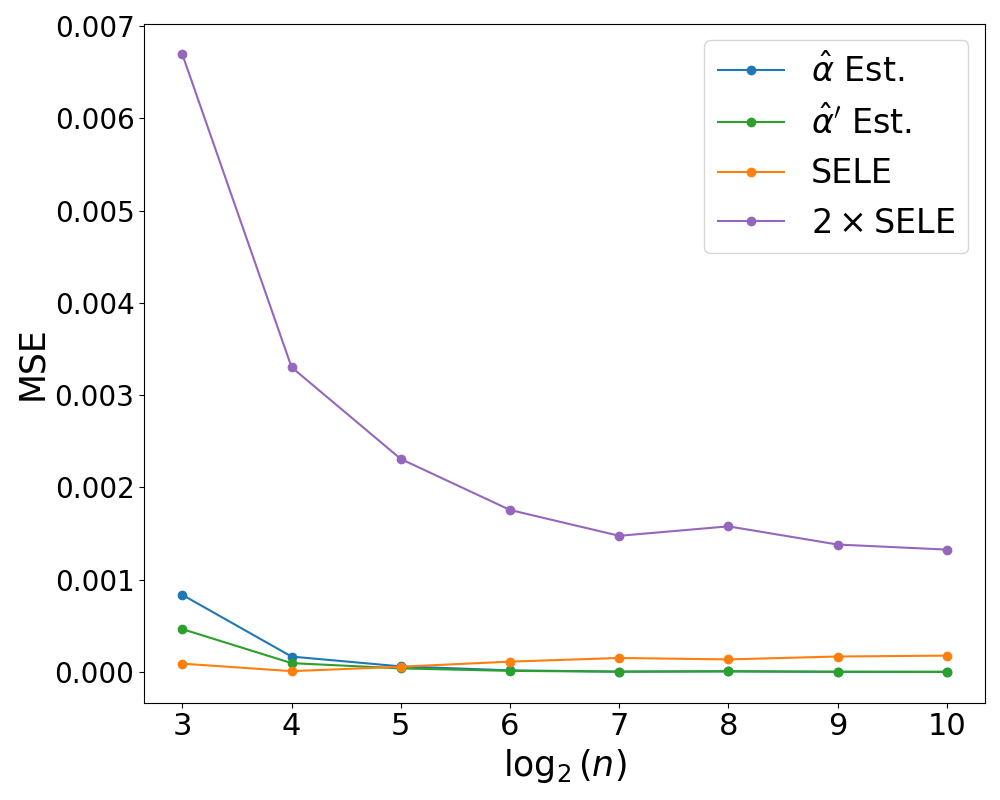}
\parbox[t]{\linewidth}{\scriptsize \centering(e) PreResNet164}
\end{minipage}%
\begin{minipage}[t]{0.3\linewidth}
\centering
\includegraphics[width=2.0in]{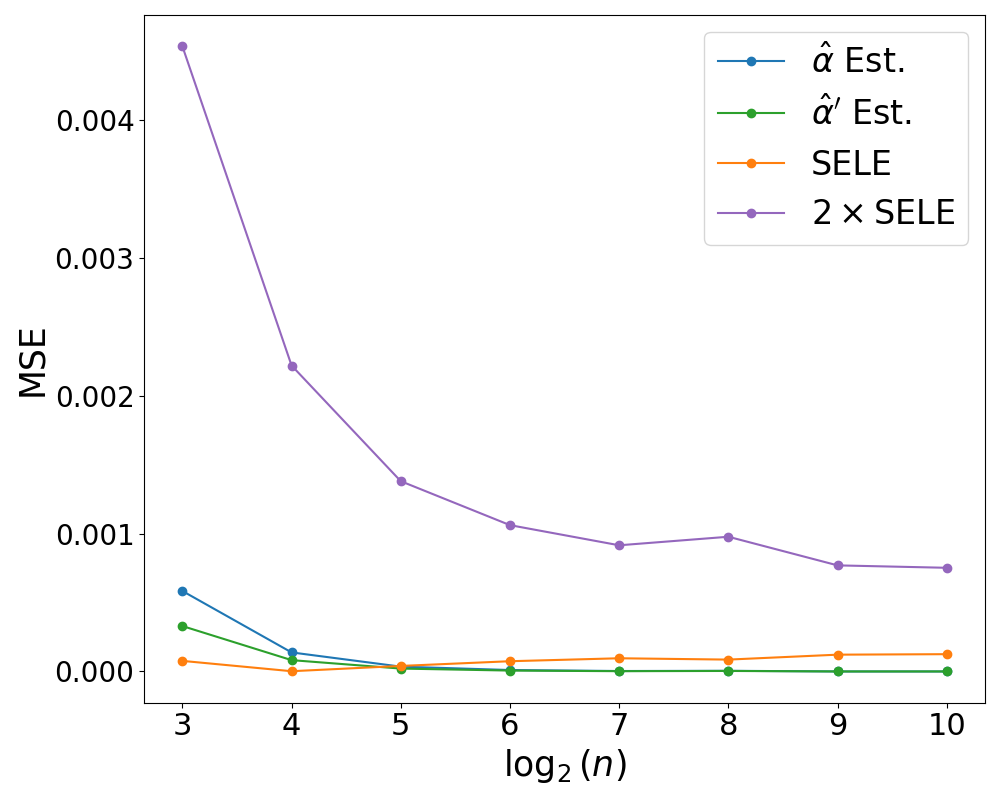}
\parbox[t]{\linewidth}{\scriptsize \centering(f) WideResNet28x10}
\end{minipage}%
\caption{(\textbf{CIFAR100}) MSE of different finite sample estimators with \textbf{0/1} loss. For each model architecture, we calculate the MSE of the estimators using a pre-trained model on batch samples derived from the test set.}
\label{fig:msecifa100resultsseed5}
\end{figure*}

\begin{figure*}[ht]
%\vspace{-0.7cm}
\footnotesize
% \centerfloat
\begin{minipage}[t]{0.3\linewidth}
\centering
\includegraphics[width=2.0in]{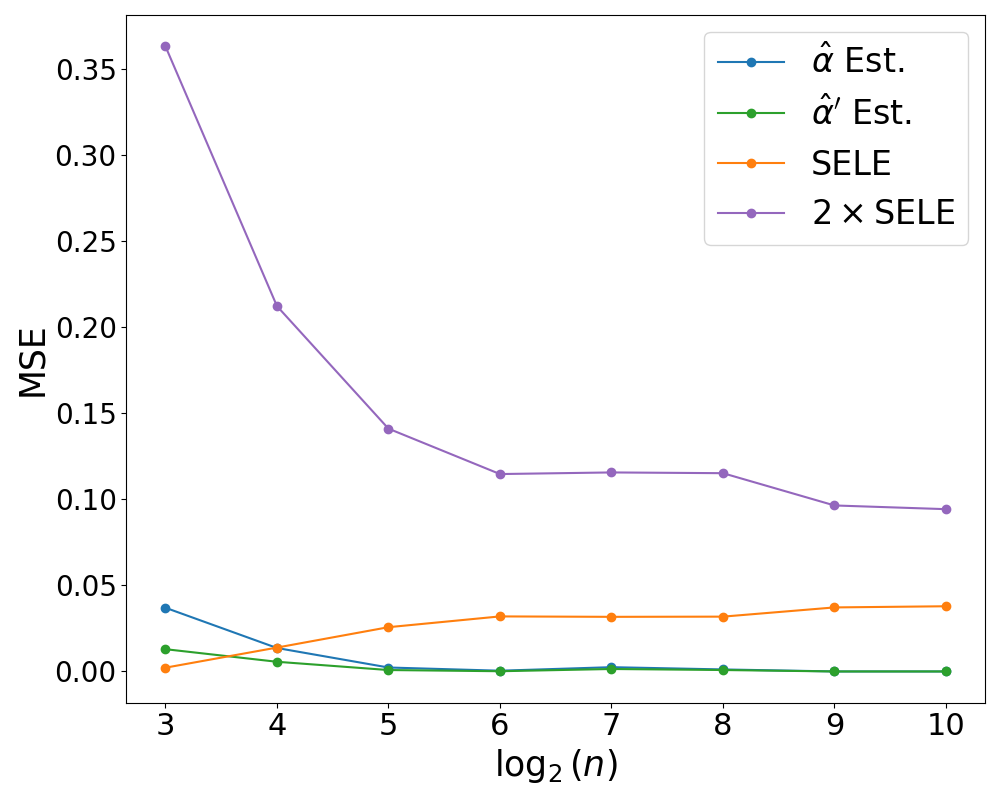}
\parbox[t]{\linewidth}{\scriptsize \centering(a) VGG16BN}
\end{minipage}%
\begin{minipage}[t]{0.3\linewidth}
\centering
\includegraphics[width=2.0in]{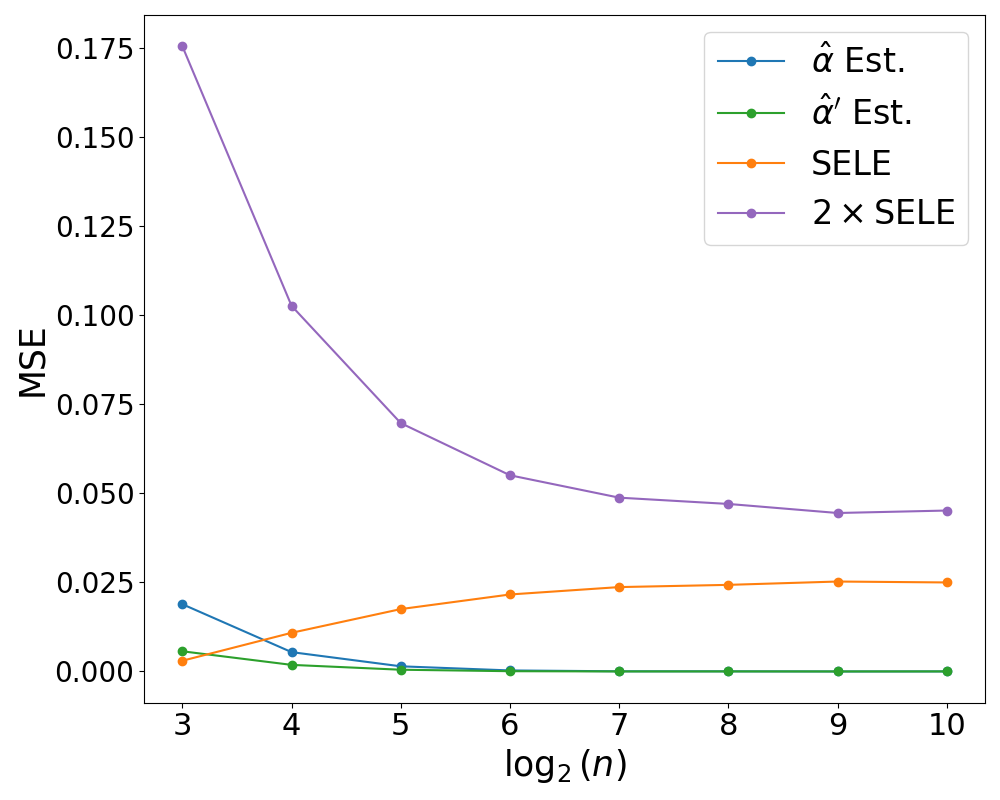}
\parbox[t]{\linewidth}{\scriptsize \centering(b) PreResNet20}
\end{minipage}%
\begin{minipage}[t]{0.3\linewidth}
\centering
\includegraphics[width=2.0in]{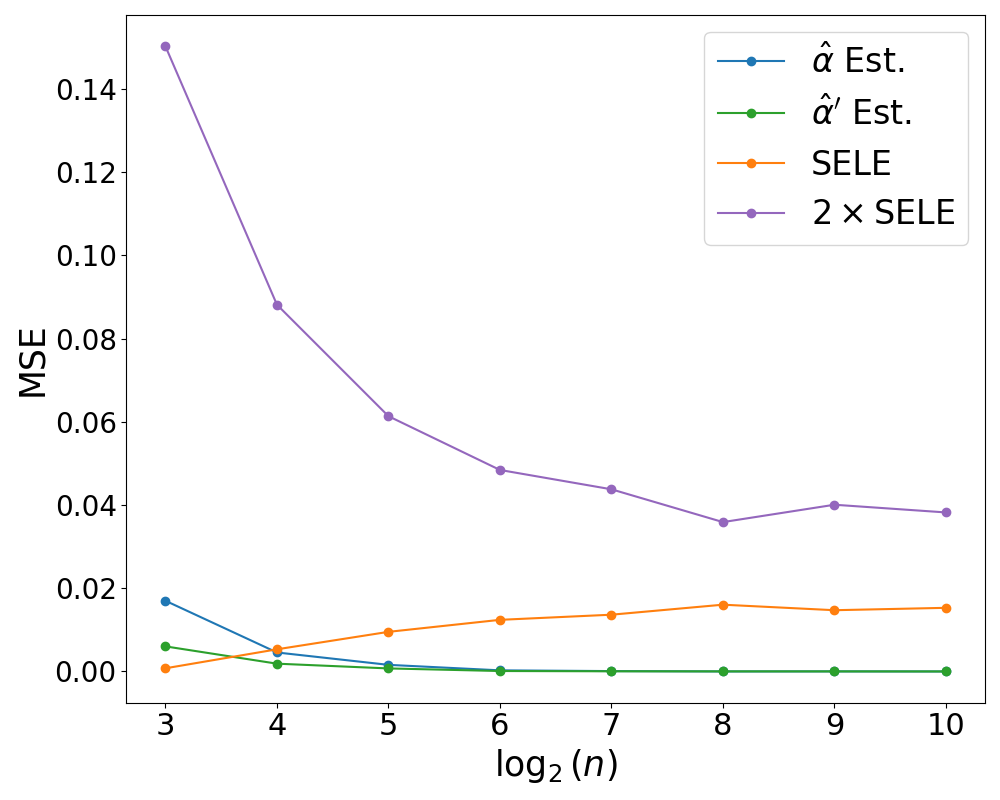}
\parbox[t]{\linewidth}{\scriptsize \centering(c) PreResNet56}
\end{minipage}%

\begin{minipage}[t]{0.3\linewidth}
% \centerfloat
\includegraphics[width=2.0in]{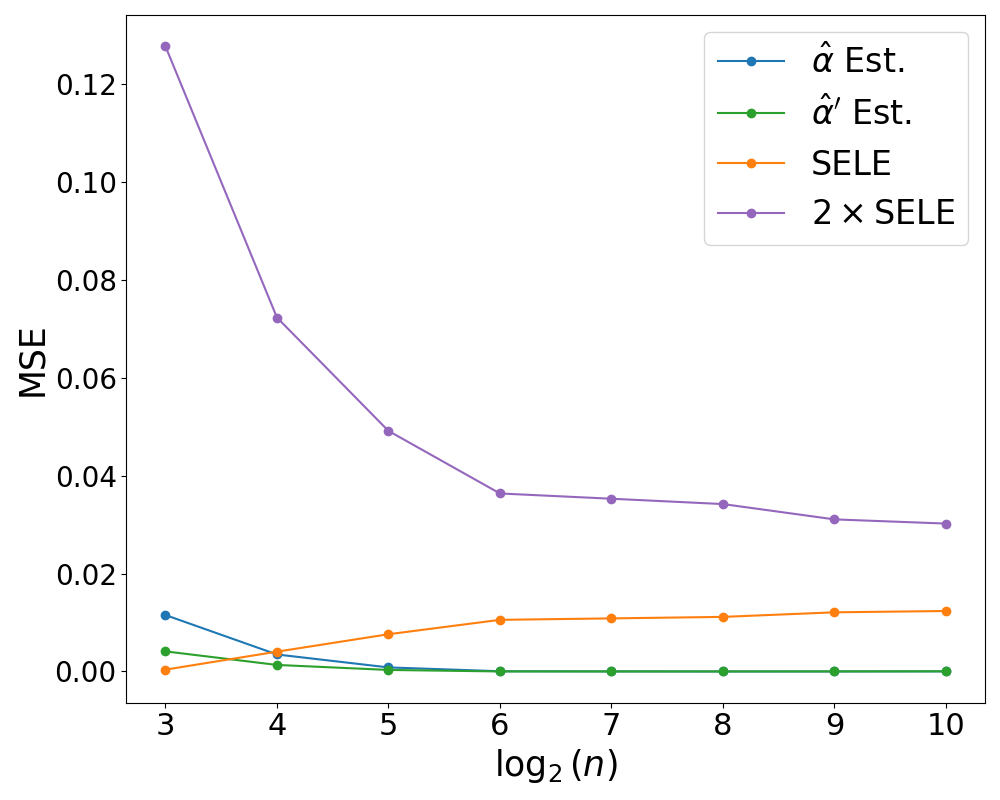}
\parbox[t]{\linewidth}{\scriptsize \centering(d) PreResNet110}
\end{minipage}%
\begin{minipage}[t]{0.3\linewidth}
\centering
\includegraphics[width=2.0in]{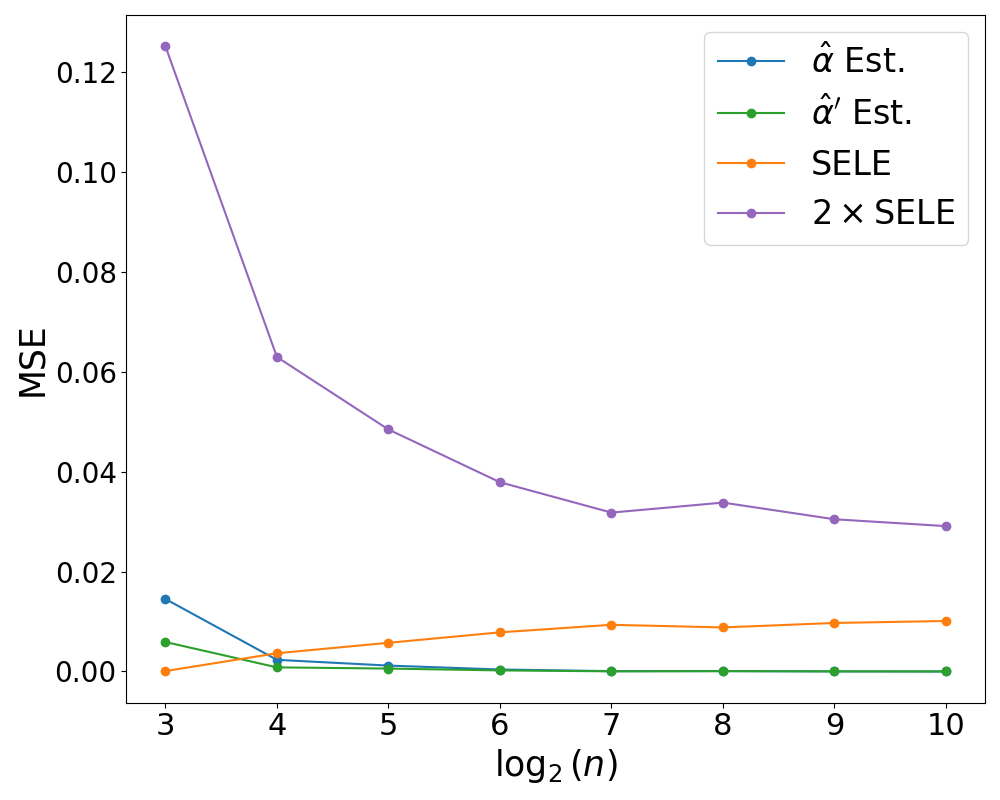}
\parbox[t]{\linewidth}{\scriptsize \centering(e) PreResNet164}
\end{minipage}%
\begin{minipage}[t]{0.3\linewidth}
\centering
\includegraphics[width=2.0in]{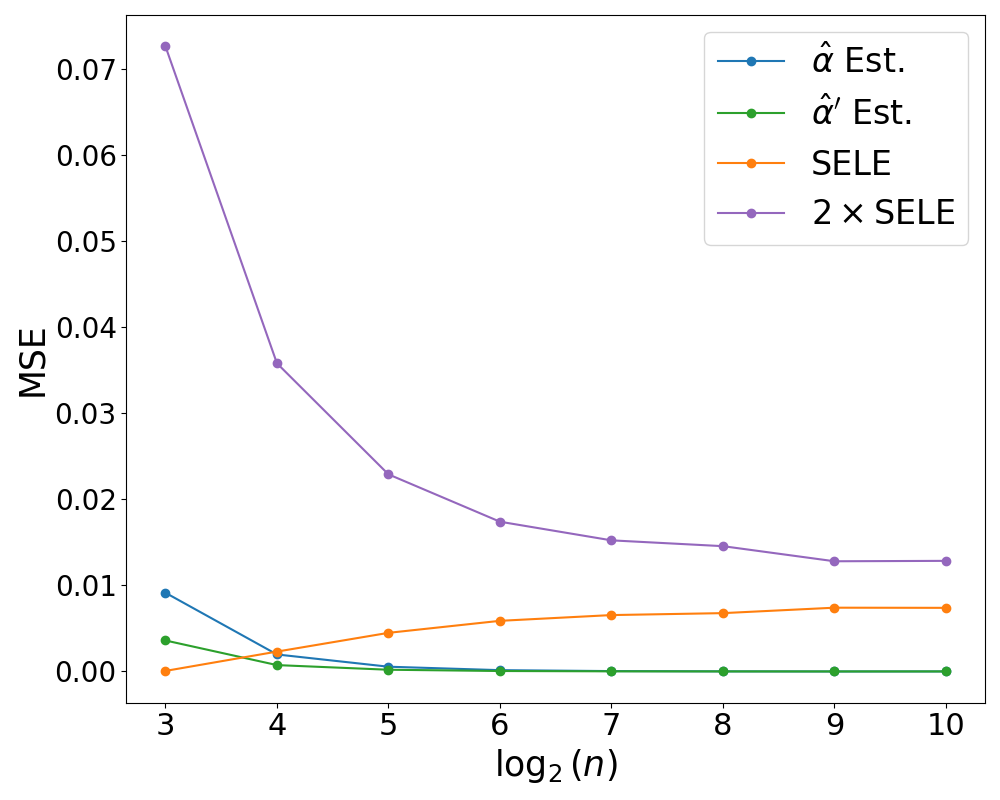}
\parbox[t]{\linewidth}{\scriptsize \centering(f) WideResNet28x10}
\end{minipage}%
\caption{(\textbf{CIFAR100}) MSE of different finite sample estimators with \textbf{CE} loss. For each model architecture, we calculate the MSE of the estimators using a pre-trained model on batch samples derived from the test set.}
\label{fig:msecifa100resultsseed5ce}
\end{figure*}

\begin{figure*}[ht]  % Use figure* for spanning both columns in a two-column layout
\footnotesize
% \centerfloat
\begin{minipage}[t]{0.243\linewidth}
\centering
\includegraphics[width=1.65in]{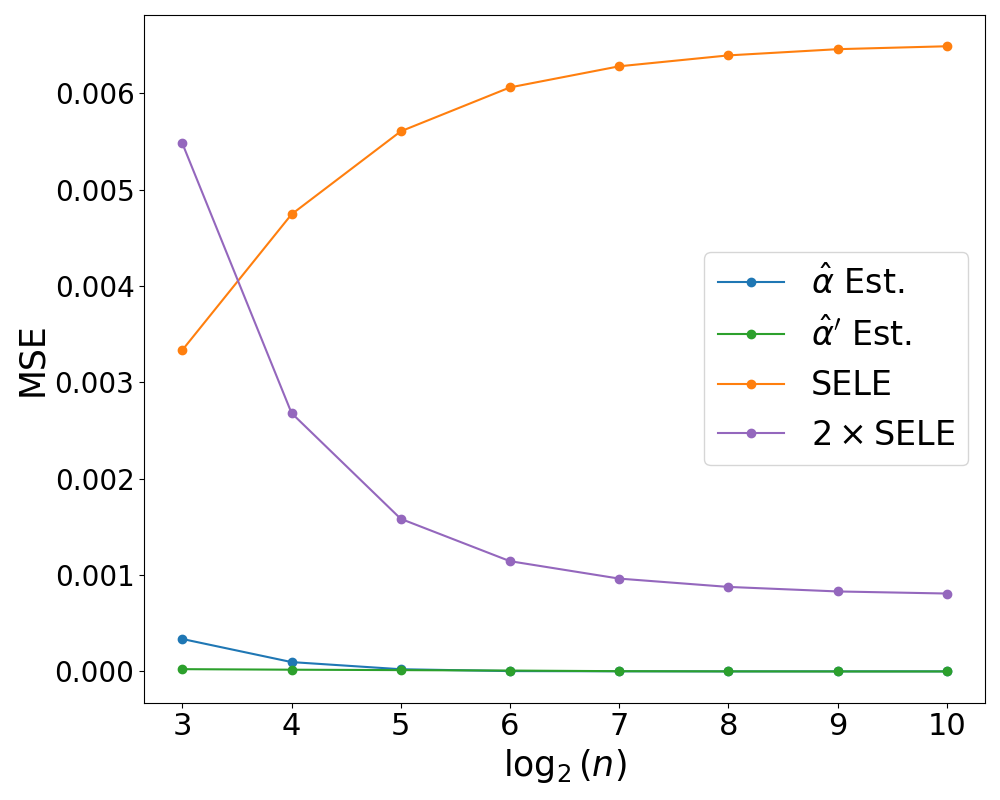}
\parbox[t]{\linewidth}{\scriptsize \centering(a) BERT (\textbf{0/1})}
\end{minipage}%
\begin{minipage}[t]{0.243\linewidth}
\centering
\includegraphics[width=1.65in]{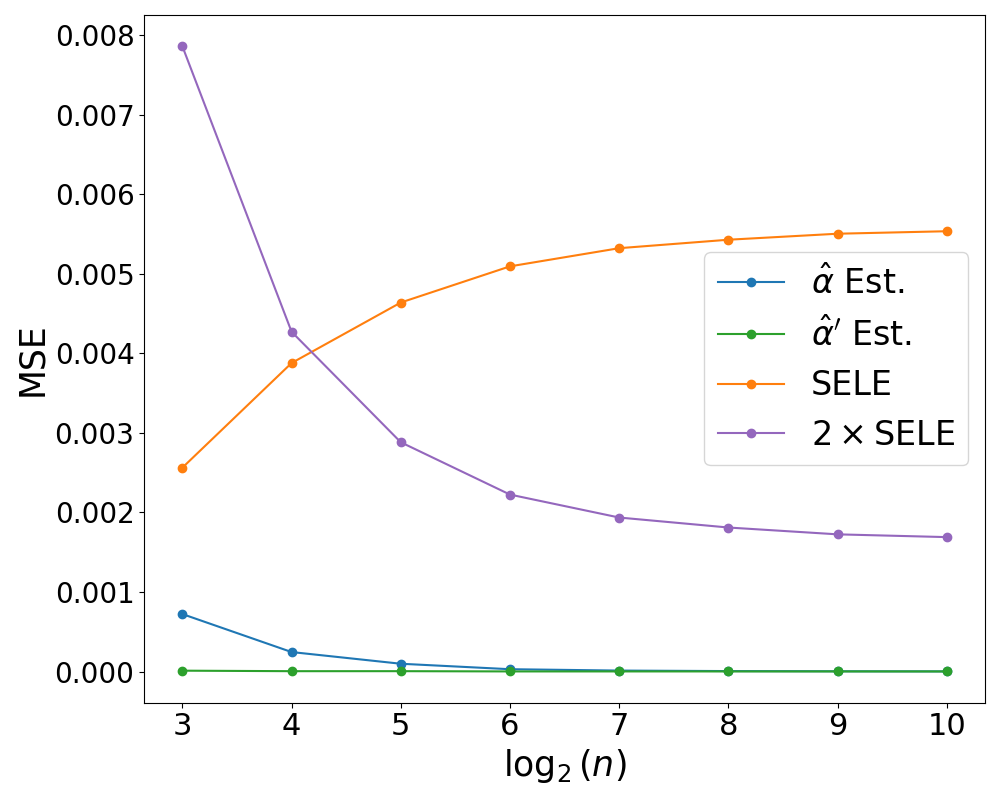}
\parbox[t]{\linewidth}{\scriptsize \centering(b) D-BERT (\textbf{0/1})}
\end{minipage}
\begin{minipage}[t]{0.243\linewidth}
\centering
\includegraphics[width=1.65in]{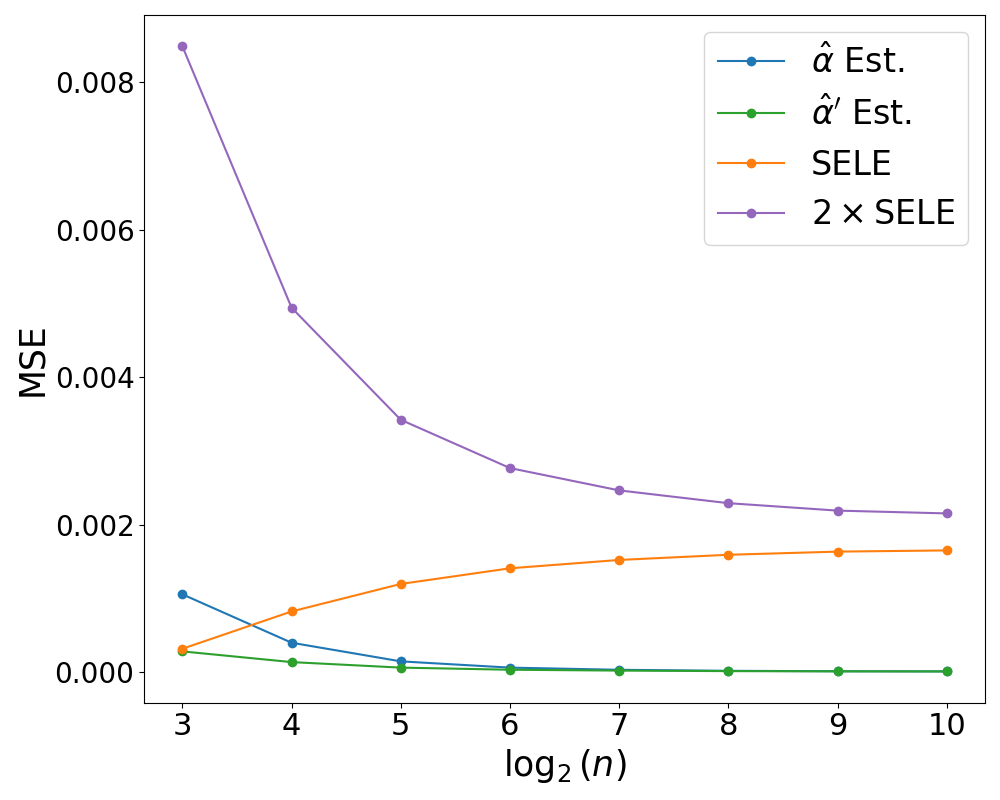}
\parbox[t]{\linewidth}{\scriptsize \centering(c) RoBERTa (\textbf{0/1})}
\end{minipage}
\begin{minipage}[t]{0.243\linewidth}
\centering
\includegraphics[width=1.65in]{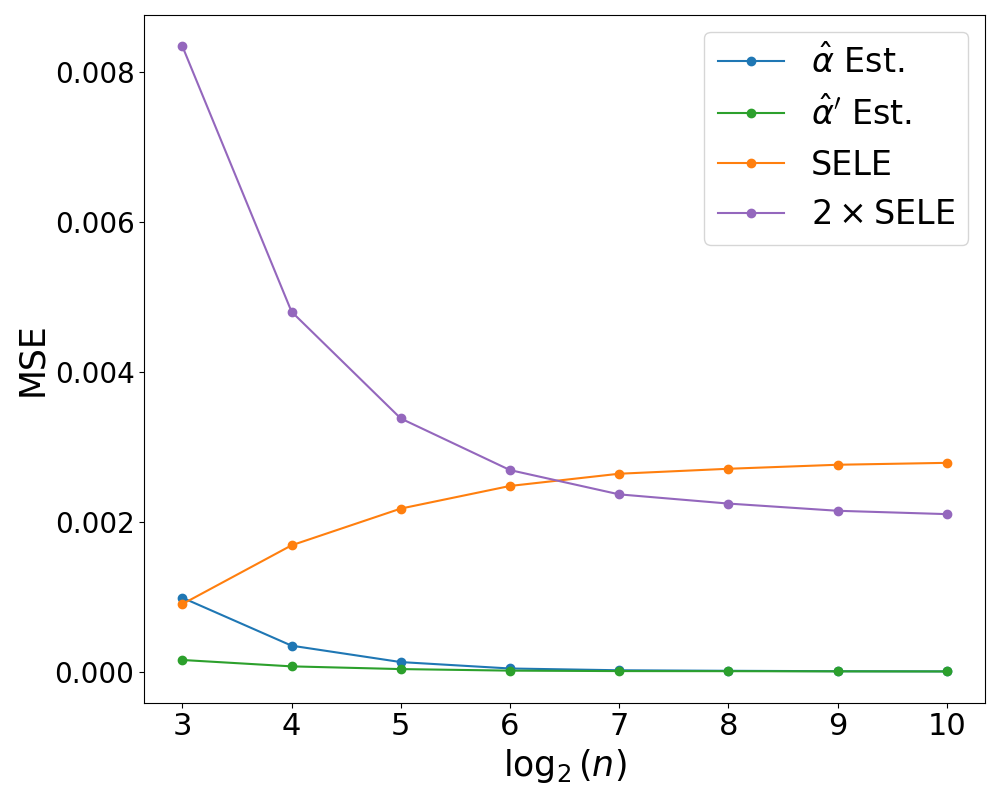}
\parbox[t]{\linewidth}{\scriptsize \centering(d) D-RoBERTa (\textbf{0/1})}
\end{minipage}

\begin{minipage}[t]{0.243\linewidth}
\centering
\includegraphics[width=1.65in]{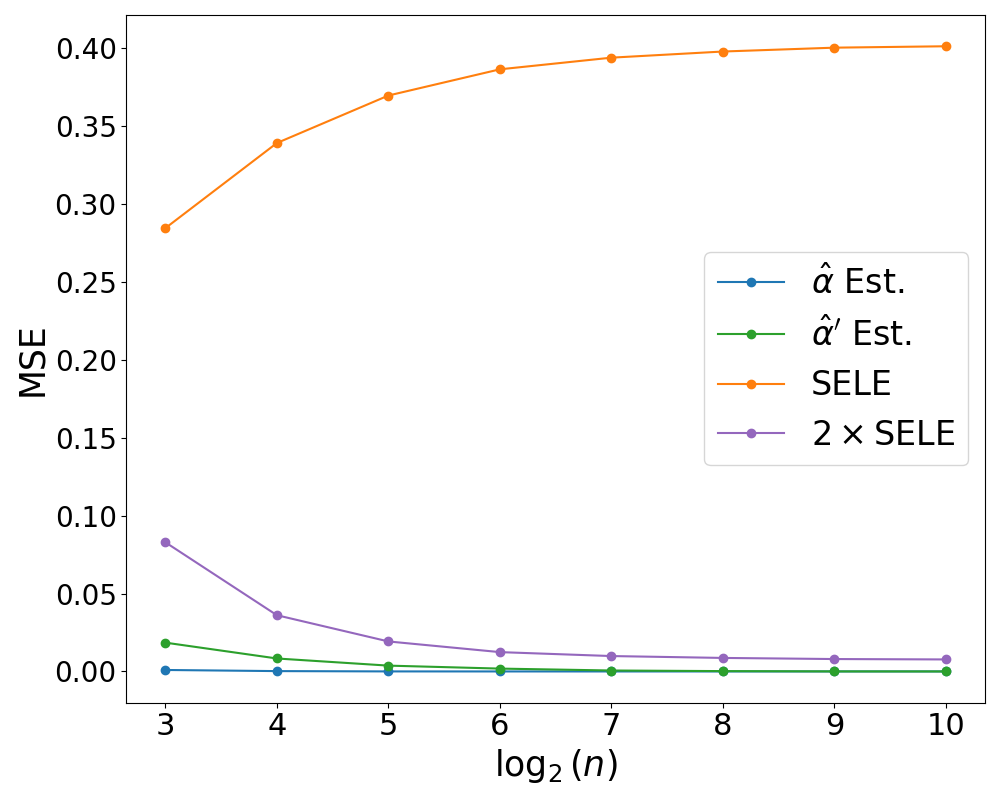}
\parbox[t]{\linewidth}{\scriptsize \centering(e) BERT (\textbf{CE})}
\end{minipage}%
\begin{minipage}[t]{0.243\linewidth}
\centering
\includegraphics[width=1.65in]{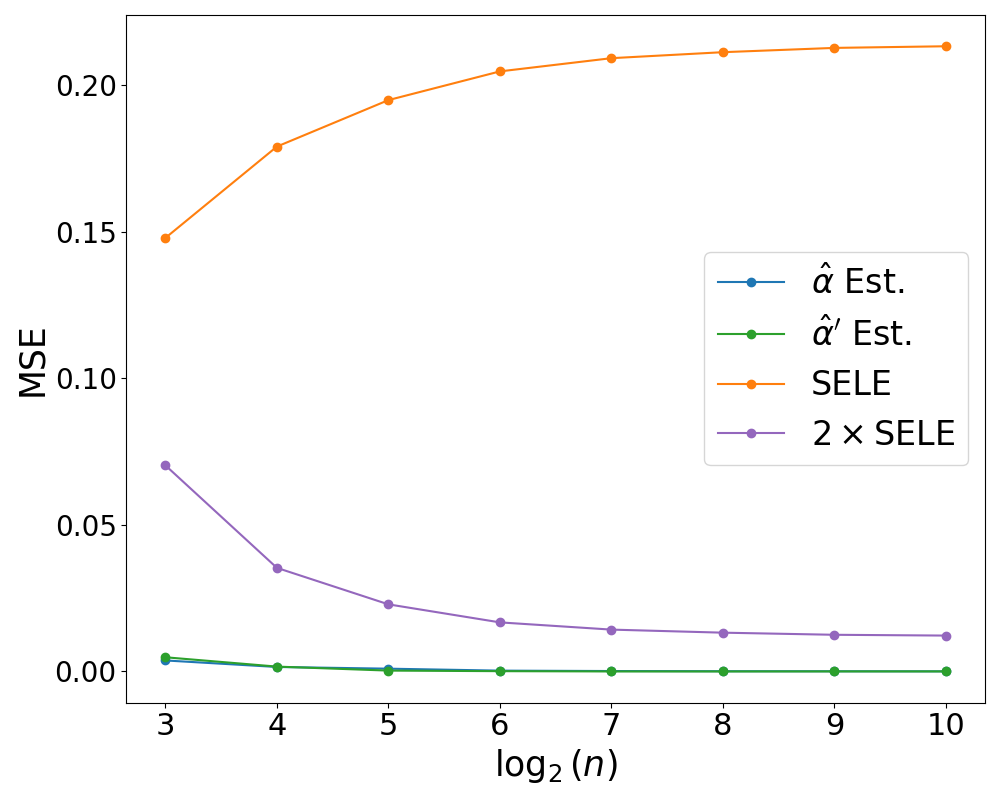}
\parbox[t]{\linewidth}{\scriptsize \centering(f) D-BERT (\textbf{CE})}
\end{minipage}
\begin{minipage}[t]{0.243\linewidth}
\centering
\includegraphics[width=1.65in]{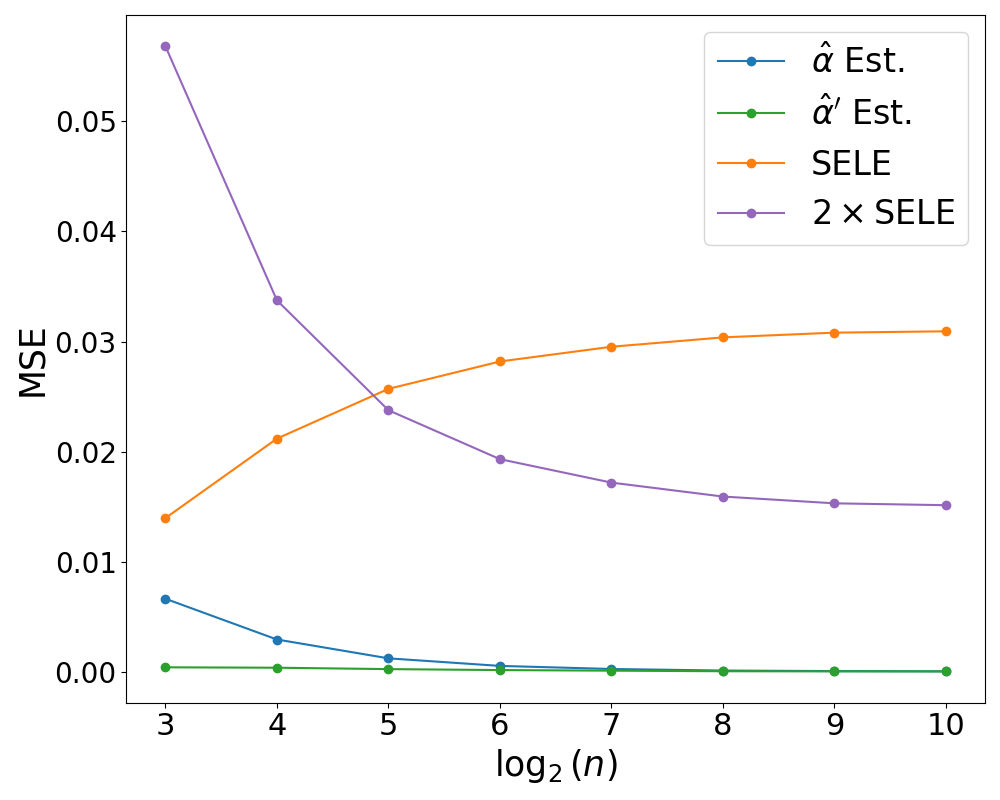}
\parbox[t]{\linewidth}{\scriptsize \centering(g) RoBERTa (\textbf{CE})}
\end{minipage}
\begin{minipage}[t]{0.243\linewidth}
\centering
\includegraphics[width=1.65in]{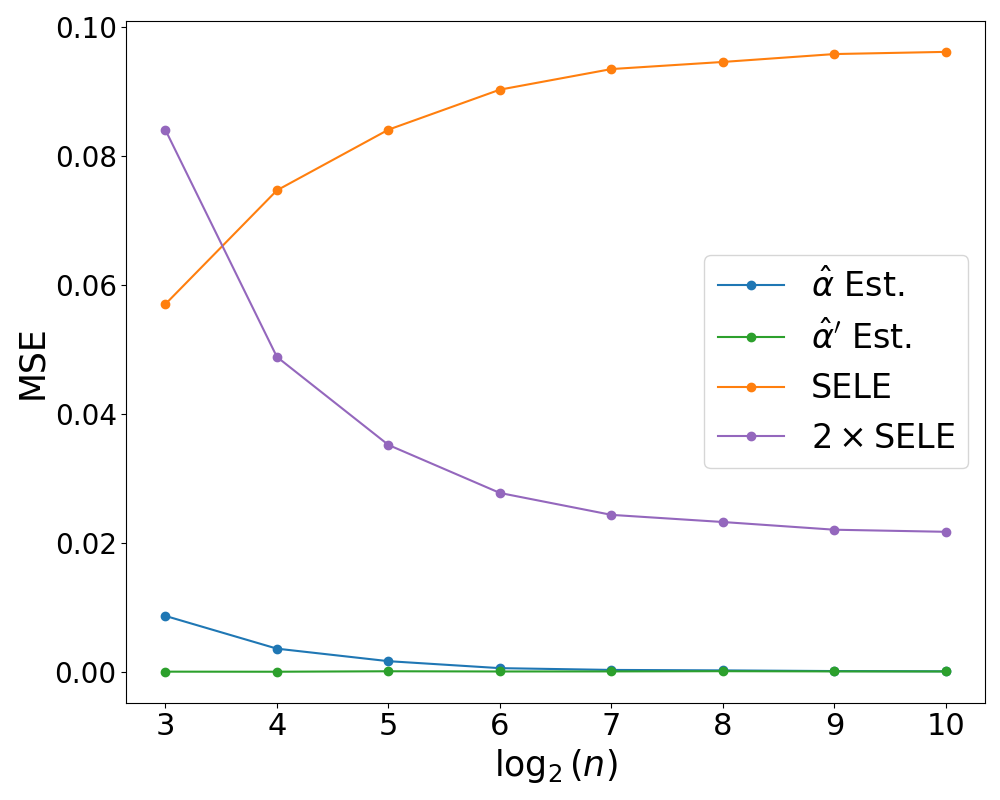}
\parbox[t]{\linewidth}{\scriptsize \centering(h) D-RoBERTa (\textbf{CE})}
\end{minipage}
\caption{(\textbf{Amazon}) MSE of different finite sample estimators with \textbf{0/1} or \textbf{CE} loss. For each model architecture, we calculate the MSE of the estimators using a pre-trained model on batch samples derived from the test set.}
\label{fig:mseamazon}
\end{figure*}

\begin{figure*}[ht]  % Use figure* for spanning both columns in a two-column layout
\footnotesize
% \centerfloat
\begin{minipage}[t]{0.243\columnwidth}
    \centering
    \includegraphics[width=1.65in]{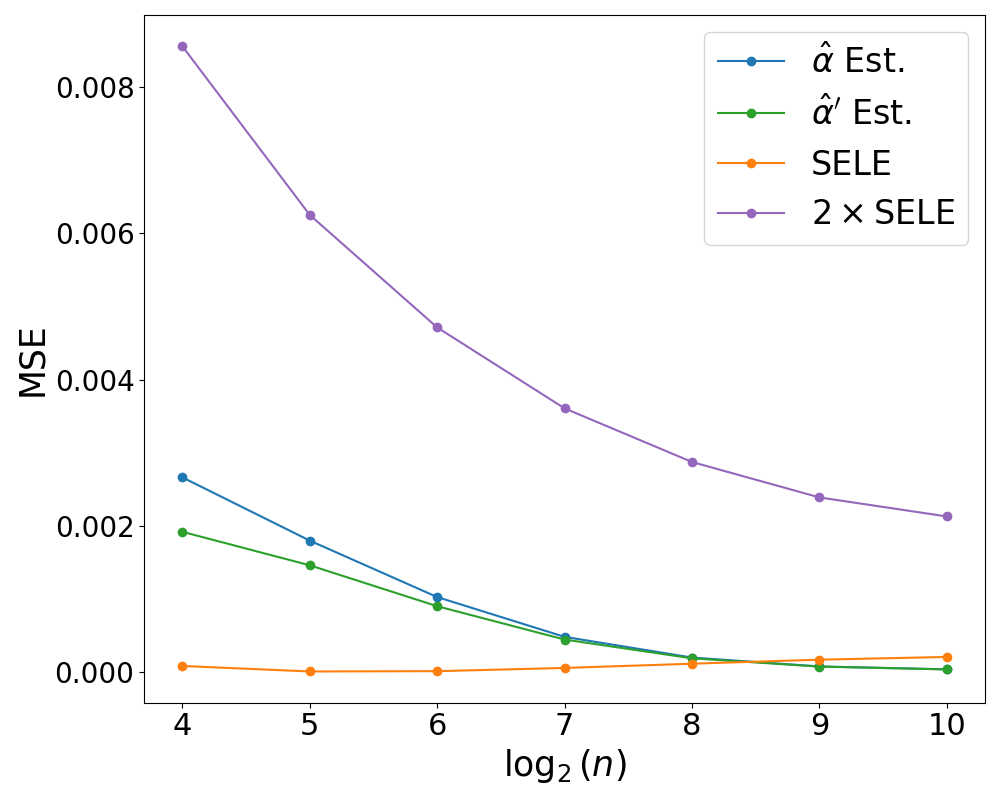}
    \parbox[t]{\linewidth}{\scriptsize \centering(a) ViT-Small (\textbf{0/1})}
\end{minipage}%
\begin{minipage}[t]{0.243\columnwidth}
    \centering
    \includegraphics[width=1.65in]{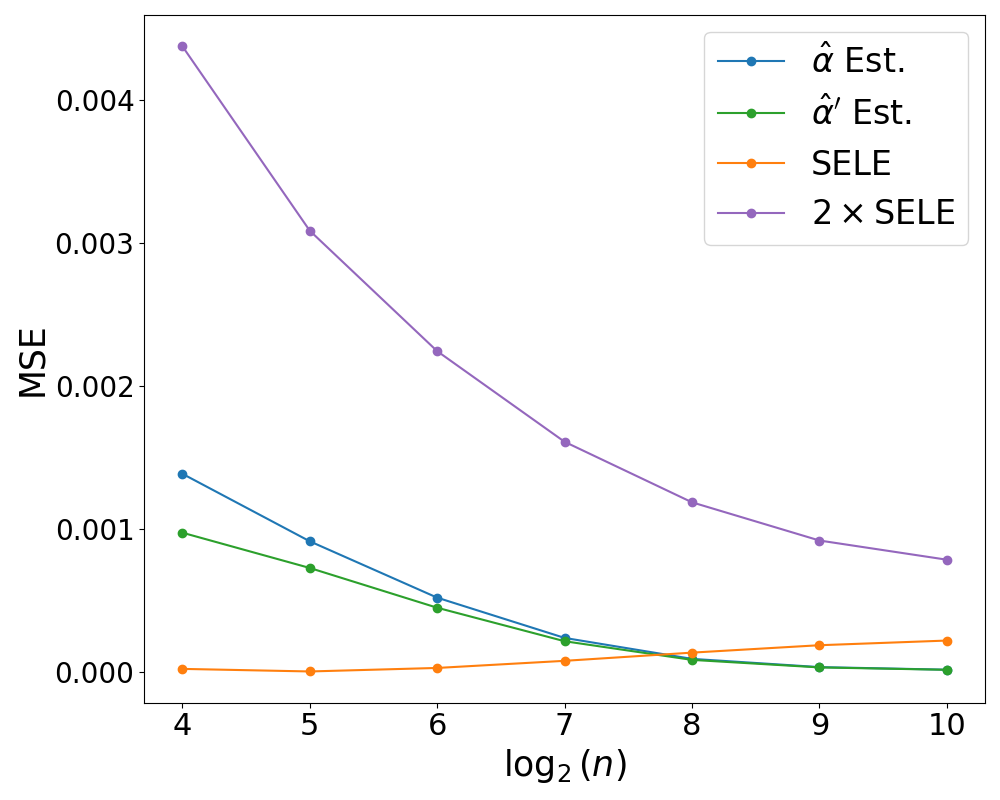}
    \parbox[t]{\linewidth}{\scriptsize \centering(b) Swin-Tiny (\textbf{0/1})}
\end{minipage}%
\begin{minipage}[t]{0.243\linewidth}
\centering
\includegraphics[width=1.65in]{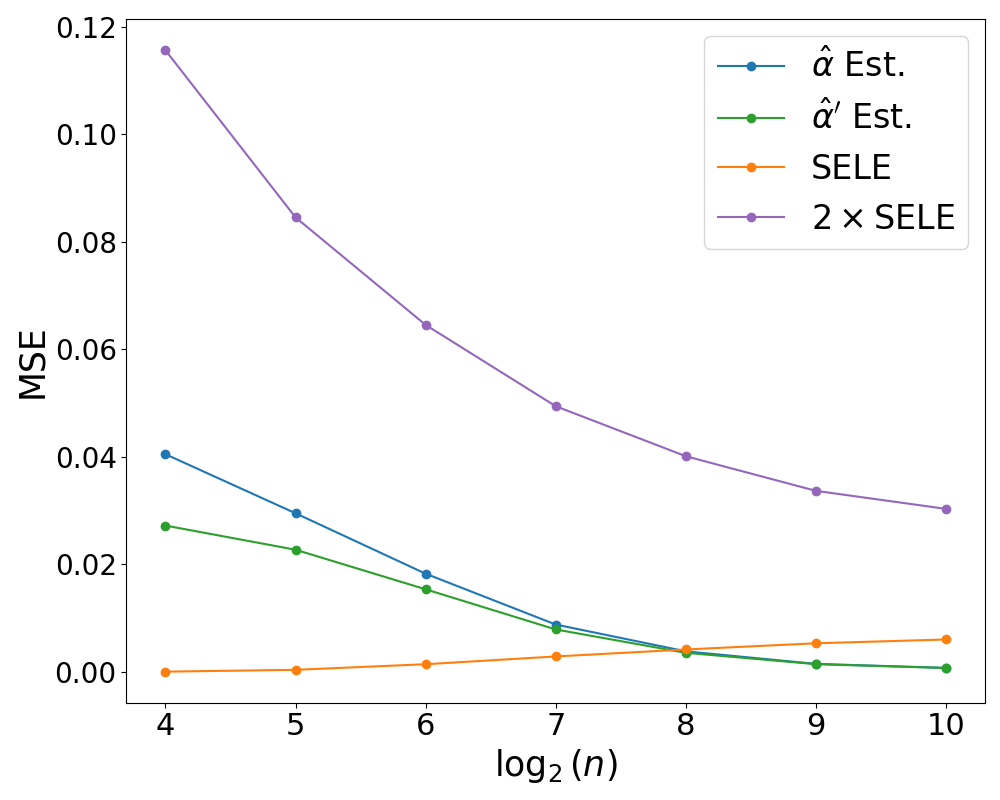}
\parbox[t]{\linewidth}{\scriptsize \centering(c) ViT-Small (\textbf{CE})}
\end{minipage}%
\begin{minipage}[t]{0.243\linewidth}
\centering
\includegraphics[width=1.65in]{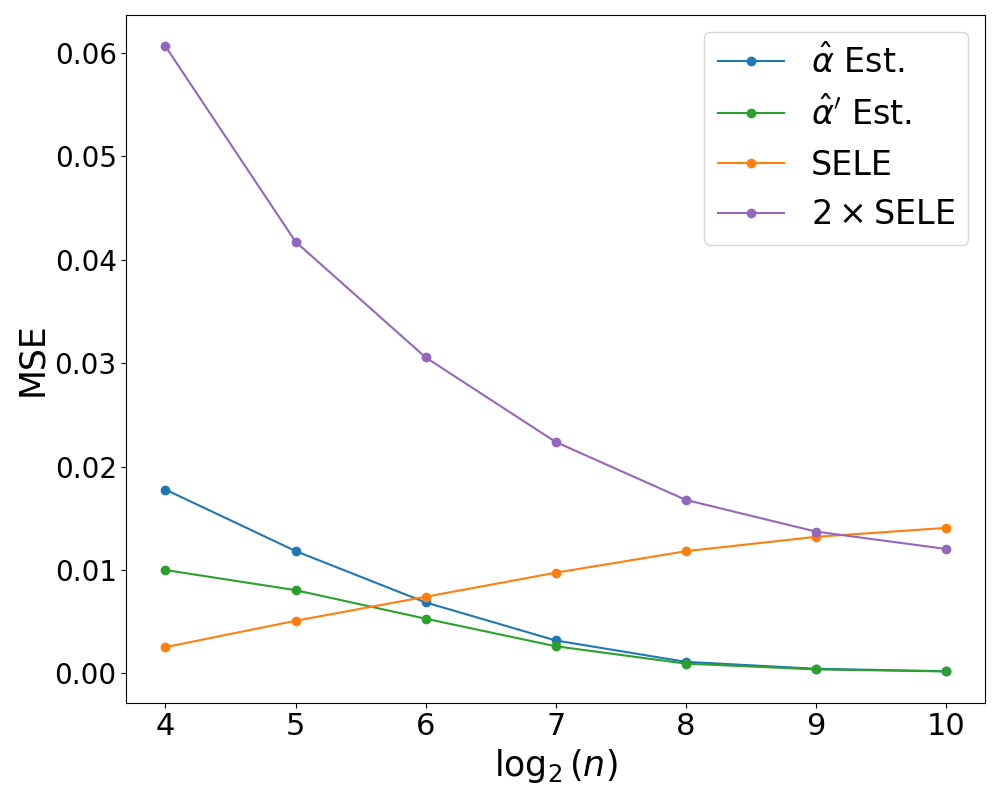}
\parbox[t]{\linewidth}{\scriptsize \centering(d) Swin-Tiny (\textbf{CE})}
\end{minipage}
%\vspace{0.5cm}
\begin{minipage}[t]{0.5\linewidth}
\centering
\includegraphics[width=1.65in]{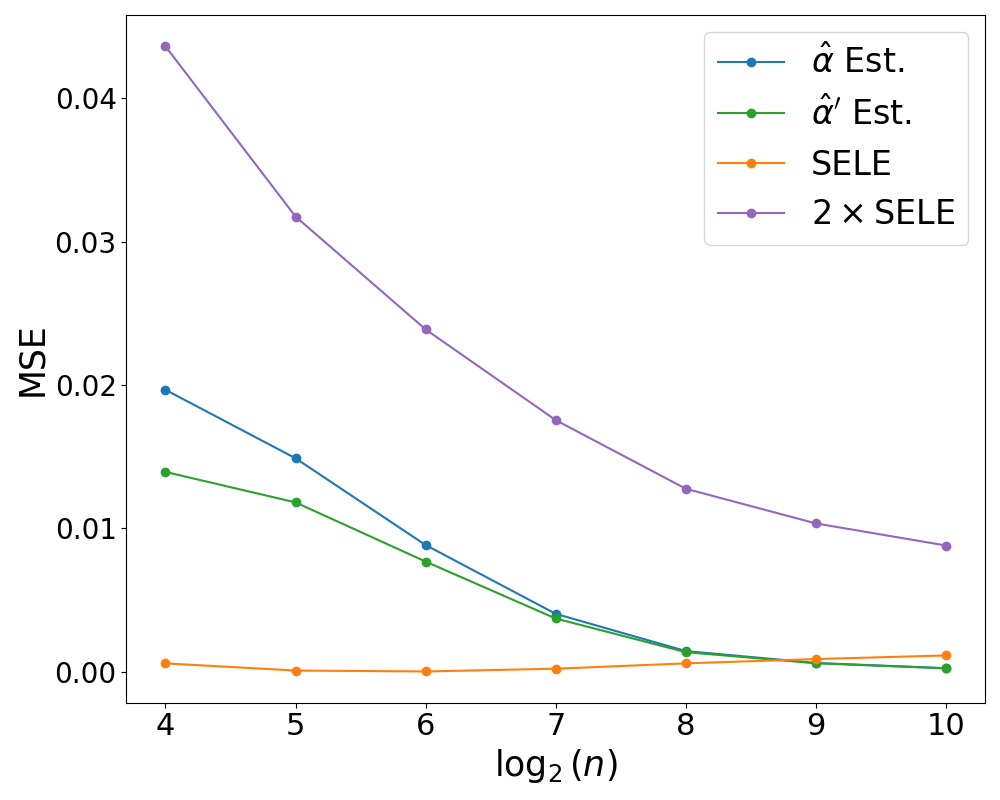}
\parbox[t]{\linewidth}{\scriptsize \centering(e) ViT-Large (\textbf{CE})}
\end{minipage}%
\begin{minipage}[t]{0.5\linewidth}
\centering
\includegraphics[width=1.65in]{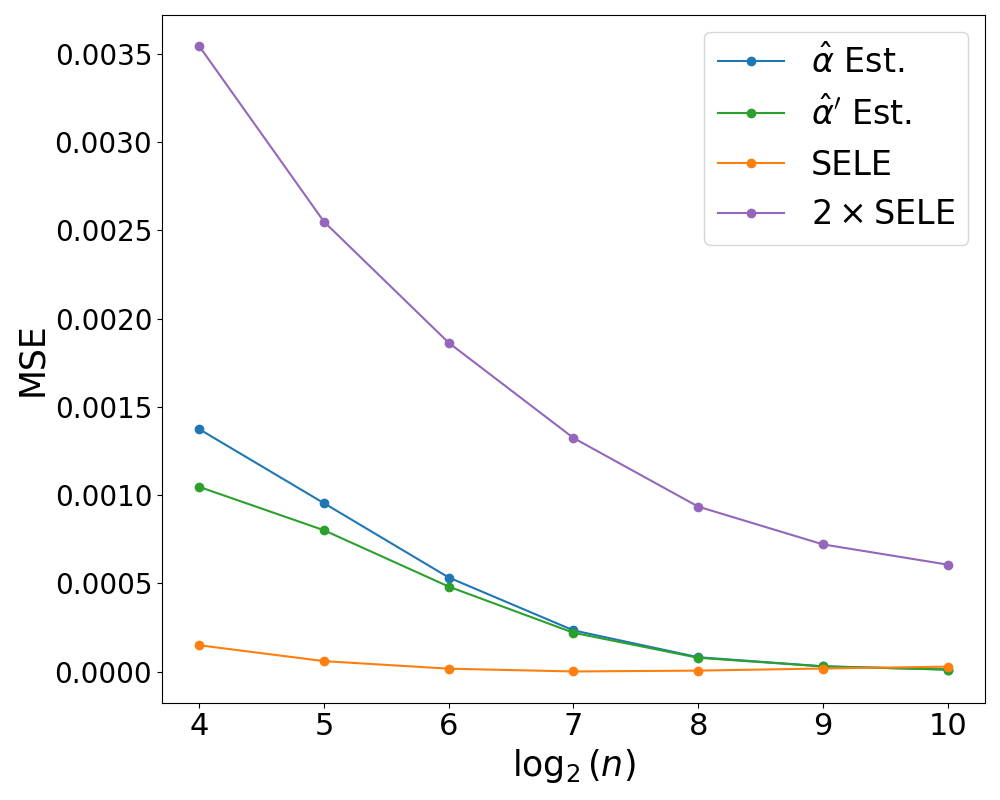}
\parbox[t]{\linewidth}{\scriptsize \centering(f) ViT-Large (\textbf{0/1})}
\end{minipage}
\caption{(\textbf{ImageNet}) MSE of finite sample estimators with \textbf{0/1} or \textbf{CE} loss. For each model architecture, we calculate the MSE of the estimators using a pre-trained model on batch samples derived from the test set.}
\label{fig:mseimagenet_ce}
\end{figure*}

\begin{figure*}[!ht]
\footnotesize
\centering
\begin{minipage}[t]{0.24\linewidth}
    \centering
    \includegraphics[width=1.6in]{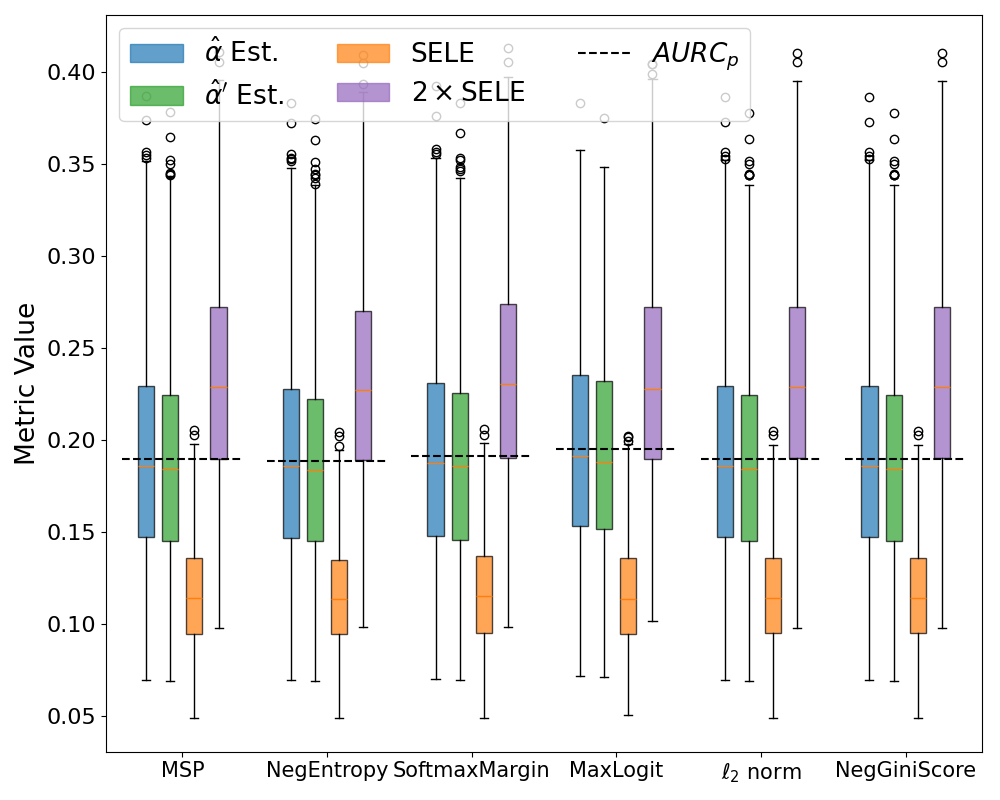}
    \parbox[t]{\linewidth}{\scriptsize \centering(a) D-BERT (\textbf{0/1})}
\end{minipage}%
\begin{minipage}[t]{0.24\linewidth}
    \centering
    \includegraphics[width=1.6in]{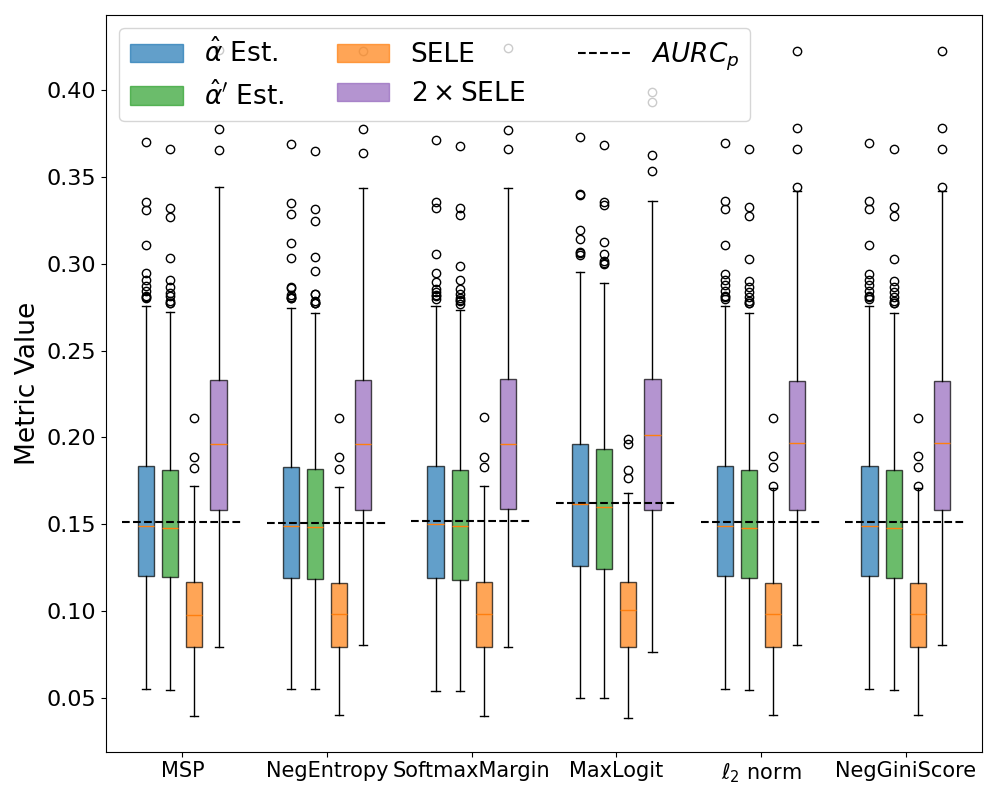}
    \parbox[t]{\linewidth}{\scriptsize \centering(b) D-RoBERTa (\textbf{0/1})}
\end{minipage}
\begin{minipage}[t]{0.24\linewidth}
    \centering
    \includegraphics[width=1.6in]{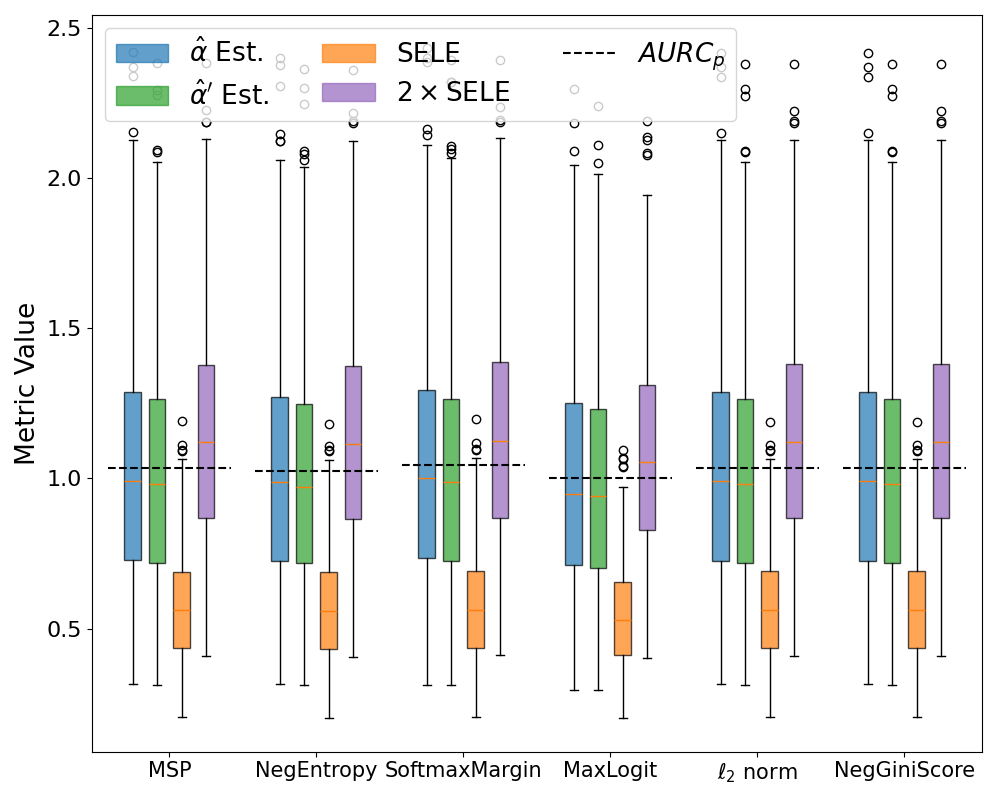}
    \parbox[t]{\linewidth}{\scriptsize \centering(c) D-BERT (\textbf{CE})}
\end{minipage}%
\begin{minipage}[t]{0.24\linewidth}
    \centering
    \includegraphics[width=1.6in]{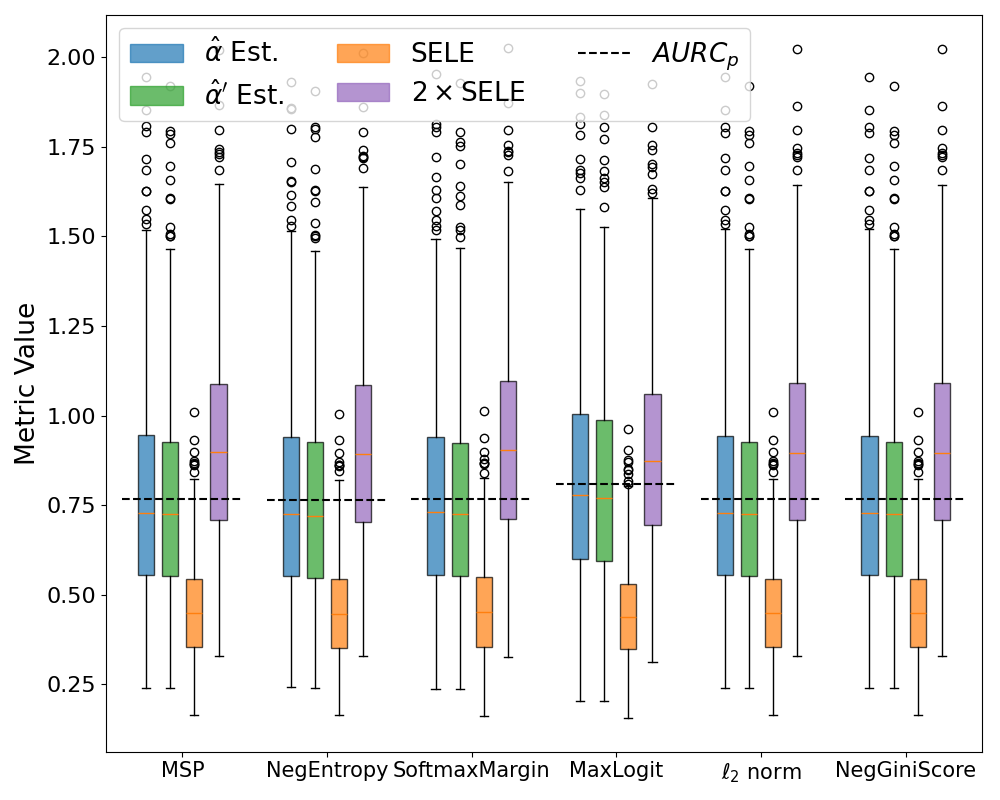}
    \parbox[t]{\linewidth}{\scriptsize \centering(d) D-RoBERTa (\textbf{CE})}
\end{minipage}

\begin{minipage}[t]{0.48\columnwidth}
    \centering
    \includegraphics[width=1.6in]{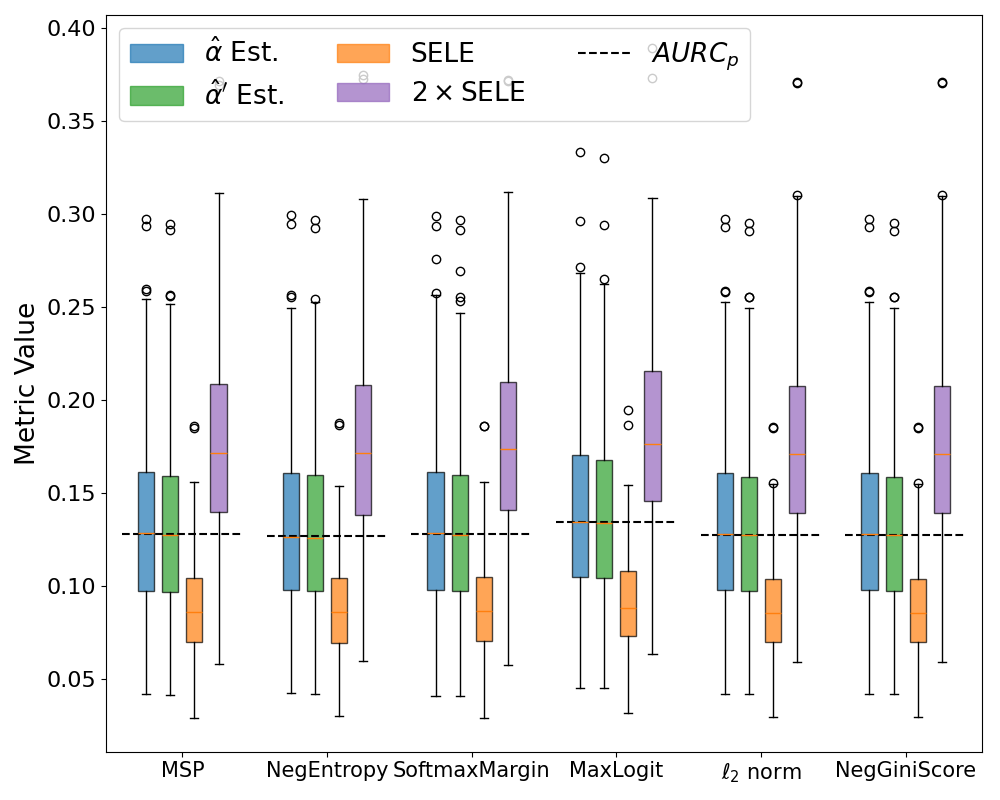}
    \parbox[t]{\linewidth}{\scriptsize \centering(e) RoBERTa (0/1 loss)}
\end{minipage}
\begin{minipage}[t]{0.48\columnwidth}
    \centering
    \includegraphics[width=1.6in]{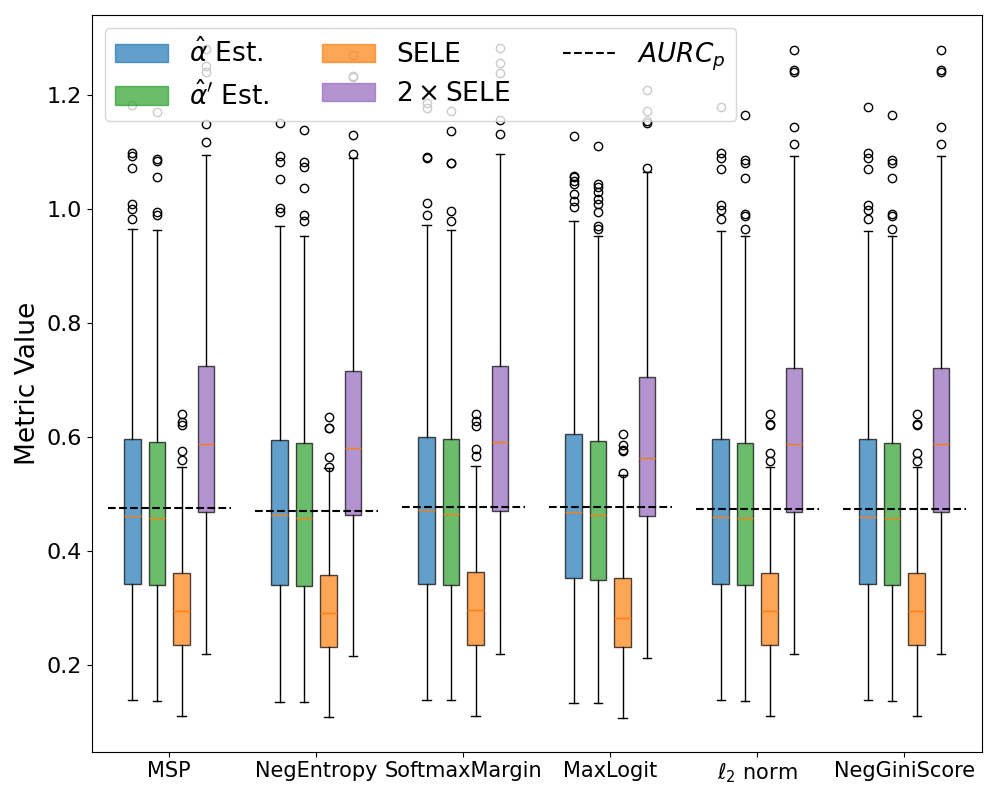}
    \parbox[t]{\linewidth}{\scriptsize \centering(f) RoBERTa (CE loss)}
\end{minipage}
\caption{(\textbf{Amazon}) Finite sample estimators that utilize \textbf{0/1} or \textbf{CE} loss with different CSFs.}
\label{fig:csfamazon}
\end{figure*}

\begin{figure*}[!ht]  % Use figure* for spanning both columns in a two-column layout
\footnotesize
\centering
\begin{minipage}[t]{0.24\linewidth}
    \centering
    \includegraphics[width=1.6in]{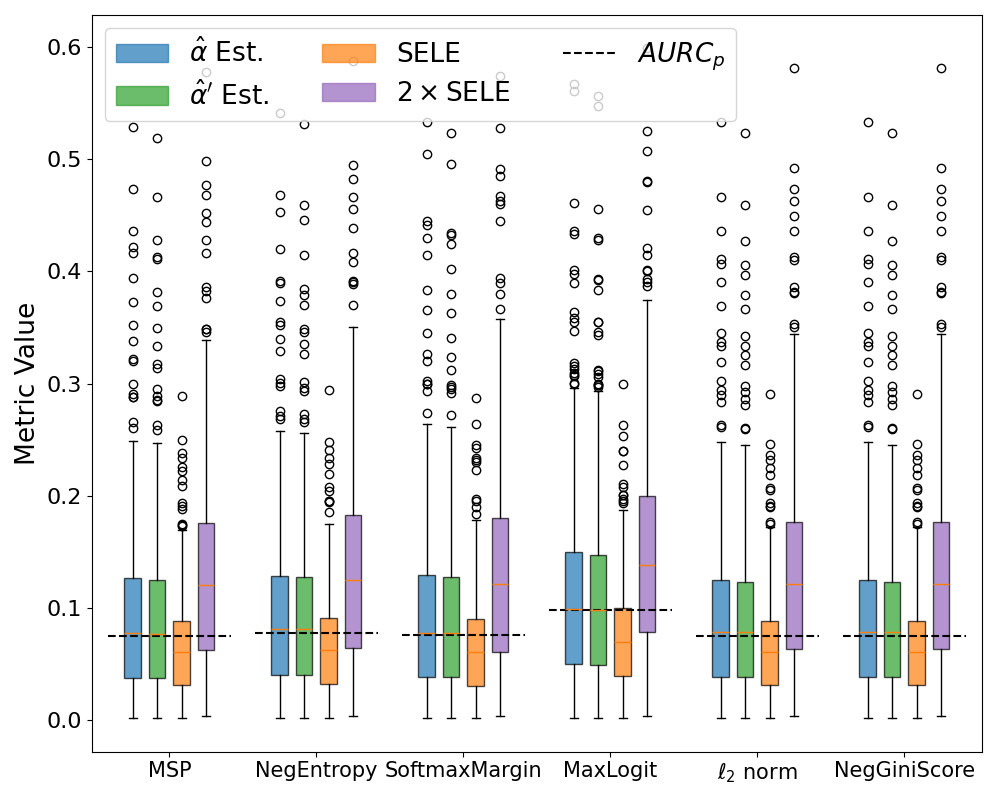}
    \parbox[t]{\linewidth}{\scriptsize \centering(a) ViT-Small}
\end{minipage}%
\begin{minipage}[t]{0.24\linewidth}
    \centering
    \includegraphics[width=1.6in]{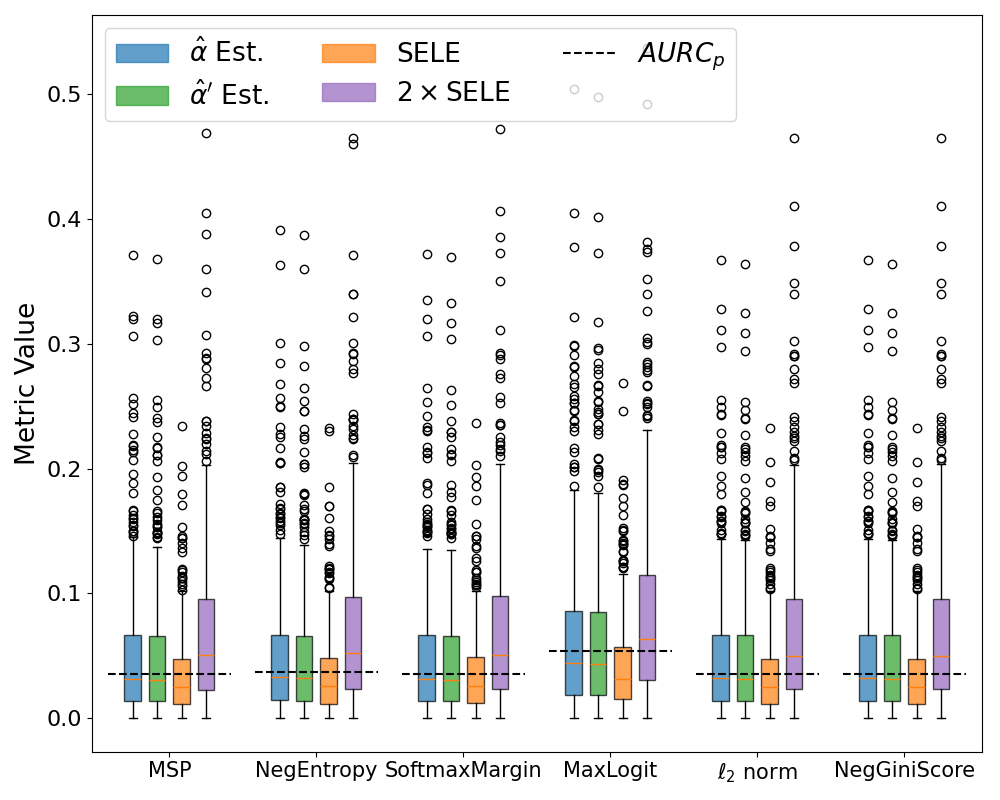}
    \parbox[t]{\linewidth}{\scriptsize \centering(b) ViT-Large}
\end{minipage}%
\begin{minipage}[t]{0.24\linewidth}
    \centering
    \includegraphics[width=1.6in]{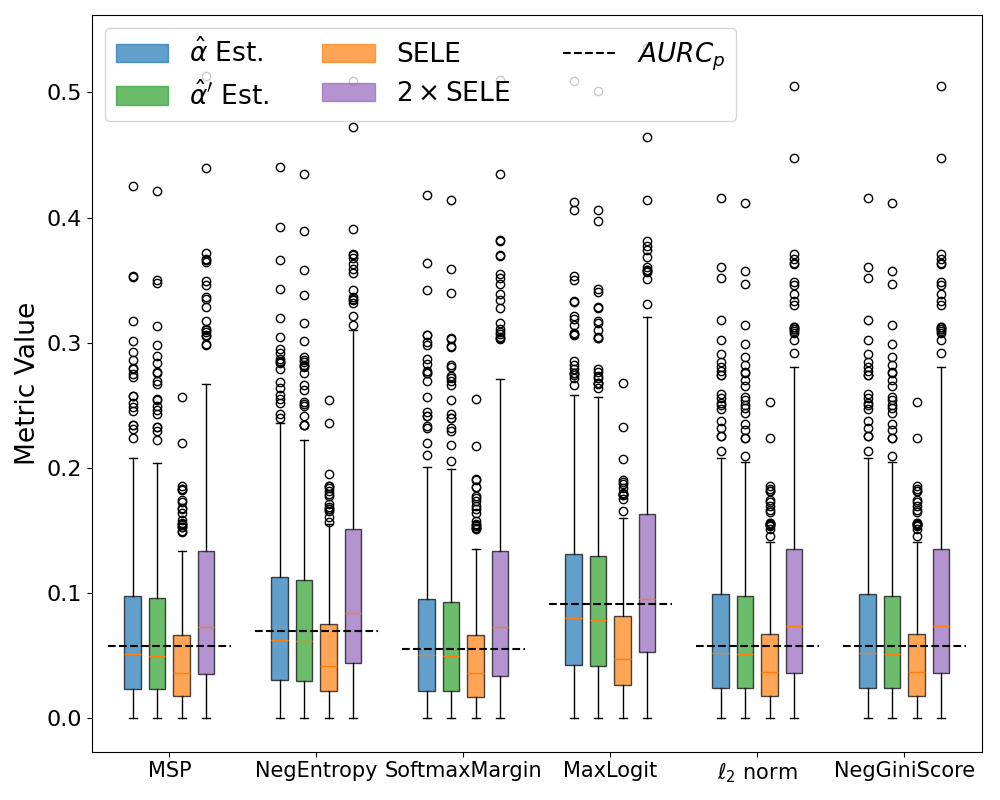}
    \parbox[t]{\linewidth}{\scriptsize \centering(c) Swin-Tiny}
\end{minipage}%
\begin{minipage}[t]{0.24\linewidth}
    \centering
    \includegraphics[width=1.6in]{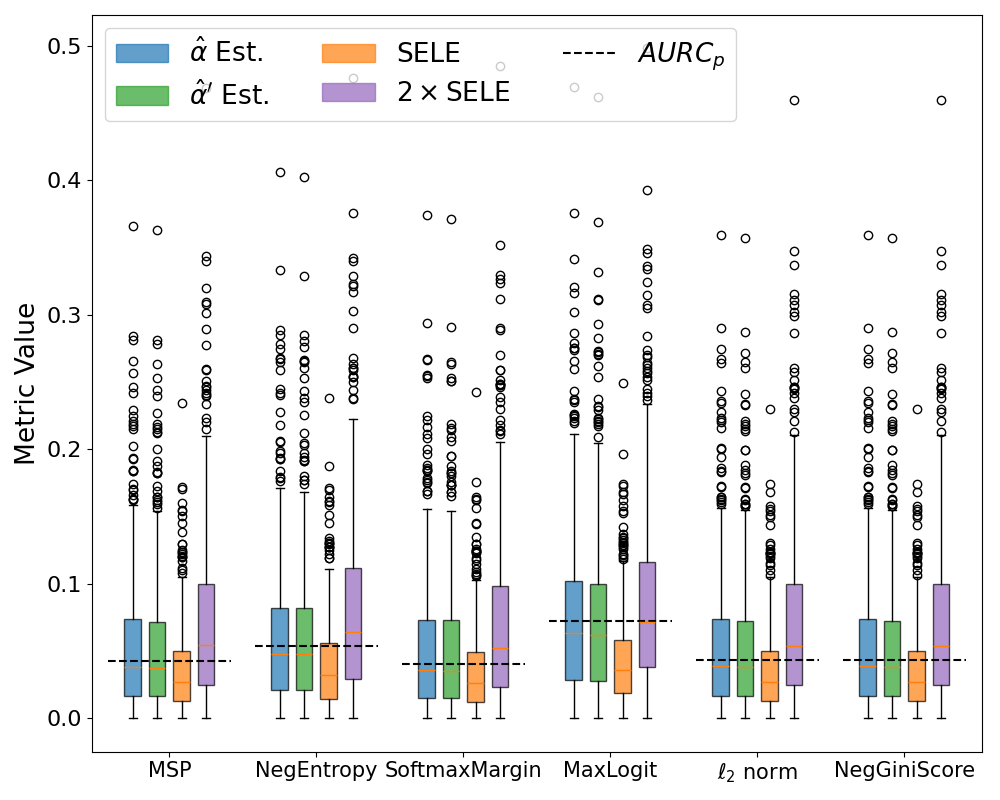}
    \parbox[t]{\linewidth}{\scriptsize \centering(d) Swin-Base}
\end{minipage}
\caption{(\textbf{ImageNet}) Finite sample estimators that utilize \textbf{0/1} loss with different CSFs.}
\label{fig:csfsimagenet}
\end{figure*}

\begin{figure*}[!ht]  
\footnotesize
\centering
\begin{minipage}[t]{0.24\linewidth}
    \centering
    \includegraphics[width=1.6in]{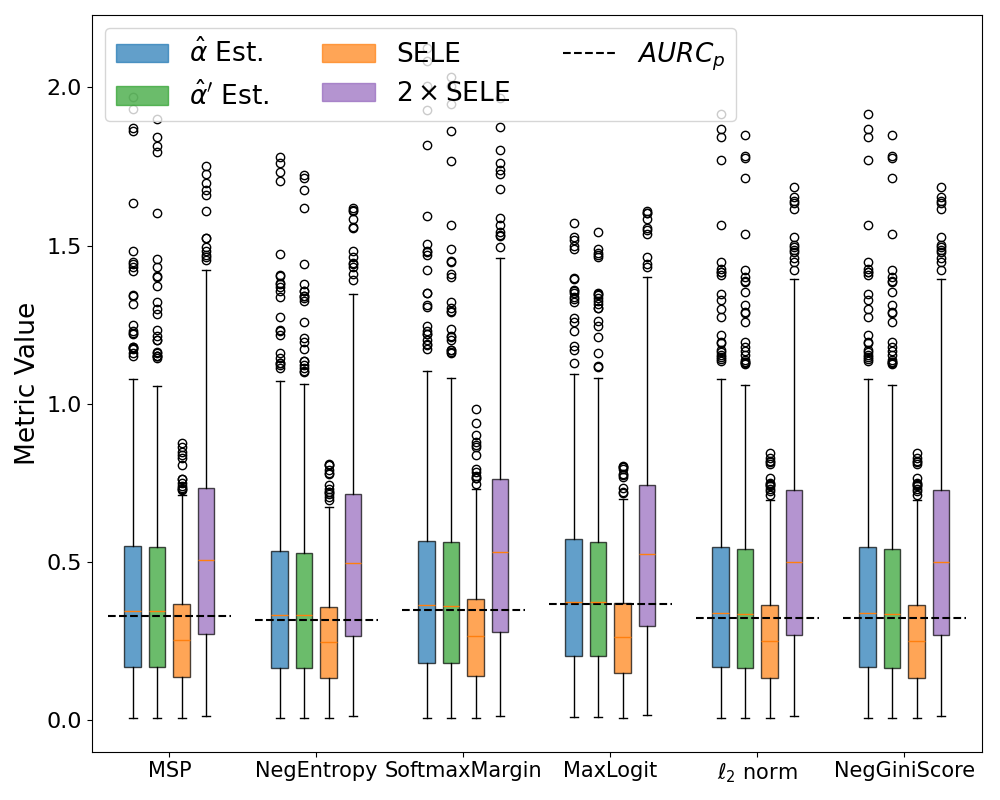}
    \parbox[t]{\linewidth}{\scriptsize \centering(a) ViT-Small}
\end{minipage}%
\begin{minipage}[t]{0.24\linewidth}
    \centering
    \includegraphics[width=1.6in]{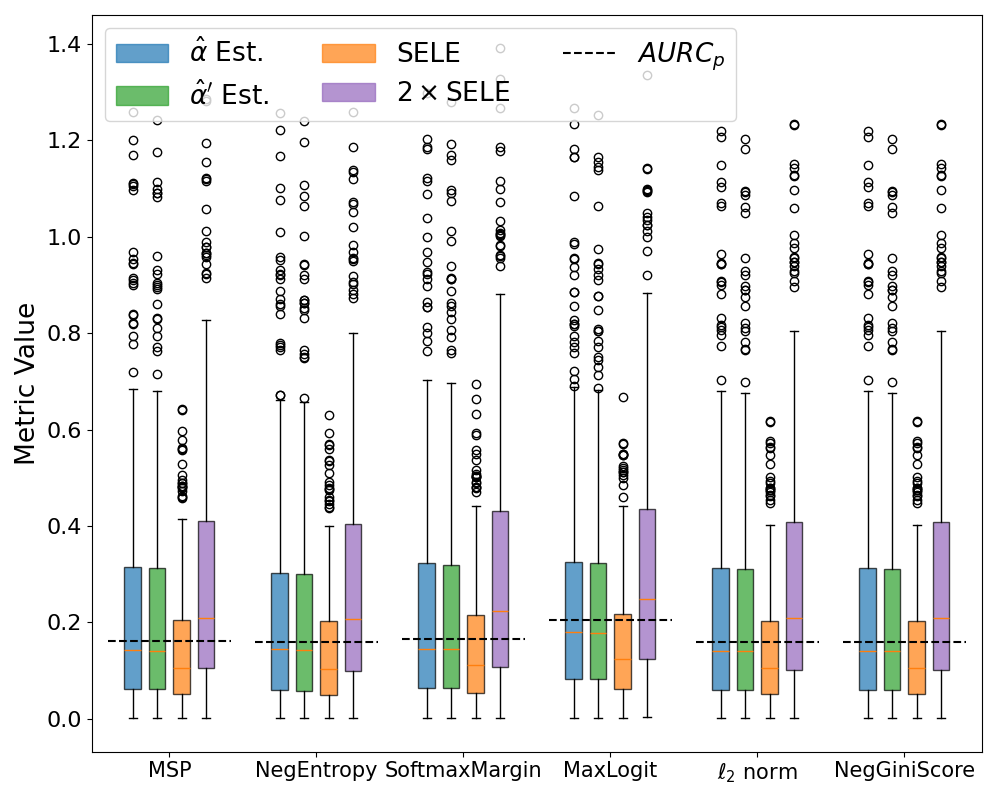}
    \parbox[t]{\linewidth}{\scriptsize \centering(b) ViT-Large}
\end{minipage}%
\begin{minipage}[t]{0.24\linewidth}
    \centering
    \includegraphics[width=1.6in]{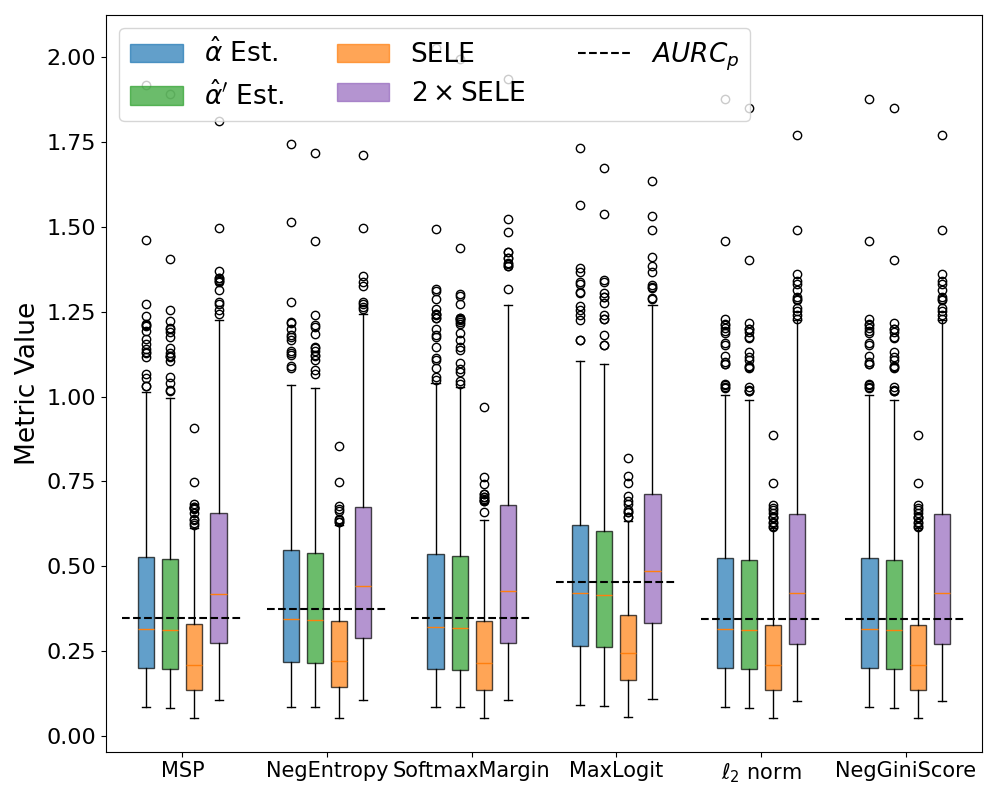}
    \parbox[t]{\linewidth}{\scriptsize \centering(c) Swin-Tiny}
\end{minipage}%
\begin{minipage}[t]{0.24\linewidth}
    \centering
    \includegraphics[width=1.6in]{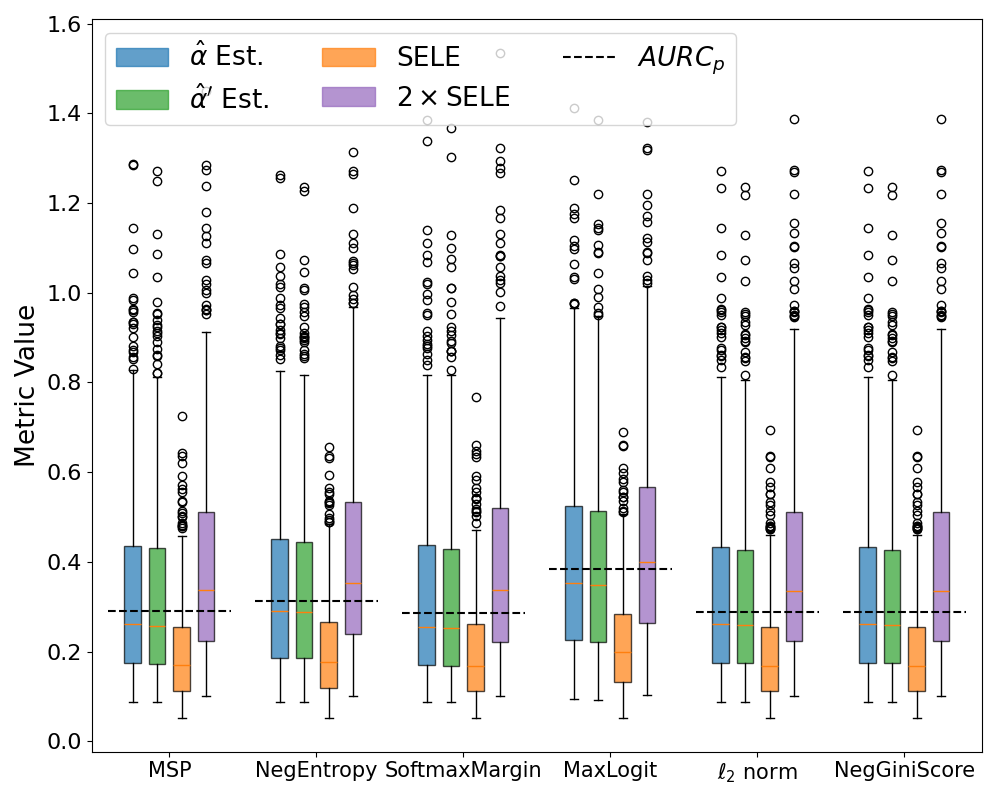}
    \parbox[t]{\linewidth}{\scriptsize \centering(d) Swin-Base}
\end{minipage}
\caption{(\textbf{ImageNet}) Finite sample estimators that utilize \textbf{CE} loss with different CSFs.}
\label{fig:csfsimagenet_ce}
\end{figure*}

\end{document}